\pdfoutput=1
\documentclass{article}

\PassOptionsToPackage{numbers, compress}{natbib}




\usepackage[final]{neurips_2025}


\usepackage{annotate-equations}
\usepackage[utf8]{inputenc} 
\usepackage[T1]{fontenc}    
\usepackage{hyperref}       
\usepackage{url}            
\usepackage{booktabs}       
\usepackage{caption}        
\usepackage{amsfonts}       
\usepackage{nicefrac}       
\usepackage{microtype}      
\usepackage{xcolor}         
\usepackage{enumitem}
\usepackage{amsthm}
\usepackage{amsmath}
\usepackage{bm}
\usepackage{amssymb}
\usepackage{mathtools}
\usepackage[ruled, linesnumbered]{algorithm2e}
\usepackage{makecell}
\usepackage{multirow}
\usepackage{subfigure} 
\usepackage{booktabs}
\usepackage{wrapfig}
\usepackage{extarrows}
\usepackage{colortbl}
\usepackage{tikz}

\usepackage{CJKutf8}

\newcommand{\ie}{\textit{i.e.}}
\newcommand{\eg}{\textit{e.g.}}

\definecolor{mygray}{RGB}{230,230,230}
\definecolor{myblue}{RGB}{225,244,252}
\definecolor{lightgreen}{RGB}{76,175,80}

\theoremstyle{definition}

\newtheorem{prop}{Proposition}

\newtheorem{theorem}{Theorem}[section]
\newtheorem{lemma}[theorem]{Lemma}

\newtheorem{definition}[theorem]{Definition}

\newtheorem{proposition}[theorem]{Proposition}

\newtheorem{remark}{Remark}[section]

\newcommand{\ourmethod}{{RedOUT}\xspace}

\usepackage{etoc}
\etocdepthtag.toc{mtchapter}
\etocsettagdepth{mtchapter}{subsection}
\etocsettagdepth{mtappendix}{none}

\usepackage{pifont}
\newcommand{\cmark}{\color{green}\ding{51}}%
\newcommand{\xmark}{\color{red}\ding{55}}

\title{Redundancy-Aware Test-Time Graph Out-of-Distribution Detection}

%

\author{%
  {Yue Hou$^{1,2}$, He Zhu$^{1}$, Ruomei Liu$^{1}$, Yingke Su$^{2}$, Junran Wu$^{1}$\thanks{Corresponding author}, Ke Xu$^{1}$,}\\
  $^{1}$State Key Laboratory of Complex \& Critical Software Environment, Beihang University\\
  $^{2}$Shen Yuan Honors College, Beihang University\\
  \texttt{\{hou\_yue, roy\_zh, rmliu, suyingke, wu\_junran, kexu\}@buaa.edu.cn}
}

\begin{document}
\begin{CJK}{UTF8}{gbsn}

\maketitle

\begin{abstract}
Distributional discrepancy between training and test data can lead models to make inaccurate predictions when encountering out-of-distribution (OOD) samples in real-world applications. Although existing graph OOD detection methods leverage data-centric techniques to extract effective representations, their performance remains compromised by structural redundancy that induces semantic shifts. To address this dilemma, we propose RedOUT, an unsupervised framework that integrates structural entropy into test-time OOD detection for graph classification. Concretely, we introduce the Redundancy-aware Graph Information Bottleneck (ReGIB) and decompose the objective into essential information and irrelevant redundancy. By minimizing structural entropy, the decoupled redundancy is reduced, and theoretically grounded upper and lower bounds are proposed for optimization. Extensive experiments on real-world datasets demonstrate the superior performance of RedOUT on OOD detection. Specifically, our method achieves an average improvement of 6.7\%, significantly surpassing the best competitor by 17.3\% on the ClinTox/LIPO dataset pair.
\end{abstract}

\section{Introduction}

Deep learning models have achieved impressive success across a wide range of tasks, yet they can behave unpredictably when faced with out-of-distribution (OOD) data that differ significantly from the training distribution, making high-confidence but incorrect predictions. 
OOD detection~\cite{baograph,wuenergy,yangbounded} seeks to identify such inputs and is critical for reliable deployment in open-world scenarios. This challenge becomes even more pronounced for graph-structured data due to the non-Euclidean nature and complex topological structures.

Recent advancements~\cite{guo2023data,liu2023good,wang2024goodat} in graph OOD detection can be broadly divided into two main categories. \textbf{(1) End-to-end methods}~\cite{liu2023good} aim to train OOD-specific graph neural networks (GNNs)~\cite{gcn_kipf2017semi,gin_xu2019how} from scratch using only unlabeled ID data. These approaches typically leverage unsupervised objectives such as graph contrastive learning (GCL) to extract discriminative representations that can separate ID and OOD samples during inference ($\rhd$ Figure~\ref{sub-compare}(i)). \textbf{(2) Post-hoc methods}~\cite{guo2023data,wang2024goodat} employ well-trained GNNs and fine-tune OOD detectors to refine the predictions or representations at inference time ($\rhd$ Figure~\ref{sub-compare}(ii)).
Specifically, GOODAT~\cite{wang2024goodat} stands out by directly optimizing a learnable graph masker on test samples without altering the pre-trained model.
This test-time setting with data-centric modification is more practical for removing the dependency on labeled training data or model retraining.

\begin{figure}[t]
    \centering
    \begin{minipage}[b]{.68\linewidth}
    \subfigure[Comparison between \ourmethod and other methods]{
    \includegraphics[width=1\linewidth]{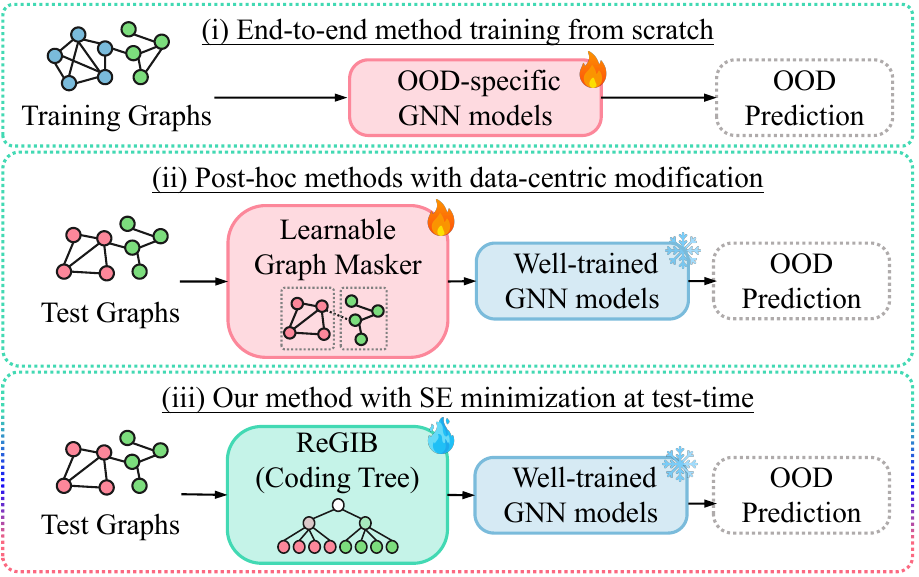}\label{sub-compare}}%
    \end{minipage}
    \hspace{-0.2cm}
    \centering
    \begin{minipage}[b]{.3\linewidth}
        \centering
        \subfigure[Distinctive patterns]{
        \includegraphics[width=.73\linewidth]{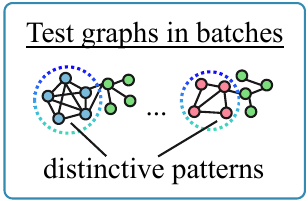} \label{sub-toy}
        \vspace{-0.4cm}}
        \centering
        \subfigure[Density distributions]{
        \vspace{-0.2cm}
        \includegraphics[width=.82\linewidth]{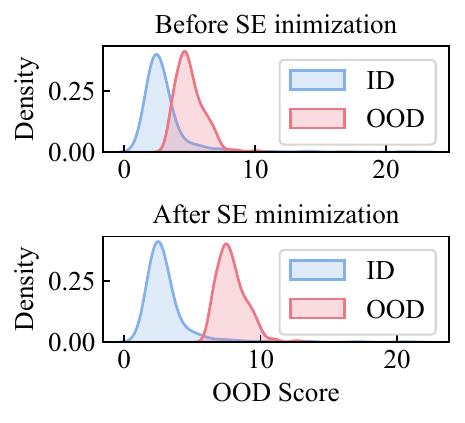} \label{sub-dens}
        \vspace{-0.2cm}}
    \end{minipage}
\vspace{-0.2cm}
\caption{(a) A schema comparison. 
(b) A toy example of distinctive components within test graphs.
(c) Scoring distributions before/after structural entropy (\textit{abbr.} SE) minimization.}
 \label{fig:toy+dens}
\vspace{-0.5cm}
\end{figure}

Despite significant advancements in the aforementioned data-centric paradigm, a less-explored challenge persists. 
While pre-trained models can effectively extract information from ID data, they lack ground-truth information indicative of the underlying distribution when inferring on both ID and OOD test data, which is often embedded within the graph structure.
As illustrated in the toy example in Figure~\ref{sub-toy}, graphs in test batches comprise distinctive components (highlighted by dashed circles) and irrelevant structural elements. 
Effectively capturing these distinctive components can intuitively differentiate OOD samples from ID graphs. However, the pervasive presence of similar irrelevant structural elements hinders current methods from accurately capturing and distinguishing these essential structures between ID and OOD data.
Although GOODAT~\cite{wang2024goodat} tries to address this concern by utilizing graph maskers to identify subgraphs in test data that are similar to those in ID graphs from training datasets, it fundamentally relies on learnable graph augmentations. 
These augmentations can inadvertently alter the semantic information of substructures or lead to information loss, thereby compromising the reliability of OOD detection.

To capture the distinctive information while preserving structural semantics, we first decompose the information within graph into essentiality and redundancy, leveraged by the graph information bottleneck~\cite{wu2020graph}.
Structural entropy, which provides a hierarchical abstraction to measure structural complexity~\cite{Li2016StructuralEntropy}, further inspires us to utilize the technology to eliminate decoupled redundant information.
We compute the scoring distributions of graph representations before and after structural entropy minimization on the AIDS/DHFR dataset pair (with AIDS as ID dataset and DHFR as OOD dataset), as shown in Figure~\ref{sub-dens}. From score density plots, we observe that after structural entropy minimization, OOD scores exhibit smaller variance and a decrease in the overlap between OOD and ID samples. This intuitively demonstrates that by removing redundancy, the more distinctive parts of graphs are retained, enabling more effective differentiation of the distributions.

In this paper, we propose an innovative redundancy-aware test-time graph OOD detection framework, termed \textbf{\ourmethod}, aiming at endowing well-trained models with the ability to extract the essential structural information from test graphs effectively. 
Concretely, we introduce the Redundancy-aware Graph Information Bottleneck (ReGIB) and decompose the objective into distinctive essential view and irrelevant redundancy. 
By minimizing structural entropy, coding tree is constructed to instantiate the essential view, effectively removing redundancy. 
To make the overall objectives tractable, we establish upper and lower bounds for optimizing essential and redundant information.
A comparison between \ourmethod and existing methods is illustrated in Figure~\ref{sub-compare}.
It is important to note that structural entropy minimization serves as a preprocessing step, utilizing an approximation algorithm that does not require additional learning.
Leveraging hierarchical representation learning based on the coding tree, we calibrate OOD scores to get better predictions without modifying the parameters of pre-trained models.
Extensive experiments on real-world datasets demonstrate the superiority of \ourmethod against state-of-the-art (SOTA) baselines.
The contributions of this work are as follows:

\begin{itemize} [leftmargin=*,noitemsep,topsep=1.5pt]
  \item We propose a novel framework for test-time unsupervised graph OOD detection, termed \ourmethod. To the best of our knowledge, this is the first trial to endow trained models with the ability to capture the essential structural information of graphs during test-time.
  
  \item We introduce the ReGIB to decouple the essential and redundant information within graphs, where coding tree with minimized structural entropy is instantiated as essential view. We further introduce the upper and lower bounds for optimizing the ReGIB objectives.
  
  \item Extensive experiments validate the effectiveness of \ourmethod, demonstrating the superior performance over SOTA baselines in unsupervised OOD detection.

\end{itemize}

\section{Notations and Preliminaries}
\vspace{-2.0mm}
Before formulating the research problem, we first provide some necessary notations. Let $G=(\mathcal{V},\mathcal{E},\mathbf{X})$ represent a graph, where $\mathcal{V}$ is the set of nodes and $\mathcal{E}$ is the set of edges. The node features are represented by the feature matrix $\mathbf{X} \in \mathbb{R}^{n \times d_f}$, where $n=|\mathcal{V}|$ is the number of nodes and $d_f$ is the feature dimension. The structure information can also be described by an adjacency matrix $\mathbf{A} \in \mathbb{R}^{n \times n}$, so a graph can be alternatively represented by $G=(\mathbf{A},\mathbf{X})$. 
Notation with its description can be found in Appendix~\ref{ap:notation}.

\noindent \textbf{Unsupervised Pre-training with Contrastive Loss.}
In the general graph contrastive learning paradigm for graph classification, two augmented graphs are generated using different graph augmentation operators. Subsequently, representations are generated using a GNN encoder, and further mapped into an embedding space by a shared projection head for contrastive learning. 
A typical graph contrastive loss, InfoNCE~\cite{simclr_chen2020simple,gca_zhu2021graph}, treats the same graph $G_i$ in different views $G_i^\alpha$ and $G_i^\beta$ as positive pairs and other nodes as negative pairs. The graph contrastive learning loss $\mathcal{L}_{Cl}$ can be formulated as: 
\vspace{-2.0mm}
\begin{equation}
\label{eq:contra3}
\mathcal{L}_{Cl}(G^\alpha, G^\beta) = -\frac{1}{N} \sum_{i = 1}^{N} 
\operatorname{log} \frac{e^{sim(\mathbf{Z}_i^\alpha,\mathbf{Z}_i^\beta)/\tau }}{\sum_{j=1, {j \neq i}}^{N} e^{sim(\mathbf{Z}_i^\alpha,\mathbf{Z}_j^\alpha)/\tau}},
\end{equation}
\noindent where $\mathbf{Z}_i^\alpha$ and $\mathbf{Z}_i^\beta$ are graph-level representations on $G_i^\alpha$ and $G_i^\beta$, $N$ denotes the batch size, $\tau$ is the temperature coefficient, and $sim(\cdot, \cdot)$ stands for cosine similarity function.

In this study, inspired by GOOD-D\cite{liu2023good}, we employ 5 layers of GIN~\cite{gin_xu2019how} as the backbone and adopt a perturbation-free augmentation strategy to construct view $G^\gamma = (\mathbf{A}, \mathbf{P})$, where $\mathbf{P}$ is formed by concatenating  $\mathbf{p}_i = [\mathbf{p}_i^{(rw)} || \mathbf{p}_i^{(lp)}]$.
Specifically, the random walk diffusion encoding $\mathbf{p}_i^{(rw)} = [\text{RW}_{ii}, \text{RW}_{ii}^{2}, \cdots, \text{RW}_{ii}^{r}] \in \mathbb{R}^{r}$, where $\text{RW} = \mathbf{A}\mathbf{D}^{-1}$ is the random walk transition matrix, and $\mathbf{D}$ is the diagonal degree matrix. 
The Laplacian positional encoding $\mathbf{p}_i^{(lp)} = [\mathbf{I} - \mathbf{D}^{-\frac{1}{2}}\mathbf{A}\mathbf{D}^{-\frac{1}{2}}]_{ii}$.
With the original graph $G$ as the basic view, loss $\mathcal{L}_{Cl}(G,G^\gamma)$ is computed to optimize the pre-trained model.

\noindent \textbf{Test-time Graph-level OOD Detection.}
Following GOODAT~\cite{wang2024goodat}, we consider an unlabeled ID dataset $\mathcal{D}^{id}=\{G^{id}\}_{N}$ where graphs are sampled from distribution $\mathbb{P}^{id}$ and an OOD dataset $\mathcal{D}^{ood}=\{G^{ood}\}_{N'}$ sampled from a different distribution $\mathbb{P}^{ood}$. 
Given a graph $G$ from $\mathcal{D}^{id}_{test} \cup \mathcal{D}^{ood}_{test}$, test-time graph OOD detection aims to detect whether $G$ originates from $\mathbb{P}^{id}$ or $\mathbb{P}^{ood}$ utilizing a GNN $f$ pre-trained on ID graphs $\mathcal{D}^{id}_{train} \subset \mathcal{D}^{id}$.
Specifically, the objective is to learn an OOD detector $D(\cdot,\cdot)$ that assigns an OOD detection score $s = D(f,G)$, with a higher $s$ indicating a greater probability that $G$ is from $\mathbb{P}^{ood}$ (note that $\mathcal{D}^{id}_{test} \cap \mathcal{D}^{id}_{train} = \emptyset$, $\mathcal{D}^{id}_{test} \subset \mathcal{D}^{id}$, and $\mathcal{D}^{ood}_{test} \subset \mathcal{D}^{ood}$).

\noindent \textbf{Structural Entropy.} 
Structural entropy is initially proposed \cite{Li2016StructuralEntropy} to measure the uncertainty of graph structure, revealing the essential structure of a graph.
The structural entropy of a given graph ${G}=\left\{\mathcal{V},\mathcal{E}, \mathbf{X}\right\}$ on its coding tree $T$ is defined as:
\begin{equation}
    \mathcal{H}^T({G})=-\sum\limits_{v_\tau\in T}\frac{g_{v_\tau}}{vol(\mathcal{V})}\log\frac{vol(v_\tau)}{vol(v_\tau^+)},
\end{equation}
where $v_\tau$ is a node in $T$ except for root node and also stands for a subset $\mathcal{V}_\tau \in \mathcal{V}$, $g_{v_\tau}$ is the number of edges connecting nodes in and outside $\mathcal{V}_\tau$, $v_\tau^+$ is the immediate predecessor of $v_\tau$ and $vol(v_\tau)$, $vol(v_\tau^+)$ and $vol(\mathcal{V})$ are the sum of degrees of nodes in $v_\tau$, $v_\tau^+$ and $\mathcal{V}$, respectively.

\begin{figure*}[t]
    \centering
    \includegraphics[width=1.0\linewidth]{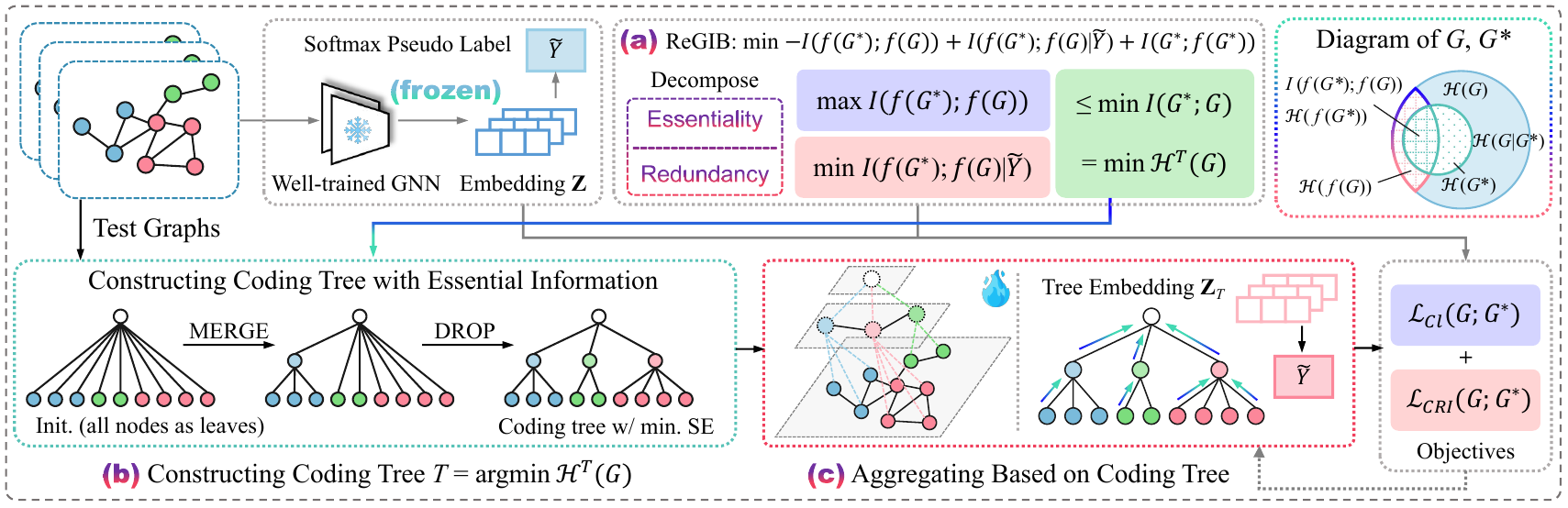}
    \caption{Overview of the overall framework of \ourmethod and proposed ReGIB.
    (a) the ReGIB principle and its bounds.
    (b) obtained by minimizing structural entropy of graph $G$, coding tree $T$ serves as an instantiation of the essential view. 
    (c) message passing and aggregating on graph are under the guidance of the coding tree.}
\label{fig:framework}
\vspace{-1em}
\end{figure*}

\section{Methodology}
\label{sec:method}
\vspace{-2.0mm}
This section elaborates on \ourmethod with its framework shown in Figure~\ref{fig:framework}. 
We first derive the Redundancy-aware Graph Information Bottleneck (ReGIB) ($\rhd$ Sec.~\ref{sec:ReGIB}) by decomposing the objective into essential information and irrelevant redundancy, and introduce their upper and lower bounds.
Additionally, we construct coding tree with minimized structural entropy as the essential view and instantiate the ReGIB with tractable bounds for efficient optimization ($\rhd$ Sec.~\ref{sec:ins}).

\subsection{The Principle of ReGIB} \label{sec:ReGIB}
\vspace{-2.0mm}
According to the graph information bottleneck (GIB) principle~\cite{wu2020graph}, retaining optimal representation on a graph view should involve maximizing mutual information (MI) between the output and ground-truth labels (\ie, $\max I(f(G);Y)$) while reducing mutual information between input and output (\ie, $\min I(G; f(G))$). 
This can be expressed as follows:
\begin{equation}
    \min \text{GIB} \triangleq -I(f(G); Y) + \beta I(G; f(G)),
\label{eq:GIB}
\end{equation}
where $\beta$ is the Lagrangian parameter to balance the two terms, $Y$ is the ground-truth labels, and $I(\cdot;\cdot)$ denotes the mutual information between inputs.

\begin{wrapfigure}{r}{0.5\textwidth}  
    \centering
    \vspace{-1em}
    \includegraphics[width=0.48\textwidth]{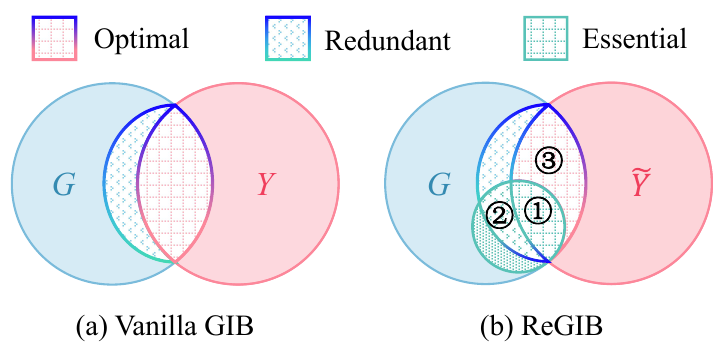}
    \vspace{-0.3cm}
    \caption{Comparison between the basic GIB and proposed ReGIB. Note ReGIB balances signals $\textcircled{1}, \textcircled{2}$ with optimal $f(G^*)$ to capture the essential information and discard redundancy.}
    \label{fig:ReGIB}
    \vspace{-1em}
\end{wrapfigure}

As mentioned, ID and OOD graphs can be distinguished based on distinctive structures. In this section, we introduce $G^*$ with essential information of the original graph $G$ in test-time setting for the first time, extending the GIB principle.

Compared to vanilla GIB, the proposed ReGIB accounts for the fact that the model pre-trained solely on ID graphs, struggles to generalize to unseen distribution, leading to the representations containing not only the optimal components but also irrelevant redundancy ($\rhd$ Figure~\ref{fig:ReGIB}(b)).
Moreover, for unsupervised tasks, the predicted label $\tilde{Y}$ is used as a surrogate for the unknown label distribution, enabling training without access to ground-truth labels $Y$.
Thus, by substituting $G^*$ and $\tilde{Y}$ into Eq.~\eqref{eq:GIB}, we have:
\vspace{-1.0mm}
\begin{equation}
    \min -I(f(G^*); \tilde{Y}) + \beta {I(G^*; f(G^*))}.
\end{equation}
Besides, due to semantic shifts in representations during test-time, the dual information sources ($f(G^*)$ and $\tilde{Y}$) may contain redundany. Therefore, it is crucial to further decouple the dependence among $f(G^*)$, $f(G)$, and $\tilde{Y}$.
\begin{prop}[Lower Bound of $I(f(G^*); \tilde{Y})$]
\begin{equation}
\begin{aligned}
    I(f(G^*); \tilde{Y}) &= I(f(G^*); f(G)) + I(f(G^*); \tilde{Y}|f(G)) -I(f(G^*); f(G)|\tilde{Y})\\
    & \ge ~\eqnmarkbox[blue]{I1}{I(f(G^*); f(G))} - \eqnmarkbox[red]{I2}{I(f(G^*); f(G)|\tilde{Y})}.
\end{aligned}
\end{equation}
\annotate[yshift=-0em]{below,left}{I1}{Eq.~\eqref{eq:low-GG} and~\eqref{eq:ins-CL}}
\annotate[yshift=-0em]{below,left}{I2}{Eq.~\eqref{eq:up-CRI} and~\eqref{eq:ins-CRI}}
\label{prop:low-GY}
\end{prop}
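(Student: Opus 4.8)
The plan is to obtain the displayed equality directly from the chain rule for mutual information applied in two different orders, and then to get the inequality by discarding a single manifestly non-negative term.

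First I would write out the chain rule for the joint mutual information $I\bigl(f(G^*);(\tilde{Y}, f(G))\bigr)$ in its two expansions,
\[
I\bigl(f(G^*);(\tilde{Y}, f(G))\bigr) = I(f(G^*); f(G)) + I\bigl(f(G^*); \tilde{Y}\mid f(G)\bigr),
\]
\[
I\bigl(f(G^*);(\tilde{Y}, f(G))\bigr) = I(f(G^*); \tilde{Y}) + I\bigl(f(G^*); f(G)\mid \tilde{Y}\bigr).
\]
Equating the two right-hand sides and isolating $I(f(G^*); \tilde{Y})$ produces exactly the claimed identity
\[
I(f(G^*); \tilde{Y}) = I(f(G^*); f(G)) + I\bigl(f(G^*); \tilde{Y}\mid f(G)\bigr) - I\bigl(f(G^*); f(G)\mid \tilde{Y}\bigr).
\]
Next I would invoke non-negativity of conditional mutual information, $I\bigl(f(G^*); \tilde{Y}\mid f(G)\bigr)\ge 0$, which holds since it is an expectation over $f(G)$ of KL divergences. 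Dropping this term from the identity yields the stated lower bound, with the surviving blue term being $I(f(G^*); f(G))$ (later lower bounded in the instantiation step) and the surviving red term being $I(f(G^*); f(G)\mid \tilde{Y})$ (later upper bounded in the instantiation step).

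I do not anticipate a genuine technical obstacle here: the argument is elementary information theory. The only point that demands care is bookkeeping in the chain-rule step — one must condition on $f(G)$ in the first expansion and on $\tilde{Y}$ in the second, so that the term which telescopes away is precisely $I(f(G^*); \tilde{Y}\mid f(G))$ rather than the term we wish to keep. I would also note explicitly that the retained interaction term $I(f(G^*); f(G)\mid \tilde{Y})$ need not be sign-definite; this does not affect validity of the bound, since it simply stays on the right-hand side with a negative sign and is controlled separately. Conceptually, the inequality says that the usable signal between the essential view $f(G^*)$ and the surrogate label $\tilde{Y}$ is at least the alignment between $f(G^*)$ and $f(G)$ minus the part of that alignment that is redundant once $\tilde{Y}$ is known, which is exactly the decoupling the paper wants between essential information and redundancy.
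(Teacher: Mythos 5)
Your proposal is correct and follows essentially the same route as the paper's proof: the paper asserts the same three-term identity and then drops the non-negative term $I(f(G^*);\tilde{Y}\mid f(G))$. Your only addition is to justify the identity explicitly via the two chain-rule expansions of $I\bigl(f(G^*);(\tilde{Y},f(G))\bigr)$, which the paper leaves implicit.
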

Proofs for Proposition~\ref{prop:low-GY} is provided in Appendix~\ref{ap:proof-low-fGY}.

\begin{definition}[Redundancy-aware Graph Information Bottleneck]
\label{def:ReGIB}
Given a test graph $G$, and the pseudo-label $\tilde{Y}$ predicted by pre-trained $f$, ReGIB aims to capture the distinctive structure $G^*$ with essential information and further learn the optimal representation $f(G^*)$ that satisfies:
\vspace{-1.5mm}
\begin{equation}
\begin{aligned}
    \label{eq:ReGIB}
\min \text{ReGIB} \triangleq -I(f(G^*); f(G)) + I(f(G^*); f(G)|\tilde{Y})+\beta {I(G^*; f(G^*))}.
 \end{aligned}
\end{equation}
\end{definition}
\vspace{-2.5mm}

\textbf{Remark.}
Intuitively, the first term $I(f(G^*); f(G))$ is the prediction term, which encourages that essential information to the graph property is preserved. The last two terms $I(f(G^*); f(G)|\tilde{Y})$ and $I(G^*; f(G^*))$ are used for compression, which encourage that irrelevant redundant information is dropped.
Since the well-trained model $f$ tends to make accurate predictions on ID graphs, its predictions for OOD graphs, which are unseen during training, are more random. 
The model struggles to identify distinctive structural information and similar redundant structures. 
However, with the distinctive patterns of $G^*$, ReGIB can provide ground-truth information to calibrate the model's predictions on test graphs.
A further comparison of ReGIB and GIB on an argumentation view is conducted in Appendix~\ref{ap:futher-explain-GIB}.

The key to our objective is how to obtain the distinctive structure $G^*$ and how to acquire the most optimal and essential information based on pseudo-labels while eliminating irrelevant redundancy in representations.
Thus, we introduce the lower and upper bounds for Eq.~\eqref{eq:ReGIB}.

\begin{prop}[Lower Bound of $I(f(G^*); f(G))$]
\label{prop:low-GG}
\begin{equation}
   \eqnmarkbox[blue]{}{I(f(G^*); f(G))} \geq -\mathcal{L}_{Cl}(G^*, G) + \log(N).
\label{eq:low-GG}
\end{equation}
\vspace{-4.0mm}
\end{prop}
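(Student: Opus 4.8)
The plan is to read the right-hand side $-\mathcal{L}_{Cl}(G^*,G)+\log N$ as a multi-sample InfoNCE estimator of the mutual information between the two graph-level views $f(G^*)$ and $f(G)$, with the exponentiated cosine similarity $h(u,v)=\exp(sim(u,v)/\tau)$ playing the role of the critic, and then invoke the standard fact that such an estimator is a variational \emph{lower} bound on the true mutual information. Concretely, expanding $\mathcal{L}_{Cl}$ through Eq.~\eqref{eq:contra3} (for the usual choice of partition function; the paper's loss draws negatives from one view and omits the positive term, which shifts the negative count by one and is absorbed below) and moving $\log N$ inside the logarithm gives
\[
-\mathcal{L}_{Cl}(G^*,G)+\log N \;=\; \frac{1}{N}\sum_{i=1}^{N}\log\frac{e^{sim(\mathbf{Z}_i^{G^*},\mathbf{Z}_i^{G})/\tau}}{\tfrac1N\sum_{j} e^{sim(\mathbf{Z}_i^{G^*},\mathbf{Z}_j)/\tau}}\;=:\;I_{\mathrm{NCE}}\big(f(G^*);f(G)\big),
\]
understood in expectation over the sampled batch of size $N$ (equivalently, $\mathcal{L}_{Cl}$ here denotes the population contrastive loss). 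So it remains to prove the general inequality $I_{\mathrm{NCE}}\le I(f(G^*);f(G))$.

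I would establish this in two moves. First, among all critics the InfoNCE objective is maximized by the Bayes-optimal choice $h^\star(u,v)\propto p(v\mid u)/p(v)$, so $I_{\mathrm{NCE}}$ with the cosine critic is at most $I_{\mathrm{NCE}}(h^\star)$. Second, substituting $h^\star$ one recognizes $I_{\mathrm{NCE}}(h^\star)$ as the mutual information $I(S;U,V_{1:N})$ in the augmented model where $S$ is a uniform index on $\{1,\dots,N\}$, the $V_k$ are i.i.d.\ from the marginal of $f(G)$, and $V_S$ is resampled from $p(\,\cdot\mid U)$ with $U=f(G^*)$: indeed the optimal InfoNCE loss equals the conditional entropy $H(S\mid U,V_{1:N})$, so $I_{\mathrm{NCE}}(h^\star)=\log N-H(S\mid U,V_{1:N})=I(S;U,V_{1:N})$. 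Since marginally $S\perp V_{1:N}$ this reduces to $I(S;U\mid V_{1:N})$, and then $H(U\mid V_{1:N})\le H(U)$ together with $H(U\mid S,V_{1:N})=H(U\mid V)$ (the posterior of $U$ given $S=j$ and $V_{1:N}$ depends only on $V_j$) gives $I(S;U\mid V_{1:N})\le I(U;V)=I(f(G^*);f(G))$. Chaining the inequalities yields exactly $-\mathcal{L}_{Cl}(G^*,G)+\log N\le I(f(G^*);f(G))$.

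The only real obstacle is making the second move airtight; the augmented-model identity for $I_{\mathrm{NCE}}(h^\star)$ is the cleanest route because it avoids the non-rigorous ``replace the random partition function by its mean $N-1$'' step used in the classical CPC derivation (that derivation is available as a shorter but only asymptotically tight alternative). A secondary, purely bookkeeping nuisance is reconciling the paper's specific contrastive loss --- same-view negatives, positive excluded from the denominator --- with the textbook InfoNCE estimator, but this only perturbs constants by $O(1/N)$ terms that are absorbed into the statement. Everything else (expanding the log, the optimal-critic characterization, the entropy inequalities) is routine.
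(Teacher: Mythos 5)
Your proposal is correct and follows the same top-level route as the paper: both read $-\mathcal{L}_{Cl}(G^*,G)+\log N$ as the multi-sample InfoNCE estimator of $I(f(G^*);f(G))$ and conclude via the fact that InfoNCE lower-bounds mutual information. The difference is in how that fact is established. The paper's proof in Appendix~C.3 is essentially a one-line citation: it asserts the inequality $\mathbb{E}\bigl[\log \frac{e^{sim(\mathbf{Z}_i^\alpha,\mathbf{Z}_i^\beta)}}{\sum_{j\neq i}e^{sim(\mathbf{Z}_i^\alpha,\mathbf{Z}_j^\alpha)}}\bigr]\le I(\mathbf{Z}_i^\alpha,\mathbf{Z}_i^\beta)-\log N$ ``from \cite{oord2018representation}'' and then rearranges. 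You instead supply the actual argument --- the optimal-critic characterization followed by the augmented-model identity $I_{\mathrm{NCE}}(h^\star)=\log N-H(S\mid U,V_{1:N})=I(S;U\mid V_{1:N})\le I(U;V)$ --- which is the rigorous (Poole-et-al.-style) version that avoids the heuristic ``replace the partition function by its mean'' step in the original CPC derivation. So your proof is strictly more self-contained and more careful than the paper's. You also correctly flag the one place where both arguments are loose: the paper's $\mathcal{L}_{Cl}$ draws negatives from the same view as the anchor and excludes the positive from the denominator, so it is not literally the textbook InfoNCE loss to which the cited bound applies; the paper silently ignores this, whereas you at least acknowledge it (note, though, that excluding the positive shrinks the denominator and hence \emph{decreases} the loss, so the discrepancy pushes against the stated inequality rather than being harmlessly absorbed --- making the bound exact would require either restoring the positive term to the denominator or redefining $N$ as the effective number of denominator terms).
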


\begin{prop}[Upper Bound of $I(f(G^*); f(G)|\tilde{Y})$]
\label{prop:up-CRI}
For any $\mathbb{Q}(f(G)|\tilde{Y})$, the variational upper bound of conditional mutual information $I(f(G^*); f(G)|\tilde{Y})$ is:
\begin{align}
\eqnmarkbox[red]{}{I(f(G^*); f(G)|\tilde{Y})} \le \mathbb{E}_{\mathbb{P}}\left[ \log \frac{\mathbb{P}(f(G), f(G^*)|\tilde{Y})}{\mathbb{P}(f(G^*)|\tilde{Y}) \mathbb{Q}(f(G)|\tilde{Y})} \right].
\label{eq:up-CRI}
\end{align}
\vspace{-6.0mm}
\end{prop}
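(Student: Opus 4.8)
The plan is to establish the inequality by the standard variational argument for mutual information, carried out in the conditional setting. First I would expand the conditional mutual information in its expectation form,
\begin{equation*}
I(f(G^*); f(G)|\tilde{Y}) = \mathbb{E}_{\mathbb{P}}\left[ \log \frac{\mathbb{P}(f(G), f(G^*)|\tilde{Y})}{\mathbb{P}(f(G^*)|\tilde{Y})\, \mathbb{P}(f(G)|\tilde{Y})} \right],
\end{equation*}
where the expectation is taken over the joint law $\mathbb{P}(f(G), f(G^*), \tilde{Y})$.

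Next, for an arbitrary variational conditional distribution $\mathbb{Q}(f(G)|\tilde{Y})$ that is absolutely continuous with respect to $\mathbb{P}(f(G)|\tilde{Y})$, I would multiply and divide the integrand by $\mathbb{Q}(f(G)|\tilde{Y})$ and split the logarithm, obtaining
\begin{equation*}
I(f(G^*); f(G)|\tilde{Y}) = \mathbb{E}_{\mathbb{P}}\left[ \log \frac{\mathbb{P}(f(G), f(G^*)|\tilde{Y})}{\mathbb{P}(f(G^*)|\tilde{Y})\, \mathbb{Q}(f(G)|\tilde{Y})} \right] - \mathbb{E}_{\mathbb{P}}\left[ \log \frac{\mathbb{P}(f(G)|\tilde{Y})}{\mathbb{Q}(f(G)|\tilde{Y})} \right].
\end{equation*}
The second term depends only on $f(G)$ and $\tilde{Y}$, so after marginalizing out $f(G^*)$ it equals $\mathbb{E}_{\tilde{Y}}\big[ D_{\mathrm{KL}}\big( \mathbb{P}(f(G)|\tilde{Y}) \,\|\, \mathbb{Q}(f(G)|\tilde{Y}) \big) \big]$.

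Then I would invoke Gibbs' inequality (equivalently, Jensen's inequality applied to the convex map $-\log$) to conclude that every conditional KL divergence is nonnegative, hence its expectation over $\tilde{Y}$ is nonnegative as well. Subtracting this nonnegative quantity from the first term yields exactly the claimed upper bound, with equality if and only if $\mathbb{Q}(f(G)|\tilde{Y}) = \mathbb{P}(f(G)|\tilde{Y})$ almost surely. The derivation is short and essentially routine; there is no real obstacle beyond careful bookkeeping of which variables are integrated out at each step — in particular, verifying that the second term genuinely collapses to an expected KL divergence once $f(G^*)$ is marginalized — together with stating the mild absolute-continuity requirement on $\mathbb{Q}$ that makes the divergence well defined. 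I would also remark that this is the conditional analogue of the CLUB-type upper bound on mutual information, so the later instantiation simply amounts to choosing a tractable parametric family for $\mathbb{Q}(f(G)|\tilde{Y})$.
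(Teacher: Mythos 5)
Your proposal is correct and follows essentially the same route as the paper: the paper likewise introduces the variational distribution $\mathbb{Q}(f(G)|\tilde{Y})$ via a logarithmic identity and drops the resulting nonnegative (expected) KL divergence $D_{\mathrm{KL}}\bigl[\mathbb{P}(f(G)|\tilde{Y})\,\|\,\mathbb{Q}(f(G)|\tilde{Y})\bigr]$, only phrasing the intermediate bound in terms of $\mathbb{P}(f(G)|f(G^*),\tilde{Y})$ before converting to the joint-over-marginals form you work with directly. The two derivations are algebraically identical, so there is nothing further to add.
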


Proofs for Proposition~\ref{prop:low-GG} and~\ref{prop:up-CRI} are provided in Appendix\ref{ap:proof-low-fGfG} and \ref{ap:proof-vup}.

\vspace{-2.0mm}
\paragraph{Redundancy-eliminated Essential Information.}
The key to effectively distinguishing between ID and OOD graphs lies in maximizing the elimination of redundancy while preserving essential information.
Suppose the optimal view $G^*$ can capture the essential information while eliminating redundancy of graph $G$.
Here, we first provide the definition of essential view as follows.

\begin{definition}
The essential view with redundancy-eliminated information is supposed to be a distinctive substructure of the given graph.
\label{def:view_property}
\end{definition}

Based on the above analysis, the redundancy-eliminated view should retain the minimal amount of information while capturing the most distinctive structural patterns. 
The mutual information between $G$ and $G^*$ can be formulated as:
\begin{equation}
\eqnmarkbox[green]{}{I(G^*;G)} = \mathcal{H}(G^*) - \mathcal{H}(G^*|G),
\vspace{-1.0mm}
\end{equation}
where $\mathcal{H}(G^*)$ is the entropy of $G^*$ and $\mathcal{H}(G^*|G)$ is the conditional entropy of $G^*$ conditioned on $G$.

\begin{theorem}
\label{theo:hgeq0}
The information in $G^*$ is a subset of information in $G$ (i.e., $\mathcal{H}(G^*) \subseteq \mathcal{H}(G)$); thus, we have:
\begin{equation}
\mathcal{H}(G^*|G)=0.
\end{equation}
\end{theorem}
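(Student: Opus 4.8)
The plan is to exploit the fact that the essential view $G^*$ — instantiated in Section~\ref{sec:ins} as the coding tree $T$ obtained by structural entropy minimization — is produced by a \emph{deterministic} procedure whose only input is $G=(\mathbf{A},\mathbf{X})$. As emphasized in the introduction, this construction is a preprocessing step based on an approximation algorithm with no additional learnable parameters and no external randomness: the leaves of $T$ are the vertices of $G$, every internal node is a subset of $\mathcal{V}$ fixed by the hierarchical partition that (approximately) minimizes $\mathcal{H}^T(G)$, and the quantities $g_{v_\tau}$, $vol(v_\tau)$ entering the structural entropy are read off directly from $\mathbf{A}$. Hence there is a measurable map $\phi$ with $G^* = \phi(G)$.

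Granting this, I would first record the elementary identity
\begin{equation}
\mathcal{H}(G^*) = I(G^*; G) + \mathcal{H}(G^* \mid G),
\end{equation}
and then argue that $\mathcal{H}(G^* \mid G) = 0$: since $G^* = \phi(G)$ is a function of $G$, the conditional law of $G^*$ given any realization of $G$ is a point mass, equivalently the $\sigma$-algebra generated by $G^*$ is contained in that generated by $G$. Substituting back gives $\mathcal{H}(G^*) = I(G^*; G)$, i.e. every bit of information carried by $G^*$ is also contained in $G$ — which is precisely the Venn-diagram reading of $\mathcal{H}(G^*) \subseteq \mathcal{H}(G)$ — and, together with $I(G^*;G) \le \mathcal{H}(G)$, also yields the numerical bound $\mathcal{H}(G^*) \le \mathcal{H}(G)$.

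The only genuine subtlety, and the step I expect to require the most care, is making the informal notation $\mathcal{H}(G^*) \subseteq \mathcal{H}(G)$ precise and verifying that the coding-tree construction adds no information beyond $G$. I would handle this at the level of $\sigma$-algebras: it suffices to check that each ingredient of $T$ (its node set, its parent pointers, and the weights and volumes appearing in $\mathcal{H}^T(G)$) is a deterministic function of $(\mathbf{A},\mathbf{X})$, which is immediate from the definitions in Section~\ref{sec:method}. Once $G^*$ is established as $G$-measurable, the vanishing of $\mathcal{H}(G^* \mid G)$ follows from the standard fact that the conditional entropy of a deterministic function of the conditioning variable is zero. A closing remark would note that the argument is unaffected by using an approximate rather than exact structural-entropy minimizer, since any such algorithm is still a fixed map of its input, so $G^*$ remains $G$-measurable.
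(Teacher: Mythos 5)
Your proposal is correct, but it reaches the conclusion by a genuinely different route than the paper. The paper's proof works at the level of node-representation distributions (following the InfoGraph formulation): it writes $\mathcal{H}(G^*|G)=\mathcal{H}(X^*|X)$, expands the conditional entropy as $-\sum_{x^*,x}\mathcal{P}(x^*,x)\log\mathcal{P}(x^*|x)$, invokes the defining property that the essential view is a \emph{substructure} of $G$ (so $\mathcal{P}(x^*,x)=0$ whenever $x\notin X^*$), and then collapses the surviving terms to $\mathcal{H}(X^*|X^*)=0$. You instead argue that $G^*=\phi(G)$ for a deterministic, parameter-free map $\phi$ (the structural-entropy-minimizing coding-tree construction), so the conditional law of $G^*$ given $G$ is a point mass and the conditional entropy of a function of the conditioning variable vanishes. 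Your argument is the more watertight of the two: being a substructure alone does not force $\mathcal{H}(G^*|G)=0$ (a uniformly random subgraph of $G$ is a substructure yet has strictly positive conditional entropy), and the paper's sum manipulation implicitly assumes $\mathcal{P}(x^*|x)\in\{0,1\}$, which is precisely the determinism you make explicit. What the paper's route buys is a direct link to its Definition of the essential view and to the Venn-diagram reading of $\mathcal{H}(G^*)\subseteq\mathcal{H}(G)$, whereas your route buys rigor and robustness (it is insensitive to whether the minimizer is exact or approximate, as you note). Both yield $I(G^*;G)=\mathcal{H}(G^*)$, which is the identity the paper actually uses downstream.
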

\vspace{-2.0mm}

The detailed proof of Theorem~\ref{theo:hgeq0} is shown in Appendix~\ref{app:proof-1}. Here, the mutual information between $G$ and $G^*$ can be rewritten as:
\begin{equation}
\eqnmarkbox[green]{}{I(G^*;G)} = \mathcal{H}(G^*).
\end{equation}

\begin{prop}[Upper Bound of $I(G^*;f(G^*))$]
\label{prop:up-GfG}
Given the tuple $(G, G^*, f(G^*))$, the learning procedure follows the Markov Chain $<G \to G^* \to f(G^*)>$.
Accordingly, to acquire the essential view with essential information, we need to optimize:
\begin{equation}
    \min I(G^* ;f(G^*)) \leq \min {I(G^*;G)}=\min \mathcal{H}(G^*).
    \label{eq:up-GfG}
\end{equation}
\end{prop}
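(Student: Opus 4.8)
The plan is to peel the encoder $f$ off the left-hand side of Eq.~\eqref{eq:up-GfG} with a data-processing argument and then collapse the resulting quantity using Theorem~\ref{theo:hgeq0}. First I would make the dependency structure explicit: $G^*$ is produced from $G$ alone (via the structural-entropy / coding-tree procedure) and the representation $f(G^*)$ is obtained by passing $G^*$ through the frozen pre-trained encoder, so $f(G^*)$ is conditionally independent of $G$ given $G^*$, which is the stated chain $G \to G^* \to f(G^*)$. The additional observation I would record is that Theorem~\ref{theo:hgeq0} gives $\mathcal{H}(G^*\mid G)=0$, that is, $G \mapsto G^*$ is information-theoretically a deterministic map; consequently $p(f(G^*)\mid G,G^*)=p(f(G^*)\mid G)$ as well, so $G^* \to G \to f(G^*)$ is \emph{also} a Markov chain.

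Given this, the core computation is short. Applying the data-processing inequality to the chain $G^* \to G \to f(G^*)$ yields $I(G^*;f(G^*)) \le I(G^*;G)$; equivalently one may simply use that mutual information never exceeds the marginal entropy of either argument, so $I(G^*;f(G^*)) \le \mathcal{H}(G^*)$. Substituting the identity from Theorem~\ref{theo:hgeq0}, $I(G^*;G)=\mathcal{H}(G^*)-\mathcal{H}(G^*\mid G)=\mathcal{H}(G^*)$, gives $I(G^*;f(G^*)) \le \mathcal{H}(G^*)$ pointwise in the choice of essential view $G^*$. Since $G^*$ (the coding tree) is the only object being optimized while $f$ is held fixed, taking the minimum over admissible $G^*$ on both sides preserves the inequality and produces $\min I(G^*;f(G^*)) \le \min I(G^*;G)=\min \mathcal{H}(G^*)$, which is exactly Eq.~\eqref{eq:up-GfG}; this is also what licenses the instantiation in Sec.~\ref{sec:ins}, where $\mathcal{H}(G^*)$ is realized as the structural entropy on the coding tree and minimized by an approximation algorithm.

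The step I expect to be the real crux is the determinism of $G \mapsto G^*$. Read naively, the chain $G \to G^* \to f(G^*)$ only yields $I(G^*;f(G^*)) \ge I(G;f(G^*))$, a bound in the wrong direction, so the argument genuinely needs Theorem~\ref{theo:hgeq0} both to reverse the chain and to turn $I(G^*;G)$ into a plain entropy. I would therefore state that determinism consequence explicitly before invoking the data-processing inequality, and I would also be careful to clarify that the minima on both sides of Eq.~\eqref{eq:up-GfG} range over the construction of $G^*$, so the claim is that minimizing the tractable surrogate $\mathcal{H}(G^*)$ controls the intractable objective $I(G^*;f(G^*))$, not that the two minima coincide.
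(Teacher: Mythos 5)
Your proof is correct, and it is worth noting that the paper never actually supplies a dedicated proof of Proposition~\ref{prop:up-GfG}: the only justification given is the inline remark about the Markov chain $G \to G^* \to f(G^*)$ together with the identity $I(G^*;G)=\mathcal{H}(G^*)$ from Theorem~\ref{theo:hgeq0}. Your writeup fills that gap, and your diagnosis of the crux is exactly right: the data-processing inequality applied to the chain as stated only yields $I(G;f(G^*)) \le I(G^*;f(G^*))$ and $I(G;f(G^*)) \le I(G;G^*)$, neither of which is the claimed inequality. What rescues the argument is precisely what you identify — that $G \mapsto G^*$ is deterministic (equivalently $\mathcal{H}(G^*\mid G)=0$), which lets you either reverse the chain to $G^* \to G \to f(G^*)$ or, more directly, invoke $I(G^*;f(G^*)) \le \mathcal{H}(G^*)$ (valid here since the graph space is countable and $f(G^*)$ is a deterministic function of $G^*$) and then identify $\mathcal{H}(G^*)$ with $I(G^*;G)$ via Theorem~\ref{theo:hgeq0}. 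Your final caveat is also well taken: the proposition asserts that the tractable surrogate $\min \mathcal{H}(G^*)$ upper-bounds $\min I(G^*;f(G^*))$ over the choice of coding tree, not that the two minima coincide, and stating the scope of the minimization explicitly makes the claim cleaner than the paper's own presentation.
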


To eliminate redundancy, we introduce structural entropy as a measure of the information content within the hierarchy of the graph.
Thus, we argue that the view obtained by minimizing structural entropy of a given graph represents the redundancy-eliminated information, serving as an essential view that retains the graph's distinctive substructure.

To extract optimal and essential representations on $G^*$, we aim to maximize ${I(f(G^*); f(G))}$ and minimize ${I(f(G^*); f(G)|\tilde{Y})}$ to obtain the optimal encoder, such that:
\begin{equation}
\vspace{-1.0mm}
f^* = \arg\max_{f} I(f(G^*); f(G)) - I(f(G^*); f(G)|\tilde{Y}).
\end{equation}
We also analytically provide theoretical guarantee for redundant and essential information as follows.

\begin{theorem}
\label{thm:m1}
According to Proposition~\ref{prop:low-GG}, optimizing loss $\mathcal{L}_{Cl}(G^*, G)$ is equivalent to maximizing the mutual information $I(f(G^*);f(G))$, which could lead to the maximization of $I(f(G^*); Y)$.
\end{theorem}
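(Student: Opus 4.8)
The plan is to treat the statement as two linked claims and discharge each with bounds already in hand. For the first claim --- that minimizing $\mathcal{L}_{Cl}(G^*,G)$ is equivalent to maximizing $I(f(G^*);f(G))$ --- I would simply invoke Proposition~\ref{prop:low-GG}: it gives $I(f(G^*);f(G)) \geq -\mathcal{L}_{Cl}(G^*,G) + \log(N)$, and since $\log(N)$ is independent of the encoder $f$, the minimizer of $\mathcal{L}_{Cl}(G^*,G)$ is exactly the maximizer of this lower bound. Arguing --- as is standard for InfoNCE with a sufficiently large batch and an expressive critic --- that the bound is essentially tight at the optimum then upgrades ``maximizing the lower bound'' to ``maximizing $I(f(G^*);f(G))$'' in the asserted sense.

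For the second claim --- that this in turn ``could lead to'' maximizing $I(f(G^*);Y)$ --- I would chain two further inequalities. First, Proposition~\ref{prop:low-GY} already gives $I(f(G^*);\tilde{Y}) \geq I(f(G^*);f(G)) - I(f(G^*);f(G)|\tilde{Y})$, and the subtracted term is precisely the redundancy term that ReGIB (Definition~\ref{def:ReGIB}) minimizes alongside the prediction term, so along the ReGIB optimization path increasing $I(f(G^*);f(G))$ increases the lower bound on $I(f(G^*);\tilde{Y})$. Second, I would pass from the pseudo-label $\tilde{Y}$ to the true label $Y$ by a short entropy manipulation: the chain rule together with ``conditioning does not increase entropy'' yields $H(Y|f(G^*)) \leq H(\tilde{Y}|f(G^*)) + H(Y|\tilde{Y})$, which, after substituting $H(\tilde{Y}|f(G^*)) = H(\tilde{Y}) - I(f(G^*);\tilde{Y})$, rearranges into
\begin{equation}
I(f(G^*);Y) \;\geq\; I(f(G^*);\tilde{Y}) - H(\tilde{Y}|Y) \;\geq\; -\mathcal{L}_{Cl}(G^*,G) + \log(N) - I(f(G^*);f(G)|\tilde{Y}) - H(\tilde{Y}|Y).
\end{equation}
Since $f$ is well trained on ID graphs, $\tilde{Y}$ is nearly a deterministic function of $Y$ there, so $H(\tilde{Y}|Y)$ is small, and the remaining conditional-MI term is pushed down by ReGIB; hence driving $\mathcal{L}_{Cl}(G^*,G)$ down drives the lower bound on $I(f(G^*);Y)$ up, which is the conclusion. (Alternatively, one can simply reuse Proposition~\ref{prop:low-GY} with $Y$ substituted for $\tilde{Y}$ under the surrogate assumption and skip the explicit correction term.)

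The step I expect to be the real obstacle is the passage from $\tilde{Y}$ to $Y$. The inequality itself is routine, but its usefulness hinges entirely on $H(\tilde{Y}|Y)$ being negligible --- i.e., on $\tilde{Y}$ being a faithful surrogate for $Y$ --- and for test graphs drawn from $\mathbb{P}^{ood}$, exactly the inputs the method must handle, the pre-trained model's predictions are unreliable and this quantity need not be small. This is why the theorem claims only that maximizing $I(f(G^*);f(G))$ \emph{could} rather than \emph{must} maximize $I(f(G^*);Y)$; making the implication quantitative would require an explicit assumption bounding the pseudo-label noise. A lesser subtlety is that the ``equivalence'' in the first claim relies on the InfoNCE bound being tight, which holds only asymptotically.
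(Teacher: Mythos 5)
Your chain of inequalities is sound --- I checked the entropy manipulation, and indeed $H(Y|f(G^*)) \le H(\tilde{Y}|f(G^*)) + H(Y|\tilde{Y})$ rearranges to $I(f(G^*);Y) \ge I(f(G^*);\tilde{Y}) - H(\tilde{Y}|Y)$ --- but your route is genuinely different from the paper's. The paper never touches the pseudo-label $\tilde{Y}$ in this proof. Instead it argues: (i) by a data-processing-style lemma, maximizing $I(f(G^*);f(G))$ drives up $I(f(G^*);G)$, since $I(f(G^*);G) = I(f(G^*);f(G)) + I(f(G^*);G|f(G))$; (ii) by a second lemma, $I(f(G^*);G) \le I(f(G^*);Y) + I(G;G^*|Y) \le I(f(G^*);Y) + I(G;G^*)$, i.e.\ $I(f(G^*);Y) \ge I(f(G^*);G) - I(G;G^*)$; and (iii) by Theorem~\ref{theo:hgeq0}, $I(G;G^*) = \mathcal{H}(G^*)$, which the coding-tree construction explicitly minimizes. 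So the paper's slack term between what the contrastive loss controls and what the theorem claims is $I(G;G^*)$, and it is controlled by the structural-entropy minimization --- this is the conceptual point your proof misses entirely, since your bound never invokes $\min\mathcal{H}(G^*)$. Conversely, your slack term is $H(\tilde{Y}|Y) + I(f(G^*);f(G)|\tilde{Y})$, which ties the theorem to the $\mathcal{L}_{CRI}$ objective but imports a pseudo-label-noise quantity that, as you correctly observe, need not be small on exactly the OOD inputs the method targets; the paper's route avoids that assumption altogether. Both arguments share the weakness you flag for yours alone: maximizing a lower bound (InfoNCE, or the paper's Lemma chain) is not the same as maximizing the bounded quantity, which is presumably why the theorem is phrased as ``could lead to.'' If you want your version to stand on its own, you would need an explicit assumption bounding $H(\tilde{Y}|Y)$; if you want to recover the paper's version, replace your $\tilde{Y}$-detour with the decomposition $I(f(G^*);G) = I(f(G^*);Y) - I(f(G^*);Y|G) + I(f(G^*);G|Y)$ and bound the last term by $I(G;G^*) = \mathcal{H}(G^*)$.
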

\textbf{Remark.}
Theorem~\ref{thm:m1} (See proof in Appendix~\ref{app:proof-2}) reveals that our essential view enables the encoder to preserve more information associated with the ground-truth labels, which will boost the performance of downstream tasks.

\begin{theorem}[Bounds of Redundant and Essential Information]
\label{theo:re-es}
Suppose the encoder $f$ is implemented by a GNN as powerful as the 1-WL test. Suppose $\mathcal{G}$ is a countable space and thus $\mathcal{G'}$ is a countable space, where $\cong$ denotes equivalence ($G_1\cong G_2$ if $G_1, G_2$ cannot be distinguished by the 1-WL test). 
Define $\mathbb{P}_{\mathcal{G}'\times\mathcal{Y}}(G',Y')=\mathbb{P}_{\mathcal{G}\times\mathcal{Y}}(G\cong G', Y)$ and $\mathcal{T'}(G')=\mathbb{E}_{G\backsim \mathbb{P}_{G}}[\mathcal{T}^*(G)|G\cong G']$ for $G'\in \mathcal{G}$. Then, the optimal $f^*$ and $G^*$ to \ourmethod satisfies:
\vspace{-1mm}
\begin{enumerate}[leftmargin=*]
\item $I(G^*;Y) = I(f^*(G^*); Y) \geq I(t'(G'); Y),$
\item $I(f^*(G); G | Y ) \leq \min_{\mathcal{T}} I(t'(G'); G'))- I(t'(G'); Y ),$
\end{enumerate}
\vspace{-1mm}
where $t^*(G) = G^*, t'(G')\sim \mathcal{T'}(G')$, $t'^*(G')\sim \mathcal{T'}^{*}(G')$, $(G,Y)\sim \mathbb{P}_{\mathcal{G}\times \mathcal{Y}}$ and $(G',Y)\sim \mathbb{P}_{\mathcal{G}'\times \mathcal{Y}}$.
\end{theorem}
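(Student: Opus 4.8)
The plan is to mimic the structure of the classical information-bottleneck sufficiency argument (as in the graph-information-bottleneck literature) but pushed through the 1-WL quotient space $\mathcal{G}'$. First I would set up the Markov-chain scaffolding: by Proposition~\ref{prop:up-GfG} the learning procedure obeys $G \to G^* \to f(G^*)$, and since $f$ is assumed as powerful as 1-WL, $f(G)$ factors through the quotient map $G \mapsto G'$, so $f(G)$ is a (deterministic) function of $G'$ and in particular $Y \to G \to G' \to f(G)$ is a Markov chain. I would also record that $G^*= t^*(G)$ is a distinctive substructure of $G$ (Definition~\ref{def:view_property}), so by Theorem~\ref{theo:hgeq0} $\mathcal{H}(G^*\mid G)=0$ and hence $I(G^*;G)=\mathcal{H}(G^*)$; combined with the data-processing inequality along $G\to G^*\to f(G^*)$ this gives $I(f^*(G^*);Y)\le I(G^*;Y)\le I(G;Y)$. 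The equality $I(G^*;Y)=I(f^*(G^*);Y)$ in part (1) is the ``lossless'' claim: it should follow because the optimal $f^*$ in Theorem~\ref{thm:m1} is exactly the one that maximizes $I(f(G^*);f(G))$, which by Proposition~\ref{prop:low-GG} saturates the contrastive bound and — by the argument of Theorem~\ref{thm:m1} — forces $f^*$ to retain all label-relevant content of $G^*$, so no information is lost in the encoding step.

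Next, for the lower bound $I(G^*;Y)\ge I(t'(G');Y)$ in part (1), I would exploit the definitions $\mathbb{P}_{\mathcal{G}'\times\mathcal{Y}}(G',Y')=\mathbb{P}_{\mathcal{G}\times\mathcal{Y}}(G\cong G',Y)$ and $\mathcal{T}'(G')=\mathbb{E}_{G\sim\mathbb{P}_G}[\mathcal{T}^*(G)\mid G\cong G']$. The point is that $t'(G')$ is, by construction, the conditional-expectation (Rao--Blackwellized) version of the extractor $t^*$ over each 1-WL class; any transformation that is forced to be measurable with respect to the coarser $\sigma$-algebra generated by $G'$ cannot carry more information about $Y$ than the finer extractor $t^*(G)=G^*$ that is allowed to see $G$ itself. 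Formally I would invoke the data-processing inequality in the form: $Y \to G^* \leftarrow G \to G'$, together with the fact that $t'(G')$ is a function of $G'$ (with the auxiliary randomness independent of $Y$ given $G'$), to conclude $I(G^*;Y)\ge I(t'(G');Y)$. This is the ``optimality of $G^*$ among quotient-measurable extractors'' step and it is essentially a relabeling of the sufficiency argument; the subtlety is checking that the auxiliary randomness in $t'\sim\mathcal{T}'$ does not sneak in extra dependence, which it cannot because it is drawn independently of $(G,Y)$.

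For part (2), the bound on the conditional mutual information $I(f^*(G);G\mid Y)$, I would start from the chain-rule identity $I(f^*(G);G\mid Y) = I(f^*(G);G) - I(f^*(G);Y)$ (valid because $f^*(G)$ is a function of $G$, so $I(f^*(G);G,Y)=I(f^*(G);G)$ and also $=I(f^*(G);Y)+I(f^*(G);G\mid Y)$). Since $f^*(G)$ factors through $G'$, $I(f^*(G);G)\le I(G';G)$ but more precisely $I(f^*(G);G)=I(f^*(G);G')$; and because $f^*$ is the minimizer over encoders of the compression term while retaining predictive power, $I(f^*(G);G')\le \min_{\mathcal{T}}I(t'(G');G')$ — here the right side's minimization over $\mathcal{T}$ plays the role of choosing the most compressed quotient-measurable representation. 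Combining, $I(f^*(G);G\mid Y)\le \min_{\mathcal{T}}I(t'(G');G') - I(f^*(G);Y)$, and then I would use part (1)'s chain $I(f^*(G);Y)\ge$ (appropriate term) to replace $I(f^*(G);Y)$ by $I(t'(G');Y)$, yielding exactly the stated inequality $I(f^*(G);G\mid Y)\le \min_{\mathcal{T}}I(t'(G');G') - I(t'(G');Y)$.

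The main obstacle I anticipate is making the two ``optimality of $f^*$'' steps rigorous rather than heuristic: Theorem~\ref{thm:m1} and Proposition~\ref{prop:low-GG} only tell us that $f^*$ maximizes a contrastive surrogate that \emph{lower-bounds} $I(f(G^*);f(G))$, so I need an argument that at the optimum the bound is tight (or at least tight enough) to transfer the maximization to the true mutual information and hence to conclude both the equality $I(G^*;Y)=I(f^*(G^*);Y)$ and the compression inequality $I(f^*(G);G')\le\min_{\mathcal{T}}I(t'(G');G')$. This is exactly where the 1-WL expressivity hypothesis earns its keep — it guarantees $f$ can represent the quotient map losslessly, so the only obstruction to tightness is the surrogate gap, which I would close by noting that with a 1-WL-powerful encoder the InfoNCE optimum recovers a sufficient statistic on $\mathcal{G}'$. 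The rest (data-processing inequalities, chain rules, the Rao--Blackwell comparison) is routine bookkeeping; the delicate part is cleanly stating and using the hypothesis that $f^*$ and $t^*$ are the \emph{optimal} solutions to \ourmethod so that the variational bounds of Propositions~\ref{prop:low-GG}--\ref{prop:up-CRI} are attained.
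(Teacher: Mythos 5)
Your overall toolkit (data-processing inequalities, chain rules, the quotient space $\mathcal{G}'$) is the right one, and your argument for the inequality $I(G^*;Y)\ge I(t'(G');Y)$ in part (1) --- averaging $\mathcal{T}^*$ over 1-WL equivalence classes and applying DPI --- matches the paper's. But two of your load-bearing steps diverge from the paper in ways that leave real gaps. First, you try to obtain the equality $I(G^*;Y)=I(f^*(G^*);Y)$ from the optimality of $f^*$ for the contrastive surrogate, and you yourself flag that closing the ``surrogate gap'' is the delicate part; in the paper this step has nothing to do with the optimization. It follows from the expressivity hypothesis alone: a 1-WL-powerful $f^*$ is injective (a.e.\ on the quotient space), so $G^*\mapsto f^*(G^*)$ is an injective deterministic map and mutual information with any variable is preserved exactly. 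No tightness argument for InfoNCE is needed or used, and trying to supply one is a dead end.

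Second, and more seriously, your route to part (2) via the identity $I(f^*(G);G\mid Y)=I(f^*(G);G)-I(f^*(G);Y)$ requires the inequality $I(f^*(G);G')\le\min_{\mathcal{T}}I(t'(G');G')$, which you justify only by asserting that $f^*$ ``is the minimizer of the compression term.'' This is unsupported and in fact conflicts with the expressivity hypothesis: if $f^*$ is injective on $\mathcal{G}'$ and applied to the raw graph $G$, then $I(f^*(G);G)=I(G';G)$, which can strictly exceed the information retained by the most compressive augmentation, so the inequality can fail. The paper's proof bounds $I(f^*(t^*(G));G\mid Y)$, i.e., the encoder applied to the essential view, and argues from the opposite end: it decomposes $I(t'^*(G');G')=I(t'^*(G');Y)+I(t'^*(G');G'\mid Y)$ using the conditional independence $t'^*(G')\perp_{G'}Y$ (the augmentation does not see $Y$ given $G'$), lower-bounds the second term by $I(f^*(t'^*(G'));G'\mid Y)$ via DPI, identifies $I(t'^*(G');G')=\min_{\mathcal{T}}I(t'(G');G')$ from the definition of $\mathcal{T}^*$ as the minimizer of $I(G^*;G)$, and finally transfers $G'$ back to $G$ using injectivity. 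The conditional-independence step is entirely absent from your proposal, and without it (or the reversed decomposition it enables) the compression bound does not go through.
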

Therefore, the encoder optimized by ReGIB enables encoding a representation that has limited redundant information and more essential information (See proof in Appendix~\ref{ap:proof-GY-Redun}).

\subsection{ReGIB Principle Instantiation}\label{sec:ins}
\vspace{-2.0mm}
To optimize ReGIB, we begin by constructing coding tree to instantiate essential information, and then specify the lower and upper bounds defined in Proposition~\ref{prop:low-GG},~\ref{prop:up-CRI} and~\ref{prop:up-GfG}.

\vspace{-2.0mm}
\paragraph{Coding Tree Construction (Instantiation for $G^*$).}
For any given test graph, we construct an essential view of the graph with minimal structural entropy, as defined by Eq.~\eqref{eq:up-GfG}.
Specifically, according to structural information theory~\cite{Li2016StructuralEntropy}, the structural entropy of a graph needs to be calculated with the coding tree. 
Besides the optimal coding tree with minimum structural entropy (\ie, $\min_{\forall T}\{\mathcal{H}^T(G)\}$), a fixed-height coding tree is often preferred for its better representing the fixed natural hierarchy commonly found in real-world networks. 
Therefore, the $k$-dimensional structural entropy of $G$ is defined on coding tree with fixed height $k$:
\begin{equation}
    \label{equ: minh}
    \mathcal{H}^{(k)}({G})=\min_{\forall T:\mathrm{Height}(T)=k}\{\mathcal{H}^T({G})\}.
\end{equation}
\noindent The total process of generation of a coding tree with fixed height $k$ can be divided into two steps: 1) construction of the full-height binary coding tree, and 2) compression of the binary coding tree to height $k$. Given root node $v_r$ of the coding tree $T$, all original nodes in graph ${G}=(\mathcal{V}, \mathcal{E})$ are treated as leaf nodes. 
Correspondingly, based on SEP~\cite{wu2022structural}, we design two efficient operators, $\mathbf{MERGE}$ and $\mathbf{DROP}$, to construct a coding tree $T$ with minimum structural entropy. 
An overview of these operators is shown in Algorithm~\ref{algo1} (in Appendix~\ref{app:algo}).

\textbf{Remark.}
The coding tree, as a hierarchical abstraction of the original graph structure, can be considered a contrastive view $T = \arg\min{\mathcal{H}^T({G})}$, which serves as an instantiation of the essential information in graph $G$. By minimizing structural entropy, this essential view $T$ effectively reduces redundant information from the graphs while preserving distinctive structural features, enabling the capture of distinct patterns between ID and OOD samples.

\vspace{-2.0mm}
\paragraph{Representing Learning on Essential Information.}

Within the test-time OOD detection setting, the parameters of the pre-trained model are frozen. To maintain the optimal representation of the essential information, we update parameters of the coding tree encoder based on the tree essential view $T$ constructed during preprocessing. Specifically, the encoder is designed to iteratively transfer messages from the bottom to the top.
Formally, the $l$-th layer of the encoder can be written as, $\mathbf{x}_v^{(l)} = \text{MLP}^{(l)}\left(\sum\nolimits_{u\in\mathcal{C}(v)}\mathbf{x}_u^{(l-1)}\right)$,
where $\mathbf{x}^i_v$ is the feature of $v$ in the $i$-th layer of coding tree $T$, $\mathbf{x}^0_v$ is the input feature of leaf nodes, and $\mathcal{C}(v)$ refers to the children of $v$.
The aggregated node features from the $(l-1)$-th layer of coding tree are used as inputs for the $l$-th layer, continuing the propagation towards the top of the tree. 
Once the features reach the root node, a readout function is applied to obtain the tree embedding $\mathbf{Z}_{T}$.

\vspace{-2.0mm}
\paragraph{Instantiation for $I(f(G^*); f(G))$.}
Based on the lower bound derived in Eq.~\eqref{eq:low-GG}, the mutual information between the essential view $G$ and basic view $G$ is estimated by:
\begin{equation}
\begin{aligned}
\label{eq:ins-CL}
    I(f(G^*); f(G)) \doteq -\mathcal{L}_{Cl}(G^*, G).
\end{aligned}
\end{equation}

\vspace{-3.0mm}
\paragraph{Instantiation for $I(f(G^*); f(G)|\tilde{Y})$.}
To specify the variational upper bound, we treat the similarity between $\mathbf{Z}_T$ and $\mathbf{Z}$ given the predicted label $\tilde{Y}=\arg\max(\text{softmax}(\mathbf{Z}))$ as an approximation of the log-likelihood $\log \mathbb{P}(f(G)|f(G^*), \tilde{Y})$,
and set $\mathbb{Q}(f(G)|\tilde{Y}) = \mathbb{E}_{\mathbf{Z}^- \sim \mathbb{P}(f(G)|\tilde{Y})} e^{\text{sim}(\mathbf{Z}^-, \mathbf{Z}_T)}$, where $\mathbf{Z}^-$ are negative samples drawn from the conditional distribution $\mathbb{P}(f(G)|\tilde{Y})$.
Thus, $I(f(G^*); f(G) \mid \tilde{Y})$ can be approximately instantiated as
(Proof in Appendix~\ref{ap:proof-instan}):
\begin{equation}
\label{eq:ins-CRI}
    I(f(G^*); f(G)|\tilde{Y}) \doteq \mathcal{L}_{CRI}(G, G^*),
\end{equation}
where $\mathcal{L}_{CRI}$ is conditional redundancy-eliminated loss, \ie,
\vspace{-1.0mm}
\begin{equation}
\begin{aligned}
    \mathcal{L}_{CRI}(G, G^*) \triangleq \mathbb{E}_{\tilde{y} \sim \mathbb{P}(\tilde{Y})}\mathbb{E}_{\mathbf{Z}, \mathbf{Z}_T \sim \mathbb{P}(f(G), f(G^*)| \tilde{y})} \left[{sim}(\mathbf{Z}, \mathbf{Z}_T) - \log \mathbb{E}_{\mathbf{Z}^- \sim \mathbb{P}(f(G)| \tilde{y})} e^{{sim}(\mathbf{Z}^-, \mathbf{Z}_T)} \right].
\end{aligned}
\end{equation}

\vspace{-2.0mm}
\paragraph{Optimization.}
Regarding objectives in Eq.~\eqref{eq:ins-CL} and \eqref{eq:ins-CRI}, the overall optimization objective is:
\begin{equation}\label{eq:all}
    \mathcal{L} = \mathcal{L}_{Cl} + \lambda \mathcal{L}_{CRI},
\end{equation}
where $\lambda$ is a trade-off hyperparameter.
When implementing OOD detection, we employ this overall loss $\mathcal{L}$ as the OOD detection score.

\section{Experiment}
\vspace{-2.0mm}
In this section, we empirically evaluate the effectiveness of the proposed \ourmethod.\footnote{The code of \ourmethod is available at: \url{https://github.com/name-is-what/RedOUT}.} 
Detailed settings and additional results can be found in Appendix~\ref{ap:exp}.

\noindent \textbf{Datasets.}
For OOD detection, we employ 10 pairs of datasets from two mainstream graph data benchmarks (i.e., TUDataset~\cite{tu_Morris2020} and OGB~\cite{ogb_hu2020open}) following GOOD-D~\cite{liu2023good}. 
We also conduct experiments on anomaly detection settings, where the samples in minority class or real anomalous class are viewed as anomalies.
Further details are shown in Appendix~\ref{app:data}.

\noindent \textbf{Baselines.}
We compare \ourmethod with 18 competing baseline methods, including 
6 graph kernel based methods~\cite{wlgk_shervashidze2011weisfeiler,gk_vishwanathan2010graph}, 
4 GCL~\cite{liu2023good,infograph_sun2020infograph,graphcl_you2020graph} based methods,
4 end-to-end training methods~\cite{glocalkd_ma2022deep,ocgin_zhao2021using},
1 test-time training methods~\cite{jin2022empowering},
and 3 data-centric OOD detection methods~\cite{guo2023data,wang2024goodat}.
More details about implementation are provided in Appendix~\ref{app:baseline}.

\noindent \textbf{Evaluation and Implementation.}
We evaluate \ourmethod with a popular OOD detection metric, \ie, the area under receiver operating characteristic Curve (AUC). Higher AUC values indicate better performance. 
The reported results are the mean performance with standard deviation after 5 runs.

\begin{table*}[t]
\vspace{-0.2cm}
\centering
\caption{OOD detection results in terms of AUC ($\%$, mean $\pm$ std). The best and runner-up results are highlighted with \textbf{bold} and \underline{underline}, respectively. A.A. is short for average AUC. A.R. implies the abbreviation of average rank. The results of baselines are derived from the published works, with unreported results denoted by `$-$'.} 
\label{tab:main_od}
\resizebox{1\textwidth}{!}{
\begin{tabular}{l | cccccccccc|cc}
\toprule
ID dataset & BZR & PTC-MR & AIDS & ENZYMES & IMDB-M & Tox21 & FreeSolv & BBBP & ClinTox & Esol & \multirow{2}{1.8em}{A.A.} & \multirow{2}{1.8em}{A.R.} \\
OOD dataset & COX2 & MUTAG & DHFR & PROTEIN & IMDB-B & SIDER & ToxCast & BACE & LIPO & MUV \\
\midrule
\rowcolor{mygray}\multicolumn{13}{c}{\textit{Non-deep Two-step Methods}} \\
PK-LOF      & 42.22±8.39 & 51.04±6.04 & 50.15±3.29 & 50.47±2.87 & 48.03±2.53 & 51.33±1.81 & 49.16±3.70 & 53.10±2.07 & 50.00±2.17 & 50.82±1.48 & 49.63  & 14.9  \\
PK-OCSVM    & 42.55±8.26 & 49.71±6.58 & 50.17±3.30 & 50.46±2.78 & 48.07±2.41 & 51.33±1.81 & 48.82±3.29 & 53.05±2.10 & 50.06±2.19 & 51.00±1.33 & 49.52  & 14.8  \\
PK-iF & 51.46±1.62 & 54.29±4.33 & 51.10±1.43 & 51.67±2.69 & 50.67±2.47 & 49.87±0.82 & 52.28±1.87 & 51.47±1.33 & 50.81±1.10 & 50.85±3.51 &51.45  & 12.9  \\
WL-LOF   & 48.99±6.20 & 53.31±8.98 & 50.77±2.87 & 52.66±2.47 & 52.28±4.50 & 51.92±1.58 & 51.47±4.23 & 52.80±1.91 & 51.29±3.40 & 51.26±1.31 & 51.68  & 12.1  \\
WL-OCSVM    & 49.16±4.51 & 53.31±7.57 & 50.98±2.71 & 51.77±2.21 & 51.38±2.39 & 51.08±1.46 & 50.38±3.81 & 52.85±2.00 & 50.77±3.69 & 50.97±1.65 & 51.27  & 13.0  \\
WL-iF  & 50.24±2.49 & 51.43±2.02 & 50.10±0.44 & 51.17±2.01 & 51.07±2.25 & 50.25±0.96 & 52.60±2.38 & 50.78±0.75 & 50.41±2.17 & 50.61±1.96 & 50.87  & 14.2  \\

\midrule
\rowcolor{mygray}\multicolumn{13}{c}{\textit{Deep Two-step Methods}} \\
InfoGraph-iF & 63.17±9.74 & 51.43±5.19 & 93.10±1.35 & 60.00±1.83 & 58.73±1.96 & 56.28±0.81 & 56.92±1.69 & 53.68±2.90 & 48.51±1.87 & 54.16±5.14 & 59.60  & 9.9  \\
InfoGraph-MD & 86.14±6.77 & 50.79±8.49 & 69.02±11.67 & 55.25±3.51 & \textbf{81.38±1.14} & 59.97±2.06 & 58.05±5.46 & 70.49±4.63 & 48.12±5.72 & 77.57±1.69 & 65.68  & 8.5  \\
GraphCL-iF & 60.00±3.81 & 50.86±4.30 & 92.90±1.21 & 61.33±2.27 & 59.67±1.65 & 56.81±0.97 & 55.55±2.71 & 59.41±3.58 & 47.84±0.92 & 62.12±4.01 & 60.65  & 10.2  \\
GraphCL-MD & 83.64±6.00 & 73.03±2.38 & 93.75±2.13 & 52.87±6.11 & 79.09±2.73 & 58.30±1.52 & 60.31±5.24 & 75.72±1.54 & 51.58±3.64 & 78.73±1.40 & 70.70  & 6.3 \\

\midrule
\rowcolor{mygray}\multicolumn{13}{c}{\textit{End-to-end Training Methods}} \\
OCGIN  & 76.66±4.17 & 80.38±6.84 & 86.01±6.59 & 57.65±2.96 & 67.93±3.86 & 46.09±1.66 & 59.60±4.78 & 61.21±8.12 & 49.13±4.13 & 54.04±5.50 & 63.87  & 9.3  \\
GLocalKD  & 75.75±5.99 & 70.63±3.54 & 93.67±1.24 & 57.18±2.03 & 78.25±4.35 & 66.28±0.98 & 64.82±3.31 & 73.15±1.26 & 55.71±3.81 & 86.83±2.35 & 72.23  & 6.1  \\
 
GOOD-D$_{simp}$ & 93.00±3.20 & 78.43±2.67 & 98.91±0.41 & {61.89±2.51} & 79.71±1.19 & 65.30±1.27 & 70.48±2.75 & 81.56±1.97 & 66.13±2.98 & 91.39±0.46 & 78.68  & 3.4  \\
GOOD-D & \underline{94.99±2.25} & {81.21±2.65} & \underline{99.07±0.40} & 61.84±1.94 & \underline{79.94±1.09} & {66.50±1.35} & \underline{80.13±3.43} & \underline{82.91±2.58} & \underline{69.18±3.61} & \underline{91.52±0.70} & \underline{80.73}  & \underline{2.4}  \\

\midrule
\rowcolor{mygray}\multicolumn{13}{c}{\textit{Test-time and Data-centric Methods}} \\
GTrans & 55.17{±5.04} & 62.38{±2.36} & 60.12{±1.98} & 49.94{±5.67} & 51.55{±2.90} & 61.67{±0.73} & 50.81{±3.03} & 64.02{±2.10} & 58.54{±2.38} & 76.31{±3.85} & 59.05  & 10.0  \\
AAGOD\ensuremath{_S+} & 76.75 & $-$ & $-$ & 66.22 & 59.00 & 64.26  & $-$ & 67.80  & $-$ & $-$ & $-$ & $-$ \\
AAGOD\ensuremath{_L+} & 76.00 & $-$ & $-$ & 65.89 & 62.70 & 57.59  & $-$ & 57.13  & $-$ & $-$ & $-$ & $-$ \\
GOODAT & 82.16{±0.15} & \underline{81.84{±0.57}} & {96.43{±0.25}} & \underline{66.29{±1.54}} & 79.03{±0.03} & \underline{68.92{±0.01}} & {68.83{±0.02}} & {77.07{±0.03}} & {62.46{±0.54}} & {85.91{±0.27}} & 76.89  & 3.9  \\

\midrule 
\rowcolor{myblue}\ourmethod & \textbf{95.06±0.54} & \textbf{94.45±1.66} & \textbf{99.98±0.16} & \textbf{66.75±2.02} & 79.54±0.72 & \textbf{71.67±0.50} & \textbf{92.97±0.84} & \textbf{92.60±0.23} & \textbf{86.56±0.76} & \textbf{95.00±0.54} & \textbf{87.46}  & \textbf{1.3}  \\

{Improve} & \textcolor{lightgreen}{$\triangle$+0.07} & \textcolor{lightgreen}{$\triangle$+12.61} & \textcolor{lightgreen}{$\triangle$+0.91} & \textcolor{lightgreen}{$\triangle$+0.46} & \textcolor{orange}{$\nabla$}& \textcolor{lightgreen}{$\triangle$+2.75} & \textcolor{lightgreen}{$\triangle$+12.84} & \textcolor{lightgreen}{$\triangle$+9.69} & \textcolor{lightgreen}{$\triangle$+17.38} & \textcolor{lightgreen}{$\triangle$+3.48} & \textcolor{lightgreen}{$\triangle$+6.73} & \textcolor{lightgreen}{$\triangle$}\\

\bottomrule
\end{tabular}}
\end{table*}

\subsection{Performance on OOD Detection} \label{subsec:od_results}
\vspace{-2.0mm}
In this section, we compare our proposed methods with 18 competing methods in OOD detection tasks. 
From the comparison results in Table~\ref{tab:main_od}, we observe that \ourmethod achieves superior performance improvements, which outperforms other baselines on 9 out of 10 dataset pairs and ranks first on average among all baselines with an average rank (A.R.) of 1.3.
Concretely, \ourmethod achieves an average AUC of 87.46, outperforming all compared methods by 6.7\% over the second-best approach GOOD-D~\cite{liu2023good}. Moreover, \ourmethod delivers nearly a 10\% performance gain on 4 pairs of molecular datasets. Notably, on the ClinTox/LIPO dataset pair, \ourmethod surpasses the best competitor by 17.3\%. These findings highlight the superiority of \ourmethod in OOD detection tasks, demonstrating its strong capability to capture essential information in graph data.

We also observe that \ourmethod does not achieve the absolute best results on the IMDB-B/IMDB-M dataset pair. We consider this phenomenon to be within our expectation, as for molecular graphs, semantic information is directly manifested in their structural composition (\eg, molecular functional groups), thereby enhancing effectiveness. In contrast, for social networks such as IMDB-B and IMDB-M, which originate from the same data source and differ only in labels, the inherent semantic information is similarly reflected structurally, making it challenging to distinguish based solely on structure. A further explanation is provided through a case study in Sec.~\ref{subsec:case}.

\noindent \textbf{Time Complexity Analysis.} \label{para:time}
Given a graph $G=(\mathcal{V}, \mathcal{E})$, the time complexity of coding tree construction is $O(h(|\mathcal{E}|\log |\mathcal{V}|+|\mathcal{V}|))$,
in which $h$ is the height of coding tree $T$ after the first step. 
In general, the coding tree $T$ tends to be balanced in the process of structural entropy minimization, thus, $h$ will be around $\log |\mathcal{V}|$. 
Furthermore, a graph generally has more edges than nodes, \ie, $|\mathcal{E}|\gg |\mathcal{V}|$.

\begin{wrapfigure}{r}{0.5\textwidth}
\centering
\vspace{-1.5em}
\includegraphics[width=0.5\textwidth]{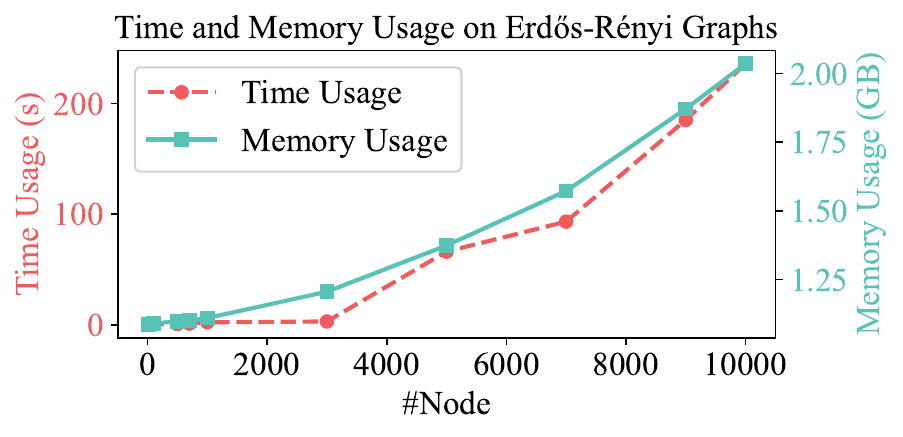}
\caption{Scalability of coding tree construction on Erdős-Rényi graphs with $|\mathcal{E}|=2|\mathcal{V}|$.}
\vspace{-1.5em}
\label{fig:scale}
\end{wrapfigure}

To evaluate the efficiency of \ourmethod, 
we plot the runtime and memory usage at peak time
on Erdős-Rényi graphs with $|\mathcal{E}|=2|\mathcal{V}|$ shown in Figure~\ref{fig:scale}.
The results illustrate that from smaller scales to the OGB datasets (\eg, ogbg-code2, with an average count of around 100 nodes), the runtime and memory usage scale up nearly linearly with $|\mathcal{V}|$, which is consistent with the theoretical analysis above. 
\ourmethod remains efficient even for graph sizes that exceed those of OGB datasets.
We also compared the time consumption in Appendix~\ref{ap:subsec-scale}.

\begin{wraptable}{r}{0.52\textwidth}
\vspace{-3.5em}
\caption{Anomaly detection results in terms of AUC ($\%$, mean $\pm$ std). The best and runner-up results are highlighted with \textbf{bold} and \underline{underline}, respectively.} 
\label{tab:ad}
\centering
\resizebox{\linewidth}{!}{
\begin{tabular}{l | ccccc}
\toprule
Dataset & ENZYMES & DHFR & BZR & IMDB-B & REDDIT-B \\
\midrule
PK-OCSVM & 53.67±2.66 & 47.91±3.76 & 46.85±5.31 & 50.75±3.10 & 45.68±2.24 \\
PK-iF & 51.30±2.01 & 52.11±3.96 & 55.32±6.18 & 50.80±3.17 & 46.72±3.42 \\
WL-OCSVM & 55.24±2.66 & 50.24±3.13 & 50.56±5.87 & 54.08±5.19 & 49.31±2.33 \\
WL-iF & 51.60±3.81 & 50.29±2.77 & 52.46±3.30& 50.20±0.40 & 48.26±0.32 \\
GraphCL-iF & 53.60±4.88 & 51.10±2.35 & 60.24±5.37 & 56.50±4.90 & 71.80±4.38 \\
OCGIN & 58.75±5.98 & 49.23±3.05 & 65.91±1.47 & 60.19±8.90 & 75.93±8.65 \\
GLocalKD & 61.39±8.81 & 56.71±3.57 & 69.42±7.78 & 52.09±3.41 & 77.85±2.62 \\
GOOD-D$_{simp}$ & 61.23±4.58 & \underline{62.71±3.38} & 74.48±4.91 & 65.49±1.06 & 87.87±1.38 \\
GOOD-D & \underline{63.90±3.69} & 62.67±3.11 & \underline{75.16±5.15} & \underline{65.88±0.75} & \underline{88.67±1.24} \\

GTrans & 38.02±6.24 & 61.15±2.87 & 51.97±8.15 & 45.34±3.75 & 69.71±2.21 \\
GOODAT  & 52.33±4.74 & 61.52±2.86 & 64.77±3.87 & 65.46±4.34 & 80.31±0.85 \\

\midrule
\rowcolor{myblue}\ourmethod & \textbf{77.64±5.64} & \textbf{65.51±4.95} & \textbf{89.62±4.72} & \textbf{67.48±0.59} & \textbf{89.81±2.54} \\

\bottomrule
\end{tabular}}

\end{wraptable}

\subsection{Performance on Anomaly Detection}
\vspace{-2.0mm}
To investigate if \ourmethod can generalize to the anomaly detection setting~\cite{glocalkd_ma2022deep,ocgin_zhao2021using}, we conduct experiments on 5 datasets following the benchmark in GOOD-D~\cite{liu2023good}.
From the results shown in Table~\ref{tab:ad}, we find that \ourmethod shows significant performance improvements compared to other baselines, owing to the strong capability to capture essential patterns.
More experiments on other 5 datasets for anomaly detection can be found in Appendix~\ref{app:ad}.

\subsection{Ablation Study} \label{subsec:abla}
\vspace{-2.0mm}



In this section, we conduct ablation experiments on all OOD detection datasets to analyze the effectiveness of two variants by separately removing $\mathcal{L}_{Cl}$ and $\mathcal{L}_{CRI}$. 
Results are presented in Table~\ref{tab:ablation-ood-full}. 
Ablating $\mathcal{L}_{Cl}$ prevents the pre-trained model from obtaining a new optimal representation for calibrating the OOD score. In contrast, ablating $\mathcal{L}_{CRI}$ impairs the removal of irrelevant redundant information. This further demonstrates the effectiveness of ReGIB in disentangling essential and redundant information.
Concretely, we witness \ourmethod surpassing both variants, which provides insights into the effectiveness of the proposed losses and demonstrates their importance in achieving better performance for capturing essential information.

\begin{table*}[ht]
\centering
\caption{Ablation study results of \ourmethod and its variants in terms of AUC ($\%$, mean $\pm$ std). The best and runner-up results are highlighted with \textbf{bold} and \underline{underline}, respectively.} 
\label{tab:ablation-ood-full}
\resizebox{1\textwidth}{!}{
\begin{tabular}{cc | cccccccccc}
\toprule
\multirow{2}{*}{$\mathcal{L}_{Cl}$} & \multirow{2}{*}{$\mathcal{L}_{CRI}$} & BZR & PTC-MR & AIDS & ENZYMES & IMDB-M & Tox21 & FreeSolv & BBBP & ClinTox & Esol \\
\cmidrule{3-12}
&   & COX2 & MUTAG & DHFR & PROTEIN & IMDB-B & SIDER & ToxCast & BACE & LIPO & MUV \\
\midrule
\midrule

\xmark & \cmark & 93.04±1.83 & 86.38±4.72 & 98.79±0.30 & 58.76±3.02 & 73.28±2.13 & 65.06±1.49 & 87.63±4.41 & 86.38±1.54 & 76.75±3.03 & 91.61±2.32 \\
\cmark & \xmark  & 92.73±1.71 & 89.40±2.84 & 98.77±0.16 & 58.73±2.81 & 72.67±2.13 & 67.05±1.35 & 88.66±4.40 & 85.99±1.69 & 76.90±2.43 & 91.81±1.77 \\
\midrule
\midrule
\cmark & \cmark
& \textbf{95.06±0.54} & \textbf{94.45±1.66} & \textbf{99.98±0.16} & \textbf{66.75±2.02} & \textbf{79.54±0.72} & \textbf{71.67±0.50} & \textbf{92.97±0.84} & \textbf{92.60±0.23} & \textbf{86.56±0.76} & \textbf{95.00±0.54} \\
\bottomrule
\end{tabular}}
\end{table*}

\subsection{Parameter Study} \label{subsec:param}
\noindent \textbf{The Height $k$ of Graph's Natural Hierarchy.}
Here, we delve deeper into the effect of the height $k$ on the graph's natural hierarchy. The specific performance of \ourmethod under each height $k$, ranging from 2 to 5, on OOD detection is shown in Figure~\ref{fig:h}. 
We can observe that the optimal height $k$ with the highest accuracy varies among datasets. 
We also conducted experiments on the impact of coding tree height on anomaly detection datasets, as detailed in Appendix~\ref{app:sensi-k-ad}.

\begin{figure}[ht]
  \centering
  \begin{minipage}[b]{0.5\textwidth}
    \centering
    \hspace{-0.25cm}
    \includegraphics[width=\textwidth]{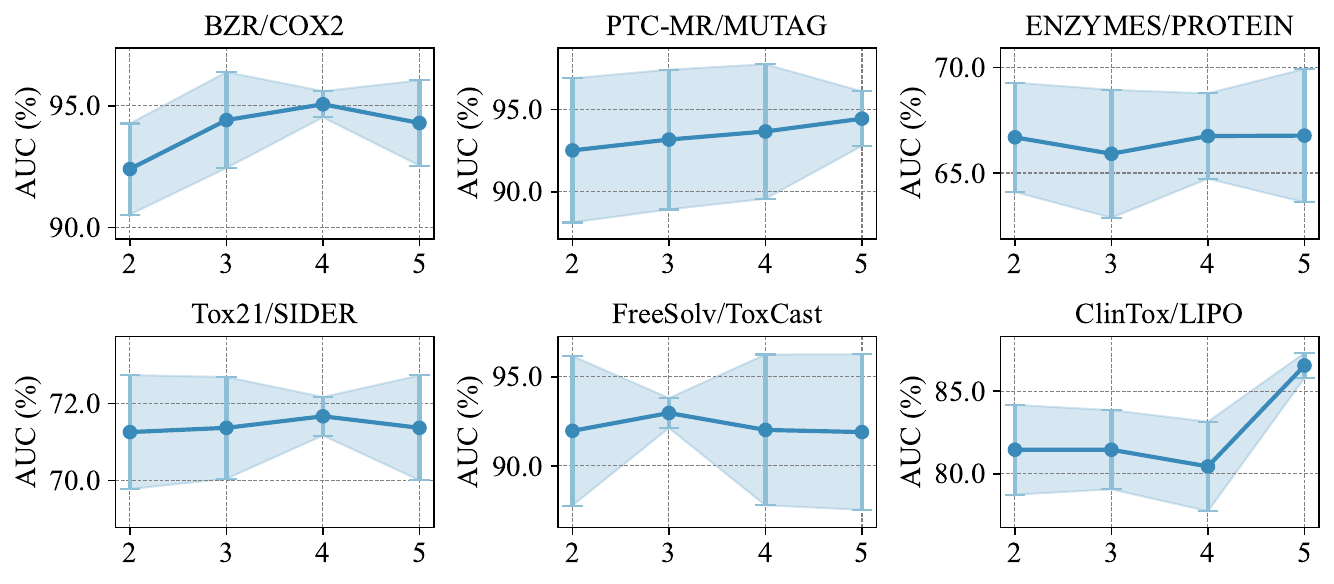}
    \caption{The natural hierarchy of graph.}
    \label{fig:h}
  \end{minipage}
  \hfill
  \hspace{-0.25cm}
  \begin{minipage}[b]{0.5\textwidth}
    \centering
    \includegraphics[width=\textwidth]{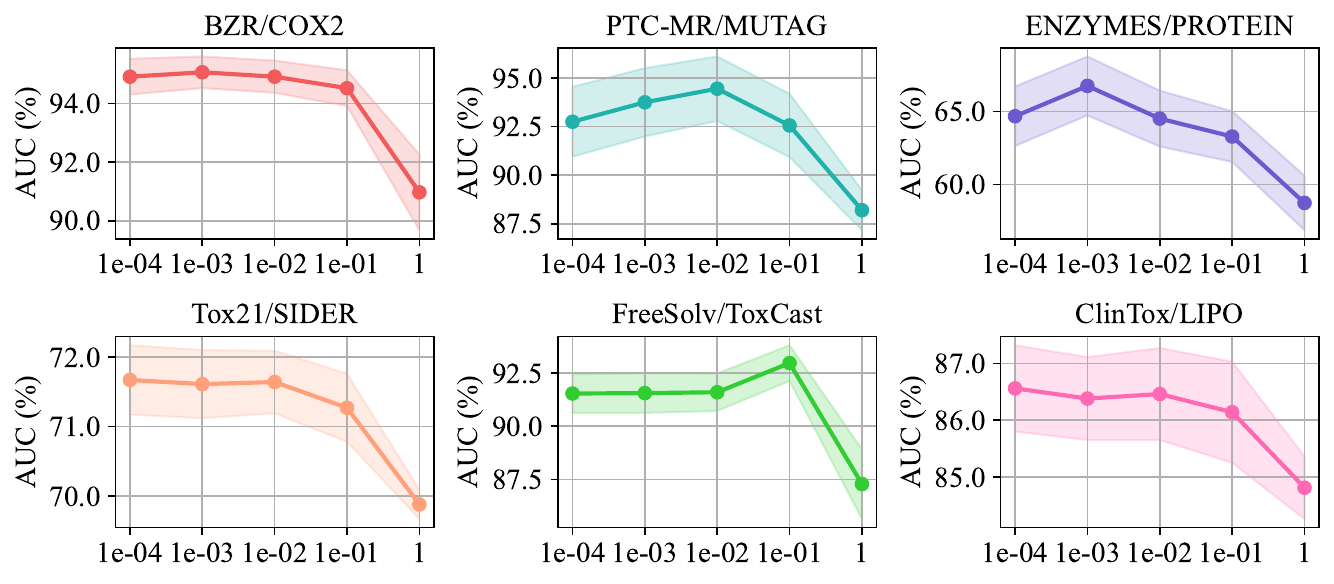}
    \hspace{-0.25cm}
    \caption{The sensitivity of hyperparameter $\lambda$.}
    \label{fig:a}
  \end{minipage}
\vspace{-0.3cm}
\end{figure}

\noindent \textbf{Sensitivity Analysis of $\lambda$.} 
To analyze the sensitivity of $\lambda$ for \ourmethod, we alter the value from 1e-04 to 1. The AUC w.r.t different selections of $\lambda$ is plotted in Figure~\ref{fig:a}. 
Results demonstrate the performance is sensitive to changes in $\lambda$ and contains a reasonable range across different datasets.

\subsection{Visualization and Case Study} \label{subsec:case}

\begin{wrapfigure}{r}{0.5\textwidth}
\vspace{-0.2cm}
    \centering
    \vspace{-0.2cm}
    \subfigure[w.r.t $\mathbf{Z}$]{
        \includegraphics[width=0.32\linewidth]{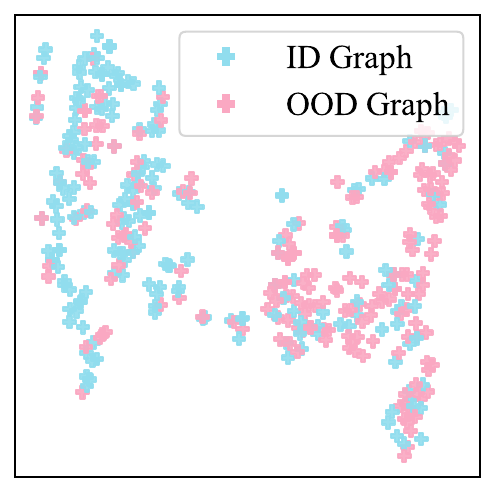}}
    \hspace{-0.18cm}
    \subfigure[w.r.t $\mathbf{Z}_{T}$]{
        \includegraphics[width=0.32\linewidth]{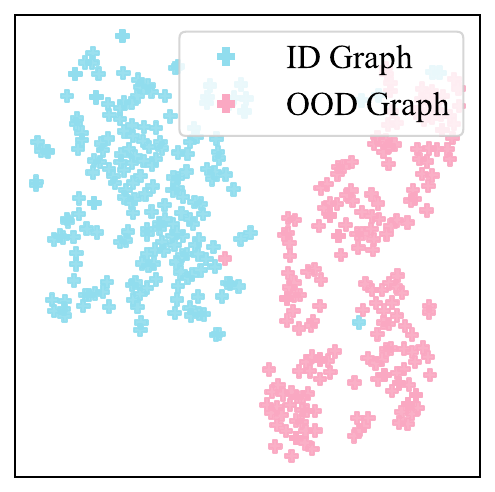}} 
    \hspace{-0.18cm}
    \subfigure[w.r.t GOODAT]{
        \includegraphics[width=0.32\linewidth]{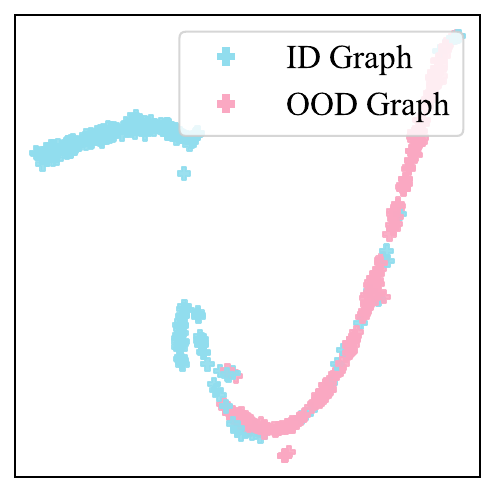}} 
    \vspace{-0.2cm}
    \caption{T-SNE visualization of embeddings, where $\mathbf{Z}$ is from the pre-trained model, and $\mathbf{Z}_{T}$ is from the coding tree encoder.}
    \vspace{-0.3cm}
    \label{fig:vis}
\end{wrapfigure}

\textbf{Visualization.}
The embeddings learned by \ourmethod on AIDS/DHFR are visualized using t-SNE~\cite{tsne_van2008visualizing} in Figure~\ref{fig:vis}(a)-(c).
The representations $\mathbf{Z}$ from the pre-trained model tend to blend ID and OOD graphs.
The limitation of GOODAT~\cite{wang2024goodat} lies in the relatively small representation space, which results in over-compression of representations and consequently diminishes the discriminability between ID and OOD samples.
In contrast, the representation gap in $\mathbf{Z}_{T}$ is most prominent, highlighting its superior effectiveness in capturing essential structures.

\textbf{Case Study.}
To further investigate the suboptimal performance of \ourmethod on social networks, we visualize the essential structures extracted on IMDB-B and IMDB-M, as shown in Figure~\ref{fig:case}(a)-(b). These datasets share similar structural characteristics (\eg, star-shaped and mesh-like patterns) but differ in the nature of classification task (binary versus multi-class). The similarity in structural semantics suggests that the intrinsic information captured by the coding tree may not sufficiently distinguish these datasets from a structural standpoint, consistent with the results in Table~\ref{tab:main_od}. 
In contrast, Figure~\ref{fig:case}(c)-(d) illustrates the extracted essential structures on PTC-MR and MUTAG, highlighting \ourmethod's capability in molecular datasets where structural differences are more pronounced and crucial for OOD detection. 
Additional case studies and analyses are provided in Appendix~\ref{ap-sec:case}.

\begin{figure}[ht]
 \centering
 \hspace{-0.25cm}
 \subfigure[IMDB-M]{
   \includegraphics[width=0.12\textwidth]{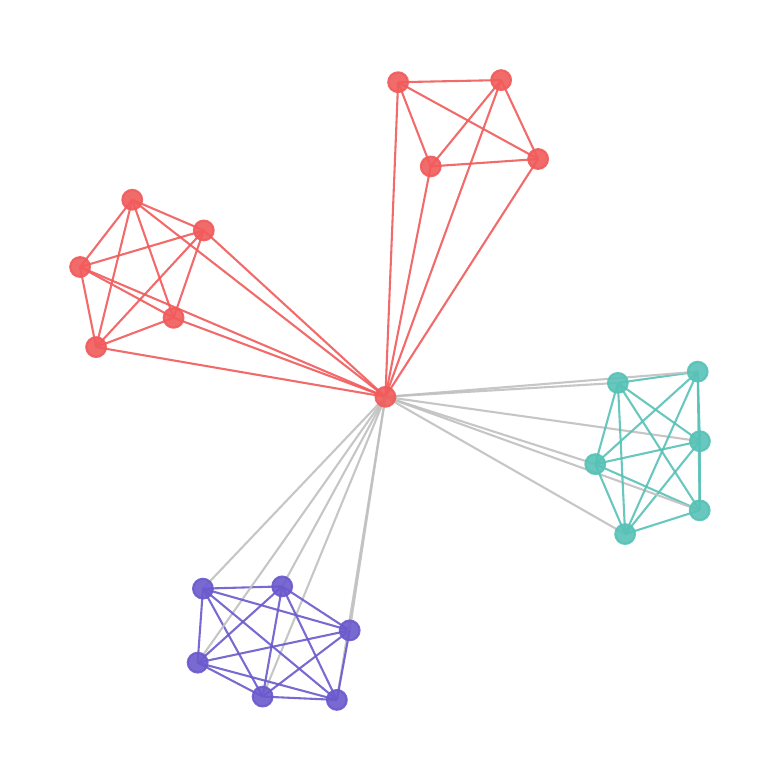}
   \includegraphics[width=0.12\textwidth]{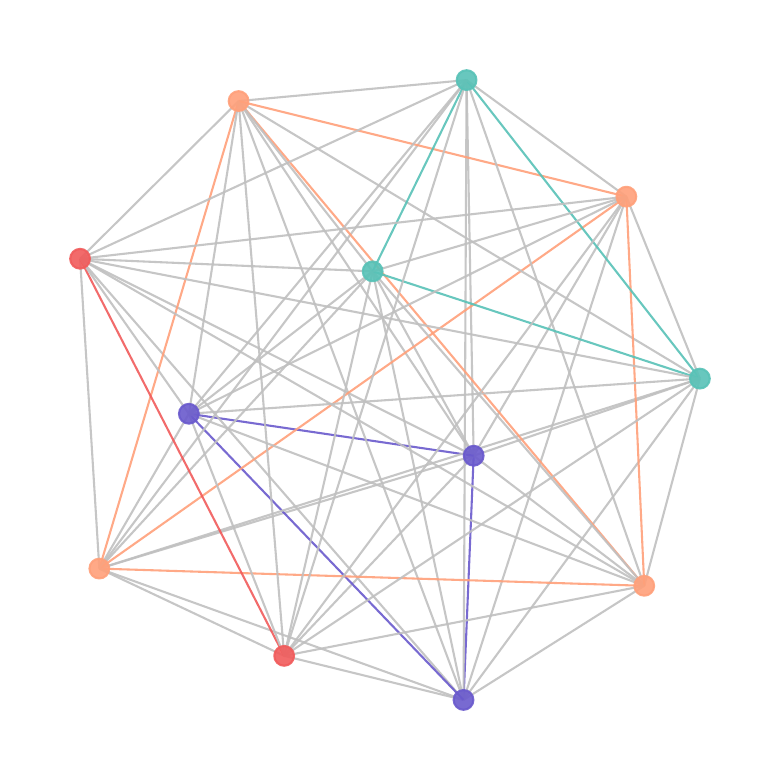}
   \label{subfig:1}}
   \hspace{-0.25cm}
 \subfigure[IMDB-B]{
   \includegraphics[width=0.12\textwidth]{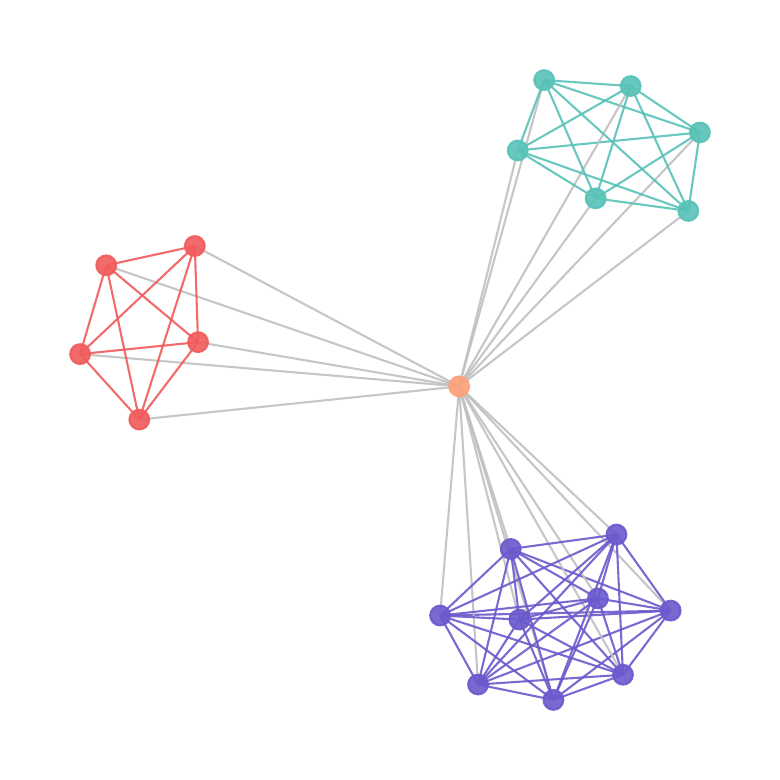}
   \includegraphics[width=0.12\textwidth]{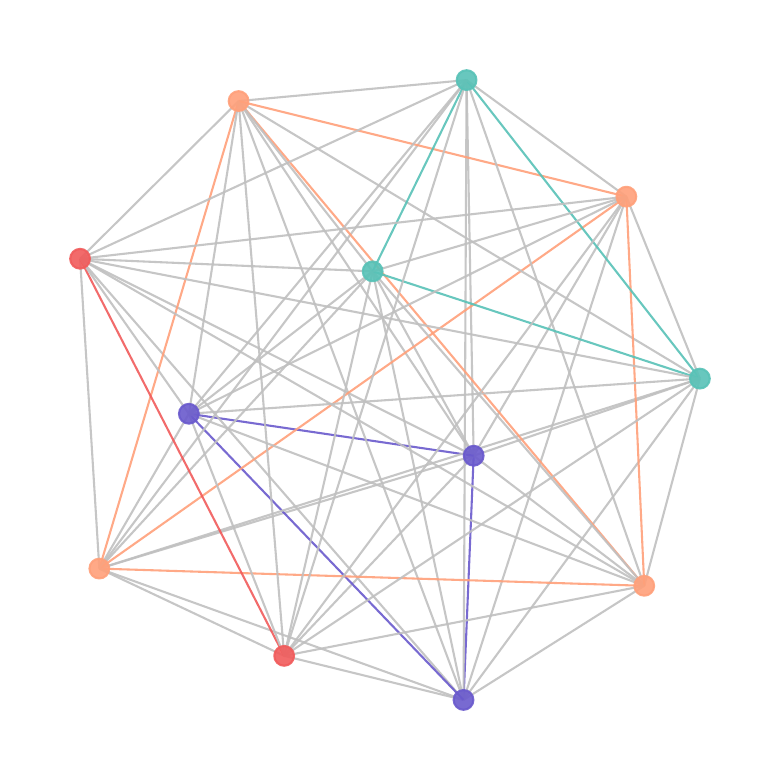}
   \label{subfig:2}} 
   \hspace{-0.25cm}
 \subfigure[PTC-MR]{
   \includegraphics[width=0.12\textwidth]{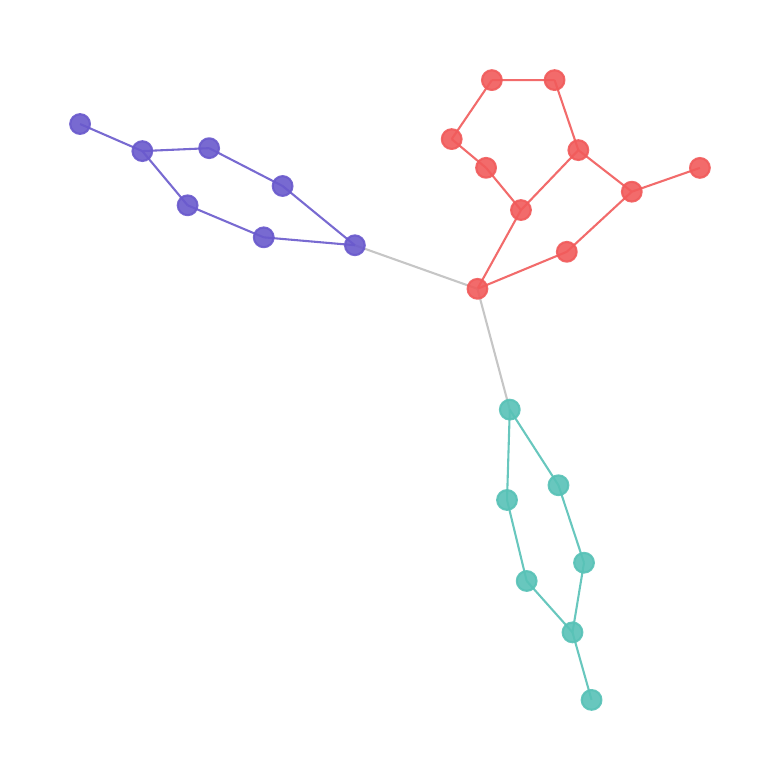}
   \includegraphics[width=0.12\textwidth]{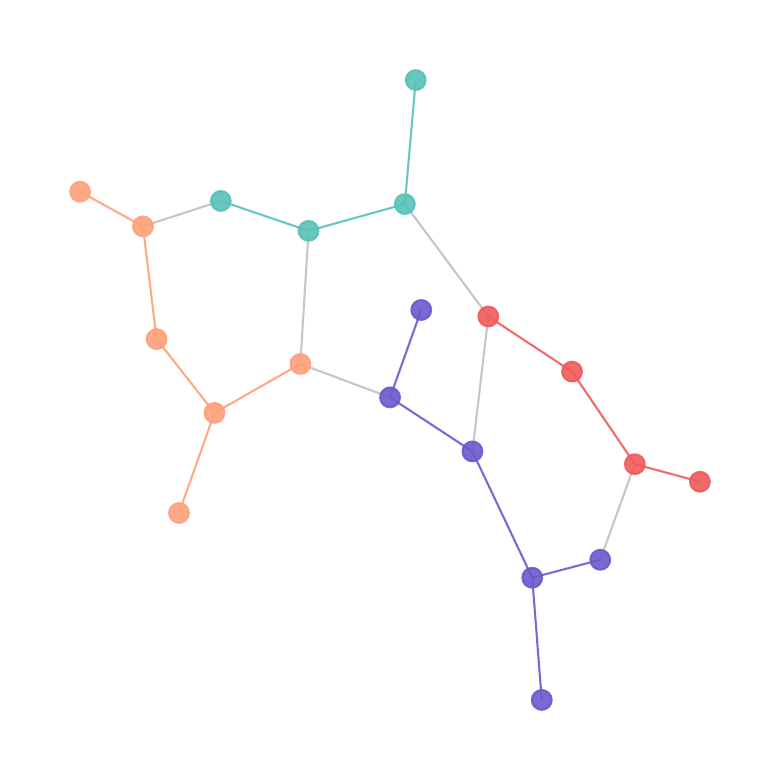}
   \label{subfig:3}} 
   \hspace{-0.25cm}
 \subfigure[MUTAG]{
   \includegraphics[width=0.12\textwidth]{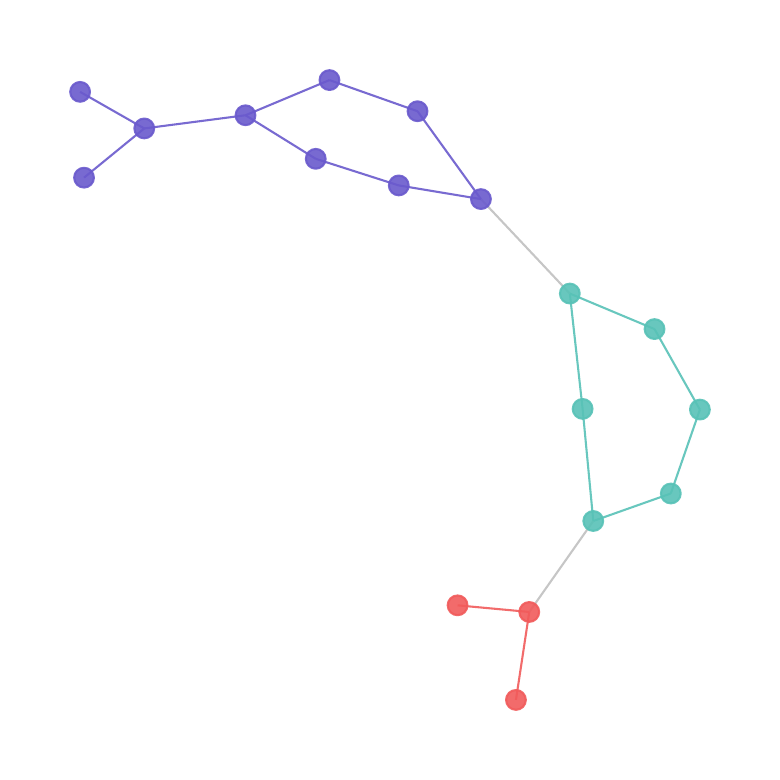}
   \includegraphics[width=0.12\textwidth]{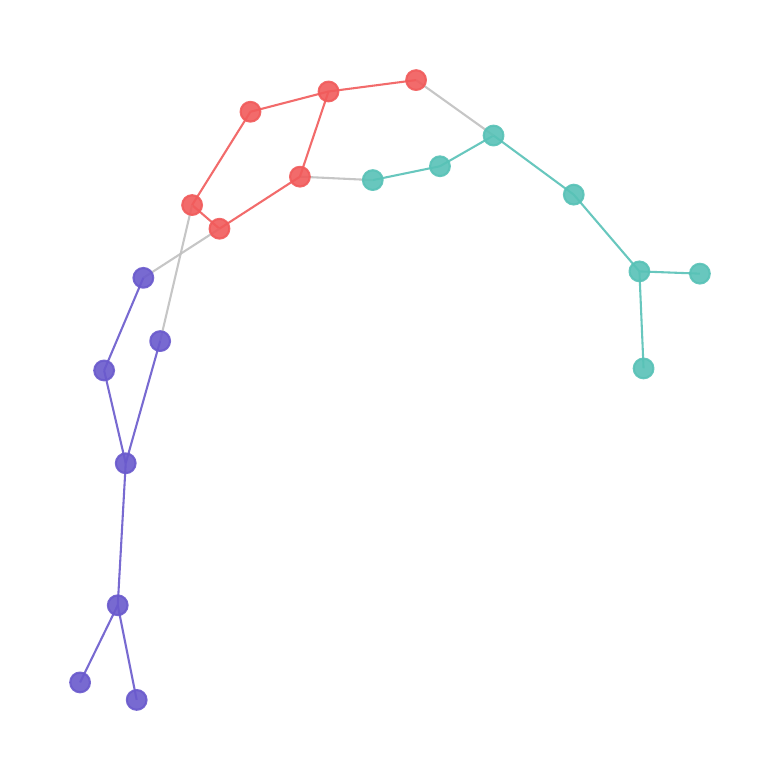}
   \label{subfig:4}}
   \hspace{-0.25cm}
   \vspace{-0.2cm}
  \caption{Visualization of the essential structure based on the coding tree preserved by \ourmethod on IMDB-B/IMDB-M and PTC-MR/MUTAG dataset pairs.}
 \label{fig:case}
\end{figure}





\section{Related Work} \label{sec:relate}

\textbf{Graph Out-of-distribution Detection.}
Graph OOD detection aims to distinguish OOD graphs from ID ones to address the excessive confidence predictions.
Lots of existing methods~\cite{zhu2024mario,li2024disentangled,ligraph} focus on improving the generalization ability of GNNs for specific downstream tasks like node classification through supervised learning, rather than identifying OOD samples.
Other advancements~\cite{guo2023data,liu2023good,wang2024goodat} focus on training OOD-specific GNN models or enable well-trained models in a post-hoc manner with training or test samples.
In this work, we provide a novel perspective for identifying OOD graphs at test-time by focusing on essential information based on structural entropy.

\textbf{Structural Entropy.}
Structural entropy~\cite{Li2016StructuralEntropy}, an extension of Shannon entropy~\cite{Shannon}, quantifies system uncertainty by measuring the complexity of graph structures through the coding tree. 
Structural entropy has been widely applied in various domains~\cite{Zhang2022HINT, wu2022structural,Yang2023, wu2023sega, Wang2023a}, such as dynamic network analysis~\cite{Liu2019}, hierarchical community detection~\cite{Wu2022HRN}, and graph structure learning~\cite{zou2023se}.
In our work, we apply structural entropy to capture the distinctive structure with essential information on test graphs for test-time OOD detection.

\textbf{Discussion and Comparison with Related Methods.}
Here, we elucidate the association between our proposed \ourmethod and two prominent methods, namely GOODAT~\cite{wang2024goodat} and SEGO~\cite{hou2025structural}.

\begin{itemize}[leftmargin=1.5em]
\item \textbf{GOODAT.} GOODAT~\cite{wang2024goodat} first introduced the setting of test-time OOD detection, where a plug-and-play graph masker is trained to decompose the test-time graph into two subgraphs.
Although GOODAT is built upon the GIB principle, it does not consider the redundancy in graph structures and cannot guarantee that the subgraph decomposition preserves semantic information.
In contrast, \ourmethod is the first to theoretically decouple GIB into redundant and essential components, and improves OOD detection for pre-trained models by explicitly removing redundancy at test-time.

\item \textbf{SEGO.} SEGO~\cite{hou2025structural} stands out as a pioneering work specifically crafted for unsupervised OOD detection, achieving promising results through a pre-hoc operation that minimizes structural entropy.
It is noteworthy that SEGO is a fully end-to-end method trained from scratch, which differs from the setting considered in this paper.
Although SEGO also follows structural information principles, it does not investigate what constitutes redundancy or how it should be removed.
In contrast, our approach is the first to isolate redundancy from the perspective of GIB and extract the essential structural information.
\end{itemize}

Further comparisons and analysis\footnote{Since SEGO adopts a data-centric preprocessing strategy to assist OOD-specific training, it does not fall under the category of native end-to-end methods. Therefore, we compare it separately with \ourmethod in Table~\ref{tab:ap-se-bench}.} can be found in Appendix~\ref{ap:comp-sego} and~\ref{ap:subsec-scale}.


\section{Conclusion}
\vspace{-2.0mm}
In this paper, we present a redundancy-aware test-time OOD detection framework, termed \textbf{\ourmethod}, aiming at endowing well-trained GNN models with the ability to extract essential information for the first time. 
Concretely, we propose the Redundancy-aware Graph Information Bottleneck and decompose the objective into essential information and irrelevant redundancy.
By minimizing structural entropy, coding tree is constructed to instantiate the essential view, and tractable bounds are introduced for efficient optimization.
Extensive experiments on real-world datasets demonstrate the superior performance of \ourmethod over state-of-the-art baselines in unsupervised OOD detection.
One limitation is that we mainly consider the graph-level tasks, and leave 
extending our method to the node-level test-time OOD detection for future explorations.

\section*{Acknowledgements}
\vspace{-2.0mm}
This work has been supported by CCSE project (CCSE-2024ZX-09).

{\small{
\bibliographystyle{plain}
\bibliography{reference.bib}
}}

\newpage
\appendix

\etocdepthtag.toc{mtappendix}
	\etocsettagdepth{mtchapter}{none}
	\etocsettagdepth{mtappendix}{subsection}
	\renewcommand{\contentsname}{Technical Appendices and Supplementary Material}
	\tableofcontents

\section{Notation Table} \label{ap:notation}
As an expansion of the notations in our work, we summarize the frequently used notations in Table~\ref{tab:notation}.

\begin{table}[htbp]
  \centering
  \caption{The most frequently used notations in this paper.}
\label{tab:notation}
\resizebox{.9\linewidth}{!}{
    \begin{tabular}{cl}
    \toprule
    \textbf{Notations} & \multicolumn{1}{c}{\textbf{Descriptions}} \\
    \midrule
${G} = (\mathcal{V}, \mathcal{E}, \mathbf{X})$    & Graph with the node set $\mathcal{V}$ and edge set $\mathcal{E}$ \\
$\mathcal{V}$ & The set of nodes in the graph \\ 
$\mathcal{E}$ & The set of edges in the graph \\ 
$\mathbf{X}$ & The feature matrix \\ 
$\mathbb{P}^{id}$, $\mathbb{P}^{ood}$  & The distribution where the graphs are sampled from \\
$Y \in \{0,1\}$ & The label of test graph \\
$\tilde{Y}$ & The predicted label of test graph \\
$\mathbf{A}$ & The adjacency matrix of the graph \\
$G = \mathbf{(A, X)}$ & The original graph as the basic view \\ 
$G^\gamma = (\mathbf{A}, \mathbf{P})$ & Perturbation-free augmentation  on $G$\\
$\mathcal{H}$ & The structural entropy \\
$T$ & Tree essential view of the graph \\
$k$ & The coding tree height \\
$\mathbf{Z}$ & Graph embedding from the well-trained GNN model\\ 
$\mathbf{Z}_{T}$ & Tree embedding \\
$\mathcal{L}, \mathcal{L}_{Cl}, \mathcal{L}_{CRI}$ & Overall loss, contrastive loss and conditional redundancy-eliminated loss\\
$\lambda$ & Hyperparameter for loss trade-off \\ 
    
    \bottomrule
\end{tabular}%
}

\end{table}%

\section{Related Work} \label{sec:relate}

\textbf{Graph Out-of-distribution Detection.}
Graph OOD detection aims to distinguish OOD graphs from ID ones to address the excessive confidence predictions.
Lots of existing methods~\cite{zhu2024mario,li2024disentangled,ligraph} focus on improving the generalization ability of GNNs for specific downstream tasks like node classification through supervised learning, rather than identifying OOD samples.
Other advancements~\cite{guo2023data,liu2023good,wang2024goodat} focus on training OOD-specific GNN models or enable well-trained models in a post-hoc manner with training or test samples.
In this work, we provide a novel perspective for identifying OOD graphs at test-time by focusing on essential information based on structural entropy.

\textbf{Structural Entropy.}
Structural entropy~\cite{Li2016StructuralEntropy}, an extension of Shannon entropy~\cite{Shannon}, quantifies system uncertainty by measuring the complexity of graph structures through the coding tree. 
Structural entropy has been widely applied in various domains~\cite{Zhang2022HINT, wu2022structural,Yang2023, wu2023sega, Wang2023a}, such as dynamic network analysis~\cite{Liu2019}, hierarchical community detection~\cite{Wu2022HRN}, and graph structure learning~\cite{zou2023se}.
In our work, we apply structural entropy to capture the distinctive structure with essential information on test graphs for test-time OOD detection.

\textbf{Discussion and Comparison with Related Methods.}
Here, we elucidate the association between our proposed \ourmethod and two prominent methods, namely GOODAT~\cite{wang2024goodat} and SEGO~\cite{hou2025structural}.

\begin{itemize}[leftmargin=1.5em]
\item \textbf{GOODAT.} GOODAT~\cite{wang2024goodat} first introduced the setting of test-time OOD detection, where a plug-and-play graph masker is trained to decompose the test-time graph into two subgraphs.
Although GOODAT is built upon the GIB principle, it does not consider the redundancy in graph structures and cannot guarantee that the subgraph decomposition preserves semantic information.
In contrast, \ourmethod is the first to theoretically decouple GIB into redundant and essential components, and improves OOD detection for pre-trained models by explicitly removing redundancy at test-time.

\item \textbf{SEGO.} SEGO~\cite{hou2025structural} stands out as a pioneering work specifically crafted for unsupervised OOD detection, achieving promising results through a pre-hoc operation that minimizes structural entropy.
It is noteworthy that SEGO is a fully end-to-end method trained from scratch, which differs from the setting considered in this paper.
Although SEGO also follows structural information principles, it does not investigate what constitutes redundancy or how it should be removed.
In contrast, our approach is the first to isolate redundancy from the perspective of GIB and extract the essential structural information.
\end{itemize}

Further comparisons and analysis\footnote{Since SEGO adopts a data-centric preprocessing strategy to assist OOD-specific training, it does not fall under the category of native end-to-end methods. Therefore, we compare it separately with \ourmethod in Table~\ref{tab:ap-se-bench}.} can be found in Appendix~\ref{ap:comp-sego} and~\ref{ap:subsec-scale}.

\section{Theoretical Justification}

\subsection{Preliminaries for Mutual Information}
	\label{ssec: information measures}
	
\begin{definition}[Informational Divergence]
The informational divergence (also called relative entropy or Kullback-Leibler distance) between two probability distributions $p$ and $q$ on a finite space $\mathcal{X}$ (\textit{i.e.}, a common alphabet) is defined as
		\begin{align}
			\begin{split}
				D_{\mathrm{KL}}(p || q)
				= \sum_{x \in\mathcal{X}} p(x) \log \frac{p(x)}{q(x)} = \mathbb{E}_{p}\big[ \log \frac{p(X)}{q(X)} \big].
			\end{split}
		\end{align}
	\end{definition}
	
\begin{remark}
		$D_{\mathrm{KL}}(p || q)$ measures the \textit{distance} between $p$ and $q$.
		However, $D(\cdot || \cdot))$ is not a true metric,
		and it does not satisfy the triangular inequality.
		$D_{\mathrm{KL}}(p || q)$ is non-negative and zero if and only if $p = q$.
\end{remark}

\begin{definition}[Mutual Information]
		Given two discrete random variables $X$ and $Y$,
		the mutual information (MI) $I(X; Y)$
		is the relative entropy between the joint distribution $p(x,y)$
		and the product of the marginal distributions $p(x)p(y)$, namely,
		\begin{align}
			\begin{split}
				I(X; Y) =& D_{\mathrm{KL}}(p(x,y) || p(x)p(y))  \\
				=& \sum_{x \in X, y \in Y} p(x,y) \log \big( \frac{p(x,y)}{p(x)p(y)} \big) \\
				=& \sum_{x \in X, y \in Y} p(x,y) \log \big( \frac{p(x|y)}{p(x)} \big). \\
			\end{split}
		\end{align}
\end{definition}
	
\begin{remark}
		$I(X; Y)$ is symmetrical in $X$ and $Y$, \textit{i.e.}, 
		$I(X; Y)= \mathcal H(X) - \mathcal H(X|Y) = \mathcal H(Y) - \mathcal H(Y|X) = I(Y;X)$. 
\end{remark}

\subsection{Proof for Proposition~\ref{ap:prop-low1}} \label{ap:proof-low-fGY}
\begin{proposition}[Lower Bound of $I(f(G^*); \tilde{Y})$]
\label{ap:prop-low1}
\begin{equation}
    I(f(G^*); \tilde{Y}) \ge {I(f(G^*); f(G))} - {I(f(G^*); f(G)|\tilde{Y})}.
\end{equation}
\end{proposition}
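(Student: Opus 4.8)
$\textbf{Proof proposal.}$ The plan is to establish the identity
\begin{equation}
I(f(G^*); \tilde{Y}) = I(f(G^*); f(G)) + I(f(G^*); \tilde{Y}\mid f(G)) - I(f(G^*); f(G)\mid \tilde{Y})
\end{equation}
and then drop the nonnegative middle term. First I would recall the chain rule for mutual information, which gives two ways to expand the joint information $I(f(G^*); f(G), \tilde{Y})$:
\begin{equation}
I(f(G^*); f(G), \tilde{Y}) = I(f(G^*); f(G)) + I(f(G^*); \tilde{Y}\mid f(G)) = I(f(G^*); \tilde{Y}) + I(f(G^*); f(G)\mid \tilde{Y}).
\end{equation}
Equating the two right-hand sides and solving for $I(f(G^*); \tilde{Y})$ yields exactly the displayed identity. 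This is the three-term decomposition already quoted in Proposition~\ref{prop:low-GY} in the main text, so the bulk of the argument is just writing out the two chain-rule expansions carefully and rearranging.

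The second step is to bound $I(f(G^*); \tilde{Y}\mid f(G)) \ge 0$, which holds because conditional mutual information is always nonnegative (it is an expectation of KL divergences, each of which is nonnegative by the nonnegativity of informational divergence recalled in Section~\ref{ssec: information measures}). Substituting this into the identity immediately gives
\begin{equation}
I(f(G^*); \tilde{Y}) \ge I(f(G^*); f(G)) - I(f(G^*); f(G)\mid \tilde{Y}),
\end{equation}
which is the claimed lower bound.

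I do not anticipate a serious obstacle here; the result is a direct consequence of the chain rule and nonnegativity of (conditional) mutual information. The only point requiring a little care is making sure the chain-rule expansions are applied in the correct order so that the term that gets discarded is the genuinely nonnegative one $I(f(G^*); \tilde{Y}\mid f(G))$ — one must not accidentally drop $I(f(G^*); f(G)\mid \tilde{Y})$, since that term appears with a minus sign and is itself exactly the redundancy quantity we want to keep. A secondary, more conceptual remark worth including is why discarding $I(f(G^*); \tilde{Y}\mid f(G))$ is harmless in our setting: since $\tilde Y$ is computed deterministically from $f(G)$ (it is $\arg\max$ of a softmax of the graph embedding), conditioning on $f(G)$ leaves no residual information about $\tilde Y$, so in fact $I(f(G^*); \tilde{Y}\mid f(G)) = 0$ and the bound is essentially tight. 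I would state this as a remark rather than rely on it in the proof, keeping the proof itself at the level of the generic chain-rule argument.
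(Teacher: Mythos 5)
Your proposal is correct and follows essentially the same route as the paper: the paper's proof states the same three-term identity and then drops the nonnegative term $I(f(G^*); \tilde{Y}\mid f(G))$. Your derivation of the identity via the two chain-rule expansions of $I(f(G^*); f(G), \tilde{Y})$ is in fact more explicit than the paper's (which asserts the identity without derivation), and your closing remark that the dropped term vanishes because $\tilde{Y}$ is a deterministic function of $f(G)$ is a valid observation the paper does not make.
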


\begin{proof}
\begin{equation}
I(f(G^*); \tilde{Y}) = I(f(G^*); f(G)) + I(f(G^*); \tilde{Y} |f(G)) - I(f(G^*); f(G) |\tilde{Y}).
\end{equation}
Since mutual information is non-negative, \ie,
\begin{equation}
I(f(G^*); \tilde{Y} |f(G)) \ge 0.
\end{equation}
This implies that:
\begin{equation}
I(f(G^*); \tilde{Y}) \ge I(f(G^*); f(G)) - I(f(G^*); f(G) |\tilde{Y}).
\end{equation}
\end{proof}

\subsection{Proof for Proposition~\ref{ap:prop-low2}} \label{ap:proof-low-fGfG}
To begin with, we revisit the graph contrastive learning loss $\mathcal{L}_{Cl}$, formally expressed as:
\begin{equation}
\label{eq-ap:contra3}
\mathcal{L}_{Cl}(G^\alpha, G^\beta) = -\frac{1}{N} \sum_{i = 1}^{N} 
\operatorname{log} \frac{e^{sim(\mathbf{Z}_i^\alpha,\mathbf{Z}_i^\beta)/\tau }}{\sum_{j=1, {j \neq i}}^{N} e^{sim(\mathbf{Z}_i^\alpha,\mathbf{Z}_j^\alpha)/\tau}},
\end{equation}

\noindent where $N$ denotes the batch size, $\tau$ is the temperature coefficient, and $sim(\cdot, \cdot)$ stands for cosine similarity function.

\begin{proposition}[Lower Bound of $I(f(G^*); f(G)$]
\begin{equation}
   I(f(G^*); f(G)) \geq -\mathcal{L}_{Cl}(G^*, G) + \log(N).
\end{equation}

\label{ap:prop-low2}
\end{proposition}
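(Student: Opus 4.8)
The plan is to recognize that the InfoNCE contrastive loss $\mathcal{L}_{Cl}(G^*, G)$ is, up to an additive constant, the standard variational (Donsker--Varadhan / NCE-type) lower bound on the mutual information between the two views' representations. First I would write out $-\mathcal{L}_{Cl}(G^*, G)$ explicitly from Eq.~\eqref{eq-ap:contra3}, identifying $G^\alpha = G^*$ and $G^\beta = G$, so that the expression becomes an average over the batch of $\log$ of a softmax-type ratio whose numerator is $e^{sim(\mathbf{Z}_i^{*}, \mathbf{Z}_i)/\tau}$ and whose denominator sums the "negative" similarities $e^{sim(\mathbf{Z}_i^{*}, \mathbf{Z}_j^{*})/\tau}$ over $j \neq i$. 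I would then invoke the classical result (as in van den Oord et al.\ / Poole et al.) that for $N$ samples drawn i.i.d.\ from the joint, the expected value of this contrastive objective is bounded above by $I(f(G^*); f(G)) - \log N$; equivalently, $-\mathcal{L}_{Cl}(G^*, G) - \log N \le I(f(G^*); f(G))$, which is exactly the claimed inequality $I(f(G^*); f(G)) \ge -\mathcal{L}_{Cl}(G^*, G) + \log N$.

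The key steps, in order: (1) fix the critic $h(\mathbf{Z}^*, \mathbf{Z}) = e^{sim(\mathbf{Z}^*, \mathbf{Z})/\tau}$ and treat it as an (unnormalized) density-ratio estimator between $p(f(G) \mid f(G^*))$ and $p(f(G))$; (2) apply the variational lower bound $I(X;Y) \ge \mathbb{E}\!\left[\log \frac{h(x,y)}{\frac{1}{N}\sum_j h(x, y_j)}\right]$, where the expectation is over one positive pair plus $N-1$ independent negatives, a bound that follows from Jensen's inequality applied to the normalizing denominator together with $D_{\mathrm{KL}} \ge 0$; (3) observe that the right-hand side of that bound, after pulling the $\frac{1}{N}$ out of the logarithm, equals $-\mathcal{L}_{Cl}(G^*, G) + \log N$ once we match indices and note the batch average plays the role of the expectation over the joint. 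Steps (2) and (3) are essentially the content of Appendix~\ref{ap:proof-low-fGfG} and can be stated compactly; the non-negativity of KL divergence recalled in Section~\ref{ssec: information measures} is the only analytic ingredient needed.

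The main obstacle is bookkeeping rather than conceptual: one must be careful that the contrastive loss as written in Eq.~\eqref{eq-ap:contra3} uses only $N-1$ negatives in the denominator (the $j \neq i$ sum omits the positive term), whereas the textbook InfoNCE bound typically includes the positive pair in the denominator sum, giving $N$ terms. I would address this either by noting that adding the positive term only enlarges the denominator and hence weakens the bound in the right direction (so the stated inequality still holds), or by absorbing the discrepancy into the $\log N$ versus $\log(N-1)$ constant and observing it does not affect the asymptotic statement; the cleanest route is to define the critic and the negative distribution so that the $N$-sample estimator matches Eq.~\eqref{eq-ap:contra3} exactly, then the identification is immediate. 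A secondary subtlety is justifying that the empirical batch average is a valid stand-in for $\mathbb{E}_{p(f(G^*), f(G))}$, which is the usual i.i.d.\ sampling assumption on the batch and should be stated as such.
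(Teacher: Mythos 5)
Your proposal takes essentially the same route as the paper's own proof in Appendix~\ref{ap:proof-low-fGfG}: both simply invoke the InfoNCE variational lower bound of van den Oord et al., identify $-\mathcal{L}_{Cl}(G^*,G)$ with the $N$-sample NCE estimator of $I(f(G^*);f(G))$, and read off the $\log N$ constant; the paper states this in three lines without further justification, so your step (1)--(3) outline is, if anything, more explicit than what the paper provides.

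One caution on your handling of the $j\neq i$ discrepancy: your first proposed patch has the inequality running the wrong way. Including the positive term enlarges the denominator and therefore \emph{decreases} the log-ratio, so the textbook objective (with $N$ terms) is $\leq -\mathcal{L}_{Cl}$ (with $N-1$ terms); the bound $I \geq (\text{textbook objective}) + \log N$ then gives no control on $-\mathcal{L}_{Cl} + \log N$ from above, which is what the stated inequality requires. Only your second and third patches (absorbing the constant as $\log(N-1)$ versus $\log N$, or redefining the critic and negative-sampling distribution so the estimator matches Eq.~\eqref{eq-ap:contra3} exactly) are viable; note also that the negatives in Eq.~\eqref{eq-ap:contra3} are intra-view pairs $sim(\mathbf{Z}_i^\alpha,\mathbf{Z}_j^\alpha)$ rather than cross-view samples from the marginal of $f(G)$, a gap the paper's proof silently ignores as well.
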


\begin{proof}
From \cite{oord2018representation}, we easily get a lower bound of mutual information between $\mathbf{Z}_i^\alpha$ and $\mathbf{Z}_i^\beta$.
\begin{equation}
\begin{aligned}
    \mathbb{E}_{(G^\alpha,G^\beta)}\left[ \log \frac{e^{sim(\mathbf{Z}_i^\alpha, \mathbf{Z}_i^\beta)}}{\sum_{j=1, {j \neq i}}^{N} e^{sim(\mathbf{Z}_i^\alpha, \mathbf{Z}_j^\alpha)}} \right]
    &=
    \frac{1}{N}\sum_{i=1}^{N} \log \frac{e^{sim(\mathbf{Z}_i^\alpha, \mathbf{Z}_i^\beta)}}{\sum_{j=1, {j \neq i}}^{N} e^{sim(\mathbf{Z}_i^\alpha, \mathbf{Z}_j^\alpha)}}
    & \leq I(\mathbf{Z}_i^\alpha, \mathbf{Z}_i^\beta)-\log(N).
\end{aligned}
\end{equation}
According to Eq.~\eqref{eq-ap:contra3}, we get
\begin{equation}
    \mathcal{L}_{Cl}(G^\alpha, G^\beta)=-\mathbb{E}_{(G^\alpha,G^\beta)}\left[ \log \frac{e^{sim(\mathbf{Z}_i^\alpha, \mathbf{Z}_i^\beta)}}{\sum_{j=1, {j \neq i}}^{N} e^{sim(\mathbf{Z}_i^\alpha, \mathbf{Z}_j^\alpha)}} \right]
    =-\frac{1}{N}\sum_{i=1}^{N} \log \frac{e^{sim(\mathbf{Z}_i^\alpha, \mathbf{Z}_i^\beta)}}{\sum_{j=1, {j \neq i}}^{N} e^{sim(\mathbf{Z}_i^\alpha, \mathbf{Z}_j^\alpha)}}.
\end{equation}
Thus, minimizing $\mathcal{L}_{Cl}(G^\alpha, G^\beta)$ is equivalent to maximize the lower bound of $I(\mathbf{Z}_i^\alpha, \mathbf{Z}_i^\beta)$.

Therefore, we have
\begin{equation}
-\mathcal{L}_{Cl}(G^\alpha, G^\beta) \leq I(\mathbf{Z}_i^\alpha, \mathbf{Z}_i^\beta)-\log(N),
\end{equation}
which is equivalent to:
\begin{equation}
I(f(G^*); f(G)) \geq -\mathcal{L}_{Cl}(G^*, G) + \log(N).
\end{equation}
This indicates that minimizing the contrastive loss $\mathcal{L}_{Cl}(G^*, G)$ is equivalent to maximizing the lower bound of the mutual information $I(f(G^*); f(G))$.

\end{proof}

\subsection{Proof for Proposition~\ref{ap:prop-vup}} \label{ap:proof-vup}

We first apply the upper bound proposed in the Variational Information Bottleneck~\cite{AlemiFD017}.
\begin{lemma}[Variational Upper Bound of Mutual Information]
\label{lemma:vib}
Given any two variables $X$ and $Y$, we have the variational upper bound of $I(X;Y)$:
        \begin{align}
            I(X;Y) &= \mathbb{E}_{\mathbb{P}(X,Y)}\left[ \log \frac{\mathbb{P}(Y|X)}{\mathbb{P}(Y)} \right] = \mathbb{E}_{\mathbb{P}(X,Y)}\left[ \log \frac{\mathbb{P}(Y|X)\mathbb{Q}(Y)}{\mathbb{P}(Y)\mathbb{Q}(Y)} \right] \notag \\
            &=\mathbb{E}_{\mathbb{P}(X,Y)}\left[ \log \frac{\mathbb{P}(Y|X)}{\mathbb{Q}(Y)} \right] - \underbrace{D_{\mathrm{KL}} \left[ \mathbb{P}(Y)\|\mathbb{Q}(Y) \right ]}_{\text{non-negative}} \notag\\
            &\le \mathbb{E}_{\mathbb{P}(X,Y)}\left[ \log \frac{\mathbb{P}(Y| X)}{\mathbb{Q}(Y)} \right].
        \end{align}
\end{lemma}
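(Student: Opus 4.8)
The plan is to derive the bound directly from the definition of mutual information in Kullback--Leibler form, using a multiply-and-divide trick to introduce the variational marginal $\mathbb{Q}(Y)$. First I would write $I(X;Y) = \mathbb{E}_{\mathbb{P}(X,Y)}\big[\log \tfrac{\mathbb{P}(Y\mid X)}{\mathbb{P}(Y)}\big]$, which is just $I(X;Y) = D_{\mathrm{KL}}(\mathbb{P}(X,Y)\,\|\,\mathbb{P}(X)\mathbb{P}(Y))$ rewritten via $\mathbb{P}(x,y)/(\mathbb{P}(x)\mathbb{P}(y)) = \mathbb{P}(y\mid x)/\mathbb{P}(y)$, exactly as recorded in the Definition of Mutual Information above. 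This is the first equality in the displayed chain.

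Next I would insert $\mathbb{Q}(Y)$ into the numerator and denominator (the harmless step $\log \tfrac{\mathbb{P}(Y\mid X)}{\mathbb{P}(Y)} = \log \tfrac{\mathbb{P}(Y\mid X)\mathbb{Q}(Y)}{\mathbb{P}(Y)\mathbb{Q}(Y)}$) and split the logarithm, giving $\log \tfrac{\mathbb{P}(Y\mid X)}{\mathbb{Q}(Y)} - \log \tfrac{\mathbb{P}(Y)}{\mathbb{Q}(Y)}$. Taking expectations term by term yields $I(X;Y) = \mathbb{E}_{\mathbb{P}(X,Y)}\big[\log \tfrac{\mathbb{P}(Y\mid X)}{\mathbb{Q}(Y)}\big] - \mathbb{E}_{\mathbb{P}(X,Y)}\big[\log \tfrac{\mathbb{P}(Y)}{\mathbb{Q}(Y)}\big]$. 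The one nonroutine observation is that the second integrand depends on $Y$ alone, so marginalizing $X$ out of the joint expectation collapses it to $\mathbb{E}_{\mathbb{P}(Y)}\big[\log \tfrac{\mathbb{P}(Y)}{\mathbb{Q}(Y)}\big] = D_{\mathrm{KL}}(\mathbb{P}(Y)\,\|\,\mathbb{Q}(Y))$, which reproduces the second line of the display.

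Finally I would invoke the non-negativity of informational divergence (the Remark following the Definition of Informational Divergence, which also records that $D_{\mathrm{KL}}=0$ iff the two distributions coincide) to discard the subtracted KL term, obtaining $I(X;Y) \le \mathbb{E}_{\mathbb{P}(X,Y)}\big[\log \tfrac{\mathbb{P}(Y\mid X)}{\mathbb{Q}(Y)}\big]$ with equality precisely when $\mathbb{Q}(Y) = \mathbb{P}(Y)$. There is no genuine obstacle here — the statement is a textbook identity — so the only care needed is the measure-theoretic bookkeeping: justifying the passage from the joint expectation to the marginal one, and requiring that $\mathbb{Q}$ be absolutely continuous with respect to $\mathbb{P}(Y)$ so that the KL term, and hence the bound, is well defined and finite.
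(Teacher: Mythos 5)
Your proposal is correct and follows exactly the route the paper itself takes: rewrite $I(X;Y)$ as $\mathbb{E}_{\mathbb{P}(X,Y)}[\log \mathbb{P}(Y|X)/\mathbb{P}(Y)]$, insert $\mathbb{Q}(Y)$ by multiplying and dividing, split the logarithm so the residual term collapses to $D_{\mathrm{KL}}[\mathbb{P}(Y)\|\mathbb{Q}(Y)]$, and drop it by non-negativity. Your added remarks on the equality condition and absolute continuity of $\mathbb{Q}$ are correct refinements the paper leaves implicit.
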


\begin{lemma} [Variational Upper Bound of Conditional Mutual Information]
For any $\mathbb{Q}(U|Y)$,

\vspace{-2.0mm}
\begin{align}
I(U;V|Y) \leq \mathbb{E}_{\mathbb{P}(U,V,Y)}\left[ \log \frac{\mathbb{P}(V|U,Y)}{\mathbb{Q}(V|Y)} \right].
\vspace{-2.0mm}
\end{align}
\label{lemma-v-cmi}
\end{lemma}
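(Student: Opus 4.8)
The plan is to obtain this as the conditioned-on-$Y$ version of Lemma~\ref{lemma:vib}. First I would write conditional mutual information in its definitional form, using the expansion $I(U;V|Y)=\mathcal H(V|Y)-\mathcal H(V|U,Y)$ from the remark following the definition of mutual information, so that
\[
I(U;V|Y) \;=\; \mathbb{E}_{\mathbb{P}(U,V,Y)}\!\left[\log\frac{\mathbb{P}(V|U,Y)}{\mathbb{P}(V|Y)}\right].
\]
(Here I read the hypothesis ``for any $\mathbb{Q}(U|Y)$'' as the intended ``for any $\mathbb{Q}(V|Y)$'', consistent with the right-hand side.)

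Next I would introduce the arbitrary variational conditional $\mathbb{Q}(V|Y)$, well-defined for $\mathbb{P}(Y)$-almost every $y$, by multiplying and dividing inside the logarithm and splitting the expectation:
\[
I(U;V|Y) \;=\; \mathbb{E}_{\mathbb{P}(U,V,Y)}\!\left[\log\frac{\mathbb{P}(V|U,Y)}{\mathbb{Q}(V|Y)}\right] \;-\; \mathbb{E}_{\mathbb{P}(Y)}\!\left[D_{\mathrm{KL}}\!\big(\mathbb{P}(V|Y)\,\big\|\,\mathbb{Q}(V|Y)\big)\right].
\]
The second term arises because the extra factor $\log\frac{\mathbb{P}(V|Y)}{\mathbb{Q}(V|Y)}$ does not depend on $U$, so integrating it against $\mathbb{P}(U,V,Y)$ collapses to $\mathbb{E}_{\mathbb{P}(Y)}\mathbb{E}_{\mathbb{P}(V|Y)}[\log\frac{\mathbb{P}(V|Y)}{\mathbb{Q}(V|Y)}]$, i.e. an average of conditional KL divergences. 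Then I would invoke non-negativity of the informational divergence (the remark after its definition): $D_{\mathrm{KL}}(\mathbb{P}(V|y)\,\|\,\mathbb{Q}(V|y))\ge 0$ for every $y$, hence its $\mathbb{P}(Y)$-expectation is non-negative, and dropping it yields exactly
\[
I(U;V|Y) \;\le\; \mathbb{E}_{\mathbb{P}(U,V,Y)}\!\left[\log\frac{\mathbb{P}(V|U,Y)}{\mathbb{Q}(V|Y)}\right].
\]
Equivalently, one can just apply Lemma~\ref{lemma:vib} with the roles $X\mapsto U$, $Y\mapsto V$ under the conditional law $\mathbb{P}(\cdot\mid Y=y)$ for each fixed $y$, and average the resulting inequality over $y\sim\mathbb{P}(Y)$; both routes give the same bound.

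There is no genuinely hard analytic step here — the content is the bookkeeping of conditional distributions. The one point worth care is confirming that the term produced when $\mathbb{Q}$ is introduced is precisely a conditional KL divergence of $\mathbb{P}(V|Y)$ against $\mathbb{Q}(V|Y)$ (correct arguments, correct sign), which is why I would keep the intermediate identity rather than jump directly to the inequality. Finally I would note that the proof is stated generally enough to instantiate in Eq.~\eqref{eq:up-CRI} with $U=f(G^*)$, $V=f(G)$, $Y=\tilde Y$, since there $\mathbb{P}(f(G)|f(G^*),\tilde Y)=\mathbb{P}(f(G),f(G^*)|\tilde Y)/\mathbb{P}(f(G^*)|\tilde Y)$, so the bound matches the claimed expression verbatim.
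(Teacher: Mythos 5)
Your proposal is correct and follows essentially the same route as the paper's proof: expand $I(U;V|Y)$ as $\mathbb{E}_{\mathbb{P}(U,V,Y)}\bigl[\log\frac{\mathbb{P}(V|U,Y)}{\mathbb{P}(V|Y)}\bigr]$, insert $\mathbb{Q}(V|Y)$ inside the logarithm, identify the residual term as a (conditional) KL divergence, and drop it by non-negativity. Your writing of that residual explicitly as $\mathbb{E}_{\mathbb{P}(Y)}[D_{\mathrm{KL}}(\mathbb{P}(V|Y)\|\mathbb{Q}(V|Y))]$ is a slightly more careful rendering of what the paper writes, and your reading of the hypothesis as ``for any $\mathbb{Q}(V|Y)$'' matches what the paper's proof actually uses.
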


\begin{proof}
According to Lemma~\ref{lemma:vib}, we can derive a variational upper bound for the conditional mutual information $I(U; V|Y)$. First, the conditional mutual information can be expressed as:

\begin{equation}
I(U; V|Y) = \mathbb{E}_{\mathbb{P}(U,V,Y)}\left[ \log \frac{\mathbb{P}(U,V|Y)}{\mathbb{P}(U|Y)\mathbb{P}(V|Y)} \right].
\end{equation}

Using the properties of conditional probability, we have
\begin{equation}
\mathbb{P}(U,V|Y) = \mathbb{P}(V|U,Y) \mathbb{P}(U|Y).
\end{equation}
Substituting this into the expression for mutual information, we obtain:
\begin{equation}
I(U; V|Y) = \mathbb{E}_{\mathbb{P}(U,V,Y)}\left[ \log \frac{\mathbb{P}(V|U,Y) \mathbb{P}(U|Y)}{\mathbb{P}(U|Y)\mathbb{P}(V|Y)} \right] = \mathbb{E}_{\mathbb{P}(U,V,Y)}\left[ \log \frac{\mathbb{P}(V|U,Y)}{\mathbb{P}(V|Y)} \right].
\end{equation}
Next, introduce a variational distribution $\mathbb{Q}(V|Y)$ for $V$, and utilize the logarithmic identity:
\begin{equation}
\log \frac{\mathbb{P}(V|U,Y)}{\mathbb{P}(V|Y)} = \log \frac{\mathbb{P}(V|U,Y)}{\mathbb{Q}(V|Y)} - \log \frac{\mathbb{P}(V|Y)}{\mathbb{Q}(V|Y)}.
\end{equation}

Therefore, the mutual information can be re-expressed as:
\begin{equation}
\begin{aligned}
I(U; V|Y) &= \mathbb{E}_{\mathbb{P}(U,V,Y)}\left[ \log \frac{\mathbb{P}(V|U,Y)}{\mathbb{Q}(V|Y)} \right] - \mathbb{E}_{\mathbb{P}(V,Y)}\left[ \log \frac{\mathbb{P}(V|Y)}{\mathbb{Q}(V|Y)} \right] \\
&=\mathbb{E}_{\mathbb{P}(U,V,Y)}\left[ \log \frac{\mathbb{P}(V|U,Y)}{\mathbb{Q}(V|Y)} \right] - \underbrace{D_\mathrm{KL}{\left[ \mathbb{P}(V|Y) \| \mathbb{Q}(V|Y) \right]}}_{\text{non-negative}}.
\end{aligned}
\end{equation}

Since the KL divergence is non-negative, we have a variational upper bound for the conditional mutual information:
\begin{equation}
I(U; V|Y) \leq \mathbb{E}_{\mathbb{P}(U,V,Y)}\left[ \log \frac{\mathbb{P}(V|U,Y)}{\mathbb{Q}(V|Y)} \right].
\end{equation}

Similarly, if we introduce a variational distribution $\mathbb{Q}(U|Y)$ for $U$:
\begin{equation}
I(U; V|Y) \leq \mathbb{E}_{\mathbb{P}(U,V,Y)}\left[ \log \frac{\mathbb{P}(U|V,Y)}{\mathbb{Q}(U|Y)} \right].
\end{equation}

In summary, the conditional mutual information $I(U; V|Y)$ can be upper bounded as:
\begin{equation}
I(U; V|Y) \leq \mathbb{E}_{\mathbb{P}(U,V,Y)}\left[ \log \frac{\mathbb{P}(V|U,Y)}{\mathbb{Q}(V|Y)} \right] \text{or } I(U; V|Y) \leq \mathbb{E}_{\mathbb{P}(U,V,Y)}\left[ \log \frac{\mathbb{P}(U|V,Y)}{\mathbb{Q}(U|Y)} \right].
\end{equation}
\end{proof}

\begin{proposition}[Upper Bound of $I(f(G^*); f(G)|\tilde{Y})$]
For any $\mathbb{Q}(f(G)|\tilde{Y})$, the variational upper bound of conditional mutual information $I(f(G^*); f(G)|\tilde{Y})$ is:
\begin{align}
I(f(G^*); f(G)|\tilde{Y}) \le \mathbb{E}_{\mathbb{P}}\left[ \log \frac{\mathbb{P}(f(G), f(G^*)|\tilde{Y})}{\mathbb{P}(f(G^*)|\tilde{Y}) \mathbb{Q}(f(G)|\tilde{Y})} \right].
\end{align}
\label{ap:prop-vup}
\end{proposition}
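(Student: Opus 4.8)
The plan is to obtain Proposition~\ref{ap:prop-vup} as an immediate specialization of Lemma~\ref{lemma-v-cmi}. First I would instantiate the generic triple $(U,V,Y)$ of Lemma~\ref{lemma-v-cmi} with $U = f(G^*)$, $V = f(G)$, and $Y = \tilde{Y}$, taking the variational distribution there to be the given $\mathbb{Q}(f(G)|\tilde{Y})$. The lemma then yields directly
\begin{equation}
I(f(G^*); f(G)|\tilde{Y}) \leq \mathbb{E}_{\mathbb{P}}\!\left[ \log \frac{\mathbb{P}(f(G)\mid f(G^*),\tilde{Y})}{\mathbb{Q}(f(G)\mid\tilde{Y})} \right],
\end{equation}
where the expectation is over the full joint law $\mathbb{P}(f(G), f(G^*), \tilde{Y})$.

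Next I would rewrite the numerator using the chain rule for conditional densities: $\mathbb{P}(f(G)\mid f(G^*),\tilde{Y}) = \mathbb{P}(f(G), f(G^*)\mid \tilde{Y}) / \mathbb{P}(f(G^*)\mid\tilde{Y})$, which holds wherever $\mathbb{P}(f(G^*)\mid\tilde{Y}) > 0$, i.e. $\mathbb{P}$-almost surely under the joint. Substituting this identity into the displayed bound gives exactly
\begin{equation}
I(f(G^*); f(G)|\tilde{Y}) \leq \mathbb{E}_{\mathbb{P}}\!\left[ \log \frac{\mathbb{P}(f(G), f(G^*)\mid\tilde{Y})}{\mathbb{P}(f(G^*)\mid\tilde{Y})\,\mathbb{Q}(f(G)\mid\tilde{Y})} \right],
\end{equation}
which is the claimed inequality. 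For completeness I would also note, echoing the proof of Lemma~\ref{lemma-v-cmi}, that the slack equals $\mathbb{E}_{\mathbb{P}(f(G),\tilde{Y})}\, D_{\mathrm{KL}}\!\left[\mathbb{P}(f(G)\mid\tilde{Y}) \,\|\, \mathbb{Q}(f(G)\mid\tilde{Y})\right] \ge 0$, vanishing precisely when $\mathbb{Q}(f(G)\mid\tilde{Y}) = \mathbb{P}(f(G)\mid\tilde{Y})$; this confirms the bound is tight for the optimal $\mathbb{Q}$ and motivates minimizing it as in Eq.~\eqref{eq:ins-CRI}.

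There is essentially no hard step: the statement is a one-line corollary of the already-established Lemma~\ref{lemma-v-cmi} together with the definition of conditional probability. The only care needed is bookkeeping --- ensuring $\mathbb{E}_{\mathbb{P}}$ is read as expectation over the full joint rather than a marginal, and that $\mathbb{Q}(f(G)\mid\tilde{Y})$ is a genuine normalized conditional distribution so that the non-negativity of the KL term in Lemma~\ref{lemma-v-cmi} applies verbatim. If a self-contained argument is preferred, one can instead expand $I(f(G^*); f(G)\mid\tilde{Y})$ directly, insert and cancel $\mathbb{Q}(f(G)\mid\tilde{Y})$ inside the logarithm, and split off the resulting expected KL term exactly as in the proof of Lemma~\ref{lemma-v-cmi}.
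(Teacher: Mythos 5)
Your proposal is correct and follows exactly the paper's own argument: instantiate Lemma~\ref{lemma-v-cmi} with $U=f(G^*)$, $V=f(G)$, $Y=\tilde{Y}$, then rewrite $\mathbb{P}(f(G)\mid f(G^*),\tilde{Y})$ via the multiplication rule to obtain the stated form. The added remark identifying the slack as an expected KL divergence is a harmless bonus beyond what the paper records.
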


\begin{proof}
Since $I(U; V|Y) \leq \mathbb{E}_{\mathbb{P}(U,V,Y)}\left[ \log \frac{\mathbb{P}(V|U,Y)}{\mathbb{Q}(V|Y)} \right]$ in Lemma~\ref{lemma-v-cmi}, let $U=f(G^*)$, $V=f(G)$, we have
\begin{align}
I(f(G^*); f(G)|\tilde{Y}) \le \mathbb{E}_{\mathbb{P}}\left[ \log \frac{\mathbb{P}(f(G)|f(G^*), \tilde{Y})}{\mathbb{Q}(f(G)|\tilde{Y})} \right],
\end{align}
where $\mathbb{P}(f(G)|f(G^*), \tilde{Y})$ is the true conditional distribution $\mathbb{P}(f(G)|f(G^*), \tilde{Y})$, and $\mathbb{Q}(f(G)|\tilde{Y})$ is a variational distribution used to approximate $\mathbb{P}(f(G)|\tilde{Y})$.
Based on the multiplication rule,
\begin{equation}
\mathbb{P}(f(G), f(G^*)|\tilde{Y}) = \mathbb{P}(f(G)|f(G^*), \tilde{Y}) \mathbb{P}(f(G^*)|\tilde{Y}).
\end{equation}

This implies that 
\begin{equation}
\mathbb{P}(f(G)|f(G^*), \tilde{Y}) = \frac{\mathbb{P}(f(G), f(G^*)|\tilde{Y})}{\mathbb{P}(f(G^*)|\tilde{Y})}.
\end{equation}

Substituting this into the variational upper bound, we have
\begin{equation}
\begin{aligned}
I(f(G^*); f(G)|\tilde{Y}) &\le \mathbb{E}_{\mathbb{P}}\left[ \log \frac{\mathbb{P}(f(G)|f(G^*),\tilde{Y})}{\mathbb{Q}(f(G)|\tilde{Y})} \right] \\
&= \mathbb{E}_{\mathbb{P}}\left[ \log \frac{\mathbb{P}(f(G), f(G^*)|\tilde{Y})}{\mathbb{P}(f(G^*)|\tilde{Y}) \mathbb{Q}(f(G)|\tilde{Y})} \right].
\end{aligned}
\end{equation}

\end{proof}

\subsection{Proof of Theorem~\ref{ap-thm-GG0}} \label{app:proof-1}

In this section, we present the proof of the statement $\mathcal{H}(G^\ast|G)=0$. First, we repeat the property that the target essential view should own:

\begin{definition}
The essential view with redundancy-eliminated information is supposed to be a distinctive substructure of the given graph.
\label{ap-def:1}
\end{definition}

Now, let $G^\ast$ be the target essential view of graph $G$, the MI between $G$ and $G^\ast$ (\ie, $I(G^\ast; G)$) can be formulated as:
\begin{equation}
I(G^\ast; G) = \mathcal{H}(G^\ast) - \mathcal{H}(G^\ast|G),
\end{equation}
where $\mathcal{H}(G^\ast)$ is the structural entropy of $G^\ast$ and $\mathcal{H}(G^\ast|G)$ is the conditional entropy of $G^\ast$ conditioned on $G$.
Before the proof, as shown in Figure~\ref{fig:MI3}, the Venn diagram on the left suggests the mutual information (MI) between the original graph and the essential view in general cases, while the Venn diagram on the right reveals the MI relationship when the essential view is a substructure of the given original graph (\ie, complies with Definition~\ref{ap-def:1}).

\begin{theorem}
\label{ap-thm-GG0}
The information in $G^\ast$ is a subset of information in $G$ (\ie, $\mathcal{H}(G^\ast) \subseteq \mathcal{H}(G)$); thus, we have
\begin{equation}
\mathcal{H}(G^\ast|G)=0.
\end{equation}
\end{theorem}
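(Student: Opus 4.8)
The plan is to reduce the claim to the elementary information-theoretic fact that a variable which is a deterministic function of another carries no conditional entropy given it. The starting point is Definition~\ref{ap-def:1}: the essential view $G^{*}$ is a \emph{distinctive substructure} of the test graph $G$. Crucially, $G^{*}$ is not an exogenous random object — it is produced from $G$ by the coding-tree construction of Section~\ref{sec:ins}, i.e.\ it is instantiated by the tree $T=\arg\min_{T}\mathcal{H}^{T}(G)$ obtained from the deterministic, learning-free $\mathbf{MERGE}$/$\mathbf{DROP}$ operators of Algorithm~\ref{algo1} (with a fixed tie-breaking rule when the minimizer is not unique). Hence there is a measurable map $t^{*}$ with $G^{*}=t^{*}(G)$, so $G^{*}$ is a deterministic function of $G$.

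Given this, first I would observe that, conditioned on $G=g$, the value of $G^{*}$ is determined with probability one, so the conditional law $\mathbb{P}(G^{*}\mid G=g)$ is a point mass and $\mathcal{H}(G^{*}\mid G=g)=0$ for every $g$. Averaging over $G$ gives $\mathcal{H}(G^{*}\mid G)=\mathbb{E}_{G}\big[\mathcal{H}(G^{*}\mid G=g)\big]=0$, which is exactly the claimed identity. Second, I would translate this back into the ``information diagram'' language of the statement: since $I(G^{*};G)=\mathcal{H}(G^{*})-\mathcal{H}(G^{*}\mid G)$, the vanishing of the conditional entropy yields $I(G^{*};G)=\mathcal{H}(G^{*})$, i.e.\ all of the information in $G^{*}$ is already contained in $G$ (the containment $\mathcal{H}(G^{*})\subseteq\mathcal{H}(G)$ and the right-hand Venn diagram of Figure~\ref{fig:MI3}), which is precisely the rewriting $I(G^{*};G)=\mathcal{H}(G^{*})$ subsequently used in the paper.

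The only non-routine point — and the step I expect to be the main obstacle — is justifying that ``being a substructure of $G$'' genuinely forces $G^{*}$ to be a \emph{deterministic} function of $G$, rather than, say, a randomly sampled subgraph that injects fresh randomness. I would resolve this by appealing to the explicit construction: structural-entropy minimization is a preprocessing step carried out by an approximation algorithm that consumes only $G$ and no auxiliary random seed, so no information outside $G$ enters $G^{*}$; and even if one prefers to allow a randomized extractor, the conclusion $\mathcal{H}(G^{*}\mid G)=0$ still holds whenever the extractor's randomness is itself a function of $G$ (equivalently, $G^{*}$ is $\sigma(G)$-measurable). Everything after establishing this measurability is a one-line consequence of the definitions of conditional entropy and mutual information recalled in Appendix~\ref{ssec: information measures}.
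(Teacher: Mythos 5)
Your proof is correct, but it takes a different route from the paper's. You argue at the level of the graph-valued random variables: $G^{*}=t^{*}(G)$ for a deterministic (measurable) map given by the \textbf{MERGE}/\textbf{DROP} construction, so $\mathbb{P}(G^{*}\mid G=g)$ is a point mass, $\mathcal{H}(G^{*}\mid G=g)=0$ pointwise, and the identity follows by averaging. The paper instead works at the level of node-representation distributions (following the InfoGraph formulation): it writes $\mathcal{H}(G^{*}\mid G)=\mathcal{H}(X^{*}\mid X)$, expands the conditional entropy as a double sum, uses the substructure property of Definition~\ref{ap-def:1} to kill the terms with $x\notin X^{*}$ (i.e., $\mathcal{P}(x^{*},x)=0$ off the support of $X^{*}$), and collapses the remaining sum to $\mathcal{H}(X^{*}\mid X^{*})=0$. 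The two arguments buy different things. The paper's version stays formally tied to its stated hypothesis (``$G^{*}$ is a substructure of $G$''), but the collapse of the double sum onto the diagonal $x=x^{*}$ implicitly assumes the joint law of $(X^{*},X)$ is degenerate there, which is really a determinism assumption in disguise; mere support containment would not suffice (a uniformly random subgraph of $G$ is a substructure yet has strictly positive conditional entropy given $G$). Your proof makes that load-bearing hypothesis explicit and is the cleaner, textbook argument ($Y=f(X)\Rightarrow\mathcal{H}(Y\mid X)=0$); it is also consistent with the Markov chain $G\to G^{*}\to f(G^{*})$ the paper invokes elsewhere. The one thing you should keep, and you already do, is the remark about fixing a tie-breaking rule for $\arg\min_{T}\mathcal{H}^{T}(G)$, since without it $t^{*}$ is not well-defined as a single-valued map.
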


\begin{figure}[t]
    \centering
    \includegraphics[width=.9\linewidth]{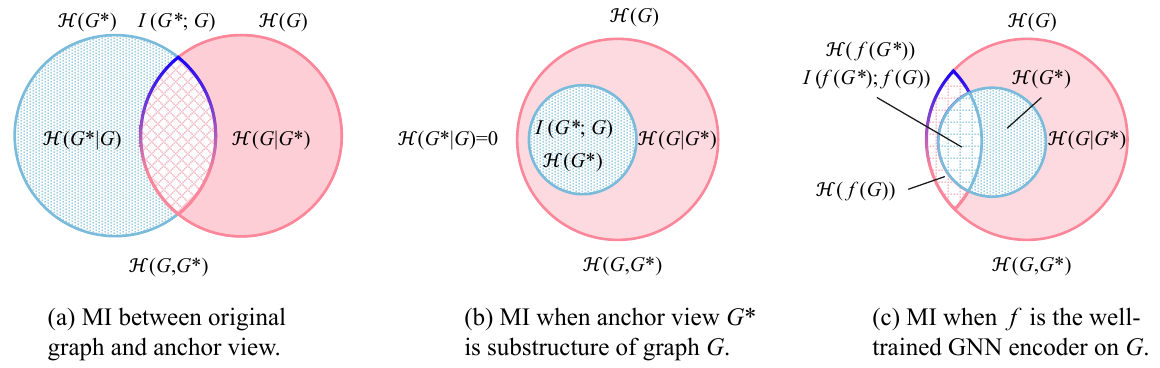}
    \hspace{20pt}
    \vspace{-0.2cm}
    \caption{Venn diagram of the mutual information between original graph and essential view.}
    \label{fig:MI3}
\end{figure}

\begin{proof}
According to the definition of Shannon entropy~\cite{Shannon}, \ie, 
\begin{equation}
\mathcal{H}(X) = -\sum_{x\in X} \mathcal{P}(x)\log \mathcal{P}(x),
\end{equation}
we follow the formulation of graph MI in \cite{sun2020infograph} that $X$ is a set of node representations drawn from an empirical probability distribution of graph $G$, we have
\begin{equation}
\mathcal{H}(G^\ast|G) = \mathcal{H}(X^\ast|X).
\end{equation}

So the conditional entropy can be written as:
\begin{equation}
\begin{aligned}
\label{eq:1}
\mathcal{H}(X^\ast|X) 
  = & \sum_{x\in X} \mathcal{P}(x)\mathcal{H}(X^\ast | X=x)  \\
  = & -\sum_{x\in X} \mathcal{P}(x) \sum_{x^\ast \in X^\ast} \mathcal{P}(x^\ast|x)\log \mathcal{P}(x^\ast|x) \\
  = & -\sum_{x\in X} \sum_{x^\ast \in X^\ast} \mathcal{P}(x^\ast, x)\log \mathcal{P}(x^\ast|x) \\
  = & -\sum_{x^\ast, x} \mathcal{P}(x^\ast, x)\log \mathcal{P}(x^\ast | x).
\end{aligned}
\end{equation}
Considering that $G^\ast$ complies with the Definition~\ref{ap-def:1}, the illustration of the probability distribution of $G^\ast$ and $G$ is shown Figure~\ref{fig:MI3}(b). 
Here, let us first discuss that when $x \in X$ and $x \notin X^\ast$, we have
\begin{equation}
\mathcal{P}(x^\ast, x)=0.
\end{equation}
Therefore, we can transform Eq.~\eqref{eq:1} to:

\begin{align}
\label{eq:2}
\mathcal{H}(X^\ast|X)
  = & -\sum_{x^\ast, x} \mathcal{P}(x^\ast, x)\log \mathcal{P}(x^\ast | x)  \nonumber \\
  = & -\sum_{x^\ast, x^\ast} \mathcal{P}(x^\ast, x^\ast)\log \mathcal{P}(x^\ast | x^\ast) -\sum_{x^\ast, x \notin X^\ast} \mathcal{P}(x^\ast, x)\log \mathcal{P}(x^\ast | x)   \nonumber \\
  = & -\sum_{x^\ast \in X^\ast} \sum_{x^\ast \in X^\ast} \mathcal{P}(x^\ast, x^\ast)\log \mathcal{P}(x^\ast|x^\ast)  \nonumber \\
  = & -\sum_{x^\ast \in X^\ast} \mathcal{P}(x^\ast) \sum_{x^\ast \in X^\ast} \mathcal{P}(x^\ast|x^\ast)\log \mathcal{P}(x^\ast|x^\ast)  \nonumber \\ 
  = & \sum_{x^\ast \in X^\ast} \mathcal{P}(x^\ast)\mathcal{H}(X^\ast | X^\ast=x^\ast)  \nonumber \\
  = & \mathcal{H}(X^\ast|X^\ast)  \nonumber \\
  = & 0.
\end{align}

Therefore, given $\forall x \in X$, we have
\begin{equation}
\mathcal{H}(G^\ast | G)= \mathcal{H}(X^\ast|X) = 0.
\end{equation}
Accordingly, we have
\begin{equation}
I(G^\ast; G) = \mathcal{H}(G^\ast).
\label{eq:GH}
\end{equation}
\end{proof}

\subsection{Proof of Theorem~\ref{ap:thm-CL-max-Y}}    \label{app:proof-2}

With the essential view eliminating redundant information, we also theoretically prove that our \ourmethod effectively captures the maximum mutual information between the representations obtained from the essential view and labels by minimizing the contrastive loss in Eq.~\eqref{eq:contra3}.
The InfoMax principle has been widely applied in representation learning literature \cite{Bachman:2019wp,Poole:2019vk,Tschannen:2020uo}. MI quantifies the amount of information obtained about one random variable by observing the other random variable.
We first introduce two lemmas.

\begin{lemma}
\label{lm1}
Given that $f$ is a GNN encoder with learnable parameters and $G^\ast$ is the target essential view of graph $G$. 
If $I(f(G^\ast); f(G))$ reaches its maximum, then $I(f(G^\ast); G)$ will also reach its maximum. 
\end{lemma}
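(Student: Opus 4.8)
The plan is to prove Lemma~\ref{lm1} by a squeeze argument built on the data processing inequality (DPI). First I would make the Markov structure explicit. By construction the essential view is a deterministic map of the input, $G^\ast = t^\ast(G)$ (the coding tree returned by the structural-entropy-minimizing approximation algorithm), so $f(G^\ast)$ is a deterministic function of $G^\ast$, which is a deterministic function of $G$, while $f(G)$ is also a deterministic function of $G$. Hence conditioning on $G$ renders $f(G^\ast)$ and $f(G)$ independent, giving the Markov chain $f(G^\ast)\!-\!G\!-\!f(G)$, and likewise the chain $f(G^\ast)\!-\!G^\ast\!-\!G$ already recorded in Proposition~\ref{prop:up-GfG}.

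Second, applying DPI along these two chains yields
\begin{equation}
I(f(G^\ast); f(G)) \;\le\; I(f(G^\ast); G) \;\le\; I(G^\ast; G).
\end{equation}
By Theorem~\ref{theo:hgeq0} we have $I(G^\ast;G)=\mathcal{H}(G^\ast)$, so both $I(f(G^\ast);f(G))$ and $I(f(G^\ast);G)$ are bounded above by the constant $\mathcal{H}(G^\ast)$, which does not depend on $f$.

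Third, I would argue this bound is attained. Taking $f$ to be an encoder that is injective on the relevant $1$-WL equivalence classes of graphs — precisely the expressiveness hypothesis invoked in Theorem~\ref{theo:re-es}, and realizable by a GIN-type backbone — makes $f(G^\ast)$ a bijective re-encoding of $G^\ast$ and lets $f(G)$ and $G$ carry the same information on that quotient space, whence $I(f(G^\ast);f(G)) = I(G^\ast;G) = \mathcal{H}(G^\ast)$. Therefore $\max_f I(f(G^\ast);f(G)) = \mathcal{H}(G^\ast)$. Consequently, if $f$ attains this maximum then $I(f(G^\ast);f(G))=\mathcal{H}(G^\ast)$, and the displayed chain forces $\mathcal{H}(G^\ast) = I(f(G^\ast);f(G)) \le I(f(G^\ast);G) \le \mathcal{H}(G^\ast)$, so $I(f(G^\ast);G) = \mathcal{H}(G^\ast)$, which is its (identical) maximum — establishing the claim.

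The step I expect to be the main obstacle is the rigorous justification of the joint Markov structure together with attainability. One must be careful that GNNs are not injective on raw graphs, only up to $1$-WL/isomorphism, so the maximum should be stated on the quotient space exactly as in Theorem~\ref{theo:re-es}; and one must confirm that $G^\ast$ is genuinely a measurable deterministic function of $G$ (no randomness in the coding-tree construction, or a fixed tie-breaking rule if ties occur), since the conditional independence $f(G^\ast)\perp f(G)\mid G$ that underpins the first DPI step relies on it. Once these points are pinned down, the remainder is routine information-theoretic manipulation.
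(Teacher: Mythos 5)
Your proof is correct in substance but closes the argument differently from the paper. The paper's proof uses the chain-rule identity $I(f(G^\ast);G)=I(f(G^\ast);f(G))+I(f(G^\ast);G\,|\,f(G))$ (valid because $f(G)$ is a deterministic function of $G$) and then argues informally that maximizing the first summand, while the conditional term is driven to its floor of zero, forces $I(f(G^\ast);G)$ up as well. Your first DPI step, $I(f(G^\ast);f(G))\le I(f(G^\ast);G)$ via the Markov chain $f(G^\ast)\!-\!G\!-\!f(G)$, is exactly the inequality hiding in that identity; where you diverge is in adding the second DPI bound $I(f(G^\ast);G)\le I(G^\ast;G)=\mathcal{H}(G^\ast)$ and an attainability claim, turning the paper's monotonicity heuristic into a genuine squeeze. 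This buys you a cleaner logical structure (a common supremum attained at both ends forces the sandwiched quantity to attain it too), whereas the paper's version avoids any attainability discussion at the price of rigor. The one point you should pin down, which you already flag, is that the squeeze needs $\max_f I(f(G^\ast);f(G))$ and $\max_f I(f(G^\ast);G)$ to be the \emph{same} value: on the raw graph space the former equals $I([G^\ast];[G])$ and the latter equals $\mathcal{H}([G^\ast])$ (where $[\cdot]$ denotes the $1$-WL class), and these coincide only if the coding-tree map descends to the quotient, i.e., $1$-WL-equivalent graphs yield $1$-WL-equivalent essential views. Stating the lemma on the quotient space, as in Theorem~\ref{theo:re-es}, or assuming an idealized injective encoder, closes this gap; with that caveat your route is sound and arguably tighter than the paper's.
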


\begin{proof}
Because $f(G)$ is a function of $G$,
\begin{equation}
\begin{aligned}
        I(f(G^\ast);G)&=I(f(G^\ast);f(G);G) + I(f(G^\ast);G|f(G)) \\
        &=I(f(G^\ast);f(G))+I(f(G^\ast);G|f(G)). 
\end{aligned}
\end{equation}

\noindent Thus,
\begin{equation}
    I(f(G^\ast);f(G))=I(f(G^\ast);G)-I(f(G^\ast);G|f(G)). 
\end{equation}
    
\noindent While maximizing $I(f(G^\ast);f(G))$, either $I(f(G^\ast);G)$ increases or $I(f(G^\ast);G|f(G))$ decreases. When $I(f(G^\ast);G|f(G))$ reaches it minimum value of 0, $I(f(G^\ast);G)$ will definitely increase. Hence, the process of maximizing $I(f(G^\ast);f(G))$ can lead to the maximization of $I(f(G^\ast); G)$ as well.
\end{proof}

\begin{lemma}
\label{lm2}
Given the essential view $G^\ast$ of graph $G$ and an encoder $f$, we have
\begin{equation}
I(f(G^\ast); G) \leq I(f(G^\ast); Y) + I(G; G^\ast|Y).
\end{equation}
\end{lemma}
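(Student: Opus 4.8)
The plan is to establish the inequality $I(f(G^\ast); G) \leq I(f(G^\ast); Y) + I(G; G^\ast \mid Y)$ by manipulating mutual information identities involving the three variables $f(G^\ast)$, $G$, and $Y$, together with the fact that $f(G^\ast)$ is a deterministic function of $G^\ast$ (which is itself a function of $G$, via the Markov chain $G \to G^\ast \to f(G^\ast)$ invoked earlier in Proposition~\ref{prop:up-GfG}). First I would write the chain-rule decomposition of the joint information $I(f(G^\ast); G, Y)$ in the two natural orders:
\begin{equation}
I(f(G^\ast); G, Y) = I(f(G^\ast); G) + I(f(G^\ast); Y \mid G) = I(f(G^\ast); Y) + I(f(G^\ast); G \mid Y).
\end{equation}
Since $f(G^\ast)$ is obtained from $G$ through the chain $G \to G^\ast \to f(G^\ast)$, conditioning on $G$ makes $f(G^\ast)$ (essentially) determined, so $I(f(G^\ast); Y \mid G) = 0$; more carefully, by the data-processing inequality along $G \to G^\ast \to f(G^\ast)$ one has $I(f(G^\ast); Y\mid G) \le I(G^\ast; Y \mid G)$, and if $G^\ast$ is a deterministic function of $G$ this conditional term vanishes. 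This already yields $I(f(G^\ast); G) = I(f(G^\ast); Y) + I(f(G^\ast); G \mid Y)$.

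Next I would bound $I(f(G^\ast); G \mid Y)$ above by $I(G; G^\ast \mid Y)$. This again follows from the data-processing inequality applied to the Markov chain $G^\ast \to f(G^\ast)$ conditioned on $Y$ (so that $G \to G^\ast \to f(G^\ast)$ holds given $Y$): processing $G^\ast$ through the encoder $f$ cannot increase the information it shares with $G$, hence $I(f(G^\ast); G \mid Y) \le I(G^\ast; G \mid Y)$. Combining the two displays gives
\begin{equation}
I(f(G^\ast); G) = I(f(G^\ast); Y) + I(f(G^\ast); G \mid Y) \le I(f(G^\ast); Y) + I(G; G^\ast \mid Y),
\end{equation}
which is exactly the claimed bound.

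The main obstacle, and the point I would be most careful about, is justifying that the relevant conditional mutual information terms behave as claimed — specifically that $I(f(G^\ast); Y \mid G) = 0$ and that the data-processing step $I(f(G^\ast); G\mid Y) \le I(G^\ast; G \mid Y)$ is legitimate in the test-time setting where $G^\ast$ (the coding tree) is constructed deterministically from $G$ via structural entropy minimization while $f$ has learnable parameters. The cleanest route is to lean on the Markov chain $\langle G \to G^\ast \to f(G^\ast)\rangle$ already assumed in Proposition~\ref{prop:up-GfG}: under this assumption the deterministic-function argument gives $I(f(G^\ast); Y \mid G)=0$ directly, and the conditional data-processing inequality (valid because the Markov property is preserved under conditioning on $Y$ when $Y$ is independent of the processing map, or more simply because $f(G^\ast)$ is a function of $G^\ast$) supplies the final inequality. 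I would state these two facts as the two ingredients and then the result is a two-line assembly.
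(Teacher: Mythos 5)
Your proof is correct and follows essentially the same route as the paper's: both rest on the two-order chain-rule decomposition of $I(f(G^\ast);G,Y)$ (the paper writes it via interaction information, yielding $I(f(G^\ast);G)+I(f(G^\ast);Y|G)=I(f(G^\ast);Y)+I(f(G^\ast);G|Y)$) together with the conditional data-processing bound $I(f(G^\ast);G|Y)\le I(G;G^\ast|Y)$, which the paper derives by expanding $I(G;G^\ast|Y)$ and using non-negativity of conditional mutual information. The only minor difference is that you assert $I(f(G^\ast);Y|G)=0$ via determinism of $G^\ast$ given $G$, whereas the paper only needs $I(f(G^\ast);Y|G)\ge 0$ to drop that term — a weaker and more robust hypothesis — but this does not affect the validity of your argument.
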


\begin{proof} 
\begin{equation}
\begin{aligned}
I(f(G^\ast);G)= &I(f(G^\ast);G;Y)+ I(f(G^\ast);G|Y) \\
= &I(f(G^\ast);Y)-I(f(G^\ast);Y|G)+ I(f(G^\ast);G|Y).
\end{aligned}
\label{lm2-eq1}
\end{equation}
    
\noindent Due to the non-negativity of mutual information,
\begin{equation}
\begin{aligned}
        I(G;G^\ast|Y) &=I(f(G^\ast);G;G^\ast|Y)+ I(G;G^\ast|Y,f(G^\ast))   \\
        &\ge I(f(G^\ast);G;G^\ast|Y) \quad (\text{the non-negativity of } I)  \\
        &= I(f(G^\ast);G|Y).
\end{aligned}
\label{lm2-eq2}
\end{equation}
According to Eq.~\eqref{lm2-eq1} and~\eqref{lm2-eq2}, we get
\begin{equation}
\begin{aligned}
    I(f(G^\ast);G) + I(f(G^\ast);Y|G) 
    &= I(f(G^\ast);Y + I(f(G^\ast);G|Y) \\
    &\leq I(f(G^\ast);Y) + I(G; G^\ast|Y).
\end{aligned}
\label{lm2-eq3}
\end{equation}
Thus,
\begin{equation}
    I(f(G^\ast); G) \leq I(f(G^\ast);Y) + I(G; G^\ast|Y).
\label{lm2-eq3}
\end{equation}

\end{proof}

\begin{theorem}
According to Proposition~\ref{prop:low-GG}, optimizing the contrastive loss $\mathcal{L}_{Cl}(G^*, G)$ is equivalent to maximizing $I(f(G^*);f(G))$, which could lead to the maximization of $I(f(G^*); Y)$.

\label{ap:thm-CL-max-Y}
\end{theorem}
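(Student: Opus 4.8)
The plan is to chain together the two lemmas just established with the lower bound from Proposition~\ref{prop:low-GG}. The logical skeleton has three links: (i) minimizing $\mathcal{L}_{Cl}(G^*, G)$ pushes up the lower bound $-\mathcal{L}_{Cl}(G^*, G) + \log N$ on $I(f(G^*); f(G))$, so at the optimum $I(f(G^*); f(G))$ is maximized; (ii) by Lemma~\ref{lm1}, maximizing $I(f(G^*); f(G))$ drives $I(f(G^*); G)$ to its maximum, because the gap $I(f(G^*); G \mid f(G))$ is nonnegative and vanishes at the optimum; (iii) by Lemma~\ref{lm2}, $I(f(G^*); G) \le I(f(G^*); Y) + I(G; G^* \mid Y)$, and since $G^*$ is a fixed (redundancy-eliminated) essential view constructed by structural-entropy minimization, the term $I(G; G^* \mid Y)$ does not depend on the encoder $f$. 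Hence the only way the left side can be made large by optimizing $f$ is for $I(f(G^*); Y)$ to be large, i.e. maximizing $I(f(G^*); f(G))$ forces $I(f(G^*); Y)$ toward its maximum as well.

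First I would restate the objective: we are optimizing over the coding-tree encoder $f$ with $G^*$ held fixed, so every quantity involving only $G$, $G^*$, and $Y$ (but not $f$) is a constant of the optimization. Then I would invoke Proposition~\ref{prop:low-GG} verbatim to identify $\arg\min_f \mathcal{L}_{Cl}(G^*, G)$ with $\arg\max_f [\text{lower bound on } I(f(G^*); f(G))]$, and note that a standard InfoNCE argument (as in the cited $\mathcal{L}_{Cl}$ analysis) makes this bound tight in the large-batch / sufficiently expressive regime, so the contrastive optimum coincides with $\arg\max_f I(f(G^*); f(G))$. Next I would apply Lemma~\ref{lm1} to transfer maximality from $I(f(G^*); f(G))$ to $I(f(G^*); G)$. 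Finally I would apply Lemma~\ref{lm2} and observe that, because $I(G; G^* \mid Y)$ is an $f$-independent constant, any $f$ that maximizes $I(f(G^*); G)$ must in particular maximize $I(f(G^*); Y)$ (up to that additive constant), which is exactly the claim.

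The main obstacle — and the place where the argument is really a heuristic rather than an identity — is the transfer of \emph{exact maximality} across the two lemmas. Lemma~\ref{lm1} only guarantees that maximizing $I(f(G^*); f(G))$ \emph{can} increase $I(f(G^*); G)$, and Lemma~\ref{lm2} is an inequality, not an equality; so the honest statement is that the optimal $f$ makes $I(f(G^*); G \mid f(G)) = 0$ and $I(f(G^*); Y \mid G) = 0$ (the deterministic / sufficiency conditions), and under those conditions the inequalities collapse and $I(f(G^*); Y)$ attains its maximum. I would handle this by explicitly invoking the two nonnegative ``slack'' terms, $I(f(G^*); G \mid f(G))$ and $I(f(G^*); Y \mid G)$, arguing that at the contrastive optimum an expressive enough GNN (as powerful as the 1-WL test, as assumed in Theorem~\ref{theo:re-es}) can drive both to zero, and then reading off the conclusion. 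I would keep the phrasing consistent with the paper's own hedged language (``could lead to the maximization of'') so the claim is not overstated beyond what the lemmas support.
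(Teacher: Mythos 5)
Your proposal is correct and follows essentially the same route as the paper's proof: Proposition~\ref{prop:low-GG} to equate the contrastive optimum with maximizing $I(f(G^*);f(G))$, Lemma~\ref{lm1} to transfer maximality to $I(f(G^*);G)$, and Lemma~\ref{lm2} to lower-bound $I(f(G^*);Y)$ by an $f$-dependent term minus an $f$-independent one. The only cosmetic difference is that the paper further relaxes $I(G;G^*|Y)\le I(G;G^*)=\mathcal{H}(G^*)$ and ties that constant to the structural-entropy minimization of $G^*$, whereas you simply treat $I(G;G^*|Y)$ as a constant of the optimization; your explicit acknowledgment that the chain is a bound-pushing heuristic rather than an exact transfer of maximality matches (and is if anything more candid than) the paper's own level of rigor.
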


\begin{proof}
Minimizing the contrastive loss $\mathcal{L}_{Cl}(G^*, G)$ is equivalent to maximizing a lower bound of the mutual information between the latent representations of two views of the graph, and can be viewed as one way of mutual information maximization between the latent representations  (\ie, $\max I(f(G^\ast);f(G))$). Consequently, the optimization of $\mathcal{L}_{Cl}(G^*, G)$ is equivalent to maximizing $I(f(G^\ast);f(G))$.

\noindent According to Lemma~\ref{lm1}, we know that maximizing $I(f(G^\ast);f(G))$ is equivalent to maximizing $I(f(G^\ast);G)$, \ie,
\begin{equation}
\max I(f(G^\ast);f(G)) \Rightarrow \max I(f(G^\ast);G).
\label{eq:lm1}
\end{equation}

\noindent According to Lemma~\ref{lm2}, we have
\begin{equation}
I(f(G^\ast); G) \leq I(f(G^\ast);Y) + I(G; G^\ast|Y).
\label{eq:lm2}
\end{equation}
Since
\begin{equation}
I(G; G^\ast) = I(G; G^\ast|Y) + I(G; G^\ast;Y),
\label{eq:2-3}
\end{equation}
it follows that
\begin{equation}
I(G; G^\ast|Y) \leq I(G; G^\ast).
\label{eq:2-3.1}
\end{equation}
By combining Eq.~\eqref{eq:lm2} and Eq.~\eqref{eq:2-3.1}, we obtain:
\begin{equation}
\begin{aligned}
I(f(G^\ast); G) &\leq I(f(G^\ast);Y) + I(G; G^\ast|Y) \\
& \leq I(f(G^\ast);Y) + I(G; G^\ast)
\end{aligned}
\label{eq:2-4}
\end{equation}
Thus, 
\begin{equation}
I(f(G^\ast); G) - I(G; G^\ast) \leq I(f(G^\ast);Y).
\end{equation}
Since the Eq.~\eqref{eq:GH} in Theorem~\ref{ap-thm-GG0} we have already proven that $I(G^\ast; G) = \mathcal{H}(G^\ast)$, and $G^\ast$ is the essential view of graph $G$ obtained by minimizing structural entropy (\ie, $\min \mathcal{H}(G^\ast)$), it follows that
\begin{equation}
\min I(G; G^\ast) = \min \mathcal{H}(G^\ast).
\label{eq:2-5}
\end{equation}
Thus,
\begin{equation}
\max I(f(G^\ast); G) - I(G^\ast; G) \leq \max I(f(G^\ast);Y).
\label{eq:2-6}
\end{equation}
Therefore, minimizing the contrastive loss $\mathcal{L}_{Cl}(G^*, G)$ is equivalent to maximizing $I(f(G^\ast); f(G))$. At this point, $I(f(G^\ast); G)$ reaches its maximum, and $I(G^\ast; G)$ reaches its minimum, thereby maximizing $I(f(G^\ast);Y)$.

\end{proof}

\subsection{Proof of Theorem~\ref{ap-theo:re-es}} \label{ap:proof-GY-Redun}
\begin{definition}[Graph Quotient Space]
Define the equivalence $\cong$ between two graphs $G_1\cong G_2$ if $G_1,G_2$ cannot be distinguished by the 1-WL test. Define the quotient space $\mathcal{G}' = \mathcal{G}/\cong$. 
\end{definition}\vspace{-1mm}
So every element in the quotient space, \ie, $G' \in \mathcal{G}'$, is a representative graph from a family of graphs that cannot be distinguished by the 1-WL test. Note that our definition also allows attributed graphs.

\begin{definition}[Probability Measures in $\mathcal{G}'$]
Define $\mathbb{P}_{\mathcal{G}'}$ over the space $\mathcal{G}'$ such that $\mathbb{P}_{\mathcal{G}'}(G') = \mathbb{P}_{\mathcal{G}}(G\cong G')$ for any $G'\in\mathcal{G}'$. Further define $\mathbb{P}_{\mathcal{G}'\times \mathcal{Y}} (G', Y') = \mathbb{P}_{\mathcal{G}\times \mathcal{Y}} (G\cong G', Y=Y')$. Given a GDA $T(\cdot)$ defined over $\mathcal{G}$, define a distribution on $\mathcal{G}'$, $\mathcal{T'}(G') = \mathbb{E}_{G\sim \mathbb{P}_{\mathcal{G}}}[\mathcal{T}(G)| G\cong G']$ for $G'\in \mathcal{G}'$.
\end{definition}\vspace{-1mm}

\begin{theorem}[Bounds of Redundant and Essential Information]
\label{ap-theo:re-es}
Suppose the encoder $f$ is implemented by a GNN as powerful as the 1-WL test. Suppose $\mathcal{G}$ is a countable space and thus $\mathcal{G'}$ is a countable space, where $\cong$ denotes equivalence ($G_1\cong G_2$ if $G_1, G_2$ cannot be distinguished by the 1-WL test). 
Define $\mathbb{P}_{\mathcal{G}'\times\mathcal{Y}}(G',Y')=\mathbb{P}_{\mathcal{G}\times\mathcal{Y}}(G\cong G', Y)$ and $\mathcal{T'}(G')=\mathbb{E}_{G\backsim \mathbb{P}_{G}}[\mathcal{T}^*(G)|G\cong G']$ for $G'\in \mathcal{G}$. Then, the optimal $f^*$ and $G^*$ to \ourmethod satisfies:
\vspace{-1mm}
\begin{enumerate}[leftmargin=*]
\item $I(G^*;Y) = I(f^*(G^*); Y) \geq I(t'(G'); Y),$
\item $I(f^*(G); G | Y ) \leq \min_{\mathcal{T}} I(t'(G'); G'))- I(t'(G'); Y ),$
\end{enumerate}
\vspace{-1mm}
where $t^*(G) = G^*, t'(G')\sim \mathcal{T'}(G')$, $t'^*(G')\sim \mathcal{T'}^{*}(G')$, $(G,Y)\sim \mathbb{P}_{\mathcal{G}\times \mathcal{Y}}$ and $(G',Y)\sim \mathbb{P}_{\mathcal{G}'\times \mathcal{Y}}$.
\end{theorem}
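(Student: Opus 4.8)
\textbf{Proof proposal for Theorem~\ref{ap-theo:re-es}.}

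The plan is to leverage the quotient-space construction together with the representational power of a $1$-WL-equivalent GNN encoder, which is the standard machinery in the graph information bottleneck literature for relating graph-level mutual information to its quotient counterpart. The first observation I would establish is that a GNN $f$ as powerful as the $1$-WL test is injective up to $\cong$, i.e.\ $f(G_1) = f(G_2)$ if and only if $G_1 \cong G_2$. This means $f$ factors through the quotient map $\pi : \mathcal{G} \to \mathcal{G}'$: there is a well-defined injection $\bar f$ on $\mathcal{G}'$ with $f = \bar f \circ \pi$. Consequently, for any random variable $Z$ jointly distributed with $G$, we have $I(f(G); Z) = I(\pi(G); Z) = I(G'; Z)$ where $G' = \pi(G)$ carries the pushforward law $\mathbb{P}_{\mathcal{G}'}$, by the data-processing equality for injective maps. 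This is the key reduction that lets me pass between $\mathcal{G}$ and $\mathcal{G}'$.

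For part (1), I would first invoke Theorem~\ref{ap-thm-GG0} (proved in Appendix~\ref{app:proof-1}) together with Theorem~\ref{ap:thm-CL-max-Y}: since $G^*$ is the essential view obtained by minimizing structural entropy and $\mathcal{H}(G^*|G)=0$, and since the optimal encoder $f^*$ maximizes $I(f(G^*); f(G))$, Theorem~\ref{ap:thm-CL-max-Y} gives that $I(f^*(G^*); Y)$ is maximized and that the chain of inequalities there forces $I(G^*; Y) = I(f^*(G^*); Y)$ (the injectivity of $f^*$ on the essential view gives equality rather than the generic inequality $I(f^*(G^*);Y) \le I(G^*;Y)$). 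It then remains to show this optimized quantity dominates $I(t'(G'); Y)$ for the induced augmentation $t'$ on the quotient space. Here I would use the definition $\mathcal{T'}(G') = \mathbb{E}_{G\sim\mathbb{P}_G}[\mathcal{T}^*(G)\mid G\cong G']$: since $\mathcal{T}^*$ is the optimal augmentation achieving the ReGIB optimum and $t'$ is obtained by averaging it over each $\cong$-class, a convexity/data-processing argument (conditioning on $G'$ can only lose information about $Y$ relative to conditioning on $G$) yields $I(t^*(G); Y) \ge I(t'(G'); Y)$, and combining with the equalities above closes part (1).

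For part (2), I would expand $I(f^*(G); G \mid Y)$ using the injectivity reduction: $I(f^*(G); G\mid Y) = I(G'; G \mid Y)$. Because $G \mapsto G'$ is deterministic, $I(G'; G\mid Y) = \mathcal{H}(G'\mid Y) = I(G'; G') - I(G'; Y\text{-part})$ schematically; more carefully, I would write $I(G'; G\mid Y) = I(G'; G) - I(G'; G; Y) = \mathcal{H}(G'\mid Y)$ and then bound $\mathcal{H}(G'\mid Y) \le I(t'(G'); G') - I(t'(G'); Y)$ by observing that for any augmentation $t'$ on $\mathcal{G}'$ the quantity $I(t'(G'); G') - I(t'(G'); Y)$ is exactly the "residual redundancy" term, and that minimizing over $\mathcal{T}$ gives the tightest such bound; the optimality of $f^*, G^*$ under ReGIB (which explicitly minimizes $I(f(G^*); f(G)\mid \tilde Y)$ via the upper bound in Proposition~\ref{ap:prop-vup}) is what guarantees $f^*(G)$ retains no more $Y$-conditional information than the best quotient-space augmentation permits. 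I expect the main obstacle to be the careful bookkeeping of the three-way interaction terms $I(\cdot;\cdot;Y)$ and verifying that the induced-augmentation averaging step is a legitimate data-processing step on the quotient space (one must check measurability of $\mathcal{T'}$ and that countability of $\mathcal{G}$, hence of $\mathcal{G}'$, makes all the entropies well-defined); the pseudo-label-versus-true-label substitution ($\tilde Y$ vs.\ $Y$) also needs a remark, since the theorem statement is in terms of $Y$ while the optimized objective uses $\tilde Y$, so I would either assume $\tilde Y = Y$ at the optimum or absorb the discrepancy into the inequality direction.
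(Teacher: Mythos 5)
Your reduction to the quotient space and your treatment of statement~1 match the paper's proof in substance: the paper likewise uses the injectivity of a 1-WL-powerful $f^*$ (so that $I(f^*(G^*);Y)=I(G^*;Y)$ and $I(f^*(t'^*(G'));Y)=I(t'^*(G');Y)$) together with two applications of the data-processing inequality across the class-averaging $\mathcal{T}'^*(G')=\mathbb{E}[\mathcal{T}^*(G)\mid G\cong G']$, yielding $I(f^*(t^*(G));Y)\ge I(t'^*(G');Y)\ge\min_{\mathcal{T}}I(t'(G');Y)$. Your detour through Theorem~\ref{ap:thm-CL-max-Y} is unnecessary for the equality (injectivity alone gives it) but harmless.

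Statement~2 is where your argument breaks. If you take $I(f^*(G);G\mid Y)$ literally and apply the injectivity reduction, you indeed arrive at $\mathcal{H}(G'\mid Y)$; but the bound you then propose, $\mathcal{H}(G'\mid Y)\le I(t'(G');G')-I(t'(G');Y)$, goes the wrong way. Because the augmentation is drawn conditionally on $G'$ and independently of $Y$ (i.e.\ $t'(G')\perp Y\mid G'$), one has exactly
\begin{equation*}
I(t'(G');G')-I(t'(G');Y)=I(t'(G');G'\mid Y)\le \mathcal{H}(G'\mid Y),
\end{equation*}
which is the reverse of what you need; minimizing over $\mathcal{T}$ cannot rescue this, since a trivial augmentation drives the right-hand side toward $0$ while $\mathcal{H}(G'\mid Y)$ is fixed. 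The paper's proof instead controls the representation of the \emph{augmented} view: it starts from the identity $I(t'^*(G');G')=I(t'^*(G');Y)+I(t'^*(G');G'\mid Y)$ (using that same conditional independence), applies the data-processing inequality to get $I(t'^*(G');G'\mid Y)\ge I(f^*(t'^*(G'));G'\mid Y)$, uses a.e.\ injectivity of $f^*$ on the relevant support to identify this with $I(f^*(t^*(G));G\mid Y)$, and finally invokes the optimality of $\mathcal{T}^*$ as the minimizer of $I(G^*;G)$ to replace $I(t'^*(G');G')$ by $\min_{\mathcal{T}}I(t'(G');G')$. So the quantity actually bounded is $I(f^*(G^*);G\mid Y)=I(f^*(t^*(G));G\mid Y)$ (the theorem's ``$f^*(G)$'' must be read this way), and the inequality rests on the conditional-independence decomposition plus DPI, not on bounding $\mathcal{H}(G'\mid Y)$. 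Your closing remark about $\tilde Y$ versus $Y$ is a fair observation, but the paper's own proof works with $Y$ throughout, so it is not the source of the gap.
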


\begin{proof}

Because $\mathcal{G}$ and $\mathcal{G'}$ are countable, $\mathbb{P}_{\mathcal{G}}$ and $\mathbb{P}_{\mathcal{G}'}$ are defined over countable sets and thus discrete distribution. Later we may call a function $z(\cdot)$ can distinguish two graphs $G_1$, $G_2$ if $z(G_1)\neq z(G_2)$.

Moreover, for notational simplicity, we consider the following definition. Because $f^*$ is as powerful as the 1-WL test. Then, for any two graphs $G_1, G_2 \in \mathcal{G}$, $G_1\cong G_2$, $f^*(G_1) = f^*(G_2)$. We may define a mapping over $\mathcal{G}'$, also denoted by $f^*$ which simply satisfies  $f^*(G'): \triangleq f^*(G)$, where $G\cong G'$, and $G\in\mathcal{G}$ and $G'\in \mathcal{G}'$.

We first prove the statement 1, 
\ie, the lower bound.
Given $G^*$, $G^* \Rightarrow f^*(G^*)$ is an injective deterministic mapping because of the injective $f^*$. Therefore, we have
\begin{equation}
    I(f^*(G); Y) = I(G^*; Y).
\end{equation}

Given $t'^*(G')$, $t'^*(G')\rightarrow f^*(t'^*(G'))$ is an injective deterministic mapping. Therefore, for any random variable $Q$,
\begin{align*}
    I(f^*(t'^*(G')); Q) = I(t'^*(G'); Q),
\end{align*}
where $G'\sim \mathbb{P_{\mathcal{G}'}}, t'^*(G')\sim \mathcal{T}'^*(G')$.

Of course, we may set $Q=Y$. So,
\begin{align}\label{prf:eq4}
    I(f^*(t'^*(G')); Y) = I(t'^*(G'); Y),
\end{align}
where $(G',Y)\sim \mathbb{P_{\mathcal{G}'\times\mathcal{Y}}}, t'^*(G')\sim \mathcal{T}'^*(G')$.
 
Because of the data processing inequality~\cite{cover1999elements} and $\mathcal{T}'^*(G')=\mathbb{E}_{G\sim \mathbb{P}_{\mathcal{G}}}[\mathcal{T}^*(G)| G\cong G']$, we further have 
\begin{align}\label{prf:eq5}
    I(f^*(t^*(G)); Y) \geq I(f^*(t'^*(G')); Y) ,
\end{align}
where $(G',Y)\sim \mathbb{P_{\mathcal{G}'\times\mathcal{Y}}}, (G,Y)\sim \mathbb{P_{\mathcal{G}\times\mathcal{Y}}}, t'^*(G')\sim \mathcal{T}'^*(G'), t^*(G)\sim \mathcal{T}^*(G).$
Further because of the data processing inequality~\cite{cover1999elements}, 
\begin{align}\label{prf:eq6}
    I(f^*(G); Y) \geq I(f^*(t^*(G)); Y). 
\end{align}

Combining Eq.~\eqref{prf:eq4}, \eqref{prf:eq5}, \eqref{prf:eq6}, we have
\begin{align}
     I(f^*(G^*); Y) = I(f^*(t^*(G)); Y) \geq I(f^*(t'^*(G')); Y) =  I(t'^*(G'); Y) \geq \min_{\mathcal{T}}I(t'(G');Y),
\end{align}
which concludes the proof of the lower bound.

We next prove the statement 2, 
\ie, the upper bound.
Recall that $\mathcal{T}'^*(G') = \mathbb{E}_{G\sim \mathbb{P}_{\mathcal{G}}}[\mathcal{T}^*(G)| G\cong G']$ and $t'^*(G')\sim \mathcal{T}'^*(G')$.
\begin{align}
    I(t'^*(G');G') & = I(t'^*(G'); (G', Y)) - I(t'^*(G');Y|G') ]\nonumber \\
    &\stackrel{(a)}{=} I( t'^*(G'); (G', Y)) \nonumber\\
    &= I( t'^*(G'); Y) + I( t'^*(G'); G'|Y) \nonumber\\
    & \stackrel{(b)}{\geq} I( f^*(t'^*(G')); G'|Y) + I(t'^*(G');Y) \label{prf:eq1}
\end{align}
where $(a)$ is because $t'^*(G')\perp_{G'} Y$, $(b)$ is because the data processing inequality~\cite{cover1999elements}. Moreover, because $f^*$ could be as powerful as the 1-WL test and thus could be injective in $\mathcal{G}'$ a.e. with respect to the measure $\mathbb{P}_{\mathcal{G}'}$. 
Since $f^*(G) = f^*(G')$ and $\mathcal{T}'(G')=\mathbb{E}_{G\sim \mathbb{P}_{\mathcal{G}}}[\mathcal{T}(G)| G\cong G']$,
\begin{align}\label{prf:eq1p5}
    I(t'(G');G') = I(f^*(t'(G'));f^*(G')) = I(f^*(t(G));f^*(G)), 
\end{align}
where $t'(G')\sim \mathcal{T}'(G'),\,t(G)\sim \mathcal{T}(G)$.

Since $\mathcal{T}^*= \arg\min_{\mathcal{T}} I(G^*, G)$ where $t^*(G)\sim \mathcal{T}^*(G)$ and Eq.~\eqref{prf:eq1p5}, we have 
\begin{align}
I(t'^*(G');G') = \arg\min_{\mathcal{T}} I(t'(G');G'),
\label{prf:eq2}
\end{align}

Again, because $f^*$ could be as powerful as the 1-WL test, its counterpart defined over $\mathcal{G}'$, \ie, $f^\star$ must be injective over $\mathcal{G'}\cap \text{Supp}(\mathbb{E}_{G'\sim \mathbb{P}_{\mathcal{G}'}}[\mathcal{T}'^*(G')])$ a.e. with respect to the measure $\mathbb{P}_{\mathcal{G}'}$ to achieve such mutual information maximization. Here, Supp($\mu$) defines the set where $\mu$ has non-zero measure. 

Because of the definition of $\mathcal{T}'^*(G')=\mathbb{E}_{G\sim \mathbb{P}_{\mathcal{G}}}[\mathcal{T}^*(G)| G\cong G']$, 
\begin{align}
\mathcal{G'}\cap  \text{Supp}(\mathbb{E}_{G'\sim \mathbb{P}_{\mathcal{G}'}}[\mathcal{T}'^*(G')]) = \mathcal{G'}\cap  \text{Supp}(\mathbb{E}_{G\sim \mathbb{P}_{\mathcal{G}}}[\mathcal{T}^*(G)]).
\end{align}

Therefore, $f^*$ is a.e. injective over $\mathcal{G'}\cap \text{Supp}(\mathbb{E}_{G\sim \mathbb{P}_{\mathcal{G}}}[\mathcal{T}^*(G)])$ and thus
\begin{align} \label{prf:eq3}
I( f^*(t'^*(G')); G'|Y) = I( f^*(t^*(G)); G'|Y), 
\end{align}
Moreover, as $f^*$ cannot  cannot distinguish more graphs in $\mathcal{G}$ than $\mathcal{G'}$ as the power of $f^*$ is limited by 1-WL test, thus,
\begin{align} \label{prf:eq3p5}
I( f^*(t^*(G)); G'|Y) = I( f^*(t^*(G)); G|Y). 
\end{align}
Plugging Eq.~\eqref{prf:eq2},\eqref{prf:eq3},\eqref{prf:eq3p5} into Eq.~\eqref{prf:eq1}, we achieve
\begin{align}
    I( f^*(G^*); G|Y) = I( f^*(t^*(G)); G|Y) &\leq \arg\min_{\mathcal{T}} I(t'(G');G') - I(t'^*(G');Y) \nonumber \\
    &\leq \arg\min_{\mathcal{T}} I(t'(G');G') - I(t'(G');Y).
\end{align}
where $t'(G')\sim \mathcal{T}'(G')= \mathbb{E}_{G\sim \mathbb{P}_{\mathcal{G}}}[\mathcal{T}(G)| G\cong G']$ and $t'^*(G') \sim \mathcal{T}'^*(G')= \mathbb{E}_{G\sim \mathbb{P}_{\mathcal{G}}}[\mathcal{T}^*(G)| G\cong G']$, which gives us the statement 1, which is the upper bound.

\end{proof}

i) Enhancing Essential Information: statement 1 of Theorem~\ref{ap-theo:re-es} highlights the effectiveness of ReGIB in capturing essential information. 
It implies that the essential information $G^*$ is guaranteed to have at least as much mutual information with the ground-truth labels $Y$ as the augmented representation $t'(G')$, which suggests that $G^*$ is highly informative with respect to the downstream task.

ii) Limiting Redundant Information: statement 2 of Theorem~\ref{ap-theo:re-es} establishes an upper bound on the redundant information embedded in the representations. This aligns with the GIB principle (Eq.~\eqref{eq:GIB} when $\beta =1$), ensuring that the encoder $f^*$ captures only the necessary information from input graph $G$ that is relevant to the downstream task. The statement suggests that ReGIB is capable of producing representations with limited redundant information, thereby enhancing the overall efficiency of the representation learning process.

\subsection{Proof of Instantiation for $I(f(G^*); f(G)|\tilde{Y})$} \label{ap:proof-instan}

According to Proposition~\ref{ap:prop-vup}, we have:
\begin{align}
I(f(G^*); f(G)|\tilde{Y}) \le \mathbb{E}_{\mathbb{P}}\left[ \log \frac{\mathbb{P}(f(G), f(G^*)|\tilde{Y})}{\mathbb{P}(f(G^*)|\tilde{Y}) \mathbb{Q}(f(G)|\tilde{Y})} \right].
\label{ap:eq-83}
\end{align}
To specify the variational upper bound, we provide the proof for the instantiation of $I(f(G^*); f(G)|\tilde{Y})$ as follows.
\begin{proof}

Since
\begin{equation}
\begin{aligned} I(U; V | Y) &\leq \mathbb{E}_{P(U,V,Y)} \left[ \log \frac{e^{sim(u, v)}}{\frac{1}{N} \sum_{i=1}^N e^{sim(u, v^-_i)}} \right] \\ &= \mathbb{E}_{P(U,V,Y)} \left[ sim(u, v) - \log \left( \frac{1}{N} \sum_{i=1}^N e^{sim(u, v^-_i)} \right) \right]. 
\end{aligned}
\end{equation}

Here, we give the definition of the conditional redundancy-eliminated loss $\mathcal{L}_{CRI}$:
\begin{equation}
\begin{aligned}
    \mathcal{L}_{CRI}(G, G^*) \triangleq \mathbb{E}_{\tilde{y} \sim \mathbb{P}(Y)}\mathbb{E}_{\mathbf{Z}, \mathbf{Z}_T \sim \mathbb{P}(f(G), f(G^*)| \tilde{y})} \left[sim(\mathbf{Z}, \mathbf{Z}_T) - \log \mathbb{E}_{\mathbf{Z}^- \sim \mathbb{P}(f(G)| \tilde{y})} e^{sim(\mathbf{Z}^-, \mathbf{Z}_T)} \right].
\end{aligned}
\end{equation}

Now, for Eq.~\eqref{ap:eq-83}, we treat the similarity between $\mathbf{Z}_T$ and $\mathbf{Z}$ given the predicted predicted label $\tilde{Y}=\text{softmax}(\mathbf{Z})$ as an approximation of the log-likelihood $\log \mathbb{P}(f(G)|f(G^*), \tilde{Y})$,
and set 
\begin{align}
\mathbb{Q}(f(G)|\tilde{Y}) = \mathbb{E}_{\mathbf{Z}^- \sim \mathbb{P}(f(G)|\tilde{Y})} e^{\text{sim}(\mathbf{Z}^-, \mathbf{Z}_T)},
\end{align}
where $\mathbf{Z}^-$ are negative samples drawn from the conditional distribution $\mathbb{P}(f(G)|\tilde{Y})$.

Thus, $I(f(G^*); f(G) \mid \tilde{Y})$ can be approximately instantiated as:
\begin{equation}
    I(f(G^*); f(G)|\tilde{Y}) \doteq \mathcal{L}_{CRI}(G, G^*),
\end{equation}

\end{proof}

\subsection{Further Comparison of ReGIB and GIB with Random Augmentation Views} \label{ap:futher-explain-GIB}

\begin{figure}[t]
    \centering
    \includegraphics[width=.9\linewidth]{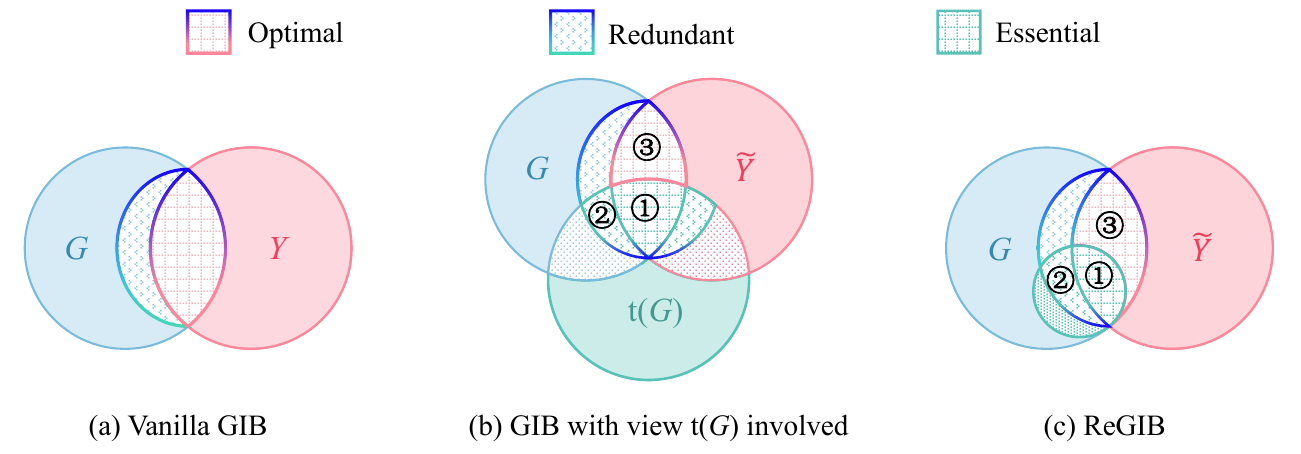}
    \hspace{20pt}
    \vspace{-0.2cm}
    \caption{Venn diagram of comparison between the basic GIB,  GIB with a random augmentation view, and proposed ReGIB. Note ReGIB captures the essential information and discards redundancy, wherein (c) $\textcircled{1}+\textcircled{3}={I(f(G); \tilde{Y})}$, $\textcircled{1}+\textcircled{2}={I(f(G^*); f(G))}$, $\textcircled{1}={I(f(G^*); \tilde{Y})}$.}
    \label{ap-fig:GIB3}
\vspace{-1em}
\end{figure}

In this section, we provide a comparative analysis of ReGIB and GIB with random augmentation views from an argumentation perspective. Compared to the standard GIB with random augmentation views ($\rhd$ Figure~\ref{ap-fig:GIB3}(a)), ReGIB addresses the limitations of models pretrained solely on ID data. Such models often lack the generalization capability required for OOD data, which results in representations that contain both relevant and irrelevant information ($\rhd$ Figure~\ref{ap-fig:GIB3}(c)). For unsupervised tasks, models pretrained on ID data can leverage pseudo-labels $\tilde{Y}$ obtained from softmax outputs to reduce dependence on ground-truth labels $Y$.

As discussed previously, ID and OOD graphs can be distinguished by their unique structural characteristics. Through the proof of Theorem~\ref{ap-thm-GG0}, we show that essential structural information is embedded within the original graph and can be extracted by minimizing structural entropy to obtain the essential view $G^*$.
Methods such as GOODAT~\cite{wang2024goodat}, however, which use graph maskers to identify subgraphs in test data similar to those in ID graphs, fundamentally rely on learnable graph augmentations. These augmentations may inadvertently alter the semantic information of substructures or lead to information loss, compromising the reliability of OOD detection. Specifically, as shown in Figure~\ref{ap-fig:GIB3}(b)) random augmentations on graph $G$ can introduce redundant information captured by the pretrained model, making it more difficult to extract the optimal and essential information.

Since well-trained GNNs generally perform well on ID graphs but tend to produce more random and unreliable predictions for unseen OOD graphs in the test data, their predictions fail to identify distinctive structural information or recognize similar modifying structures. 
For graph data with specific structures, the intrinsic semantic information is encoded in the structure itself. 
Unlike $t(G)$ which introduces random irrelevant information, $G^*$ is a hierarchical abstraction derived from the original graph structure, obtained by minimizing the structural entropy of $G$. This process preserves the essential structure without altering the semantic information. Therefore, the distinctive pattern of $G^*$ serves as a form of ground-truth information to correct predictions on test graphs.

The primary objective of ReGIB is to obtain the distinctive structure $G^*$ and to extract as much optimal and essential information as possible based on pseudo-labels while eliminating irrelevant redundancies. Thus, ReGIB can be considered a special case of GIB with random augmentation views. By placing $t(G)$ entirely within the space of $G$ in vanilla GIB, we derive ReGIB. 
In summary, ReGIB eliminates irrelevant redundancies introduced by random modifications, effectively simplifying the representation of the graph's intrinsic structure and extracting optimal essential information without adding unnecessary redundancies. In unlabeled OOD detection tasks and test-time settings that rely solely on test samples, ReGIB proves to be more effective.

\section{Algorithms} \label{app:algo}

To start with, we initialize a coding tree $T$ by treating all nodes in $\mathcal{V}$ as children of root node $v_r$. 
The construction of the coding tree involves initially building a full-height binary tree with all nodes as leaves and then optimizing this binary tree into a fixed-height coding tree. Specifically, 
during step 1, an iterative $\mathbf{MERGE}(v_c^1, v_c^2)$ is performed with the goal of minimizing structural entropy to obtain a binary coding tree without height limitation.
In this way, selected leaf nodes are combined to form new community divisions with minimal structural entropy. Then, to compress height to a specific hyper-parameter $k$, $\mathbf{DROP}(v_m)$ is leveraged to merge small divisions into larger ones, and thus the height of the coding tree is reduced, which is still following the structural entropy minimization strategy. Eventually, the coding tree with fixed height $k$ and minimal structural entropy is obtained.

\begin{figure*}[t]
    \centering
    \includegraphics[width=1\linewidth]{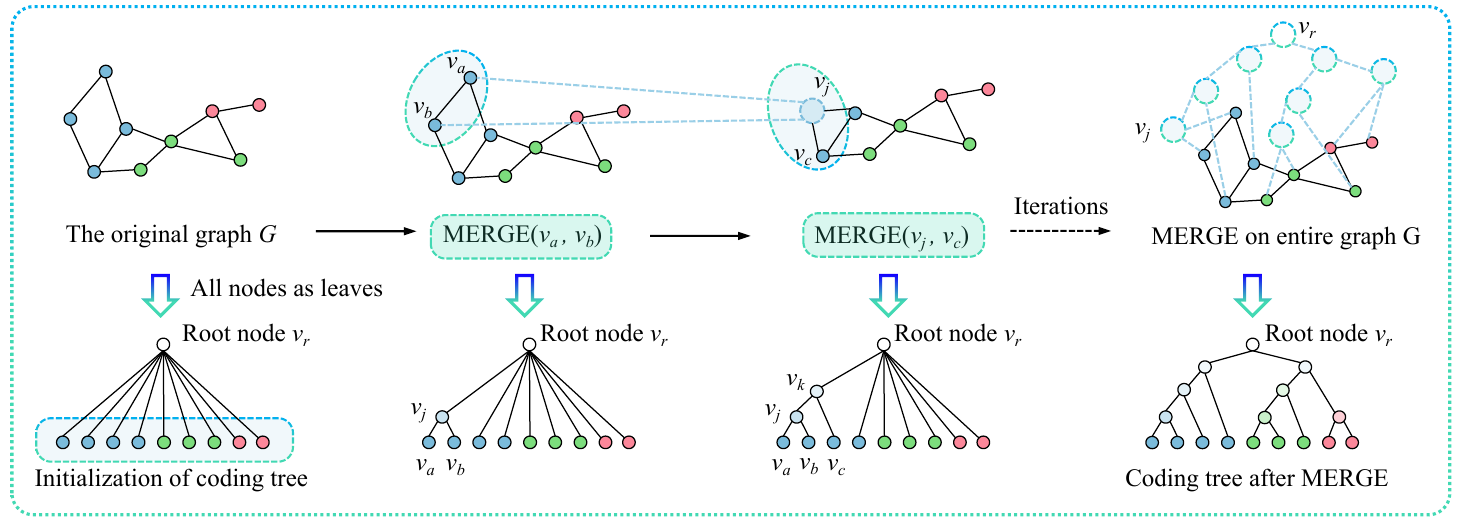}
    \hspace{20pt}
    \vspace{-1.5em}
    \caption{Overview of $\mathbf{MERGE}$ to construct full-height binary coding tree.}
    \label{fig:op1}
\end{figure*}

\begin{figure*}[t]
    \centering
    \includegraphics[width=1\linewidth]{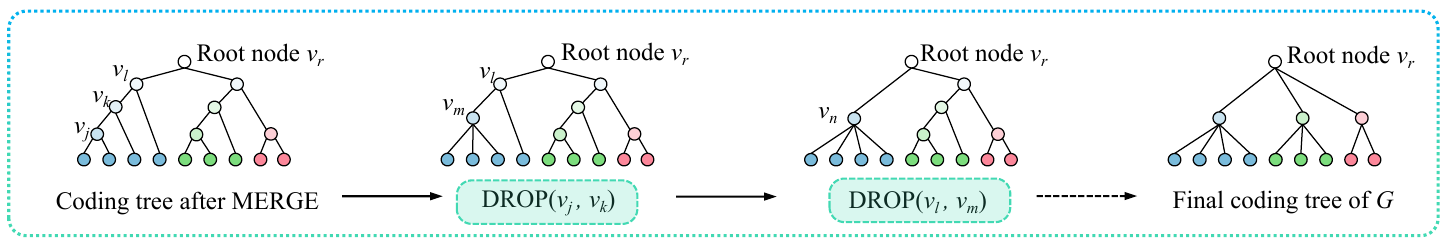}
    \hspace{20pt}
    \vspace{-1.5em}
    \caption{Overview of $\mathbf{DROP}$ to squeeze the height of coding tree to $k$.}
    \label{fig:op2}
\end{figure*}

\begin{definition}
Assuming $v_a$ and $v_b$ as two child nodes of root node $v_r$, the function $\mathbf{MERGE}(v_a, v_b)$ is defined as adding a new node $v_j$ as the child of $v_r$ and the parent of $v_a$ and $v_b$:
\begin{equation}
\begin{gathered}
     v_j.children=\{v_a,v_b\}, \\
     v_r.children=\{v_j\}\cup v_r.children.
\end{gathered}
\end{equation}
\end{definition}
\vspace{-2.0mm}

\noindent Since merging nodes in the original graph via the operator $\mathbf{MERGE}(v_a, v_b)$ operator reduces the structural entropy of graph $G$, the pair of child nodes to be merged should be the one that maximizes the reduction in structural entropy, formally written as:
\begin{equation}
\small
(v_a,v_b) = argmax\{\mathcal{H}^{T}({G})-\mathcal{H}^{T_{ab}}({G})| v_a,v_b\in v_r.children\}.
\label{eq:op1}
\end{equation}

An overview of the $\mathbf{MERGE}$ operator is shown in Figure~\ref{fig:op1}. To provide a more detailed explanation of the coding tree construction process, we first revisit the definition of structural entropy as follows:

\vspace{-1.0mm}
\begin{equation}
\vspace{-1.0mm}
    \mathcal{H}^T({G})=-\sum\limits_{v_\tau\in T}\frac{g_{v_\tau}}{vol(\mathcal{V})}\log\frac{vol(v_\tau)}{vol(v_\tau^+)},
\end{equation}
where $v_\tau$ is a node in $T$ except for root node and also stands for a subset $\mathcal{V}_\tau \in \mathcal{V}$, $g_{v_\tau}$ is the number of edges connecting nodes in and outside $\mathcal{V}_\tau$, $v_\tau^+$ is the immediate predecessor of  $v_\tau$ and $vol(v_\tau)$, $vol(v_\tau^+)$ and $vol(\mathcal{V})$ are the sum of degrees of nodes in $v_\tau$, $v_\tau^+$ and $\mathcal{V}$, respectively. 
When two nodes are merged into a new node (e.g., merging the pair $(v_a, v_b)$ into $v_j$), the structural information of the new node must satisfy the following equations:
\begin{equation}
\begin{gathered}
    g_{v_j} = g_{v_a} + g_{v_b} - 2{Cut}(v_a, v_b), \\
    vol(v_j) = vol(v_a) + vol(v_b),
\end{gathered}
\label{eq:add}
\end{equation}
\noindent where $  {Cut}(v_a, v_b)$ denotes the number of edges cut between nodes when merging $(v_a, v_b)$. Figure~\ref{fig:cut} illustrates the process of merging nodes in a graph and adding nodes to the corresponding coding tree through an example.

\begin{definition}
Given node $v_m$ and its parent node $v_m^+$ in $T$, the operator $\mathbf{DROP}(v_m)$ is defined as adding the children of $v_m$ and itself to the child set of $v_m^+$:
\begin{equation}
    v_m^+.children=v_m^+.children\cup v_m.children.
\end{equation}
\end{definition}

\noindent Similarly, since creating a new node $v_m$ through the $\mathbf{DROP}(v_m)$ operator also changes the structural entropy of graph $G$, the selection of the new node $v_m$ should aim to minimize the change in structural entropy, written as:
\begin{equation}
\small
v_m = argmin\{\mathcal{H}^{T_m}({G})-\mathcal{H}^{T}({G})| v_m \in T, v_m \neq v_r, v_m \notin \mathcal{V}\}
\label{eq:op2}
\end{equation}

\noindent 
An overview of the $\mathbf{DROP}$ operator is shown in Figure~\ref{fig:op2}.
The construction of coding tree with a fixed height $k$ primarily involves iterations through the two operators to obtain the minimum structural entropy, which is shown in Algorithm \ref{algo1}. 

\begin{figure}[t]
    \centering
    \includegraphics[width=0.5\linewidth]{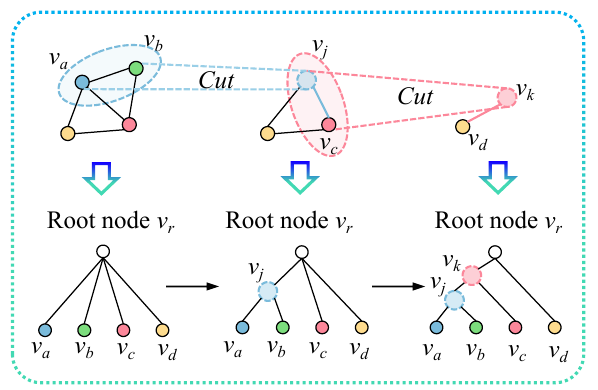}
    \hspace{20pt}
    \vspace{-0.2cm}
    \caption{The process of adding nodes and cutting edges to the corresponding coding tree.}
    \label{fig:cut}
\end{figure}

\IncMargin{1.3em}
\begin{algorithm}[!t]
  \caption{Coding tree construction with height $k$ via structural entropy minimization.}
  \label{algo1}
  \SetAlgoLined \KwIn{A undirected graph ${G}=(\mathcal{V},\mathcal{E})$; Specific height $k>1$.}
  \KwOut{Coding tree ${T}$ with height $k$.}
    Initialize a coding tree $T$ with root node $v_r$ and nodes in $\mathcal{V}$ as its children \;
    // \textit{Step 1: full-height binary coding tree construction}\\
    \While{$|v_j.children| > 2 $}{
        Select child node pair ($v_a$,$v_b$) $\gets$ Eq.~\eqref{eq:op1} \;
        $\mathbf{MERGE}(v_a, v_b)$\;
    }
    // \textit{Step 2: binary coding tree squeeze to height $k$}\\
    \While{$Height(T) > k$}{
        Select node $v_m \gets$Eq.~\eqref{eq:op2} \;
        $\mathbf{DROP}(v_m)$ \;
    }
    \Return coding tree $T$ \;
\end{algorithm}
\DecMargin{1.3em}

\IncMargin{1.3em}
\begin{algorithm}[!t]
    \caption{Overall optimization process of \ourmethod.}
    \label{algo2}
    \KwIn{Test graph sample $G$; Pre-trained GNN encoder $f$ which is frozen; Coding tree encoder $f_{\boldsymbol{\Theta}}$; Number of test-time training epochs $E$; Hyperparameters $\lambda$.}
    \KwOut{Optimized tree encoder $f_{\boldsymbol{\Theta}}^{\star}$; Predicted OOD score $S$.}
    Initialize parameters randomly\;
    // \textit{Instantiation for Redundancy-eliminated Essential Information}\\
    Construct redundancy-eliminated $G^*$ via structural entropy minimization $\gets$ Eq.~\eqref{equ: minh}\;
    \For{$i=1,2,\cdots,E$}{
        Obtain representations for original graph and coding tree, as $\mathbf{Z}=f(G)$, $\mathbf{Z}_T=f_{\boldsymbol{\Theta}}(G^*)$\;
        // \textit{Instantiation for Lower Bound of $I(f(G^*); f(G)$}\\
        Calculate contrastive loss, as $\mathcal{L}_{Cl}=-I(\mathbf{Z},\mathbf{Z}_T) \gets$ Eq.~\eqref{eq:ins-CL}\;
        // \textit{Instantiation for Upper Bound of $I(f(G^*); f(G)|\tilde{Y})$}\\
        Calculate conditional redundancy-eliminated loss, as $\mathcal{L}_{CRI}=I(\mathbf{Z},\mathbf{Z}_T|\tilde{Y}) \gets$ Eq.~\eqref{eq:ins-CRI}\;
        Calculate the overall loss, as $\mathcal{L} \gets$ Eq.~\eqref{eq:all}\;
        Update parameter $\boldsymbol{\Theta}$ by minimizing $\mathcal{L}$ and back-propagation\;
    }
\end{algorithm}
\DecMargin{1.3em}

\section{Experiment} \label{ap:exp}

\subsection{Dataset Description} \label{app:data}

\begin{table}[t]
\caption{Statistics of OOD detection datasets.}
\label{tab:ood-data}
\centering
\resizebox{.8\linewidth}{!}{
\begin{tabular}{c|cccc}
\toprule
{Dataset Pair}  & {Domain} & {\#ID train} & {\#ID test} &{\#OOD test} \\ 
\midrule
BZR / COX2               & Molecules       & 364  & 41  & 41  \\
PTC-MR / MUTAG           & Molecules       & 309  & 35  & 35  \\
AIDS / DHFR              & Molecules       & 1800 & 200 & 200 \\
Tox21 / SIDER            & Molecules       & 7047 & 784 & 784 \\
FreeSolv / ToxCast       & Molecules       & 577  & 65  & 65  \\
BBBP / BACE              & Molecules       & 1835 & 204 & 204 \\
ClinTox / LIPO           & Molecules       & 1329 & 148 & 148 \\
Esol / MUV               & Molecules       & 1015 & 113 & 113 \\
ENZYMES / PROTEINS       & Proteins        & 540  & 60  & 60  \\
IMDB-M / IMDB-B          & Social Networks & 1350 & 150 & 150 \\
\bottomrule
\end{tabular}
}
\end{table}

\begin{table}[t]
\caption{Further statistics of graph datasets.}
\label{tab:data3}
\centering
\resizebox{.8\linewidth}{!}{
\begin{tabular}{c|ccccc}
\toprule
{Dataset} & {\#Feature} & {\#Graphs} & {Avg. Nodes} & {Avg. Edges} & {Avg. Deg.} \\
\midrule
ENZYMES  & 1  & 600    & 32.63  & 62.13  & 1.90 \\
PROTEIN  & 1  & 1113   & 39.06  & 72.82  & 1.86 \\
IMDB-M   & 1  & 1500   & 18.00  & 65.93  & 5.07 \\
IMDB-B   & 1  & 1000   & 19.77  & 96.53  & 4.88 \\
Tox21    & 9  & 7831   & 18.57  & 19.29  & 1.04 \\
SIDER    & 9  & 1427   & 33.64  & 35.66  & 1.05 \\
FreeSolv & 9  & 642    & 8.72   & 8.38   & 0.96 \\
ToxCast  & 9  & 8576   & 18.78  & 19.26  & 1.03 \\
BBBP     & 9  & 2039   & 24.06  & 25.95  & 1.08 \\
BACE     & 9  & 1513   & 34.08  & 35.91  & 1.08 \\
ClinTox  & 9  & 1477   & 26.15  & 27.88  & 1.07 \\
LIPO     & 9  & 4200   & 27.04  & 29.11  & 1.09 \\
Esol     & 9  & 1128   & 13.28  & 14.08  & 1.03 \\
MUV      & 9  & 93087  & 24.23  & 26.27  & 1.08 \\
BZR      & 1  & 405    & 35.75  & 38.14  & 1.07 \\
COX2     & 1  & 467    & 41.22  & 44.52  & 1.05 \\
PTC\_MR  & 1  & 344    & 14.28  & 15.04  & 1.03 \\
MUTAG    & 1  & 188    & 17.93  & 19.79  & 1.10 \\
\bottomrule
\end{tabular}
}
\end{table}

For OOD detection, we employ 10 pairs of datasets from two mainstream graph data benchmarks (i.e., TUDataset~\cite{tu_Morris2020} and OGB~\cite{ogb_hu2020open}) following GOOD-D~\cite{liu2023good}. Specifically, we select 8 pairs of molecular datasets, 1 pair of protein datasets, and 1 pair of social network datasets.
$90\%$ of ID samples are used for training, and $10\%$ of ID samples and the same number of OOD samples are integrated together for testing. 
The partitioning of ID samples for training, along with the division of ID and OOD samples for testing, follows GOOD-D~\cite{liu2023good}. Detailed statistics of OOD detection datasets are compiled in Table~\ref{tab:ood-data}.
Further detailed information about these datasets is categorized and described as follows.

\subsubsection{Molecular Datasets}

\begin{itemize}[leftmargin=1.5em]
    \item \textbf{BZR}~\cite{tu_Morris2020} is a dataset focused on benzodiazepine receptor ligands, containing molecular structures and associated binding affinities. It is crucial for drug design and discovery, specifically for studying receptor-ligand interactions.

    \item \textbf{PTC-MR}~\cite{tu_Morris2020} reports the carcinogenicity of 344 chemical compounds in male and female rats and includes 19 discrete labels. It is utilized for predicting the carcinogenic potential of chemical substances.

    \item \textbf{AIDS}~\cite{tu_Morris2020} contains data on anti-HIV compounds, including their molecular structures and biological activities, serving as a valuable resource for the development of anti-HIV drugs.

    \item \textbf{ENZYMES}~\cite{tu_Morris2020} is a dataset consisting of protein structures classified into enzyme types based on their functionality. It is used for protein function prediction and enzyme classification.

    \item \textbf{COX2}~\cite{tu_Morris2020} comprises data on cyclooxygenase-2 inhibitors, which are compounds with anti-inflammatory properties. This dataset is essential for the research and development of anti-inflammatory drugs.

    \item \textbf{MUTAG}~\cite{tu_Morris2020} has seven kinds of graphs derived from 188 mutagenic aromatic and heteroaromatic nitro compounds. It is used for studying the mutagenicity of chemical substances.

    \item \textbf{DHFR}~\cite{tu_Morris2020} includes dihydrofolate reductase inhibitors, important in the development of antibacterial and anticancer drugs, aiding in drug discovery and medicinal chemistry research.

    \item \textbf{PROTEINS}~\cite{tu_Morris2020} contains data on protein structures and their functionalities. Nodes represent secondary structure elements (SSEs), and edges connect neighboring elements in the amino acid sequence or 3D space. This dataset is used for protein structure prediction and functional analysis.

    \item \textbf{Tox21}~\cite{ogb_hu2020open} is a dataset containing toxicity data on 12 biological targets, which has been used in the 2014 Tox21 Data Challenge and includes nuclear receptors and stress response pathways.

    \item \textbf{BBBP}~\cite{ogb_hu2020open, martins2012bayesian} includes records of whether a compound has the permeability property of penetrating the blood-brain barrier, essential for the design of central nervous system drugs.

    \item \textbf{ClinTox}~\cite{ogb_hu2020open, novick2013sweetlead, gayvert2016data} contains clinical toxicity data on a variety of drug compounds, classifying drugs approved by the FDA and those that have failed clinical trials for toxicity reasons.

    \item \textbf{ToxCast}~\cite{ogb_hu2020open, richard2016toxcast} includes high-throughput screening data on the toxicity of chemical substances, with measurements based on over 600 in vitro screenings. This dataset is used for large-scale toxicity assessment and environmental health research.

    \item \textbf{SIDER}~\cite{ogb_hu2020open, kuhn2016sider} contains information on drug side effects, grouped into 27 system organ classes, also known as the Side Effect Resource. It is utilized for predicting drug side effects and improving drug safety profiles.

    \item \textbf{BACE}~\cite{ogb_hu2020open, subramanian2016computational} includes qualitative binding results for a set of inhibitors of human $\beta$-secretase 1, which are potential treatments for Alzheimer's disease. This dataset is used in Alzheimer's disease research and drug development.

    \item \textbf{FreeSolv}~\cite{ogb_hu2020open} includes data on the hydration free energy of small molecules, used for molecular dynamics simulations and solubility studies.

    \item \textbf{Esol}~\cite{ogb_hu2020open} contains data on the aqueous solubility of compounds, used for studying compound solubility and drug design.

    \item \textbf{LIPO}~\cite{ogb_hu2020open} includes data on the lipophilicity of chemical compounds. It is used for studying the partitioning of compounds between water and oil phases, which is important in drug design.

    \item \textbf{MUV}~\cite{ogb_hu2020open, gardiner2011effectiveness} includes data on the activity of compounds from virtual screening, designed for validation of virtual screening techniques.

    \item \textbf{HIV}~\cite{ogb_hu2020open} contains experimentally measured abilities to inhibit HIV replication.
\end{itemize}

\begin{table}[t]
\centering
\caption{Statistics of anomaly detection datasets.}
\label{tab:ad-data}
\resizebox{.7\linewidth}{!}{
\begin{tabular}{c|cccc}
\toprule
{Dataset Pair}  & {Domain} & {\#ID train} & {\#ID test} &{\#OOD test} \\ 
\midrule
BZR                  & Molecules       & 69   & 17   & 64  \\
AIDS                 & Molecules       & 1280 & 320  & 80  \\
COX2                 & Molecules       & 81   & 21   & 73  \\
NCI1                 & Molecules       & 1646 & 411  & 411 \\
DHFR                 & Molecules       & 368  & 93   & 59  \\
ENZYMES              & Proteins        & 400  & 100  & 20  \\
PROTEINS             & Proteins        & 360  & 90   & 133 \\
DD                   & Proteins        & 390  & 97   & 139 \\
IMDB-B          & Social Networks & 400  & 100  & 100 \\
REDDIT-B        & Social Networks & 800  & 200  & 200 \\
\bottomrule
\end{tabular}
}
\end{table}

\subsubsection{Protein Datasets}

\begin{itemize}[leftmargin=1.5em]
    \item \textbf{PROTEINS}~\cite{tu_Morris2020} contains data on protein structures and their functionalities. Nodes represent secondary structure elements (SSEs), and edges connect neighboring elements in the amino acid sequence or 3D space. This dataset is used for protein structure prediction and functional analysis.

    \item \textbf{ENZYMES}~\cite{tu_Morris2020} is a dataset consisting of protein structures classified into enzyme types based on their functionality. It is used for protein function prediction and enzyme classification.
\end{itemize}

\subsubsection{Social Network Datasets}

\begin{itemize}[leftmargin=1.5em]
    \item \textbf{IMDB-BINARY}~\cite{tu_Morris2020} (abbreviated as IMDB-B) is derived from the collaboration of a movie set. Each graph consists of actors or actresses, with edges representing their cooperation in a movie. The label corresponds to movie's genre. This dataset is used for movie classification and recommendation system studies.

    \item \textbf{IMDB-MULTI}~\cite{tu_Morris2020} (abbreviated as IMDB-M) consists of graphs derived from movie collaborations which is similar to IMDB-BINARY, but with multi-class labels. It is utilized in multi-class classification tasks in social network analysis.
\end{itemize}

We also conduct experiments on anomaly detection (AD) settings, where 10 datasets from TUDataset~\cite{tu_Morris2020} are used for evaluation. Following the setting in GlocalKD~\cite{glocalkd_ma2022deep}, the samples in the minority class or real anomalous class are viewed as anomalies, while the rest are viewed as normal data. Similar to~\cite{glocalkd_ma2022deep,ocgin_zhao2021using}, only normal data are used for model training.
Detailed statistics of anomaly detection datasets are compiled in Table~\ref{tab:ad-data}.

\subsection{Baseline Details} \label{app:baseline}
We evaluate the performance of \ourmethod by comparing it against 18 state-of-the-art baseline methods. A detailed discussion of these methods is provided below.

\begin{itemize}[leftmargin=1.5em]

\item \textbf{Non-deep Two-step Methods.}  
These methods first extract representations using hand-crafted graph kernels and then apply classical OOD detectors. We adopt the Weisfeiler-Lehman (WL) kernel~\cite{wlgk_shervashidze2011weisfeiler} and the propagation kernel (PK)~\cite{pk_neumann2016propagation} to obtain graph-level representations. On top of these, we apply local outlier factor (LOF)~\cite{lof_breunig2000lof}, one-class SVM (OCSVM)~\cite{ocsvm_manevitz2001one}, and isolation forest (iF)~\cite{iforest_liu2008isolation} for OOD detection.

\item \textbf{Deep Two-step Methods.}  
These approaches employ self-supervised graph learning techniques to generate expressive embeddings, followed by a separate OOD detector. We use two representative graph contrastive learning (GCL) methods, InfoGraph~\cite{infograph_sun2020infograph} and GraphCL~\cite{graphcl_you2020graph}, to learn representations. For detection, we adopt iF~\cite{iforest_liu2008isolation} and Mahalanobis distance (MD)~\cite{ssd_sehwag2020ssd,nlpood_zhou2021contrastive}, both of which have demonstrated effectiveness in prior work.

\item \textbf{End-to-end Training Methods.}  
 These methods jointly optimize the representation learning and OOD detection objective within a unified framework. We consider
GOOD-D~\cite{liu2023good} as the primary SOTA method in our comparison, which is a GCL-based method that has shown strong performance in unsupervised OOD detection tasks.
We also compare our approach with two graph anomaly detection methods that are trained in an end-to-end manner. OCGIN~\cite{ocgin_zhao2021using}, which uses a GIN encoder trained with a support vector data description (SVDD) loss, and GLocalKD~\cite{glocalkd_ma2022deep}, which identifies graph anomalies using local-global knowledge distillation.

\item \textbf{Test-time and Data-centric Methods.}  
A typical test-time training method is GTrans \cite{jin2022empowering}, which adapts representations via test-time contrastive alignment. Since GTrans is not explicitly designed for graph OOD detection, its loss value is utilized as the OOD score. We conduct comparisons based on the experimental results from GOODAT~\cite{wang2024goodat}.
Data-centric methods leverage well-trained GNN models to fine-tune OOD detectors for identifying OOD samples. These methods mainly include the following approaches:
AAGOD~\cite{guo2023data} employs a graph adaptive amplifier module, which is integrated into a well-trained GNN to facilitate graph OOD detection. Specifically, AAGOD exists in two versions: AAGOD-GIN\ensuremath{_S+} and AAGOD-GIN\ensuremath{_L+}, corresponding to distinct OOD evaluation methods.
GOODAT~\cite{wang2024goodat} is the first to directly partition test data into two subgraphs using data-centric techniques in a test-time setting, training a plug-and-play graph masker.
\end{itemize}

\subsection{Configurations}
We conduct the experiments with:
\begin{itemize}[leftmargin=1.5em]
    \item Operating System: Ubuntu 20.04 LTS.
    \item CPU: Intel(R) Xeon(R) Gold 6240 CPU @ 2.60GHz, 256GB RAM.
    \item GPU: Tesla V100 PCIe 32GB GPU.
    \item Software: Python 3.7, Pytorch 1.8, CUDA 11.0, and Pytorch-Geometric 2.0.1.
\end{itemize}





\subsection{Additional Experiments Using Structural Entropy as Distinct Metric}

By minimizing structural entropy, the structural uncertainty of the graph is reduced, which aids in capturing essential information and identifying distinct patterns between ID and OOD samples.
As shown in the score density plots in Figure~\ref{sub-dens}, by minimizing structural entropy, the overlap between the representations of ID and OOD graph samples is significantly reduced.

To clarify, we do not treat structural entropy solely as distinct values, but construct a coding tree that reduces redundancy and retains distinctive parts.
To further illustrate the performance of using structural entropy directly as distinct values for the OOD detection task, we conduct additional experiments using the 95\% range of structural entropy from ID training graphs for OOD detection on test samples.
The AUC results shown in Table~\ref{tab:ap-se-bench} reveal that using structural entropy directly as a metric causes a significant performance drop.
Thus, we can analyze that structural entropy only measures information but does not capture structural differences, and cannot be directly used as an indicator of substantial information. In contrast, our method instantiates substantial information through the encoding tree with minimized structural entropy, achieving good performance.

\begin{table*}[t]
\centering
\caption{Additional results on using structural entropy as the distinct metric, compared with \ourmethod, for OOD detection in terms of AUC ($\%$, mean $\pm$ std). } 
\label{tab:ap-se-bench}
\resizebox{1\textwidth}{!}{
\begin{tabular}{l | cccccccccc}
\toprule
ID data & BZR & PTC-MR & AIDS & ENZYMES & IMDB-M & Tox21 & FreeSolv & BBBP & ClinTox & Esol\\
OOD data & COX2 & MUTAG & DHFR & PROTEIN & IMDB-B & SIDER & ToxCast & BACE & LIPO & MUV \\
\midrule
\midrule

{SE Metric} & 51.71±0.60 & 68.29±1.90 & 50.10±0.68 & 54.83±0.97 & 49.07±1.12 & 54.36±0.58 & 47.97±1.36 & 48.97±0.81 & 45.07±0.66 & 52.39±0.60 \\
{SEGO} & 96.66±0.91 & 85.02±0.94 & {99.48±0.11} & {64.42±4.95} & {80.27±0.92} & {66.67±0.82} & {90.95±1.93} & {87.55±0.13} & {78.99±2.81} & {94.59±0.94} \\
\midrule 
\rowcolor{myblue}\ourmethod & {95.06±0.54} & {94.45±1.66} & {99.98±0.16} & {66.75±2.02} & 79.54±0.72 & {71.67±0.50} & {92.97±0.84} & {92.60±0.23} & {86.56±0.76} & {95.00±0.54}\\
\bottomrule
\end{tabular}}
\end{table*}

\subsection{Comparison Results of \ourmethod with Structural Entropy Guided End-to-end Training} \label{ap:comp-sego}

SEGO~\cite{hou2025structural} is a recent approach that leverages structural entropy for unsupervised OOD detection. In this section, we conduct a detailed comparison between our method and SEGO to highlight their methodological differences and performance characteristics.

\textbf{Difference in Settings.}
Firstly, the key differences between our RedOUT and SEGO lie in their settings: RedOUT is designed to improve OOD detection at test-time without modifying pre-trained models or requiring training data, whereas SEGO is an end-to-end unsupervised method that trains from scratch.
It is evident that the test-time setting is more practical for real-world applications.

\textbf{Difference in Techniques.}
Regarding technical details, although both RedOUT and SEGO leverage structural entropy to construct coding trees, their implementations and focal points differ significantly. Specifically, SEGO merely discovers the phenomenon that minimizing structural entropy is beneficial for the OOD detection task. In contrast, our RedOUT’s main contribution is \textbf{the first to extend GIB to redundancy-aware disentanglement by proposing ReGIB}, which effectively eliminates redundancy and captures essential information through theoretically grounded upper and lower bound optimization.

\textbf{Superior Performance of \ourmethod.}
We further conduct a comparison experiment between RedOUT and SEGO, as shown in Table~\ref{tab:ap-se-bench}, where our method outperforms SEGO on 8 out of 10 ID/OOD dataset pairs.

\subsection{Efficiency Analysis and Runtime Comparison} \label{ap:subsec-scale}


Given a graph $G=(\mathcal{V}, \mathcal{E})$, the time complexity of coding tree construction is $O(h(|\mathcal{E}|\log |\mathcal{V}|+|\mathcal{V}|))$,
in which $h$ is the height of coding tree after the first step. 
In general, the coding tree tends to be balanced in the process of structural entropy minimization, thus, $h$ will be around $\log |\mathcal{V}|$. 
Furthermore, a graph generally has more edges than nodes, \ie, $|\mathcal{E}|\gg |\mathcal{V}|$, thus the runtime almost scales linearly in the number of edges.


\paragraph{Discussion on the Runtime Comparison.}
We compared \ourmethod with GOODAT~\cite{wang2024goodat} in terms of inference time on test data and the time consumption for coding tree construction (abbreviated as Tree Constr.).
For deep learning methods under other settings, we select the best-performing methods, GOOD-D and GraphCL-MD.
Shown in Table~\ref{tab:time}, the time overhead of \ourmethod is comparable with the baseline while achieving enhanced performance.
Note that coding tree construction is a one-time preprocessing step, it does not impact the efficiency of test-time inference.

\begin{wraptable}{r}{0.6\textwidth}
\vspace{-0.3cm}
\caption{Time consumption on pre-training, coding tree construction and test-time OOD detection.} 
\label{tab:time}
\centering
\resizebox{\linewidth}{!}{
\begin{tabular}{l | ccccc}
\toprule
ID dataset & BZR  & PTC-MR & ENZYMES & IMDB-M & FreeSolv \\
OOD dataset & COX2 & MUTAG & PROTEIN & IMDB-B & ToxCast \\
\midrule
Pre-training (s) & 53.21 & 40.99 & 30.18 & 8.12 & 70.84 \\
\midrule
\midrule
Tree Constr. (s) & 0.14 & 0.04 & 0.15 & 0.09 & 0.38 \\
\midrule
\midrule
GraphCL-MD (s)  & 27.94 & 21.71 & 14.43 & 10.64 & 40.02 \\
GOOD-D (s)  & 323.36 & 320.77 & 137.5 & 33.59 & 356.44 \\

\midrule
\midrule
GOODAT (s) & 6.86 & 3.55 & 5.24 & 0.87 & 4.81 \\
\ourmethod (s) & 7.51 & 5.65 & 2.88 & 0.28 & 7.51 \\
\bottomrule
\end{tabular}}
\vspace{-0.3cm}
\end{wraptable}

The difference in speed is related to the graph density. Specifically, since GOODAT relies on training a graph masker on test samples to separate subgraphs, the iterative optimization of the masker and representation learning of the two subgraphs become frequent and time-consuming when dealing with a large amount of edges in dense graphs. In contrast, RedOUT does not involve a masking design, and the preprocessing step of coding tree construction is non-trainable. For dense social network graphs such as IMDB-M/B, the higher the graph density, the more pronounced the efficiency advantage of RedOUT compared to GOODAT. Detailed dataset statistics can be found in Table~\ref{tab:data3}.

\paragraph{Discussion on the Efficiency of Trainable Parameters.}
We further analyze the number of parameters required during training for both RedOUT and GOODAT on specific datasets.  
RedOUT only updates the parameters of tree encoder. Assuming the encoder has $l$ layers, with each layer consisting of a two-layer MLP with hidden dimension $hid$, the total number of trainable parameters is approximately $2\times l\times hid\times hid$.
In contrast, GOODAT\footnote{Source code of GOODAT is available at \url{https://github.com/ee1s/goodat}.}
updates both a feature masker and an edge masker, with a parameter size of approximately $batch\_size\times(n\times d+2\times m)$, where n is the number of nodes, m is the number of edges, and d is the feature dimension. The edge count is doubled because reciprocal edges are automatically added for undirected graphs.

As shown in Table~\ref{tab:data3}, dataset pairs such as IMDB-M/IMDB-B and ENZYMES/PROTEINS have higher average node and edge counts. 
Taking IMDB-M/IMDB-B as an example, whose average number of nodes is about 19, and the average number of edges is about 81. 
With a batch size of 128, GOODAT requires around $128\times(19\times1+2\times81)=23,168$ parameters.
In comparison, RedOUT uses up to a 5-layer encoder (note that the analysis in Section 5.5 shows that 5 layers are not strictly necessary for optimal performance) with hidden dimension 32, requiring $2\times5\times32\times32=10,240$ trainable parameters. 
Thus, for dense graphs with more edges on average, RedOUT requires fewer parameters and less runtime.
However, for sparser graphs such as PTC\_MR/MUTAG, GOODAT’s parameter count is $128\times(16\times1+2\times17)=6,400$, which is smaller than that of RedOUT. 
Therefore, GOODAT is more efficient on sparse graphs.

\subsection{Additional Results of Parameter Study on OOD Detection}
We illustrate additional results of hyperparameter sensitivity analysis on OOD detection datasets in Figure~\ref{ap-fig:h} and Figure~\ref{ap-fig:sensi2}.
We provide additional analysis here.

\begin{figure}[ht]
\begin{center}
\centerline{\includegraphics[width=\linewidth]{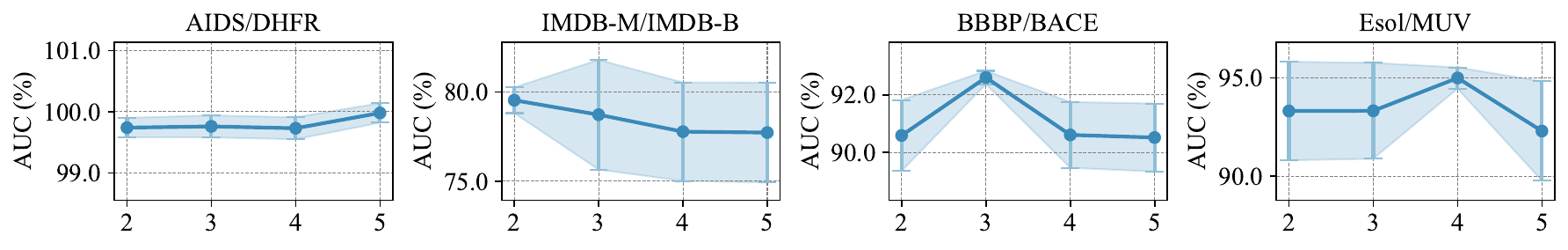}}
\vspace{-0.2cm}
\caption{Additional results of the natural hierarchy of graph on OOD detection.}
\label{ap-fig:h}
\end{center}
\vspace{-0.6cm}
\end{figure}

\begin{figure}[ht]
\begin{center}
\centerline{\includegraphics[width=\linewidth]{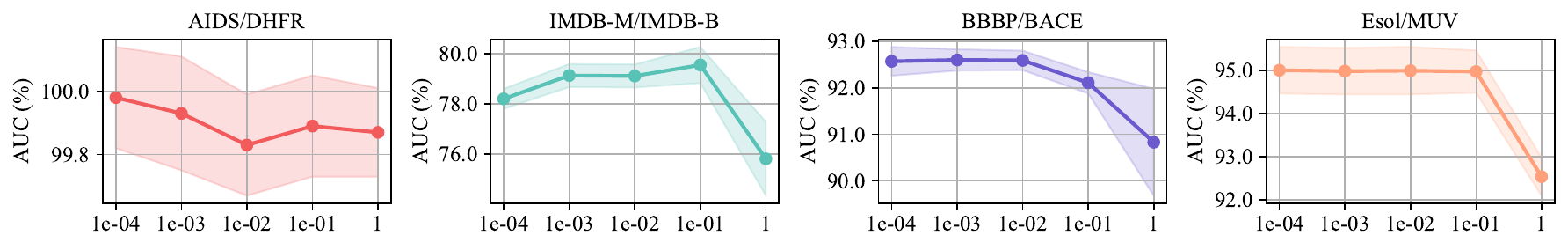}}
\caption{Additional results of the sensitivity of hyperparameter $\lambda$ on OOD detection.}
\label{ap-fig:sensi2}
\end{center}
\vspace{-0.8cm}
\end{figure}

\paragraph{Height $k$ of Graph's Natural Hierarchy on OOD Detection.}    \label{app:sensi-k-ad}
We observe that the impact of coding tree height on OOD detection performance in Figure~\ref{ap-fig:h} varies slightly across different datasets. Specifically, height $k$ can be interpreted as the number of hierarchical aggregations of essential information. Corresponding to the number of layers in GNNs, the range of $k$ values we selected is from 2 to 5. The optimal value of $k$ differs due to the varying graph structures of different datasets, yet it fluctuates within a reasonable range.
The current coding tree optimization algorithm relies on the fixed height applied to the whole dataset, without considering the diversity among samples. One potential direction for improving our method is to incorporate an adaptive coding tree height to better extract essential structures in graphs, which we leave as our future work.

\paragraph{Sensitivity Analysis of $\lambda$ on OOD Detection.} 
To analyze the sensitivity of $\lambda$ for \ourmethod, we alter the value from 1e-04 to 1. Results in Figure~\ref{ap-fig:sensi2} demonstrate the performance is sensitive to changes in $\lambda$ and contains a reasonable range across different datasets.

\begin{table*}[t]
\centering
\caption{Anomaly detection results in terms of AUC ($\%$, mean $\pm$ std). The best and runner-up results are highlighted with \textbf{bold} and \underline{underline}, respectively.} 
\label{tab:ad-full}
\resizebox{1\textwidth}{!}{
\begin{tabular}{l | cccccccccc|cc}
\toprule
ID Dataset & PROTEINS-full & ENZYMES & AIDS & DHFR & BZR & COX2 & DD & NCI1 & IMDB-B & REDDIT-B & A.A & A.R. \\
\midrule
PK-OCSVM & 50.49±4.92 & 53.67±2.66 & 50.79±4.30 & 47.91±3.76 & 46.85±5.31 & 50.27±7.91 & 48.30±3.98 & 49.90±1.18 & 50.75±3.10 & 45.68±2.24 & 49.46 & 10.6 \\
PK-iF & 60.70±2.55 & 51.30±2.01 & 51.84±2.87 & 52.11±3.96 & 55.32±6.18 & 50.05±2.06 & 71.32±2.41 & 50.58±1.38 & 50.80±3.17 & 46.72±3.42 & 54.07 & 9.1 \\
WL-OCSVM & 51.35±4.35 & 55.24±2.66 & 50.12±3.43 & 50.24±3.13 & 50.56±5.87 & 49.86±7.43 & 47.99±4.09 & 50.63±1.22 & 54.08±5.19 & 49.31±2.33 & 50.94 & 9.7 \\
WL-iF & 61.36±2.54 & 51.60±3.81 & 61.13±0.71 & 50.29±2.77 & 52.46±3.30 & 50.27±0.34 & 70.31±1.09 & 50.74±1.70 & 50.20±0.40 & 48.26±0.32 & 54.66 & 8.9 \\
GraphCL-iF & 60.18±2.53 & 53.60±4.88 & 79.72±3.98 & 51.10±2.35 & 60.24±5.37 & 52.01±3.17 & 59.32±3.92 & 49.88±0.53 & 56.50±4.90 & 71.80±4.38 & 59.43 & 8.0 \\

OCGIN & 70.89±2.44 & 58.75±5.98 & 78.16±3.05 & 49.23±3.05 & 65.91±1.47 & 53.58±5.05 & 72.27±1.83 & \textbf{71.98±1.21} & 60.19±8.90 & 75.93±8.65 & 65.69 & 5.9 \\
GLocalKD & \underline{77.30±5.15} & 61.39±8.81 & 93.27±4.19 & 56.71±3.57 & 69.42±7.78 & 59.37±12.67 & \textbf{80.12±5.24} & 68.48±2.39 & 52.09±3.41 & 77.85±2.62 & 69.60 & 4.2 \\

GOOD-D$_{simp}$ & {74.74±2.28} & 61.23±4.58 & 94.09±1.75 & \underline{62.71±3.38} & 74.48±4.91 & 60.46±12.34 & 72.24±1.82 & 59.56±1.62 & 65.49±1.06 & {87.87±1.38} & 71.29 & 3.8 \\
GOOD-D & 71.97±3.86 & \underline{63.90±3.69} & \textbf{97.28±0.69} & 62.67±3.11 & \underline{75.16±5.15} & \textbf{62.65±8.14} & {73.25±3.19} & 61.12±2.21 & \underline{65.88±0.75} & \underline{88.67±1.24} & \underline{72.26} & \underline{2.7} \\

GTrans & 60.16±5.06 & 38.02±6.24 & 84.57±1.91 & 61.15±2.87 & 51.97±8.15 & 53.56±3.47 & 76.73±2.83 & 41.42±2.16 & 45.34±3.75 & 69.71±2.21 & {58.26} & {8.5} \\
GOODAT  & \textbf{77.92±2.37} & 52.33±4.74 & 95.50±0.99 & 61.52±2.86 & 64.77±3.87 & 59.99±9.76 & \underline{77.62±2.88} & 45.96±2.42 & 65.46±4.34 & 80.31±0.85 & 68.14 & 4.8 \\

\midrule
\rowcolor{myblue}\ourmethod & 76.08±3.93 & \textbf{77.64±5.64} & \underline{96.53±0.61} & \textbf{65.51±4.95} & \textbf{89.62±4.72} & \underline{61.49±5.14} & 74.61±3.45 & \underline{71.34±1.22} & \textbf{67.48±0.59} & \textbf{89.81±2.54} & \textbf{77.01} & \textbf{1.8} \\

\bottomrule
\end{tabular}}
\end{table*}

\subsection{Additional Results of Anomaly Detection Performance}   \label{app:ad}
To investigate if \ourmethod can generalize to the anomaly detection setting~\cite{ocgin_zhao2021using,glocalkd_ma2022deep}, we conduct experiments on 10 datasets following the benchmark in GLocalKD and GOOD-D~\cite{glocalkd_ma2022deep,liu2023good}, where only normal data are used for model training. 
The results are shown in Table~\ref{tab:ad-full}.
From the results, we find that our method achieves the best performance on 5 out of 10 datasets and runner-up performance on 3 datasets, with an average improvement of 4.75\% over the SOTA. 
This demonstrates that \ourmethod indeed has a strong capability to transfer to anomaly detection tasks. Thus, we can conclude that capturing common patterns in the anomaly detection setting is crucial, which is directly reflected in the performance.

\begin{figure}[!t]
\begin{center}
\centerline{\includegraphics[width=\linewidth]{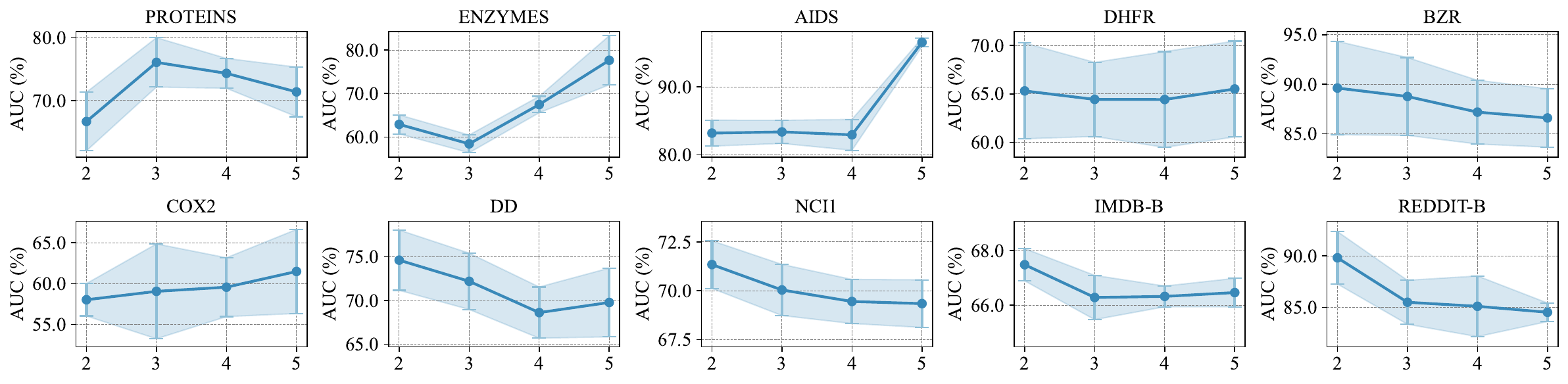}}
\caption{The natural hierarchy of graph on AD datasets.}
\label{fig:ad-all-h}
\end{center}
\vspace{-0.4cm}
\end{figure}

\begin{figure}[!t]
\begin{center}
\centerline{\includegraphics[width=\linewidth]{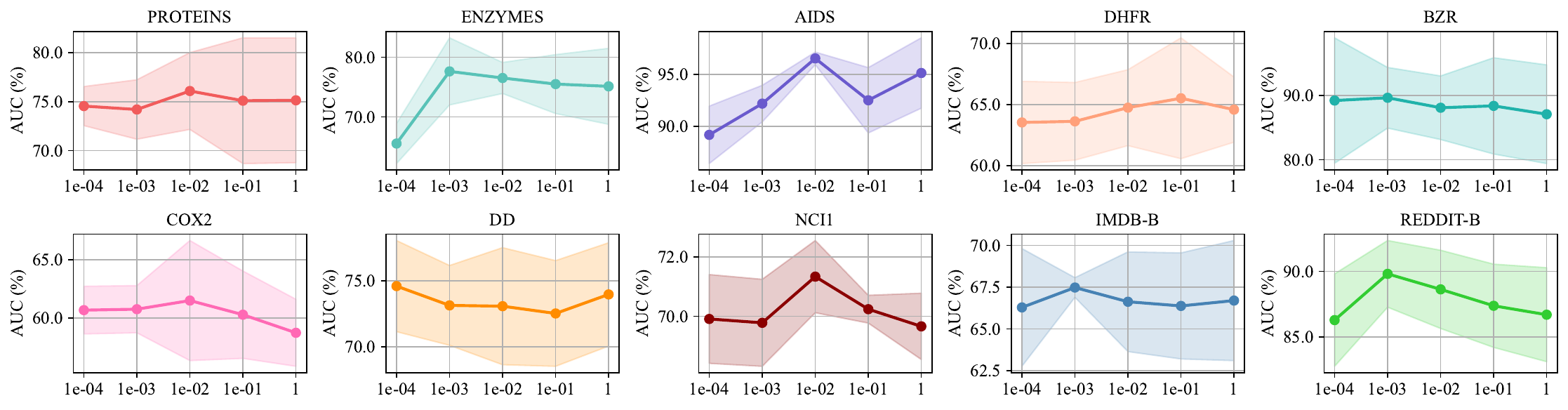}}
\caption{The sensitivity of hyperparameter $\lambda$ on AD datasets.}
\label{ap-fig:sensi-ad}
\end{center}
\vspace{-0.4cm}
\end{figure}

\subsection{Additional Results of Parameter Study on Anomaly Detection}
We also illustrate additional results of hyperparameter sensitivity analysis on anomaly detection datasets in Figure~\ref{fig:ad-all-h} and Figure~\ref{ap-fig:sensi-ad}.
We provide additional analysis here.
\paragraph{Height $k$ of Graph's Natural Hierarchy on Anomaly Detection.}    \label{app:sensi-k-ad}
We also conducted experiments on the impact of the coding tree height $k$ on 10 anomaly detection datasets, as shown in Figure~\ref{fig:ad-all-h}.
We observe that the impact of coding tree height on anomaly detection performance varies slightly across different datasets.

\paragraph{Sensitivity Analysis of $\lambda$ on Anomaly Detection.} 
To analyze the sensitivity of $\lambda$ for \ourmethod, we alter the value from 1e-04 to 1. The AUC w.r.t different selections of $\lambda$ is plotted in Figure~\ref{ap-fig:sensi-ad}. 
Results demonstrate the performance is sensitive to changes in $\lambda$ and contains a reasonable range across different datasets.


\subsection{Additional Case Study on ID/OOD Dataset Pairs} \label{ap-sec:case}

In this additional case study, we further demonstrate the effectiveness of our method in extracting the intrinsic structural features of graphs. We visualize the graph structures extracted by the encoding tree based on minimizing structural entropy on 10 pairs of ID/OOD datasets, as shown from Figure~\ref{fig:ap-case-BZR} to Figure~\ref{ap:fig-case-ESOL}. 

In these visualizations, the colored nodes represent different communities based on the subtree structures of the coding tree. The edges marked with different colors are those involved in the merging process during the construction of the coding tree, with each color indicating a different subtree partition.
We provide additional analysis as follows. 
\ourmethod effectively extracts the distinct intrinsic structures of ID and OOD graph data in datasets related to molecules, proteins, and social networks, which is crucial for OOD detection. 
For graphs in social networks, which typically exhibit high connectivity and edge density, incorporating node and edge attributes implies huge potential to enhance the model’s representational capacity, beyond the essential structural information we currently extract. 
However, the current IMDB-M/IMDB-B datasets do not incorporate node or edge attributes. This is a promising direction that inspires us to take a deep exploration in our future work.
As illustrated in Figure~\ref{fig:ap-case-IMDB}, although the similarity in social network structures leads to suboptimal performance on the IMDB-M/IMDB-B datasets, it still captures the differences in intrinsic structures. 
Moreover, our method successfully extracts distinctive structures in the other datasets.

\section{Broader Impacts}
\label{Impacts}
Graph OOD detection contributes to improving the robustness of GNNs in safety-critical applications and scientific discovery tasks such as drug design and protein interaction analysis. It is important to ensure that such techniques are used to enhance reliability rather than to analyze user behavior without proper authorization.

\begin{figure}[t!]
 \centering
 \hspace{-0.18cm}
\subfigure{\includegraphics[width=0.18\linewidth]{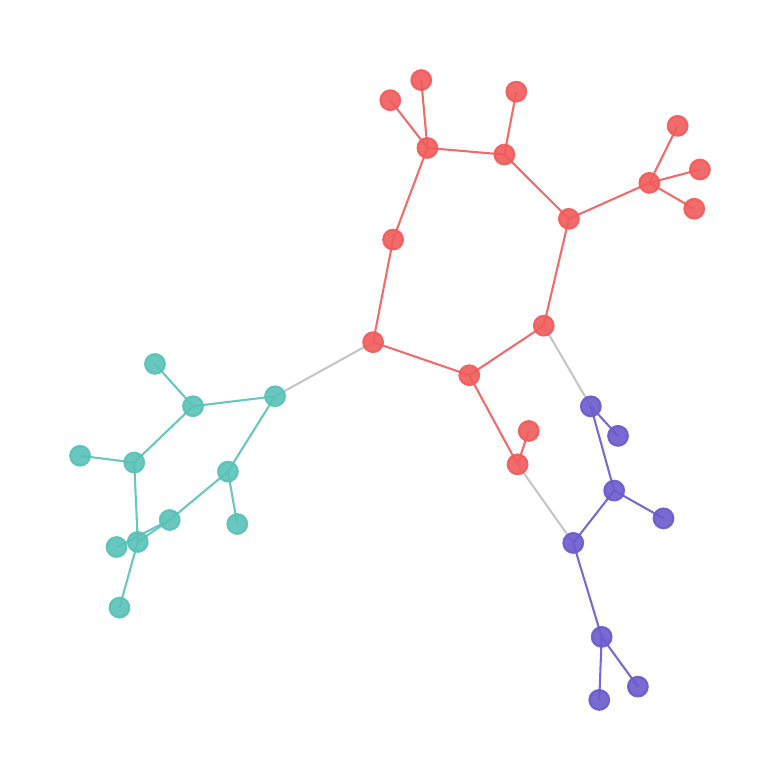}}
\subfigure{\includegraphics[width=0.18\linewidth]{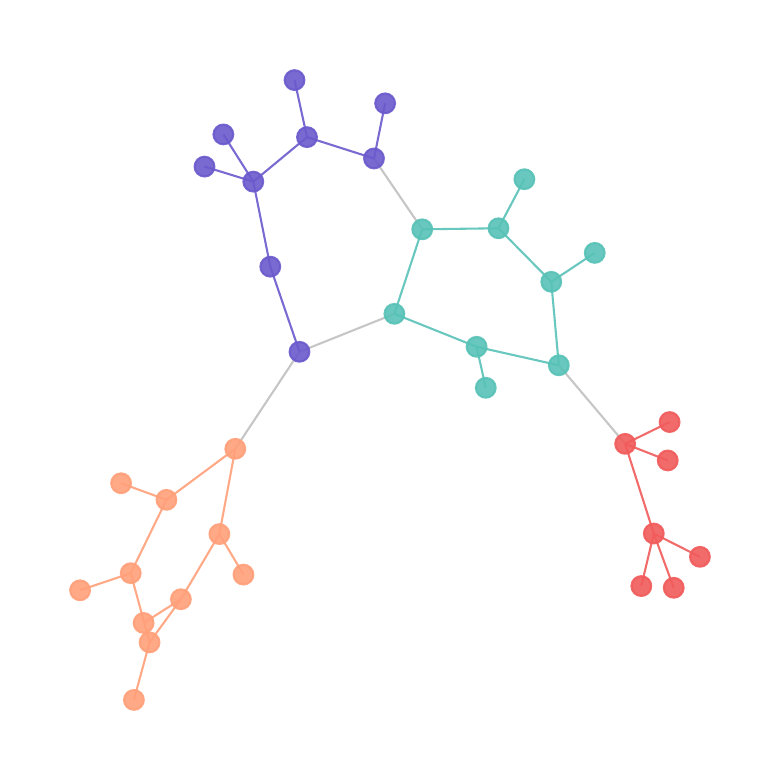}}
\subfigure{\includegraphics[width=0.18\linewidth]{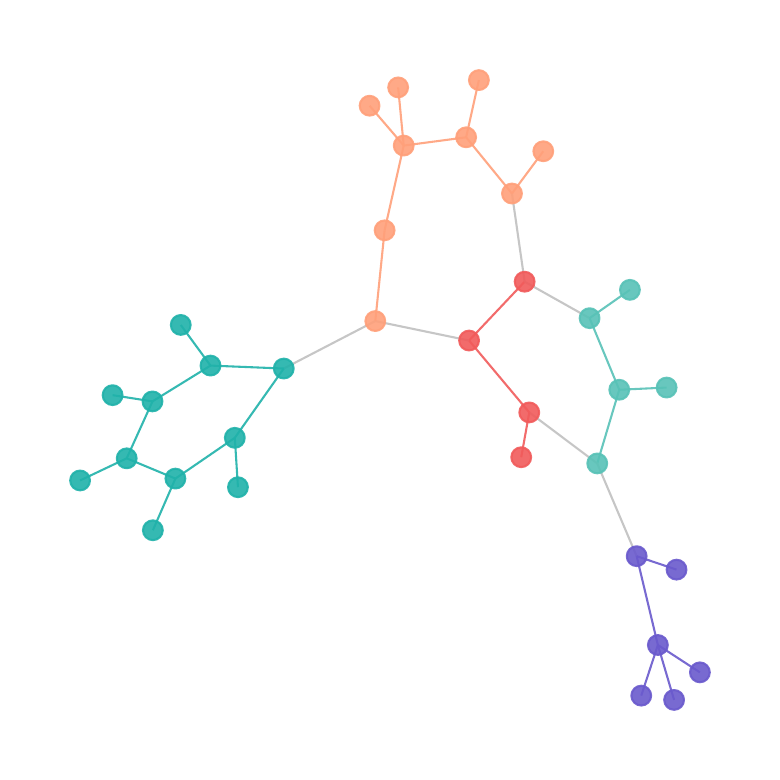}}
\subfigure{\includegraphics[width=0.18\linewidth]{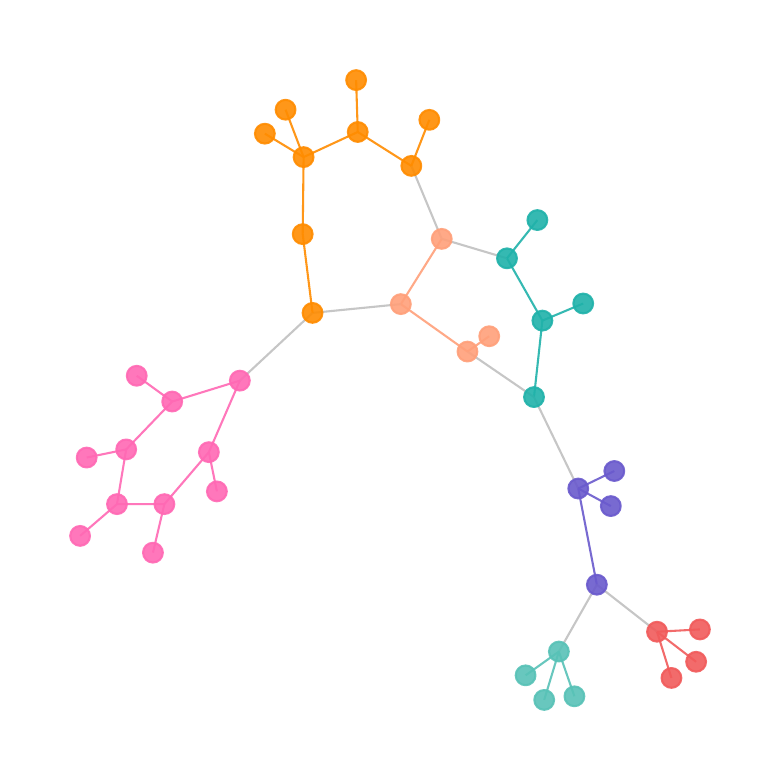}}
\subfigure{\includegraphics[width=0.18\linewidth]{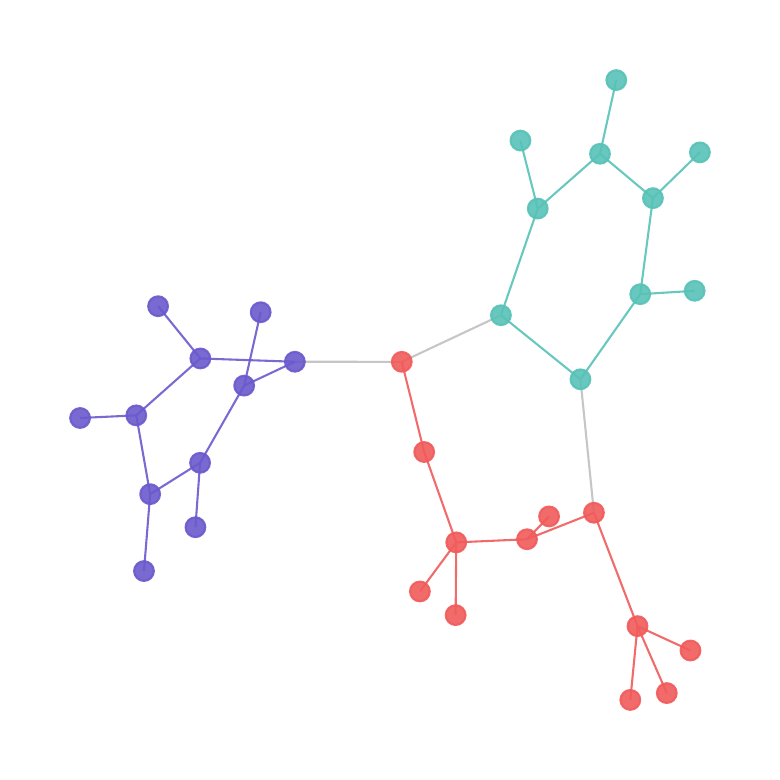}}\\
\vspace{-0.2cm}
\subfigure{\includegraphics[width=0.18\linewidth]{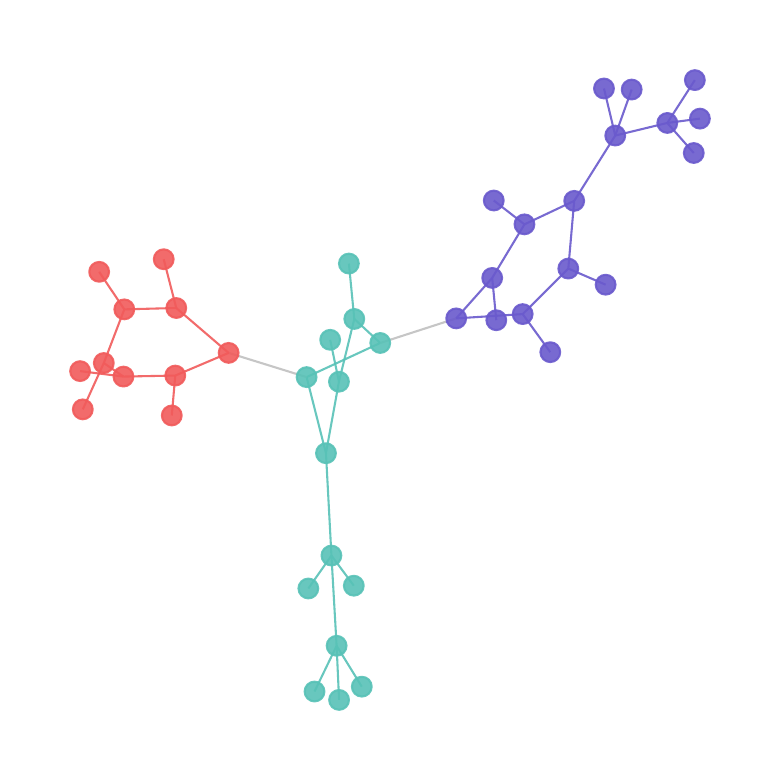}}
\subfigure{\includegraphics[width=0.18\linewidth]{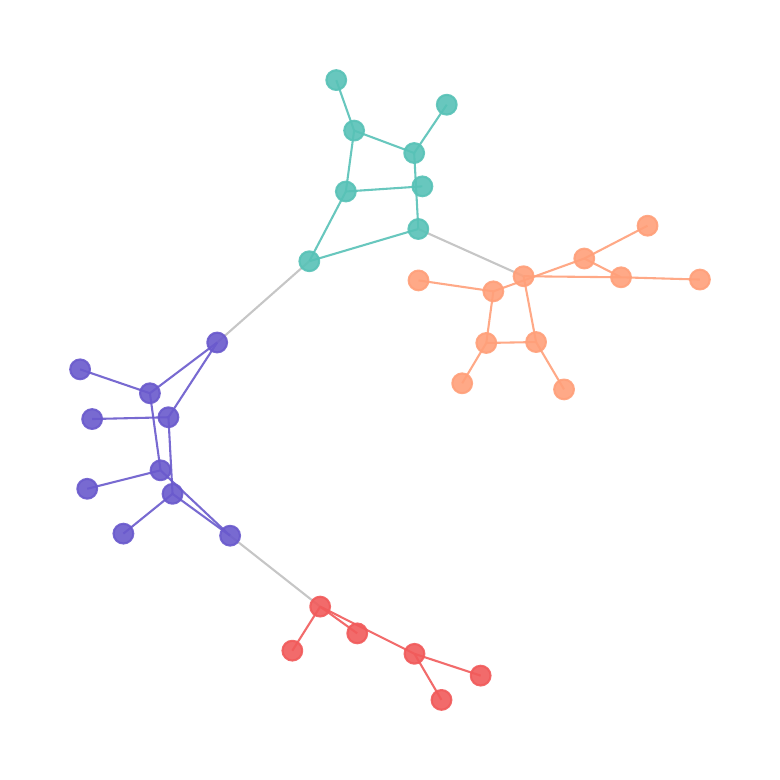}}
\subfigure{\includegraphics[width=0.18\linewidth]{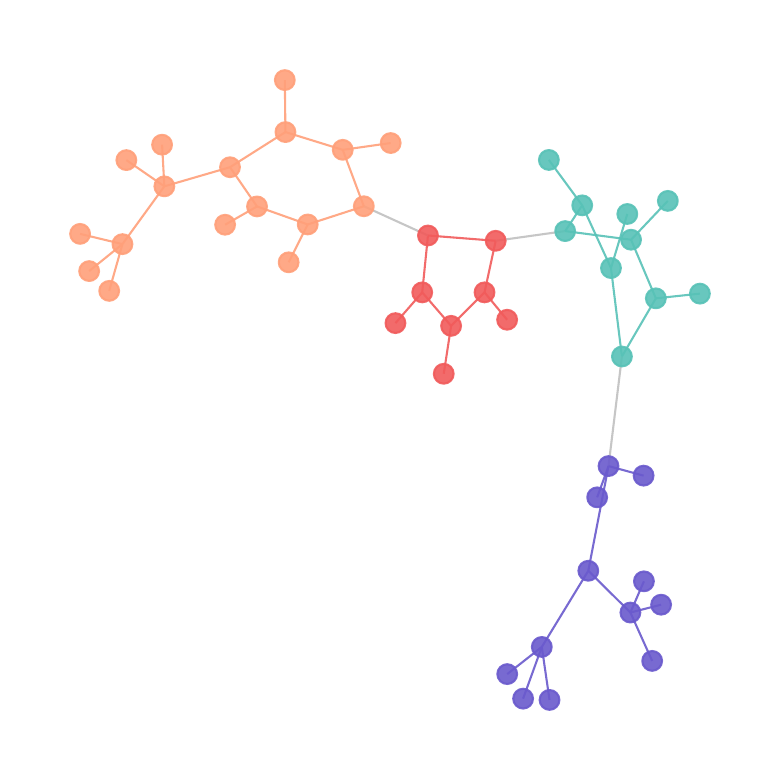}}
\subfigure{\includegraphics[width=0.18\linewidth]{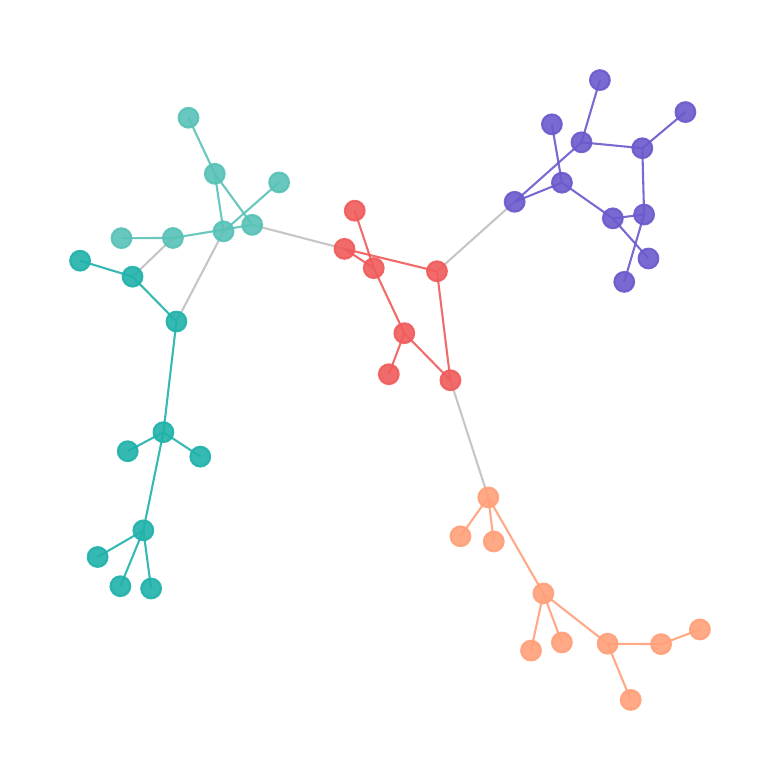}}
\subfigure{\includegraphics[width=0.18\linewidth]{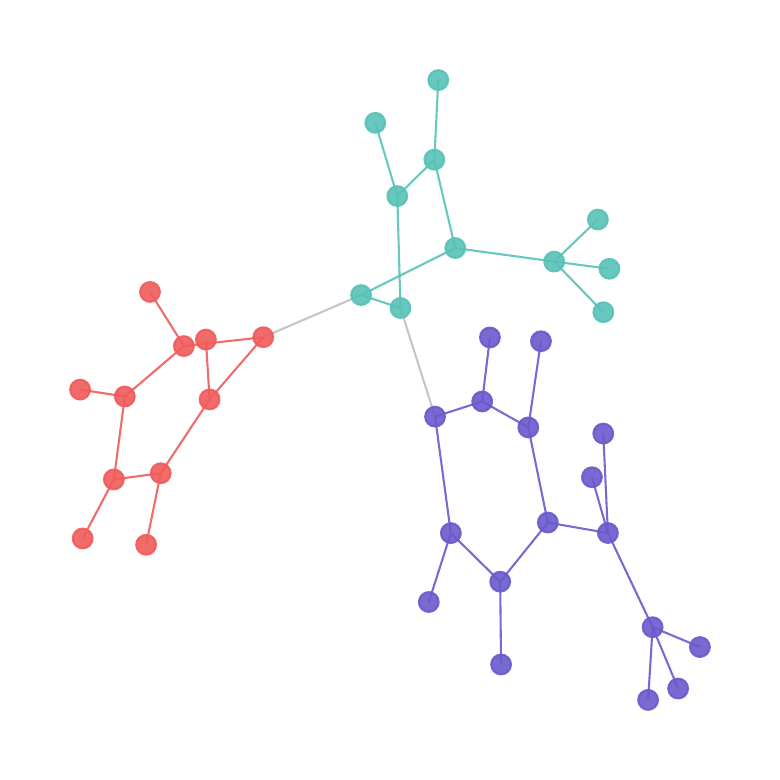}}
\vspace{-0.2cm}
\caption{Visualizing of the essential structure based on the coding tree preserved by \ourmethod on BZR/COX2, where the first row is BZR and the second row is COX2.}
\label{fig:ap-case-BZR}
\end{figure}

\begin{figure}[t!]
 \centering
 \hspace{-0.18cm}
\subfigure{\includegraphics[width=0.18\linewidth]{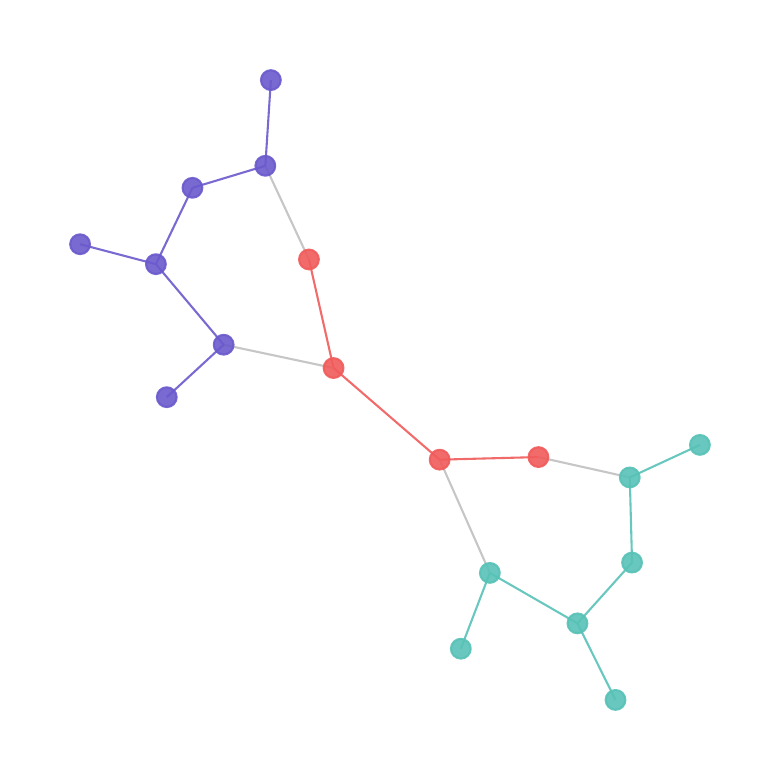}}
\subfigure{\includegraphics[width=0.18\linewidth]{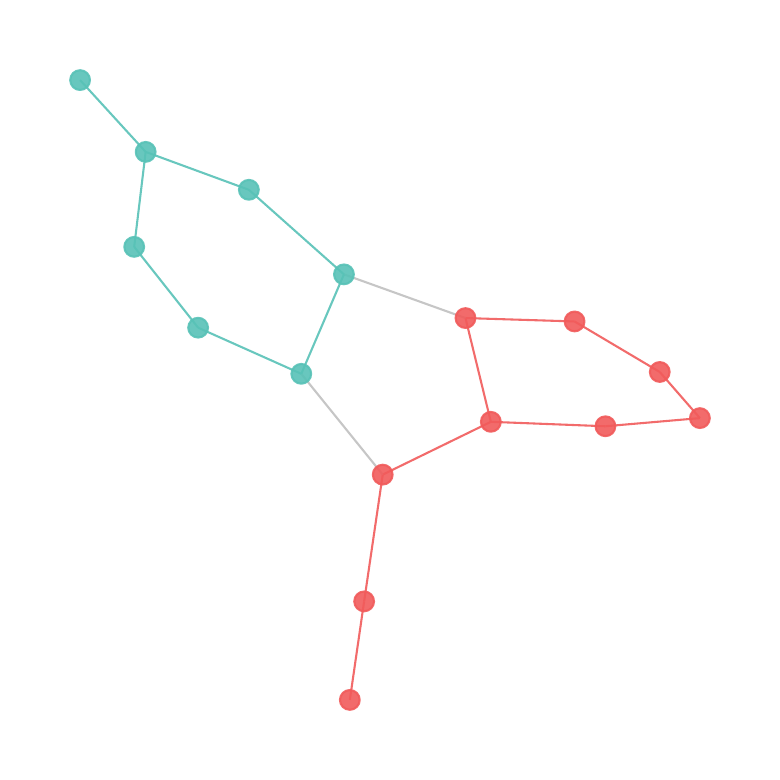}}
\subfigure{\includegraphics[width=0.18\linewidth]{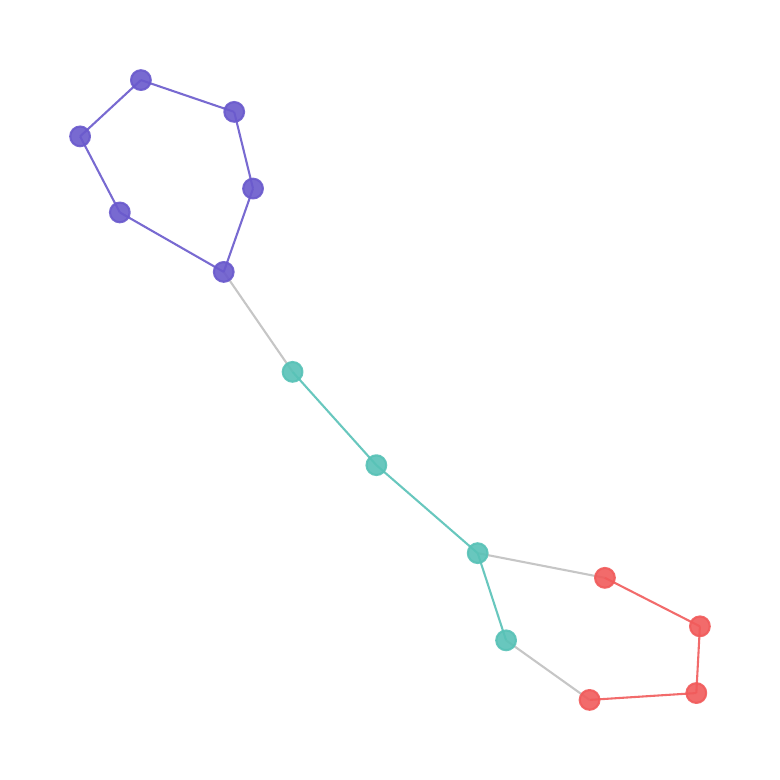}}
\subfigure{\includegraphics[width=0.18\linewidth]{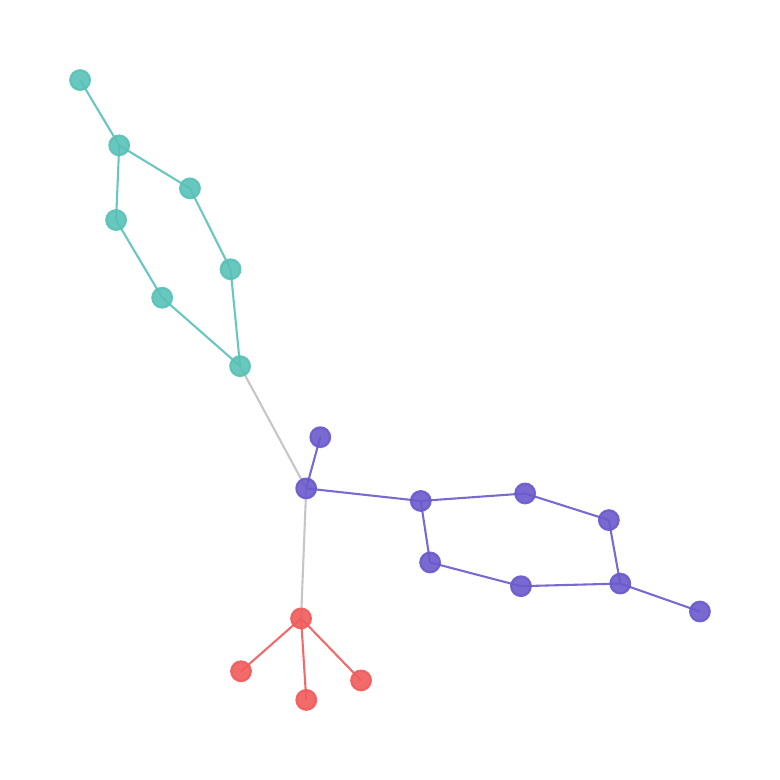}}
\subfigure{\includegraphics[width=0.18\linewidth]{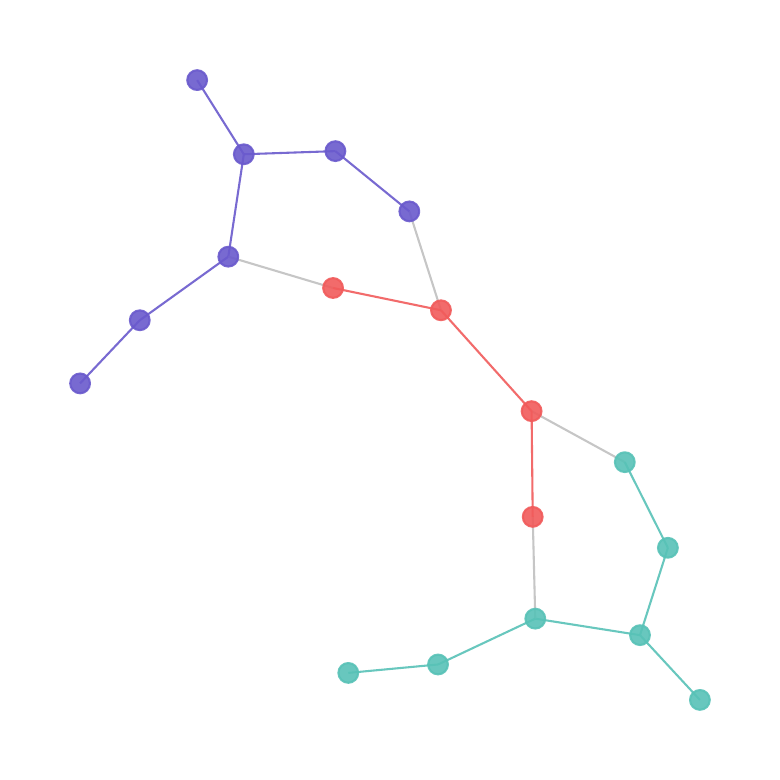}}\\
\vspace{-0.3cm}
\subfigure{\includegraphics[width=0.18\linewidth]{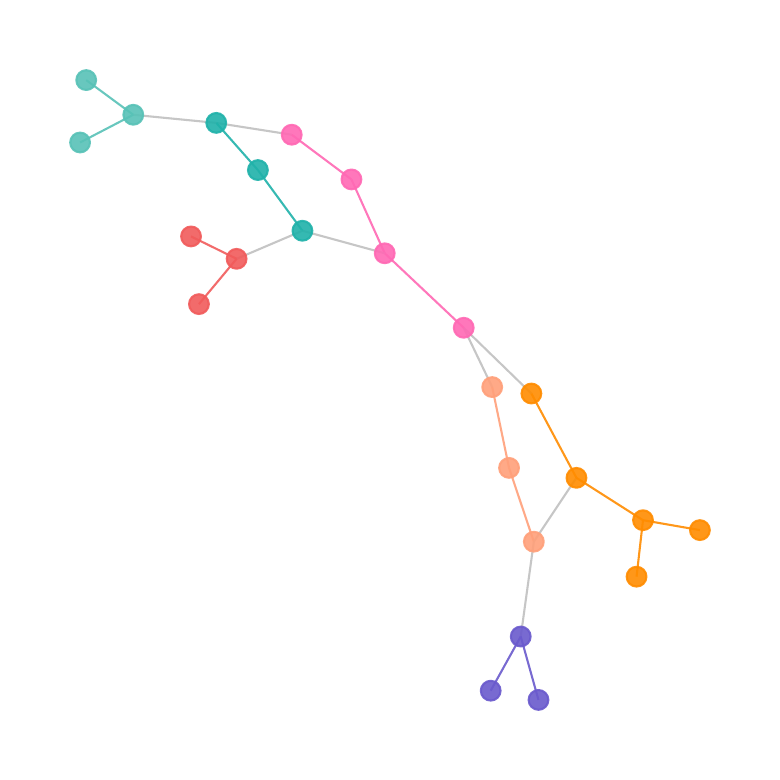}}
\subfigure{\includegraphics[width=0.18\linewidth]{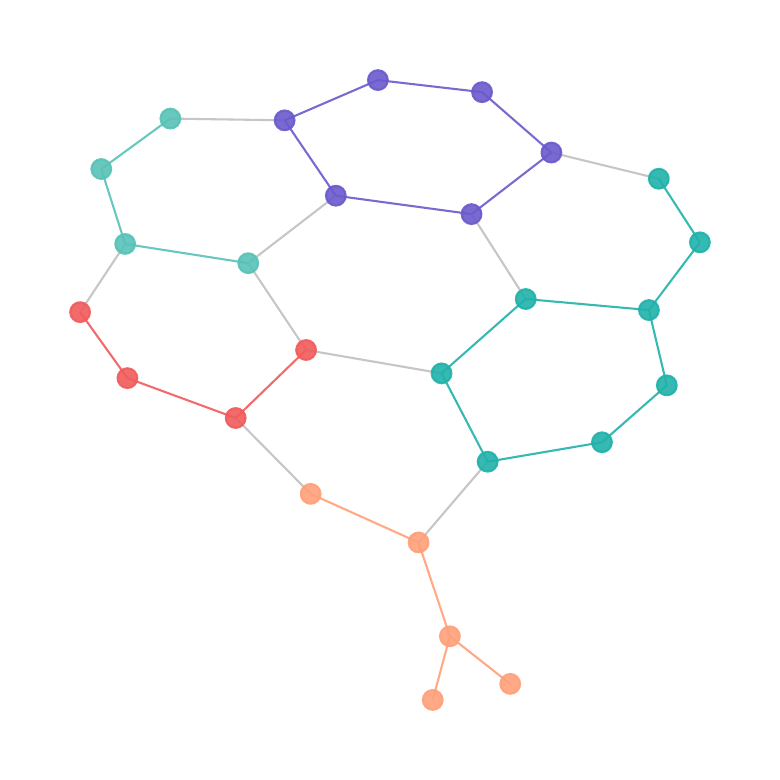}}
\subfigure{\includegraphics[width=0.18\linewidth]{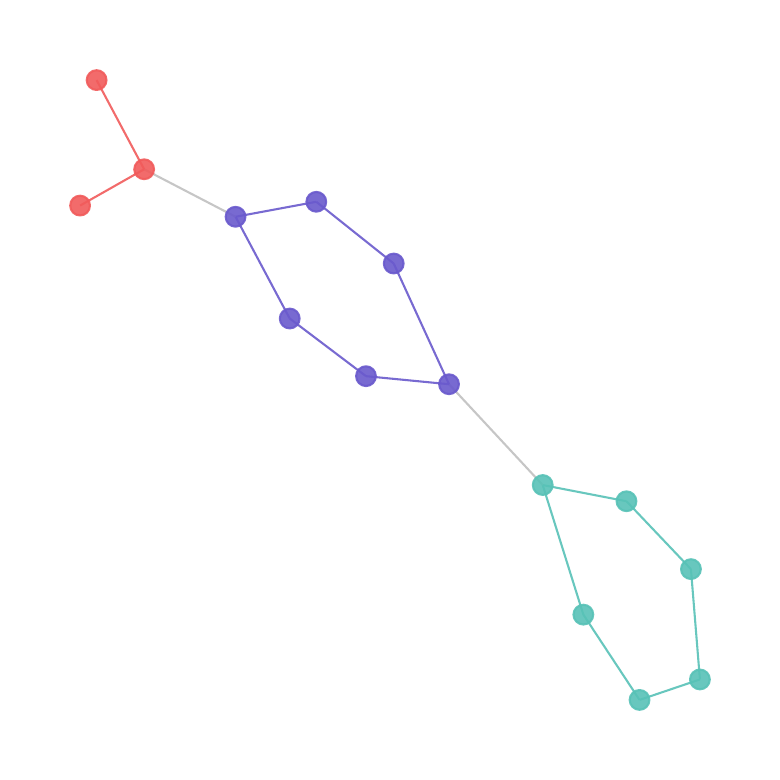}}
\subfigure{\includegraphics[width=0.18\linewidth]{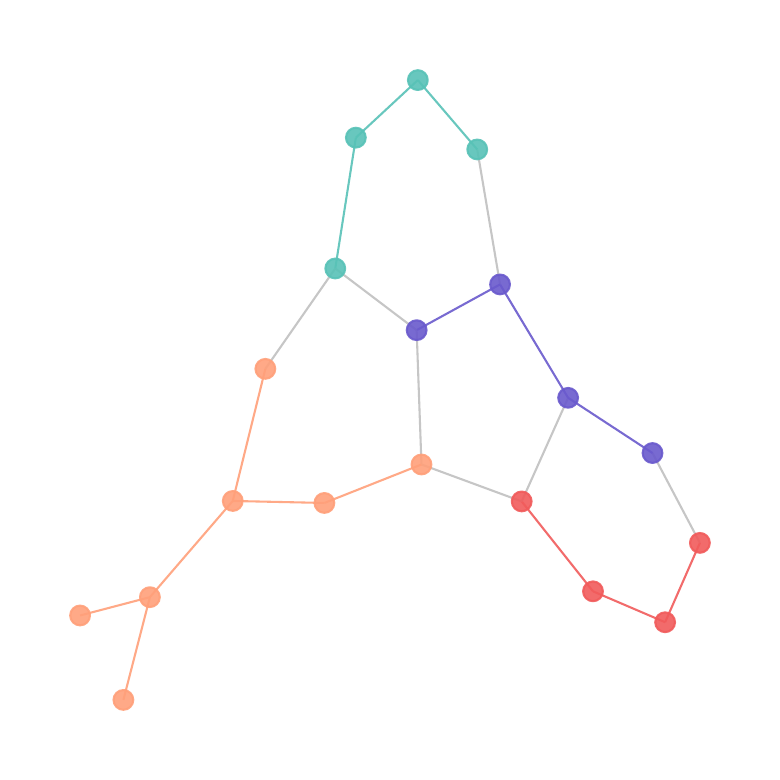}}
\subfigure{\includegraphics[width=0.18\linewidth]{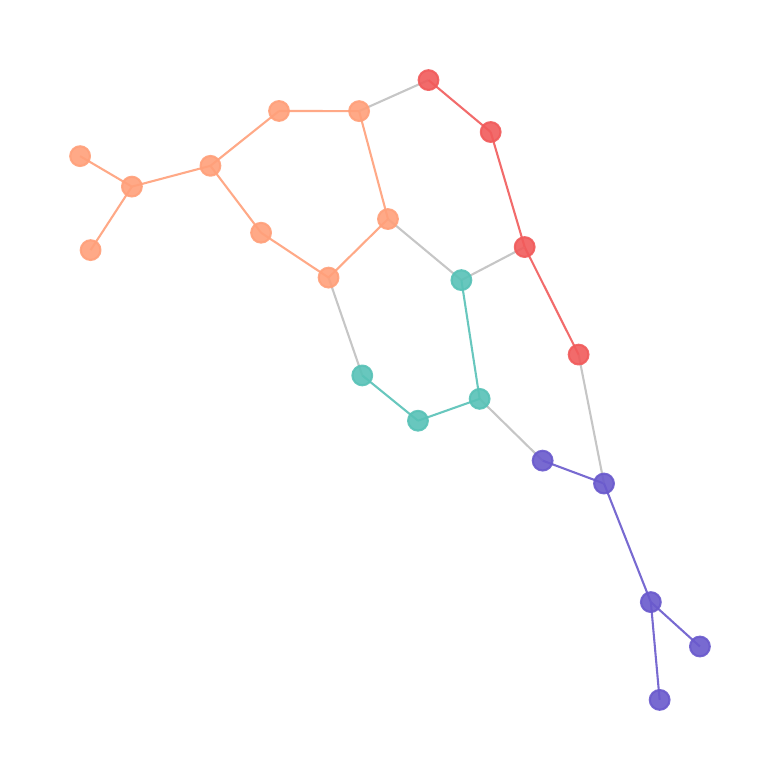}}
\vspace{-0.3cm}
\caption{Visualizing of the essential structure based on the coding tree preserved by \ourmethod on PTC-MR/MUTAG, where the first row is PTC-MR and the second row is MUTAG.}
\label{fig:ap-case-PTC}
\end{figure}

\begin{figure}[t!]
 \centering
 \hspace{-0.18cm}
\subfigure{\includegraphics[width=0.18\linewidth]{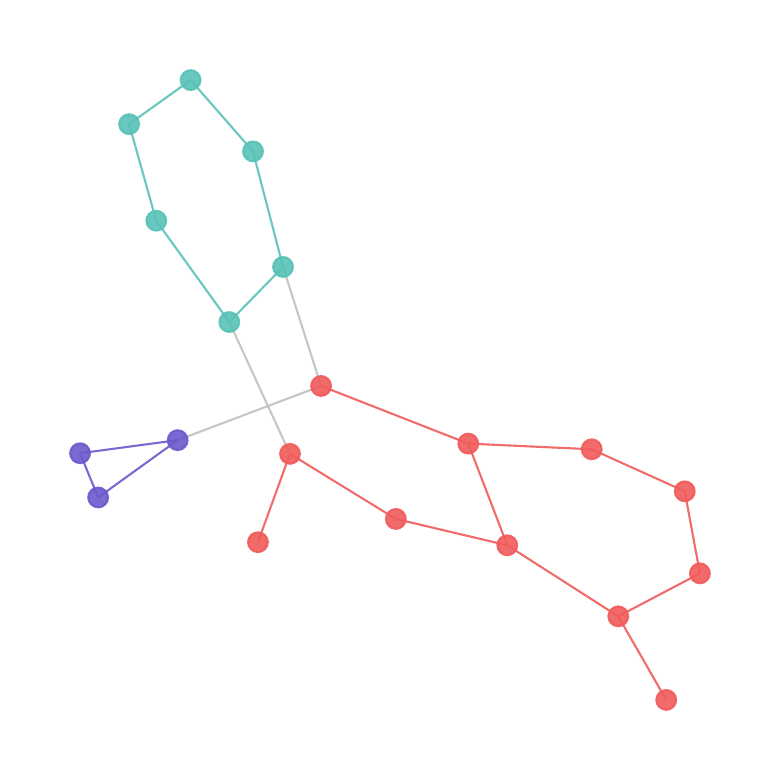}}
\subfigure{\includegraphics[width=0.18\linewidth]{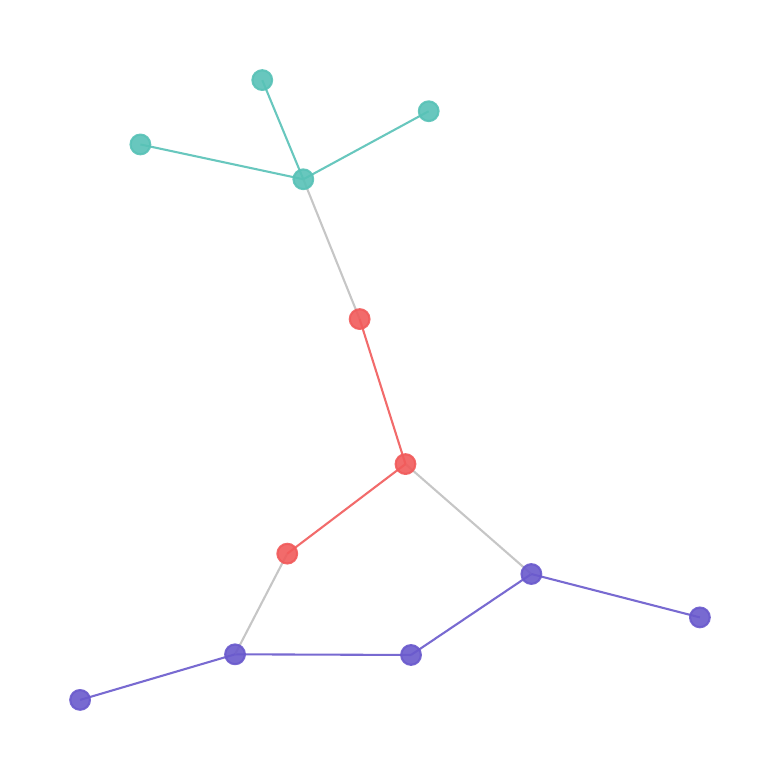}}
\subfigure{\includegraphics[width=0.18\linewidth]{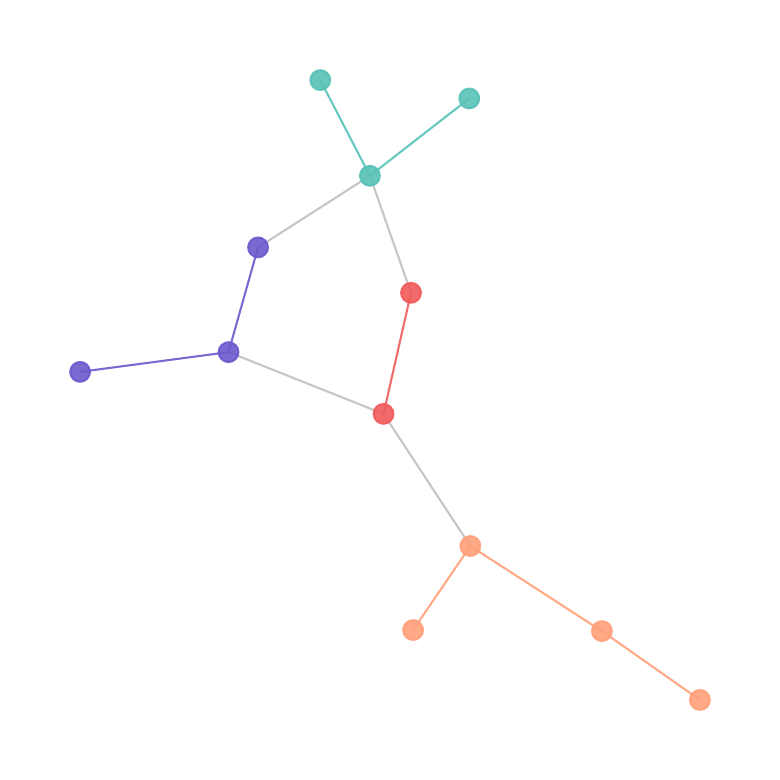}}
\subfigure{\includegraphics[width=0.18\linewidth]{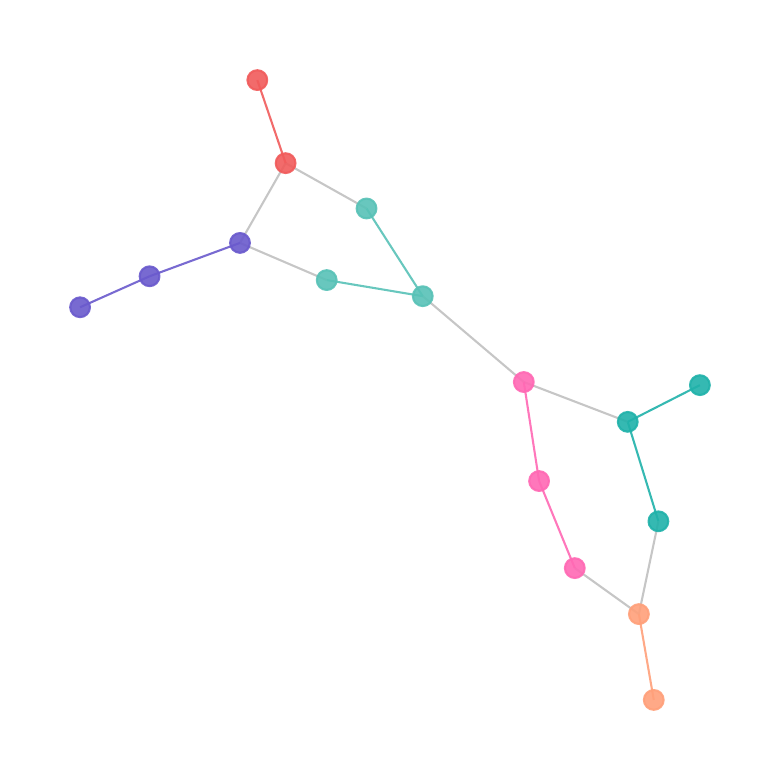}}
\subfigure{\includegraphics[width=0.18\linewidth]{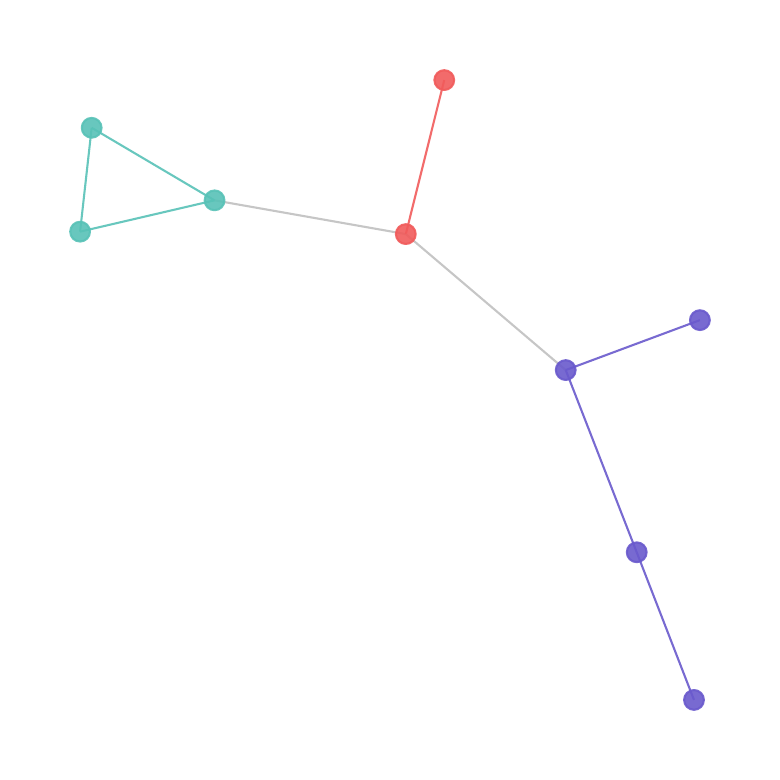}}\\
\vspace{-0.3cm}
\subfigure{\includegraphics[width=0.18\linewidth]{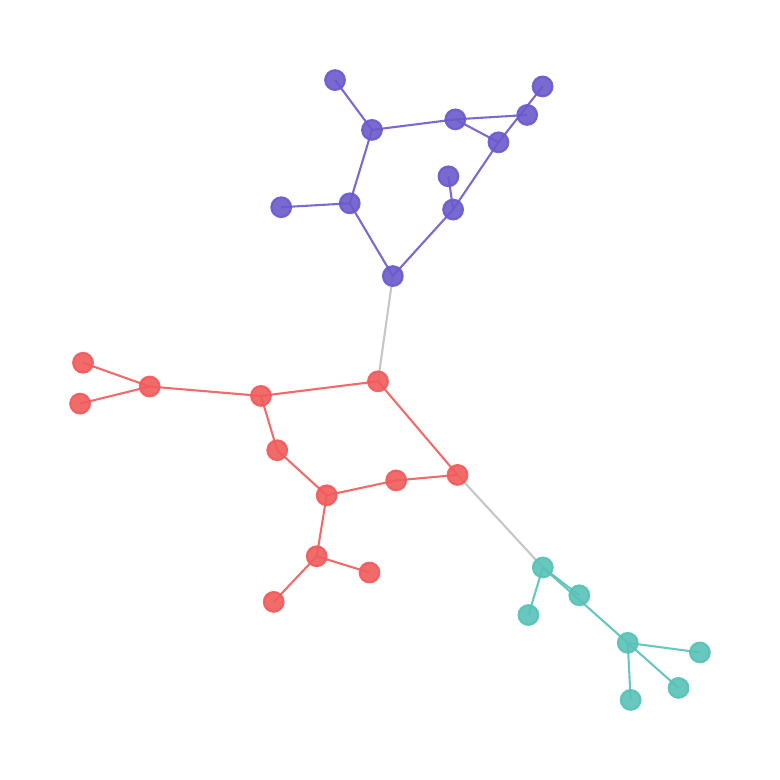}}
\subfigure{\includegraphics[width=0.18\linewidth]{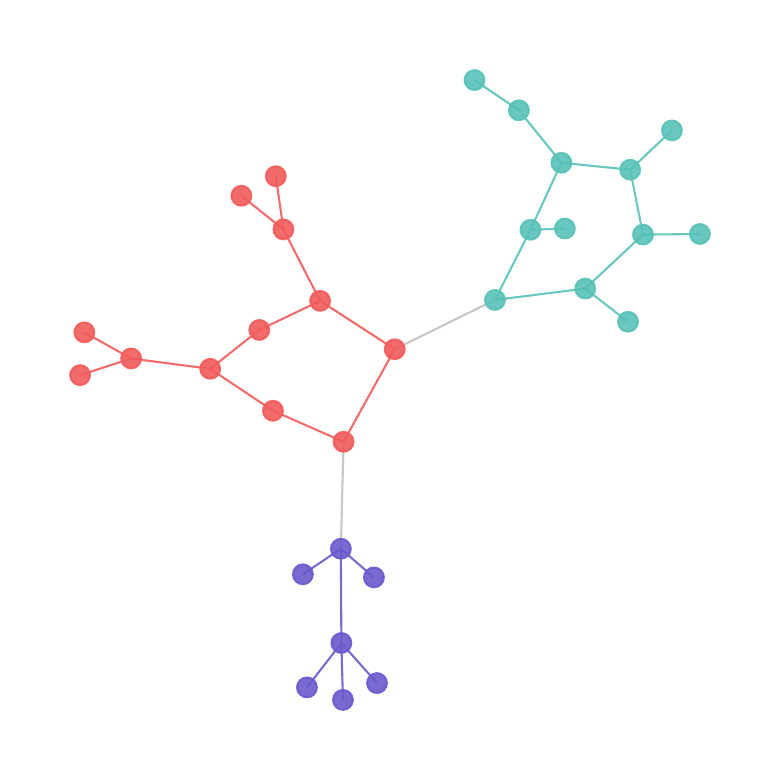}}
\subfigure{\includegraphics[width=0.18\linewidth]{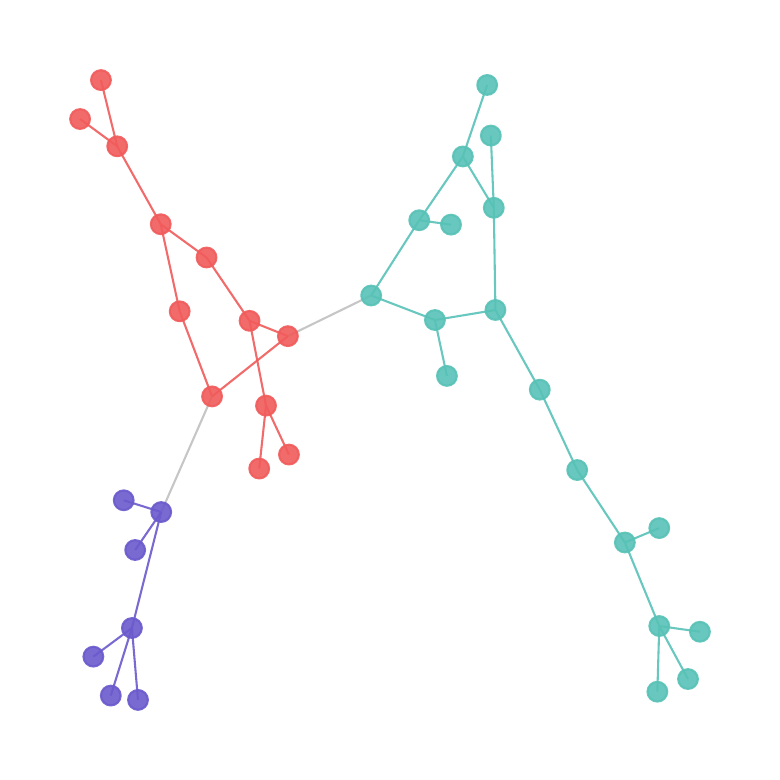}}
\subfigure{\includegraphics[width=0.18\linewidth]{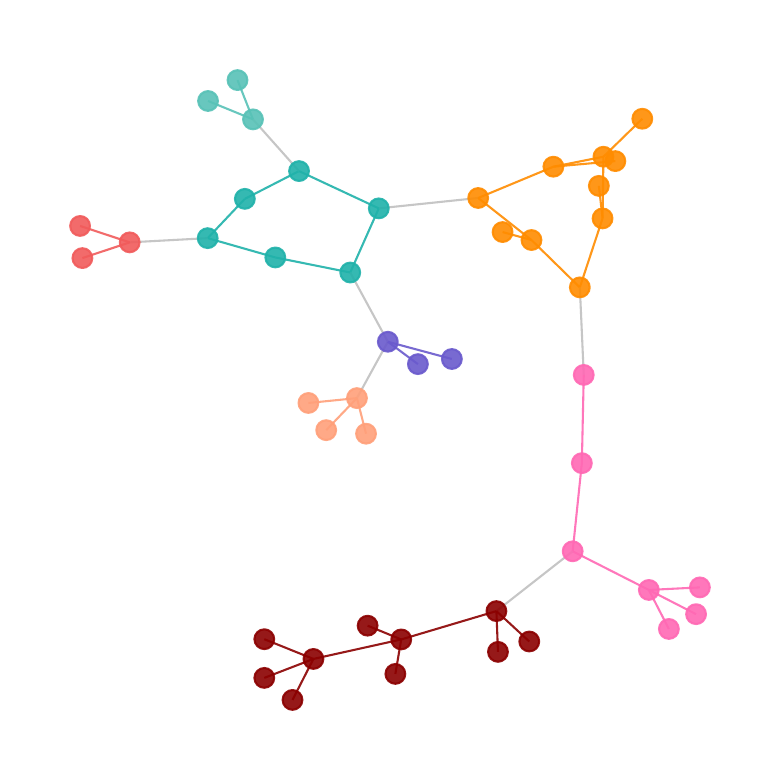}}
\subfigure{\includegraphics[width=0.18\linewidth]{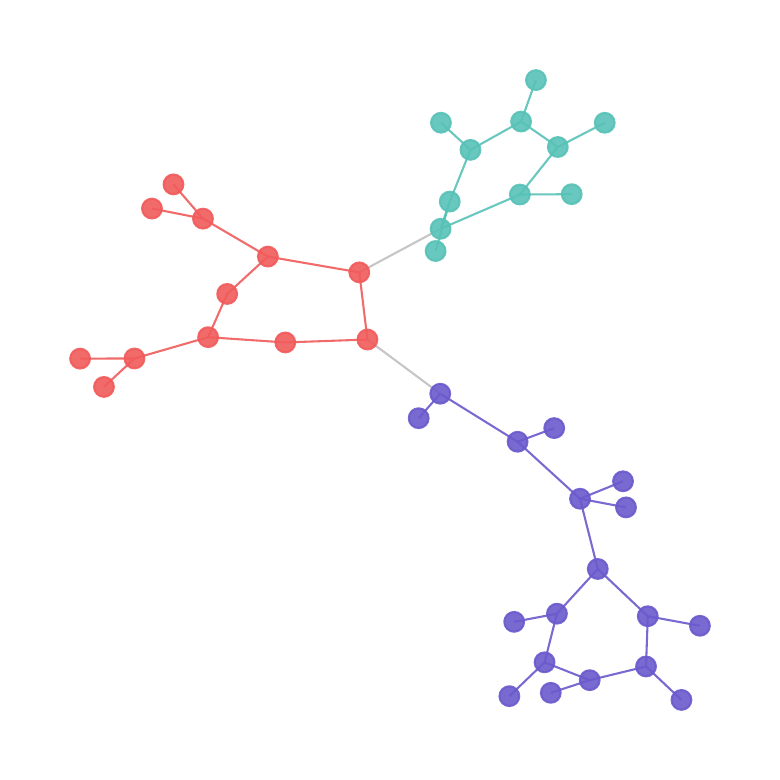}}
\vspace{-0.3cm}
\caption{Visualizing of the essential structure based on the coding tree preserved by \ourmethod on AIDS/DHFR, where the first row is AIDS and the second row is DHFR.}
\end{figure}

\begin{figure}[t!]
 \centering
 \hspace{-0.18cm}
\subfigure{\includegraphics[width=0.18\linewidth]{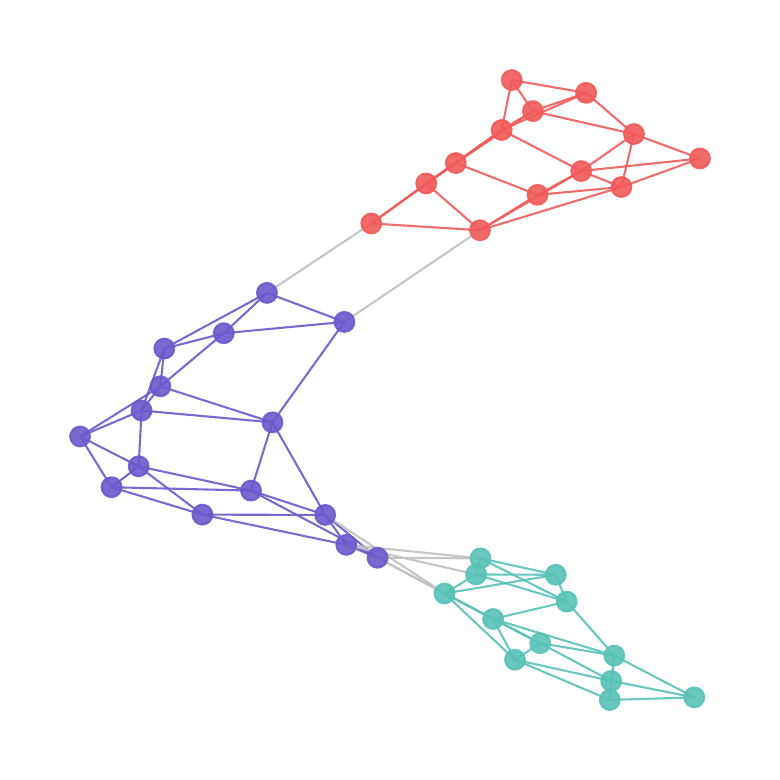}}
\subfigure{\includegraphics[width=0.18\linewidth]{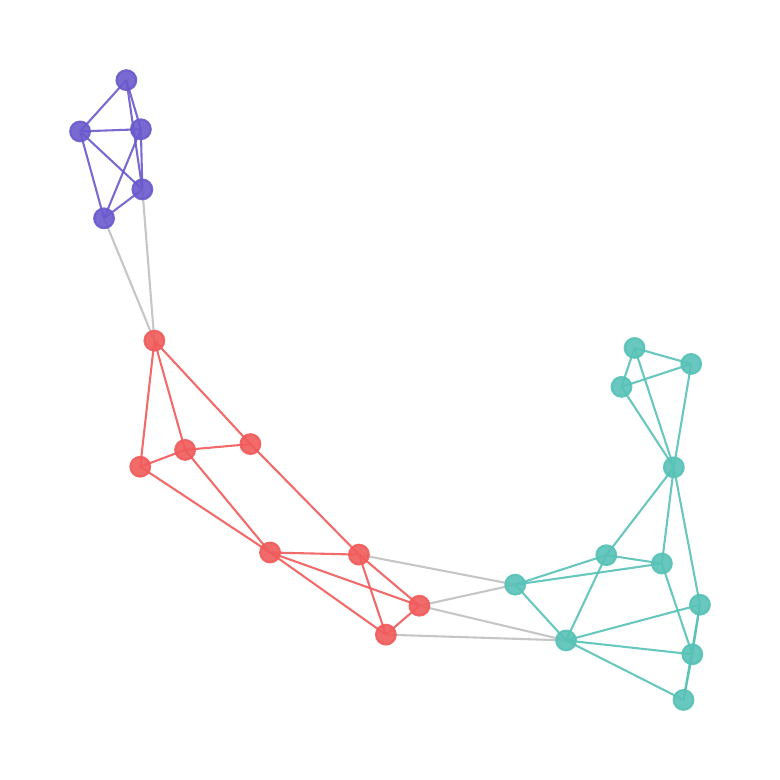}}
\subfigure{\includegraphics[width=0.18\linewidth]{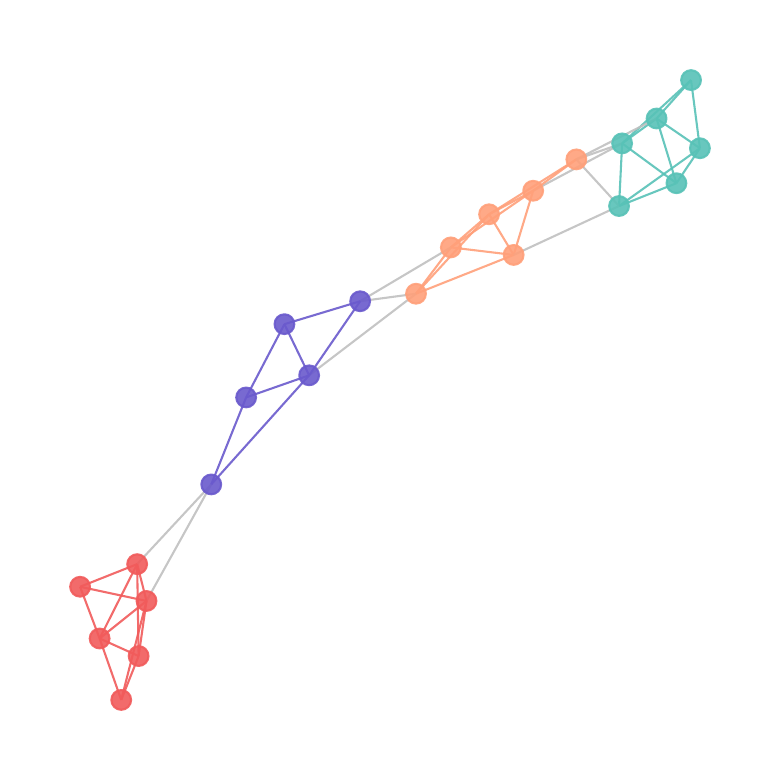}}
\subfigure{\includegraphics[width=0.18\linewidth]{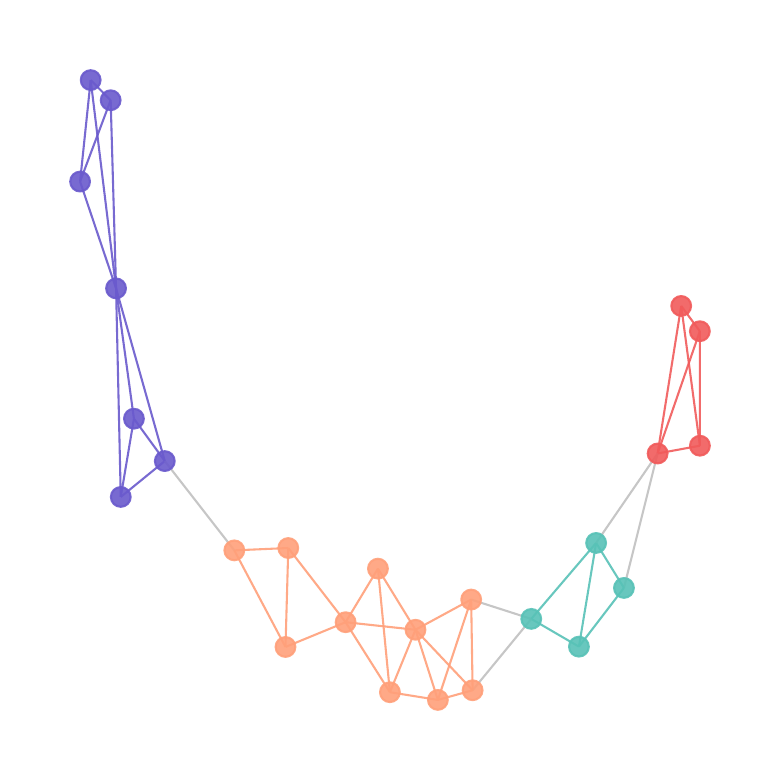}}
\subfigure{\includegraphics[width=0.18\linewidth]{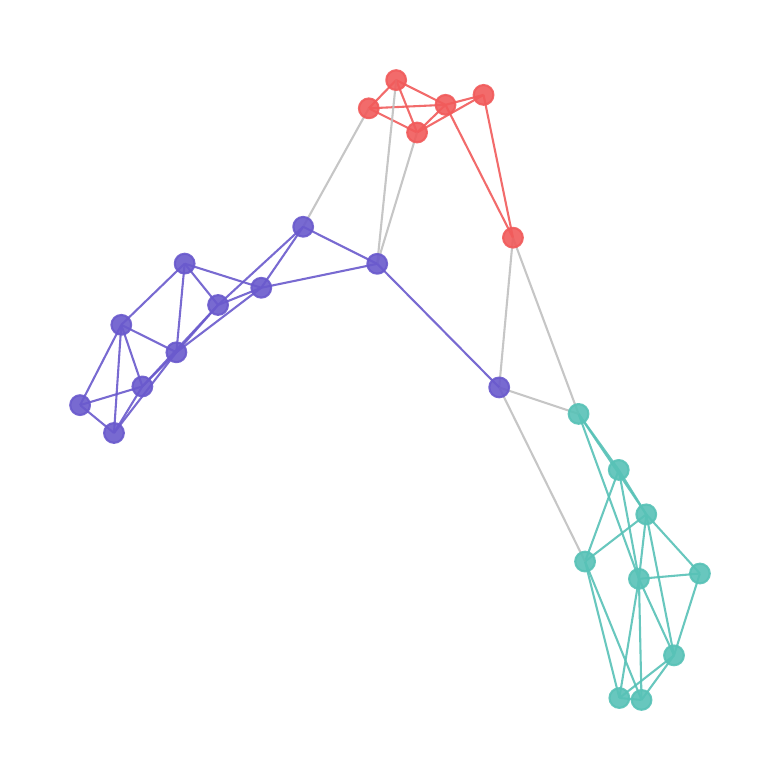}}\\
\vspace{-0.3cm}
\subfigure{\includegraphics[width=0.18\linewidth]{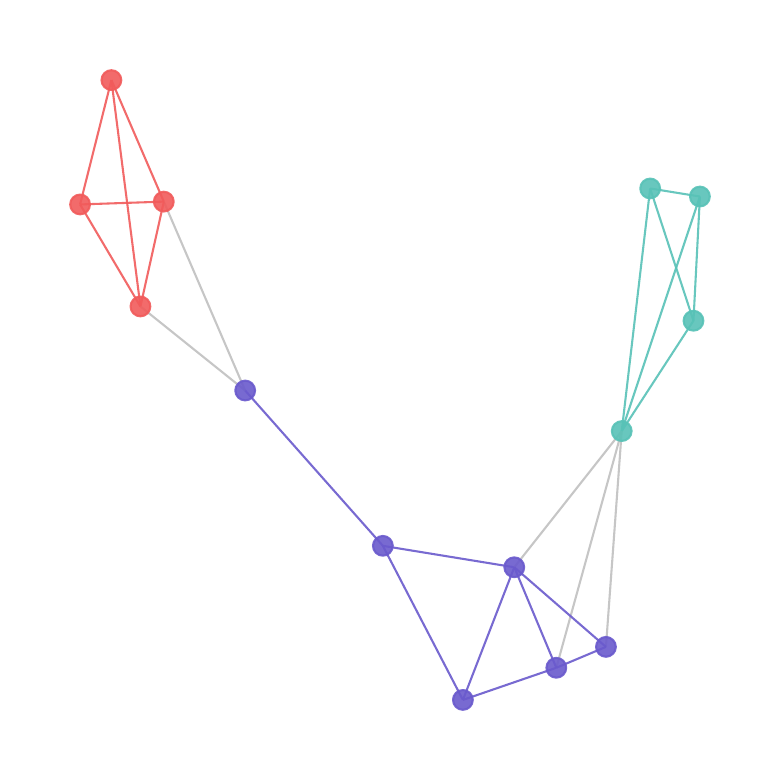}}
\subfigure{\includegraphics[width=0.18\linewidth]{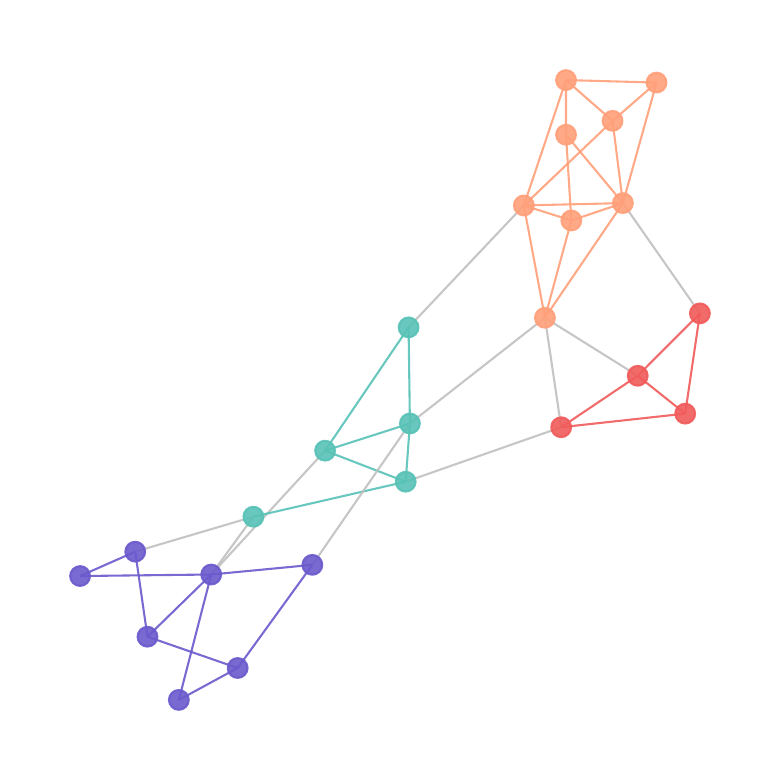}}
\subfigure{\includegraphics[width=0.18\linewidth]{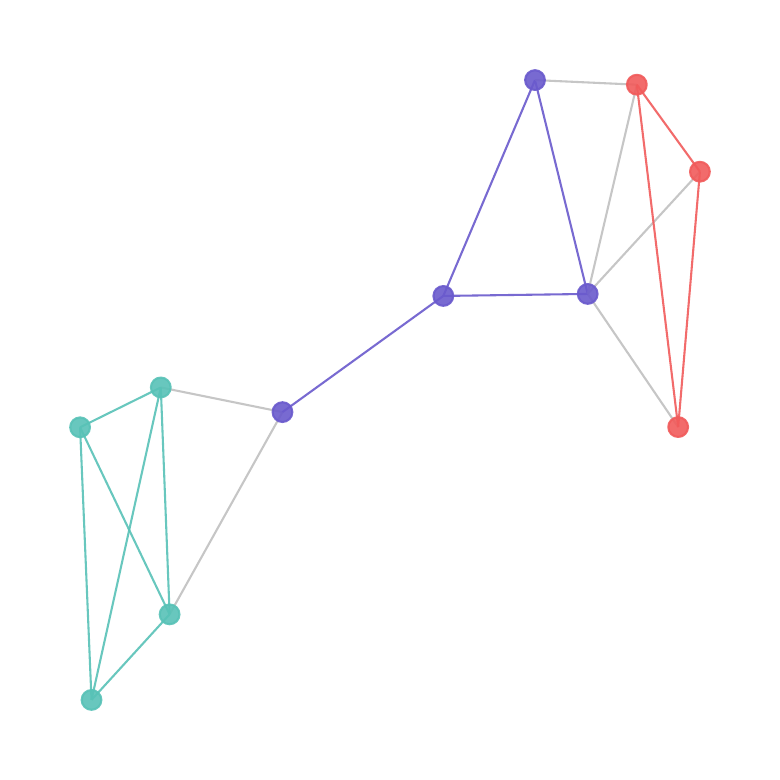}}
\subfigure{\includegraphics[width=0.18\linewidth]{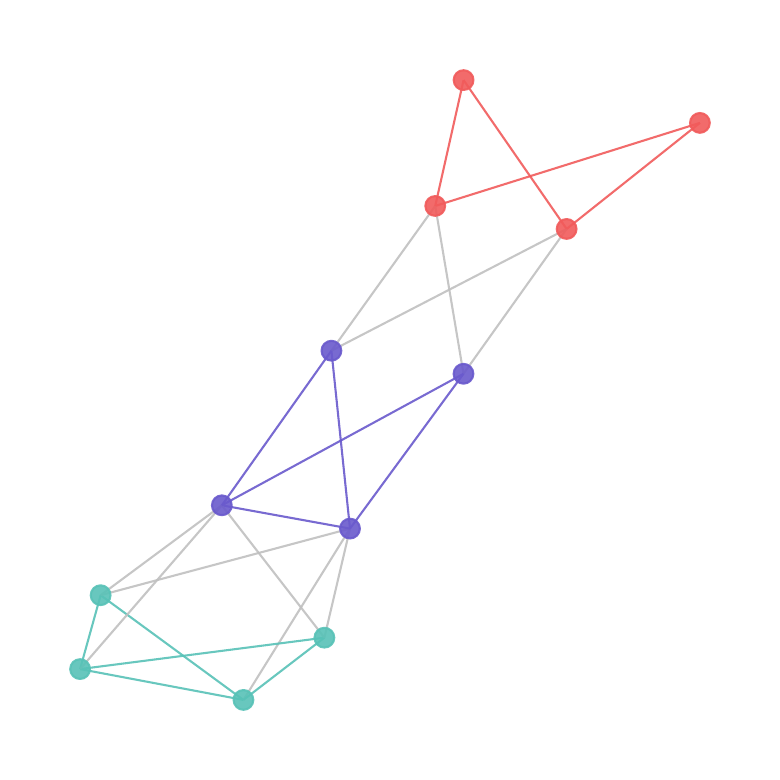}}
\subfigure{\includegraphics[width=0.18\linewidth]{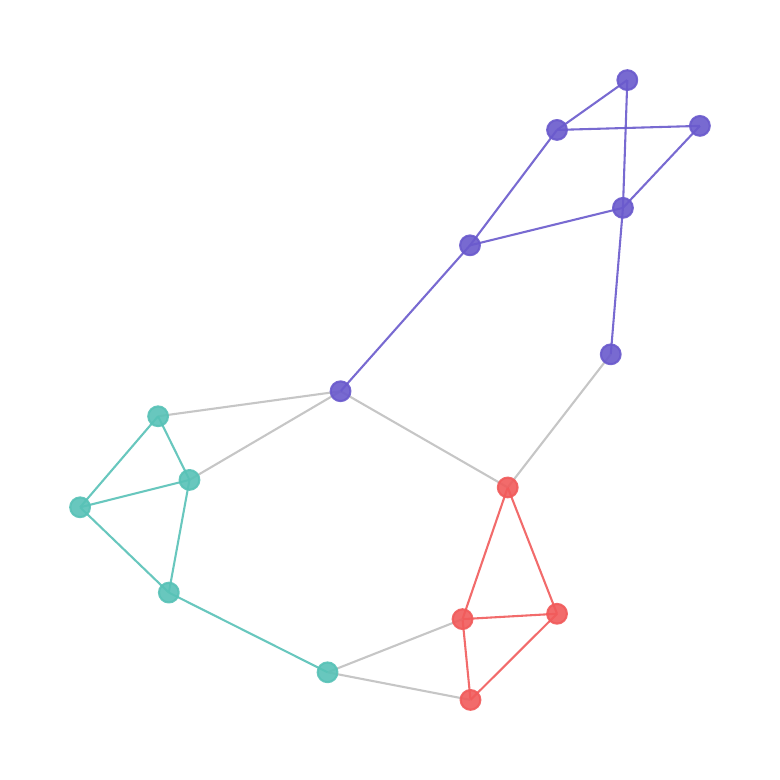}}
\vspace{-0.3cm}
\caption{Visualizing of the essential structure based on the coding tree preserved by \ourmethod on ENZYMES/PROTEIN, where the first row is ENZYMES and the second row is PROTEIN.}
\label{fig:ap-case-ENZY}
\end{figure}

\begin{figure}[t!]
 \centering
\subfigure{\includegraphics[width=0.18\linewidth]{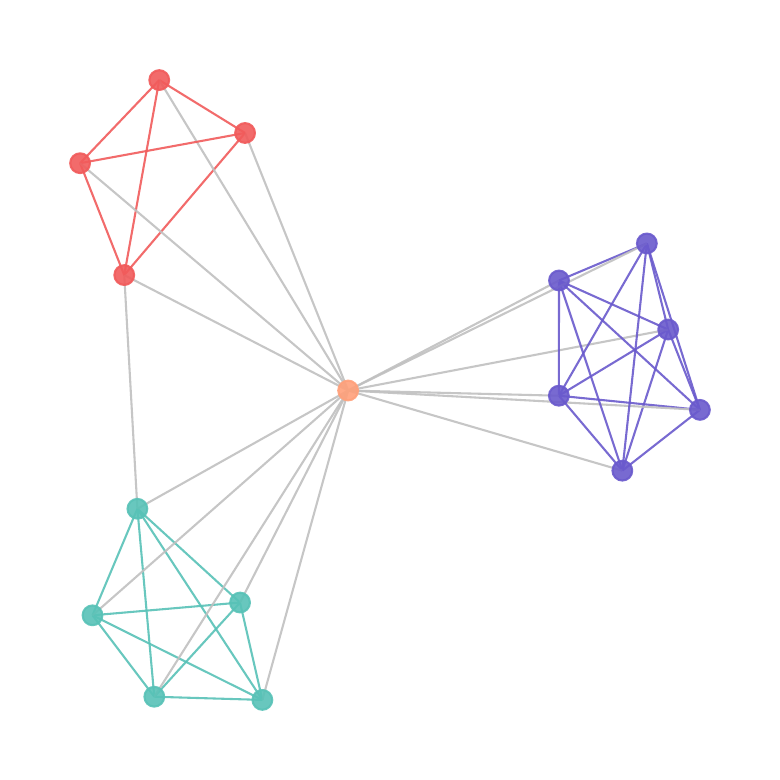}}
\subfigure{\includegraphics[width=0.18\linewidth]{figs/nx/IMDB/M-11.pdf}}
\subfigure{\includegraphics[width=0.18\linewidth]{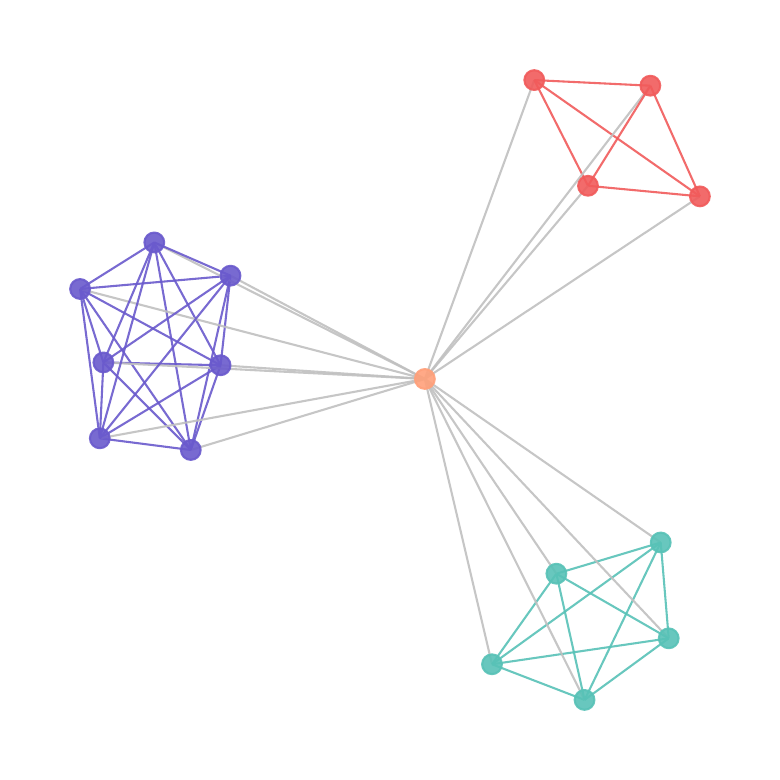}}
\subfigure{\includegraphics[width=0.18\linewidth]{figs/nx/IMDB/M-6.pdf}}
\subfigure{\includegraphics[width=0.18\linewidth]{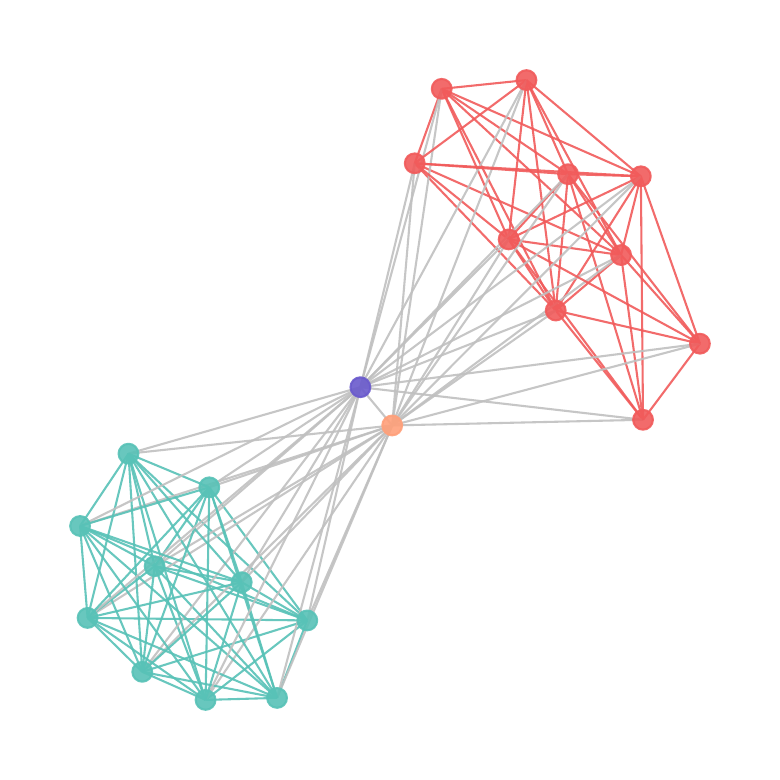}}\\
\vspace{-0.3cm}
\subfigure{\includegraphics[width=0.18\linewidth]{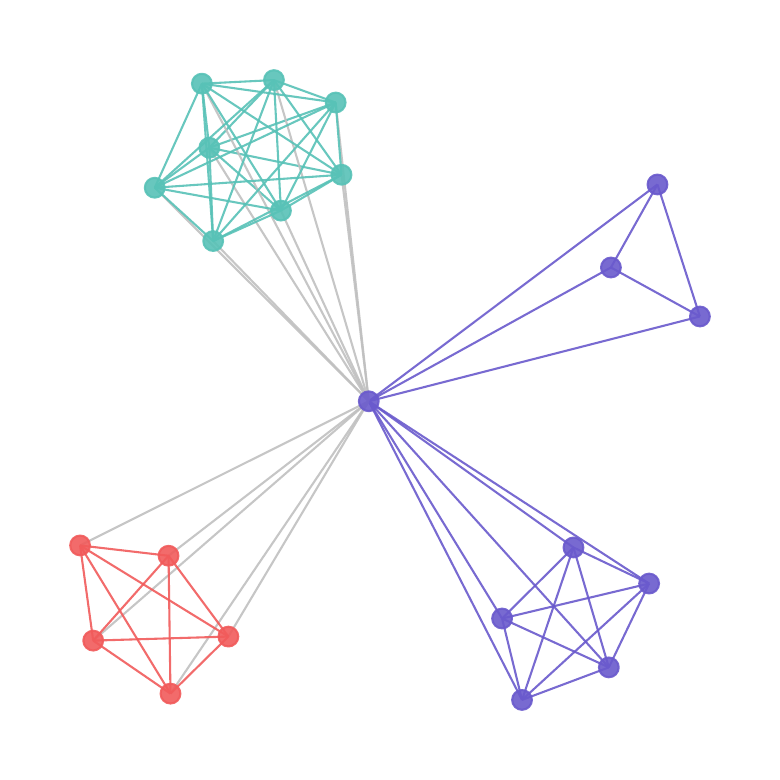}}
\subfigure{\includegraphics[width=0.18\linewidth]{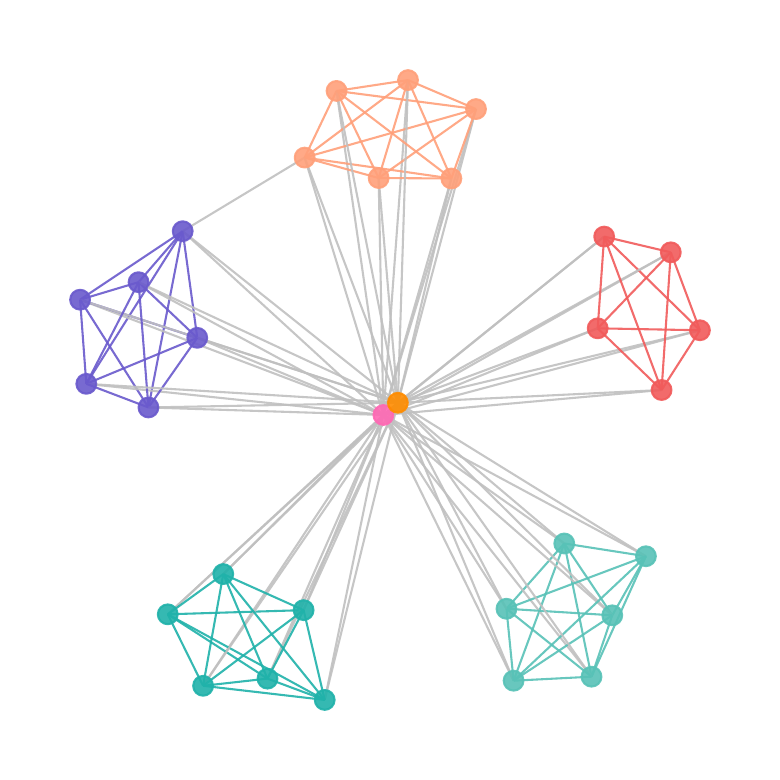}}
\subfigure{\includegraphics[width=0.18\linewidth]{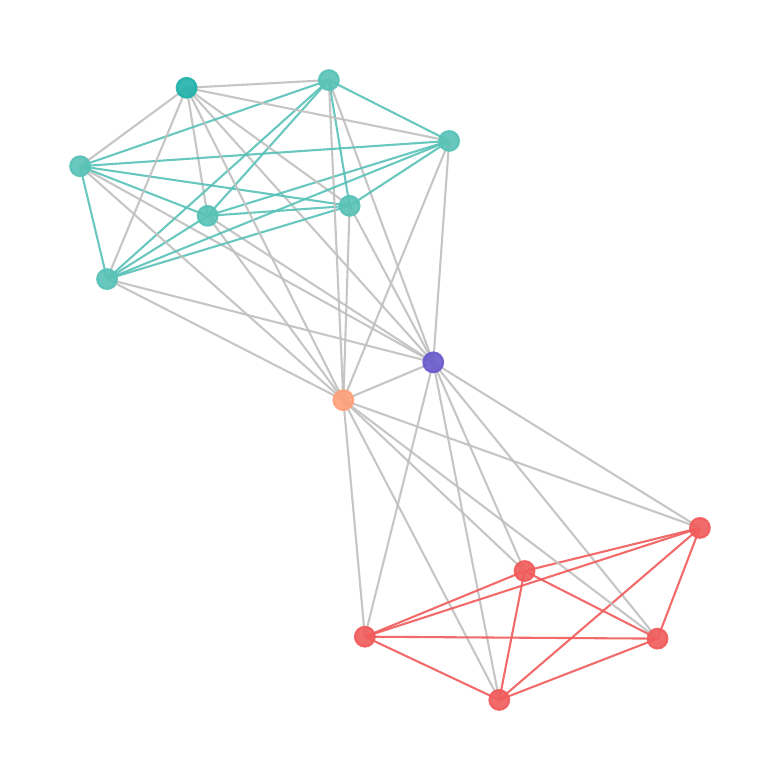}}
\subfigure{\includegraphics[width=0.18\linewidth]{figs/nx/IMDB/B-5.pdf}}
\subfigure{\includegraphics[width=0.18\linewidth]{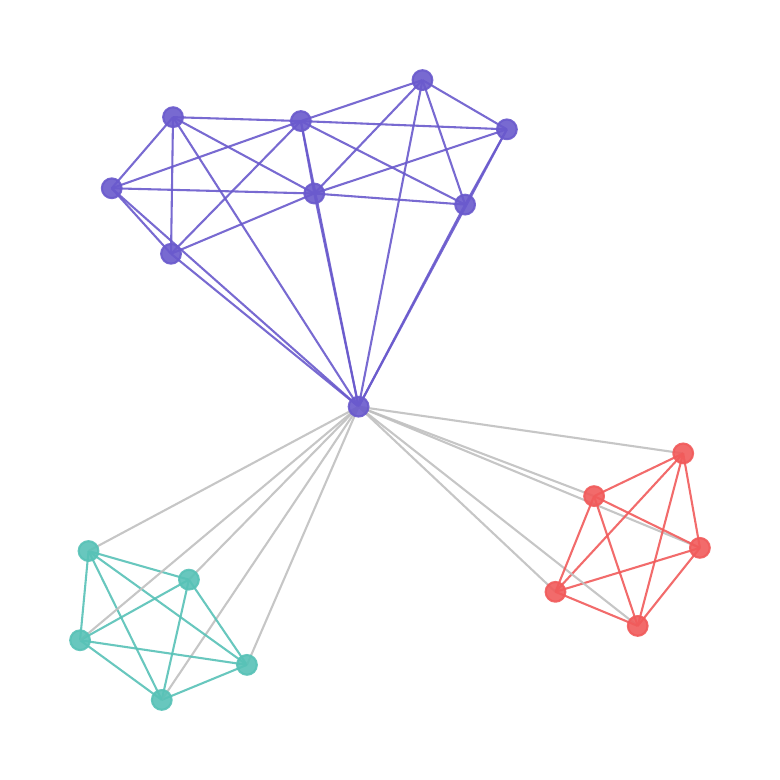}}
\vspace{-0.3cm}
\caption{Visualizing of the essential structure based on the coding tree preserved by \ourmethod on IMDB-M/IMDB-B, where the first row is IMDB-M and the second row is IMDB-B.}
\label{fig:ap-case-IMDB}
\end{figure}

\begin{figure}[t!]
 \centering
\subfigure{\includegraphics[width=0.18\linewidth]{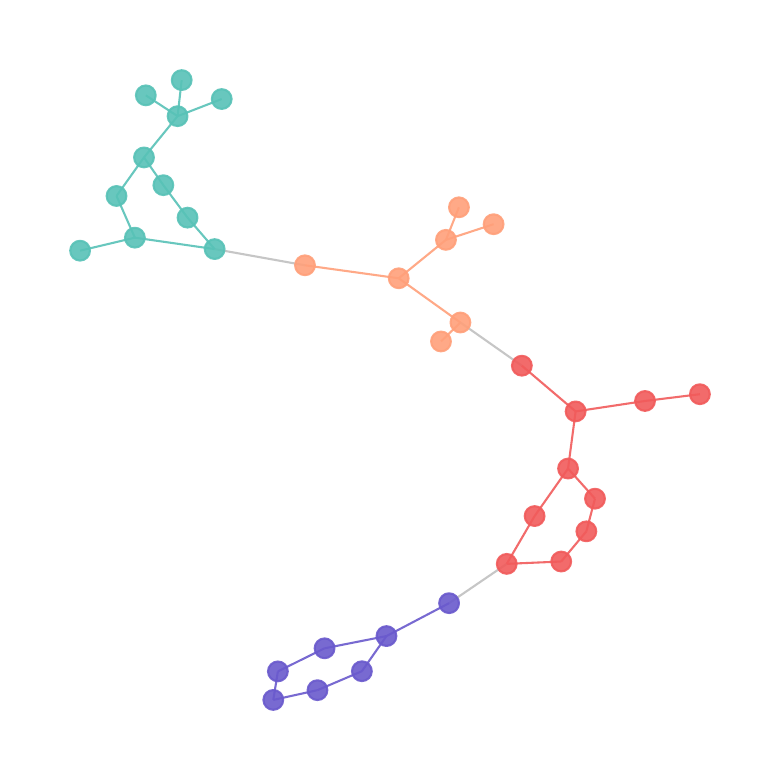}}
\subfigure{\includegraphics[width=0.18\linewidth]{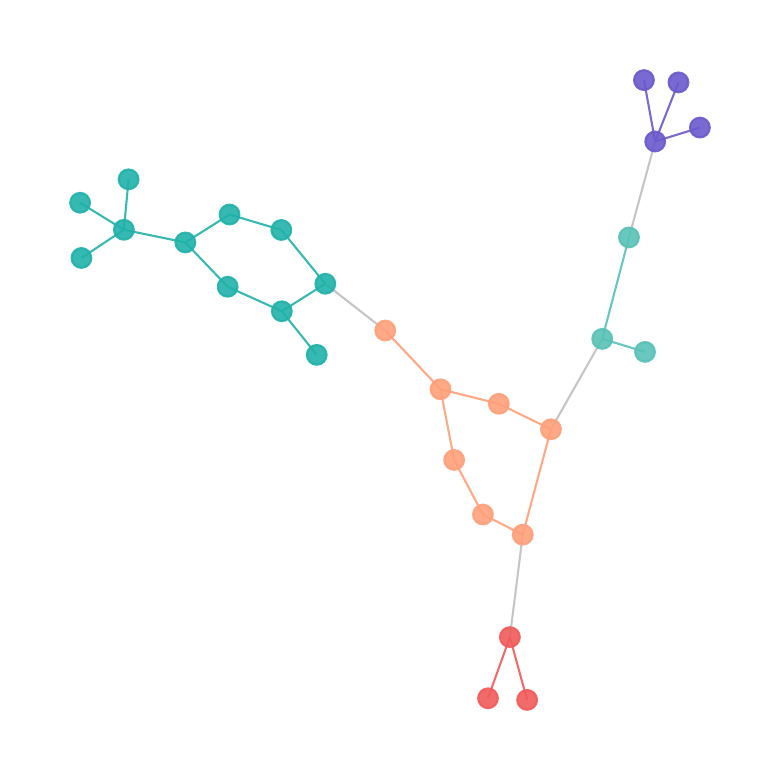}}
\subfigure{\includegraphics[width=0.18\linewidth]{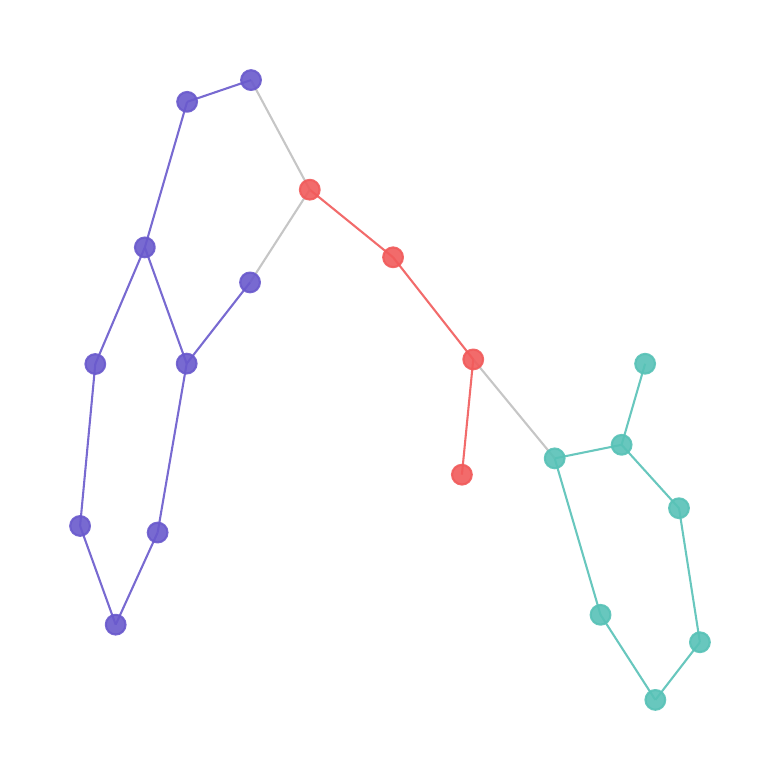}}
\subfigure{\includegraphics[width=0.18\linewidth]{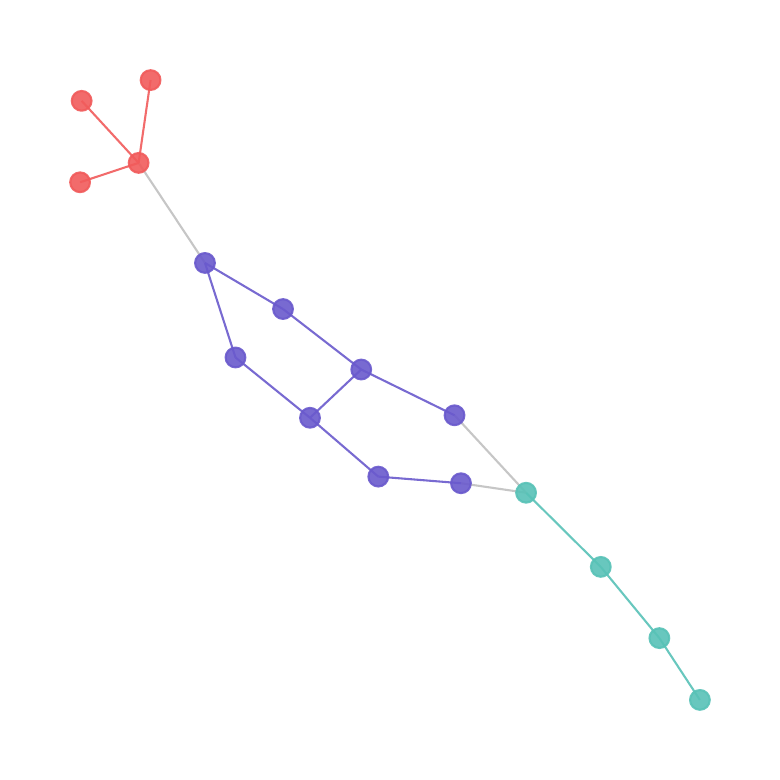}}
\subfigure{\includegraphics[width=0.18\linewidth]{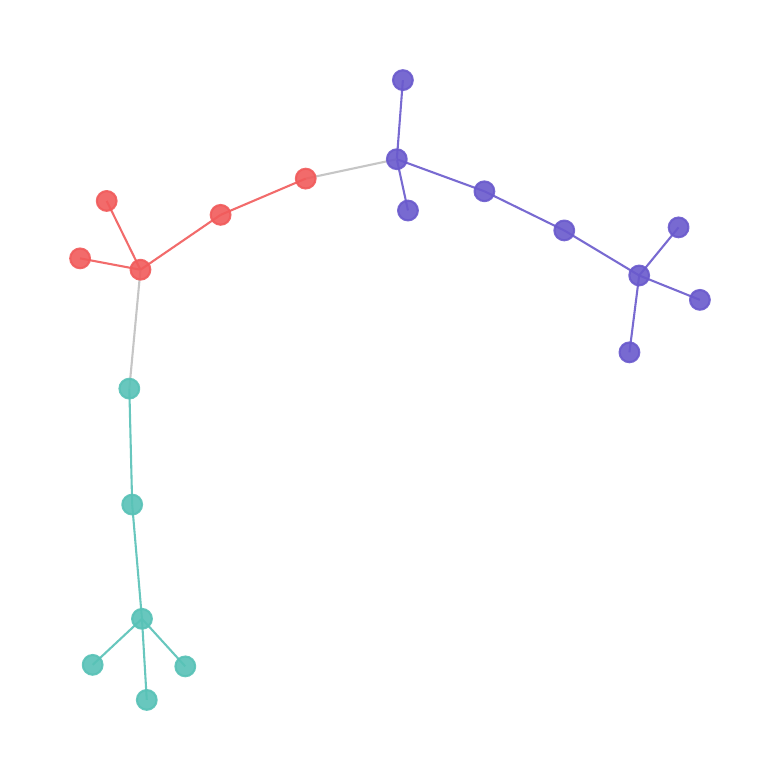}}\\
\vspace{-0.3cm}
\subfigure{\includegraphics[width=0.18\linewidth]{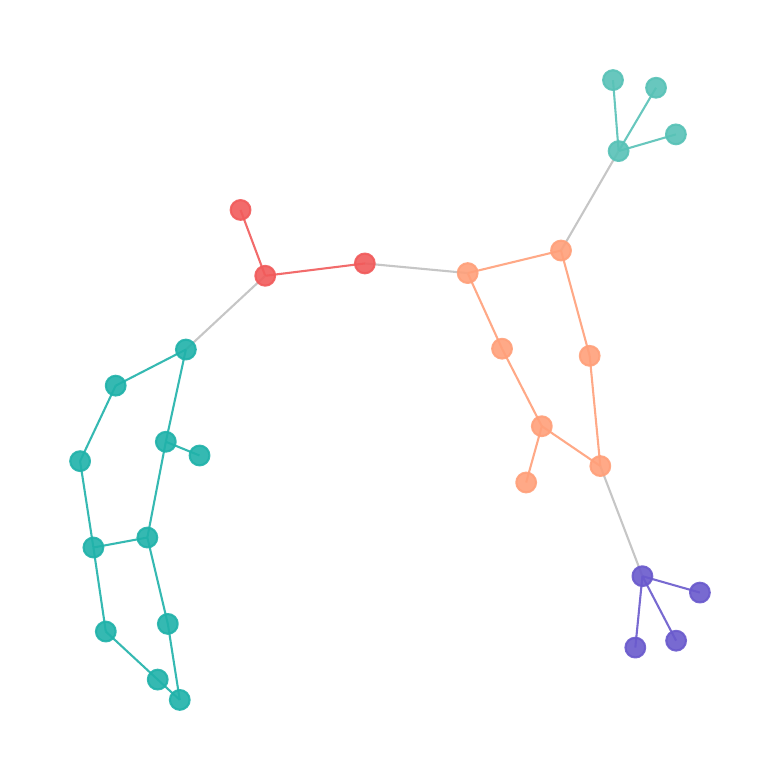}}
\subfigure{\includegraphics[width=0.18\linewidth]{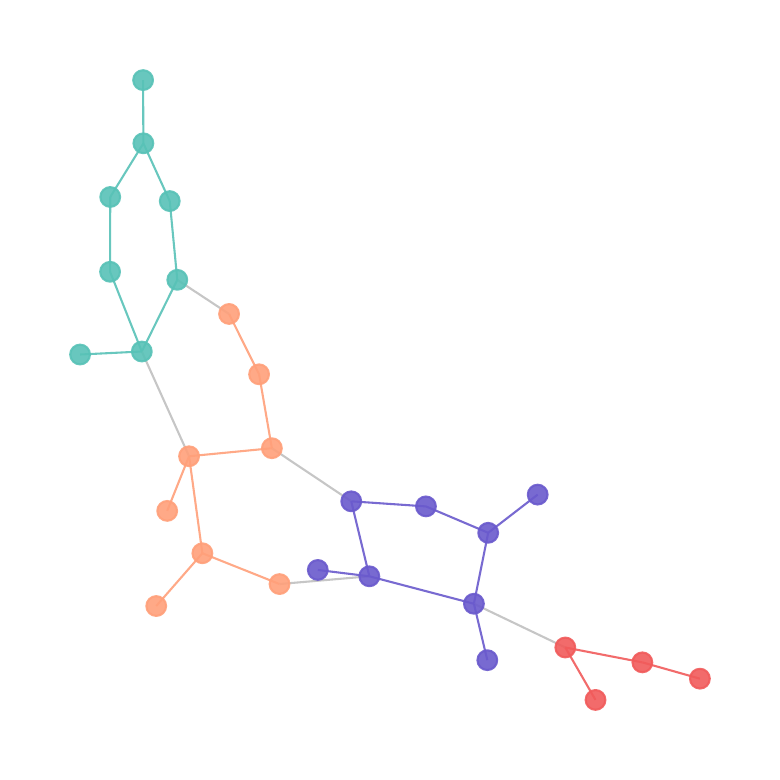}}
\subfigure{\includegraphics[width=0.18\linewidth]{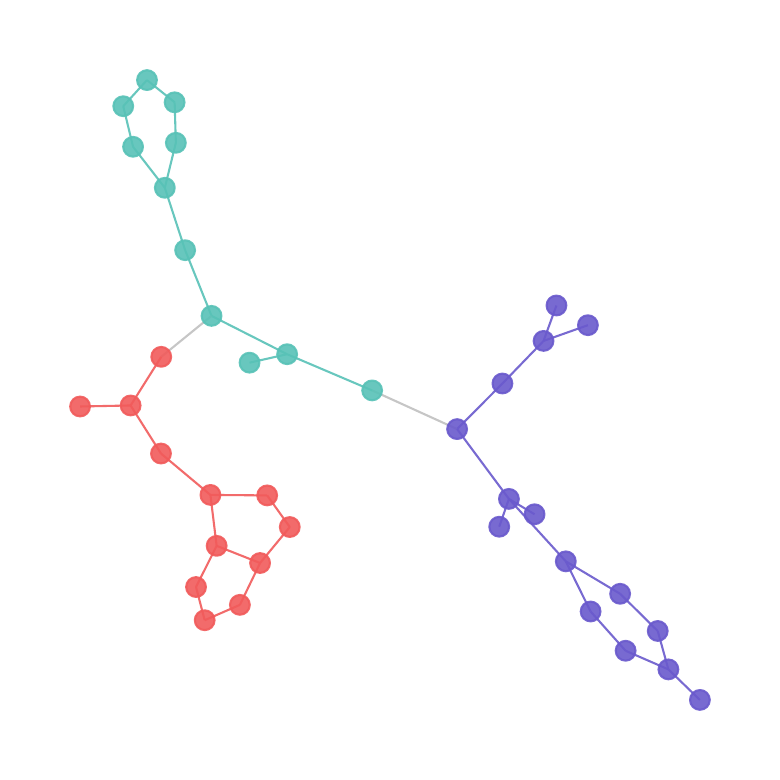}}
\subfigure{\includegraphics[width=0.18\linewidth]{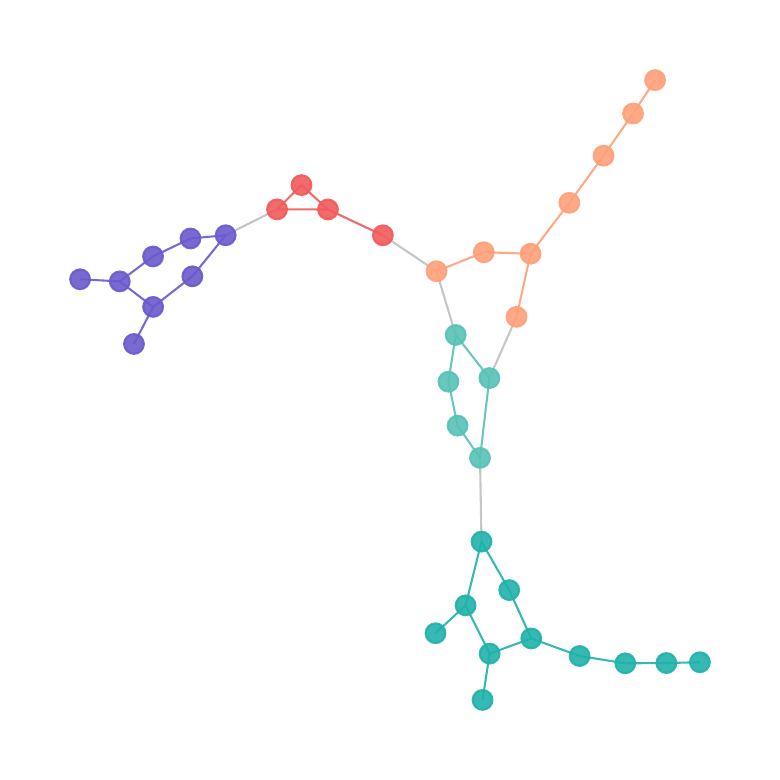}}
\subfigure{\includegraphics[width=0.18\linewidth]{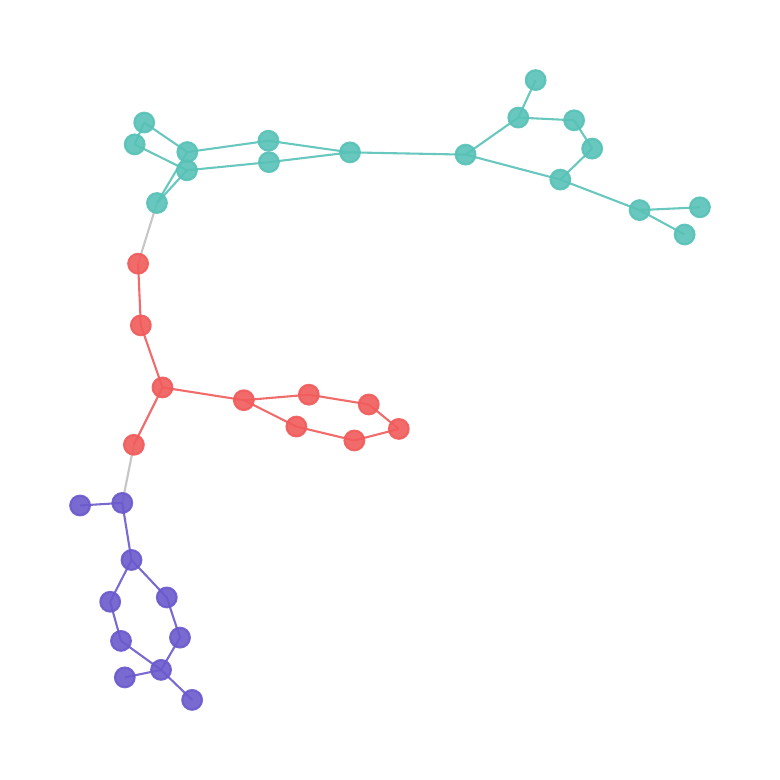}}
\vspace{-0.3cm}
\caption{Visualizing of the essential structure based on the coding tree preserved by \ourmethod on Tox21/SIDER, where the first row is Tox21 and the second row is SIDER.}
\end{figure}

\begin{figure}[t!]
 \centering
\subfigure{\includegraphics[width=0.18\linewidth]{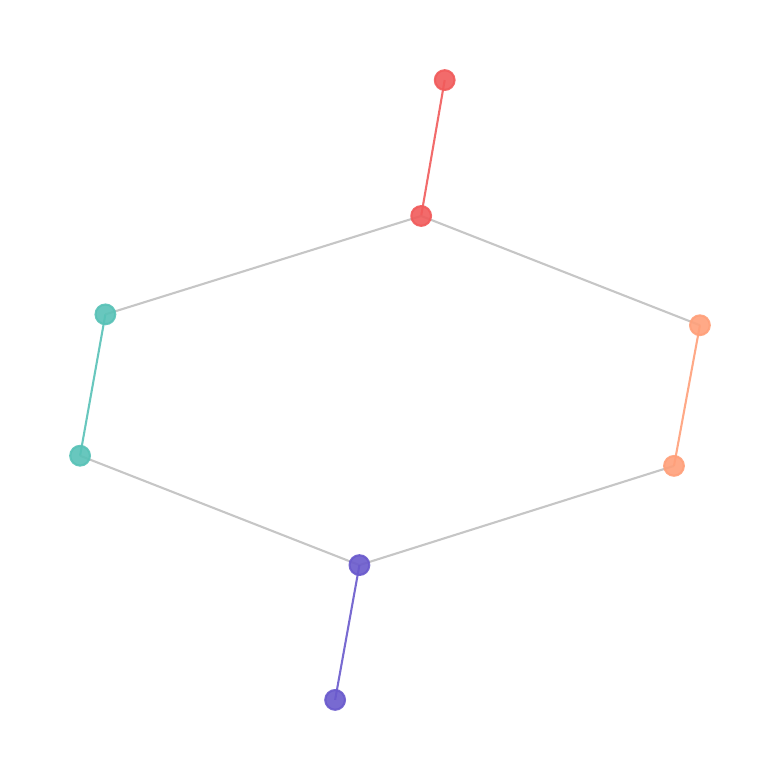}}
\subfigure{\includegraphics[width=0.18\linewidth]{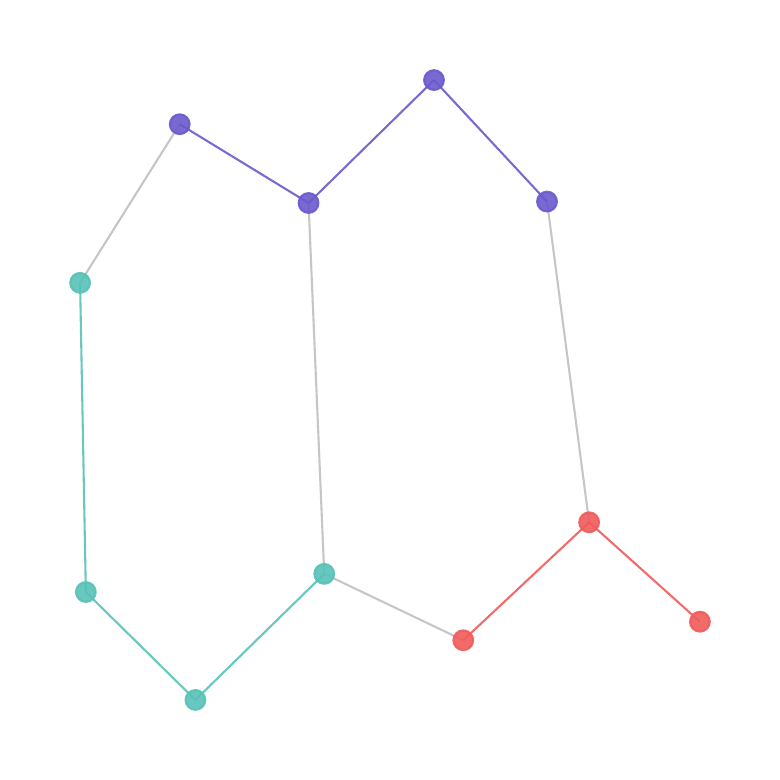}}
\subfigure{\includegraphics[width=0.18\linewidth]{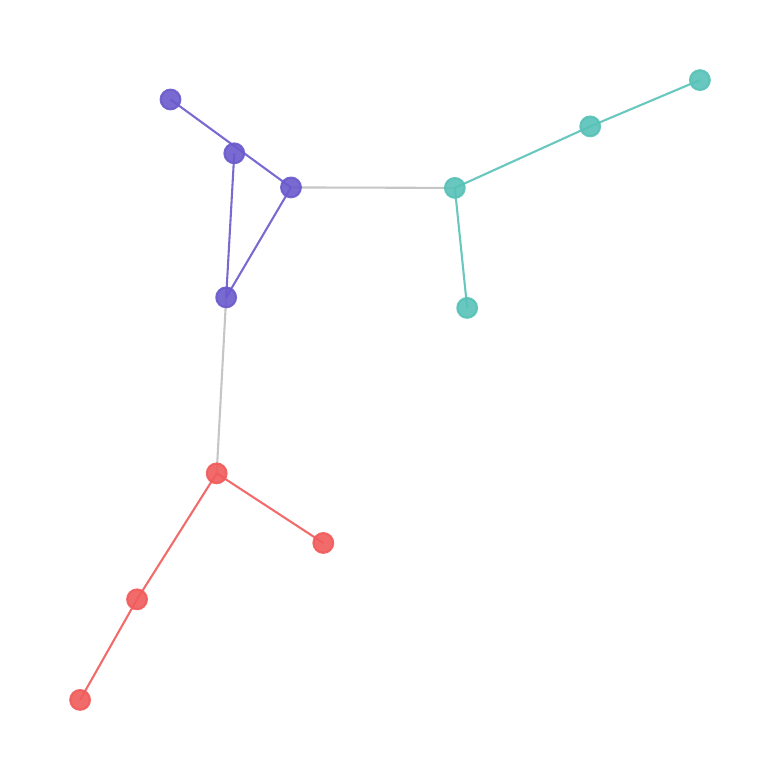}}
\subfigure{\includegraphics[width=0.18\linewidth]{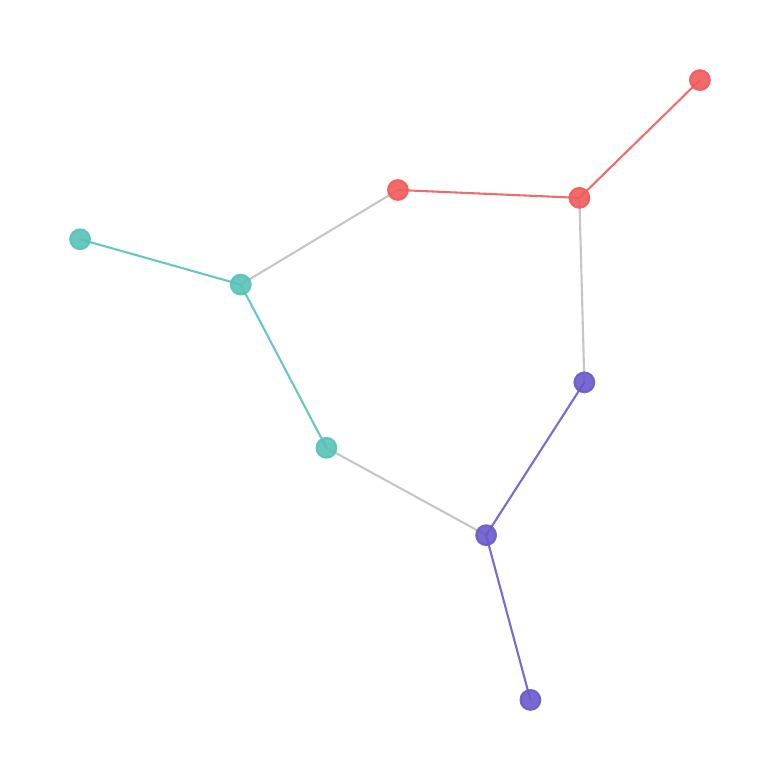}}
\subfigure{\includegraphics[width=0.18\linewidth]{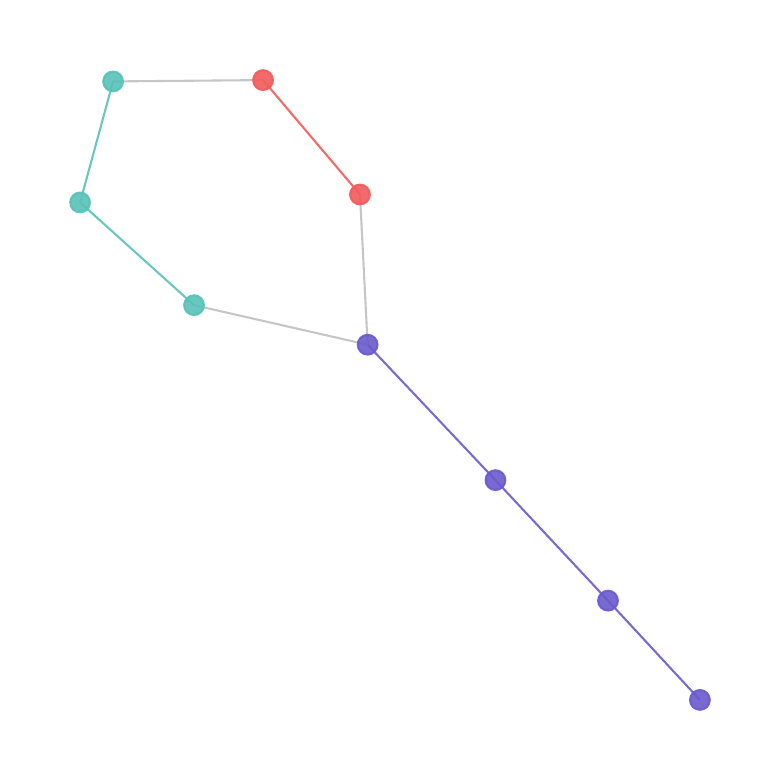}}\\
\vspace{-0.3cm}
\subfigure{\includegraphics[width=0.18\linewidth]{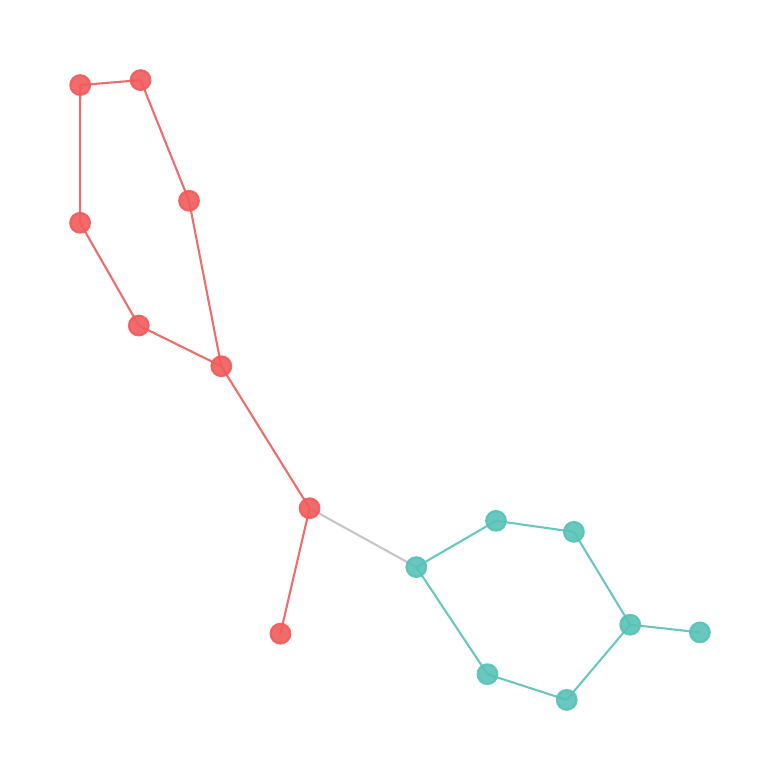}}
\subfigure{\includegraphics[width=0.18\linewidth]{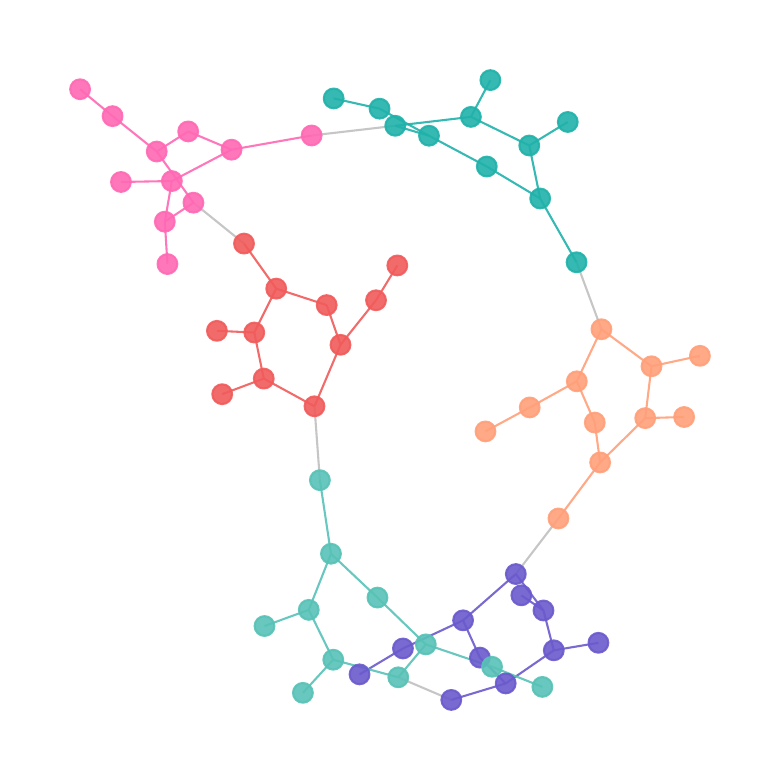}}
\subfigure{\includegraphics[width=0.18\linewidth]{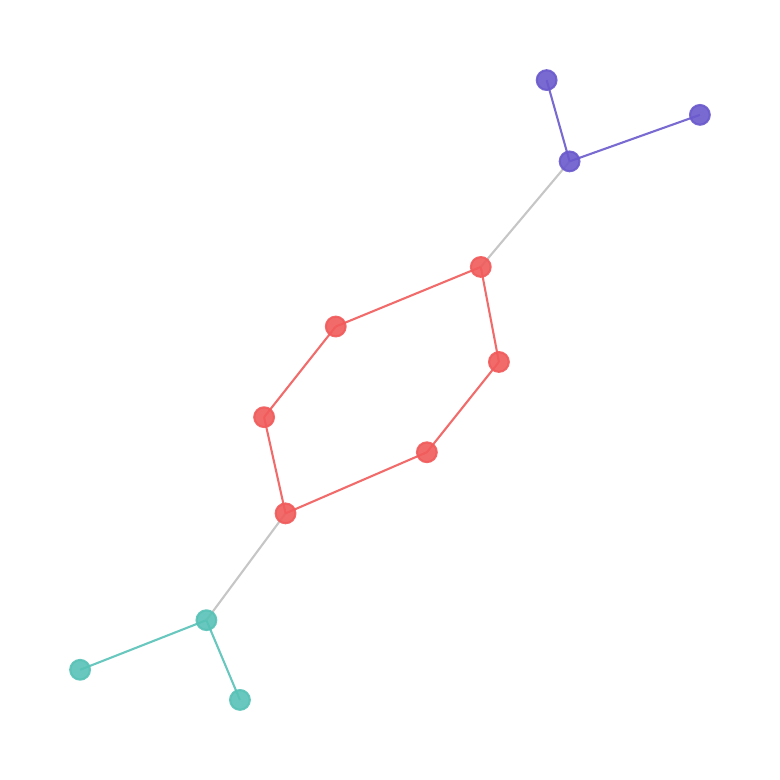}}
\subfigure{\includegraphics[width=0.18\linewidth]{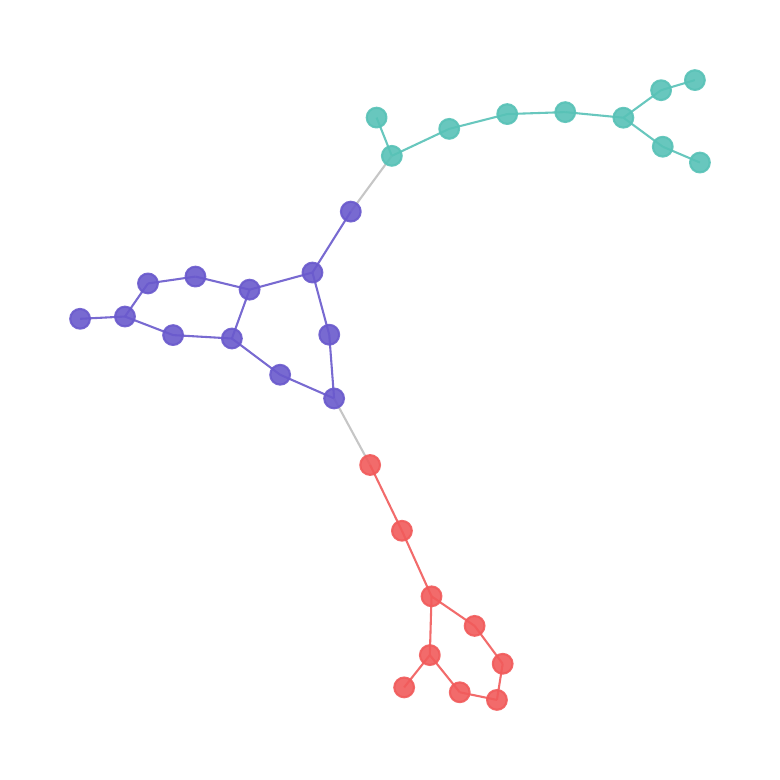}}
\subfigure{\includegraphics[width=0.18\linewidth]{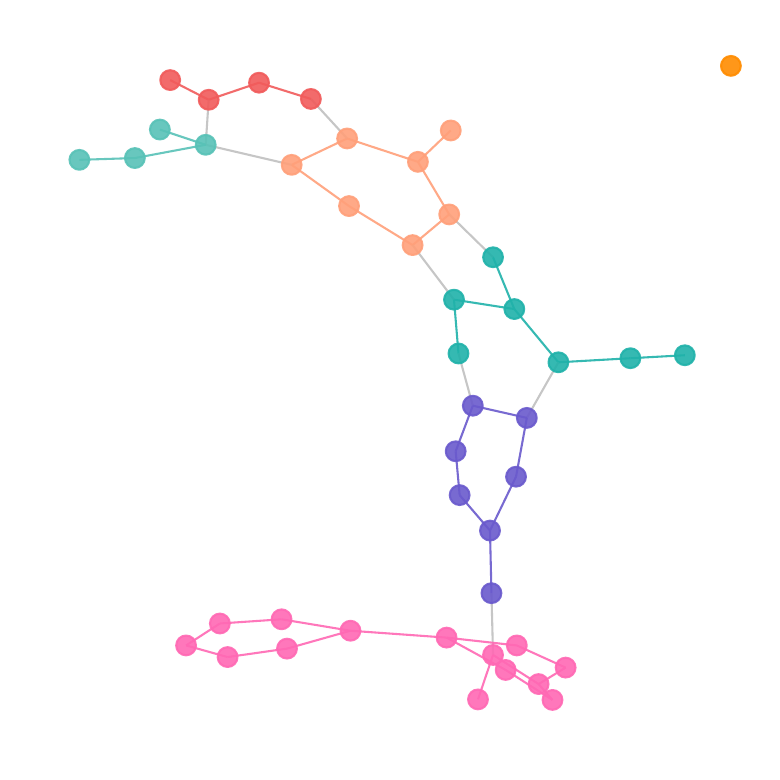}}
\vspace{-0.3cm}
\caption{Visualizing of the essential structure based on the coding tree preserved by \ourmethod on FreeSolv/Toxcast, where the first row is FreeSolv and the second row is Toxcast.}
\end{figure}

\begin{figure}[t!]
 \centering
\subfigure{\includegraphics[width=0.18\linewidth]{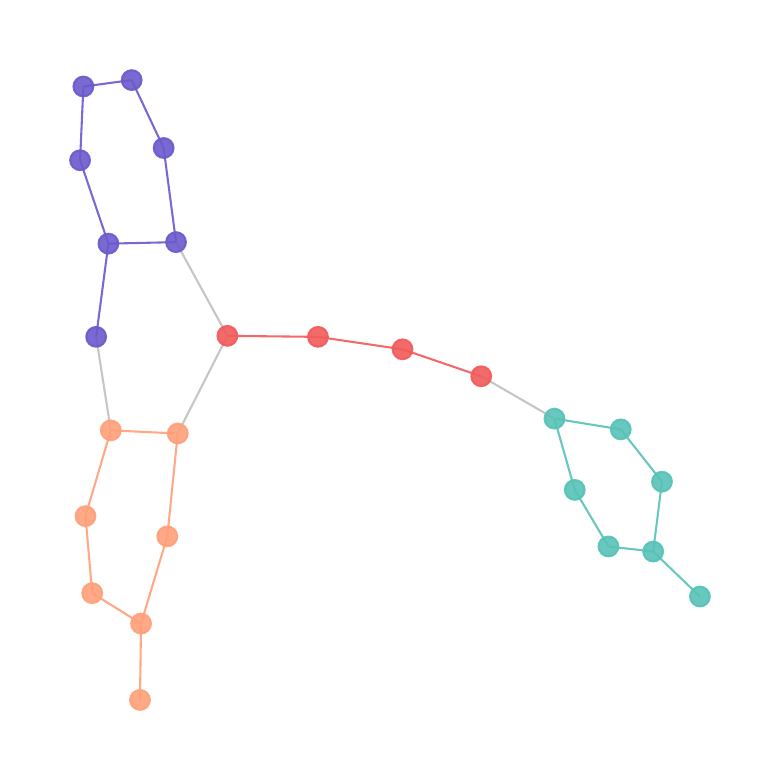}}
\subfigure{\includegraphics[width=0.18\linewidth]{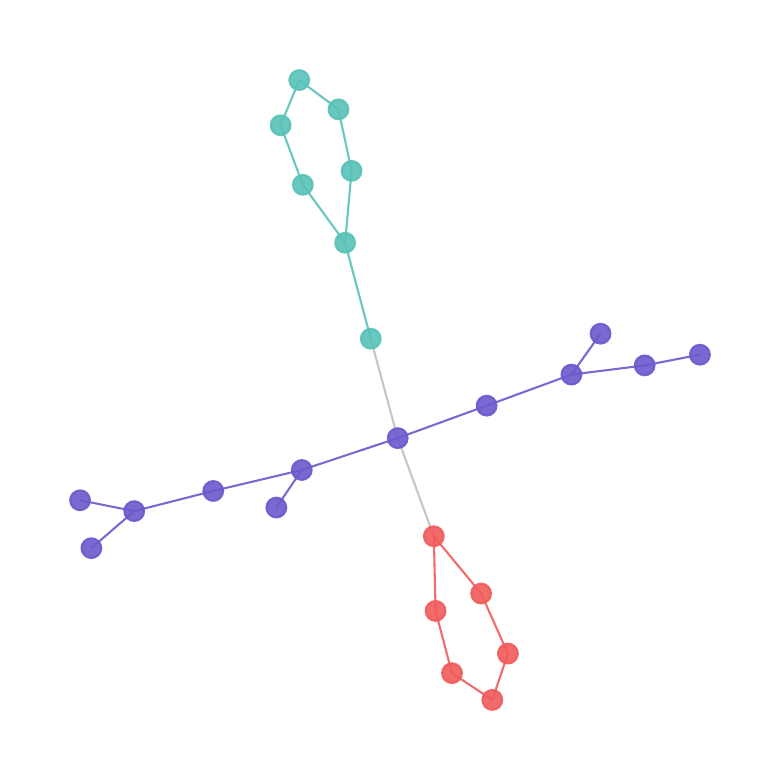}}
\subfigure{\includegraphics[width=0.18\linewidth]{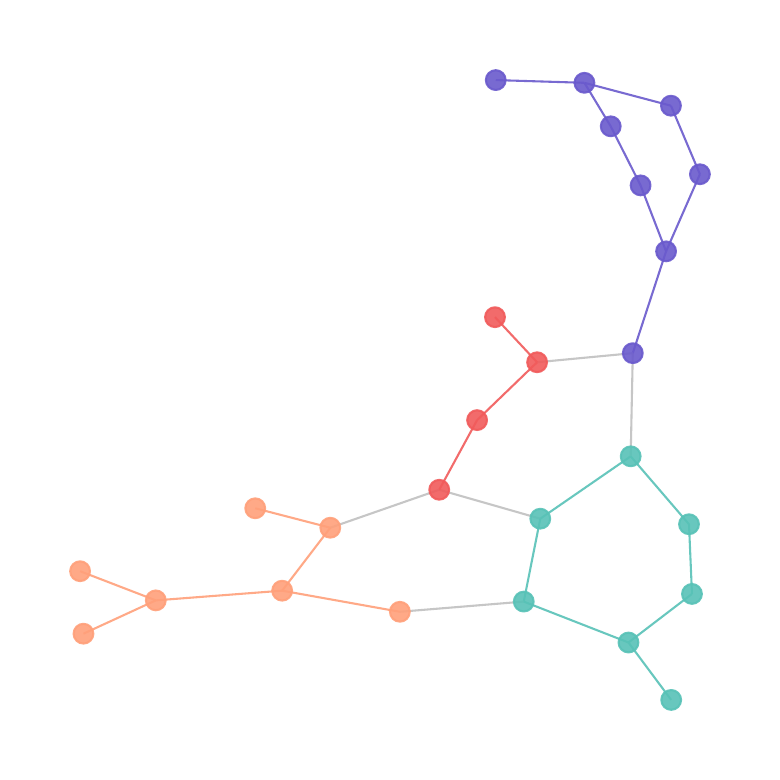}}
\subfigure{\includegraphics[width=0.18\linewidth]{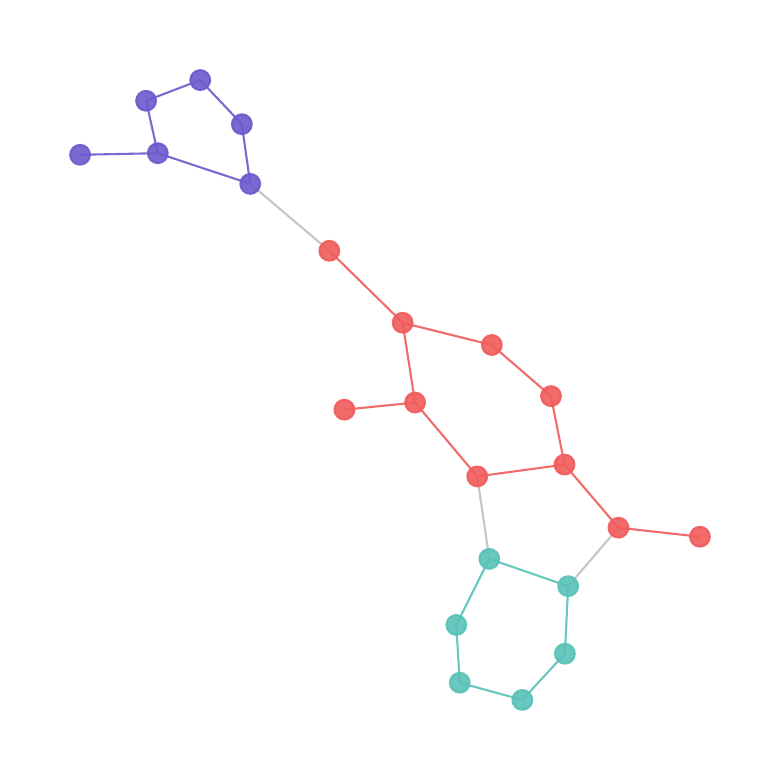}}
\subfigure{\includegraphics[width=0.18\linewidth]{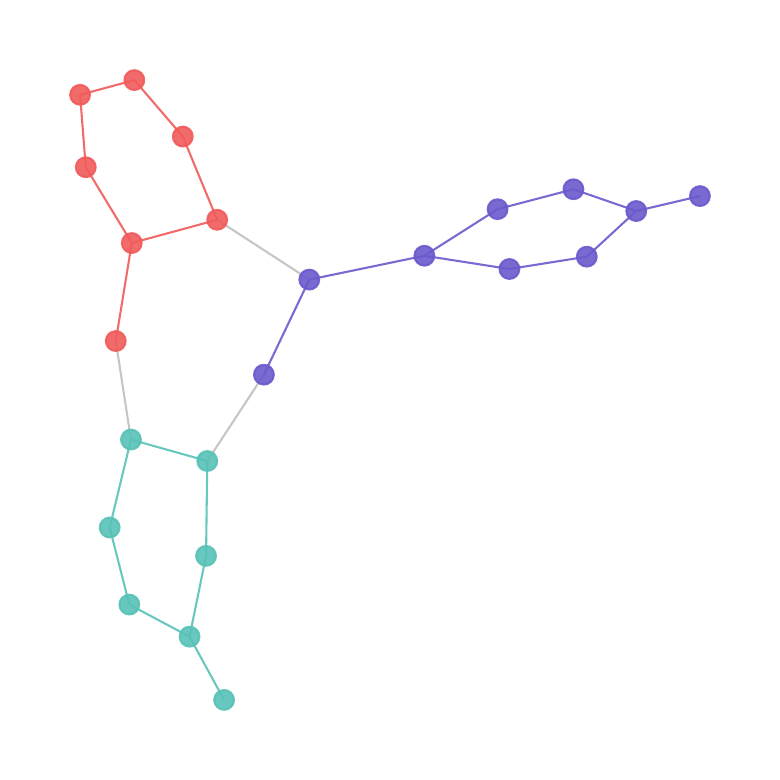}}\\
\vspace{-0.3cm}
\subfigure{\includegraphics[width=0.18\linewidth]{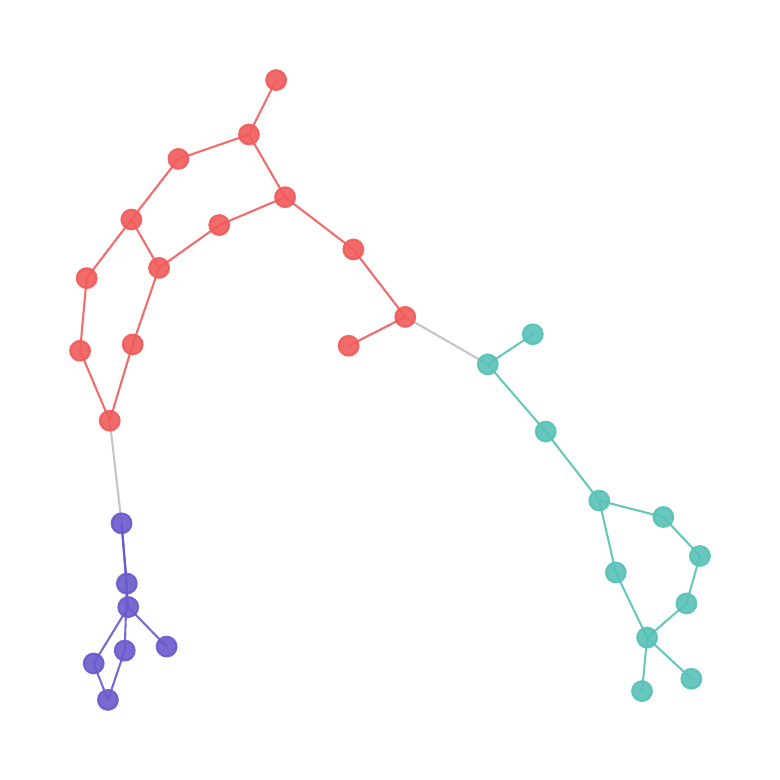}}
\subfigure{\includegraphics[width=0.18\linewidth]{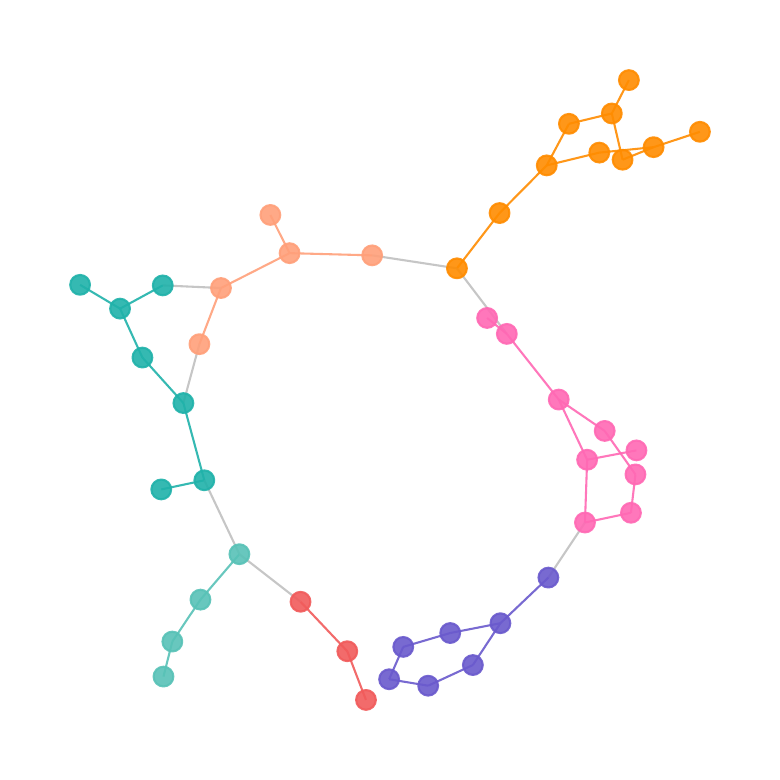}}
\subfigure{\includegraphics[width=0.18\linewidth]{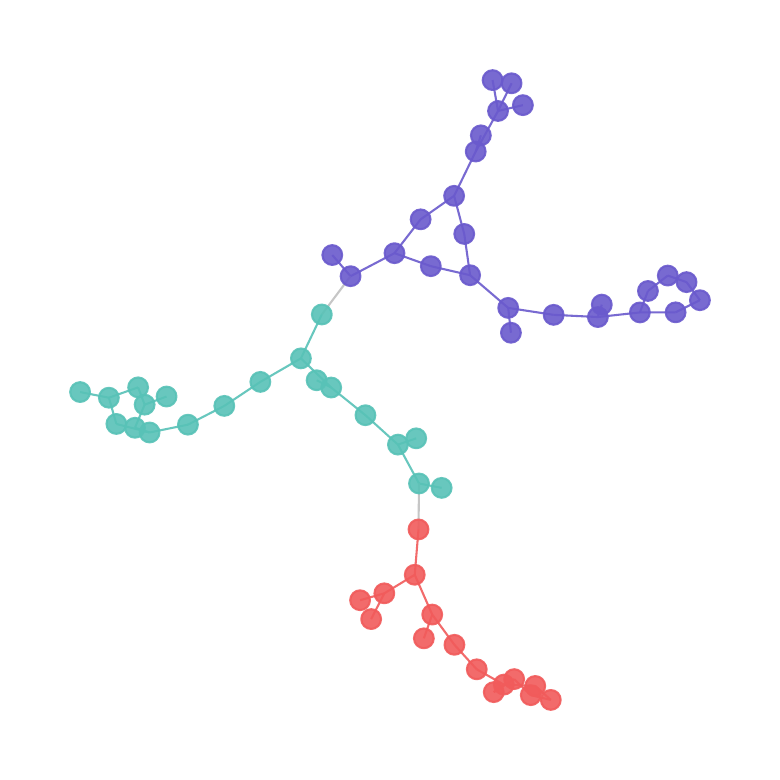}}
\subfigure{\includegraphics[width=0.18\linewidth]{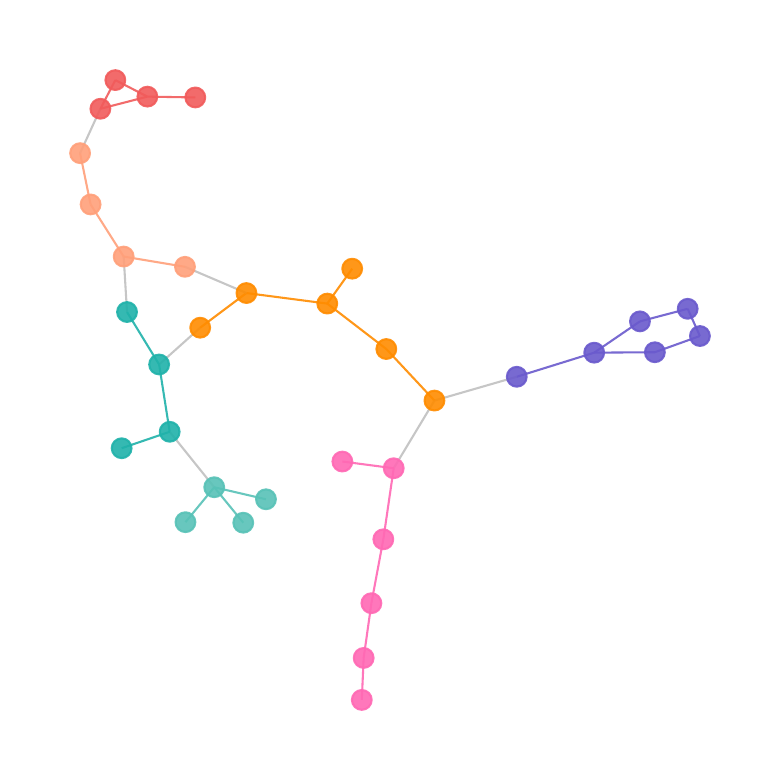}}
\subfigure{\includegraphics[width=0.18\linewidth]{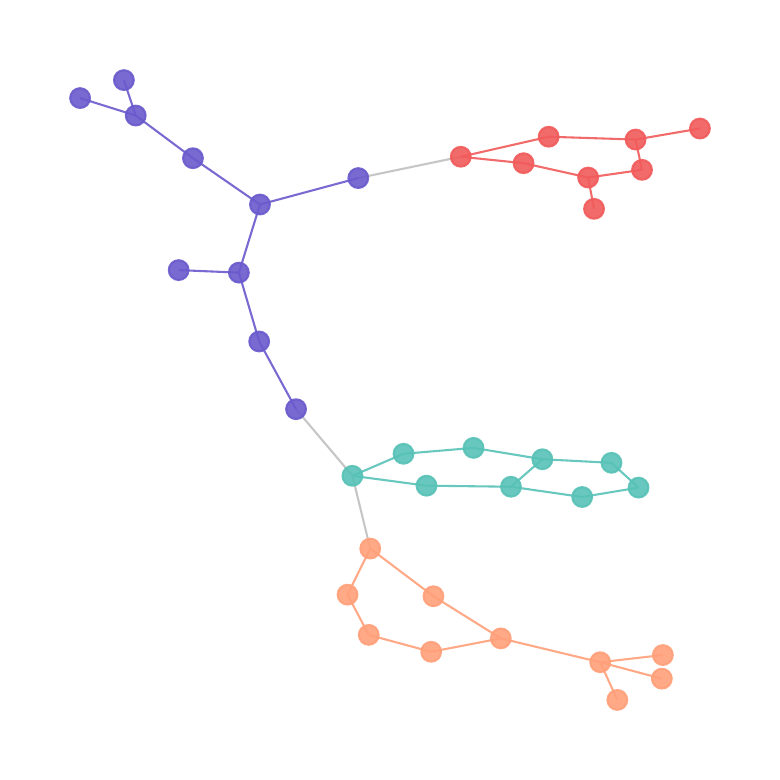}}
\vspace{-0.3cm}
\caption{Visualizing of the essential structure based on the coding tree preserved by \ourmethod on BBBP/BACE, where the first row is BBBP and the second row is BACE.}
\end{figure}

\begin{figure}[t!]
 \centering
\subfigure{\includegraphics[width=0.18\linewidth]{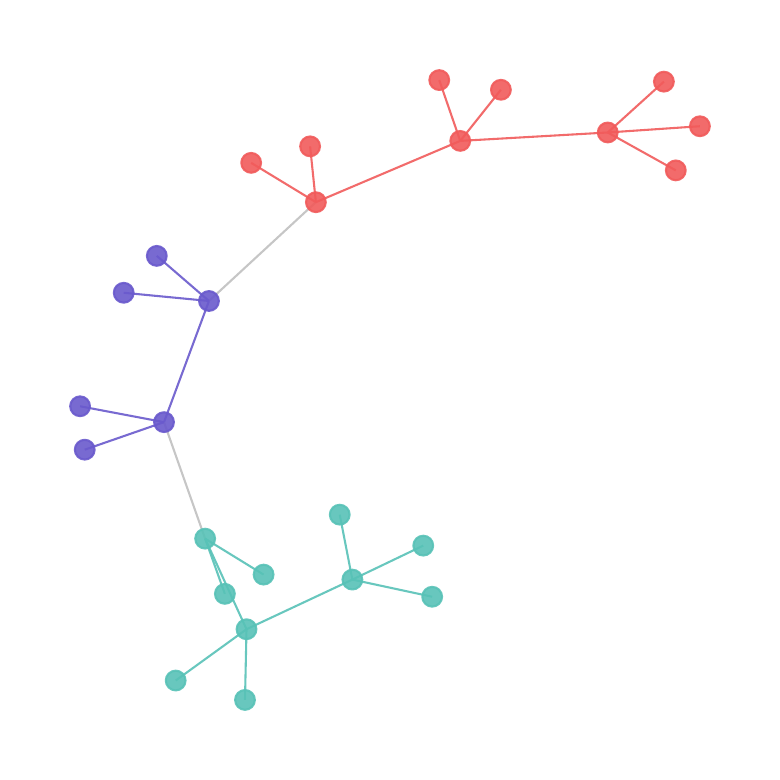}}
\subfigure{\includegraphics[width=0.18\linewidth]{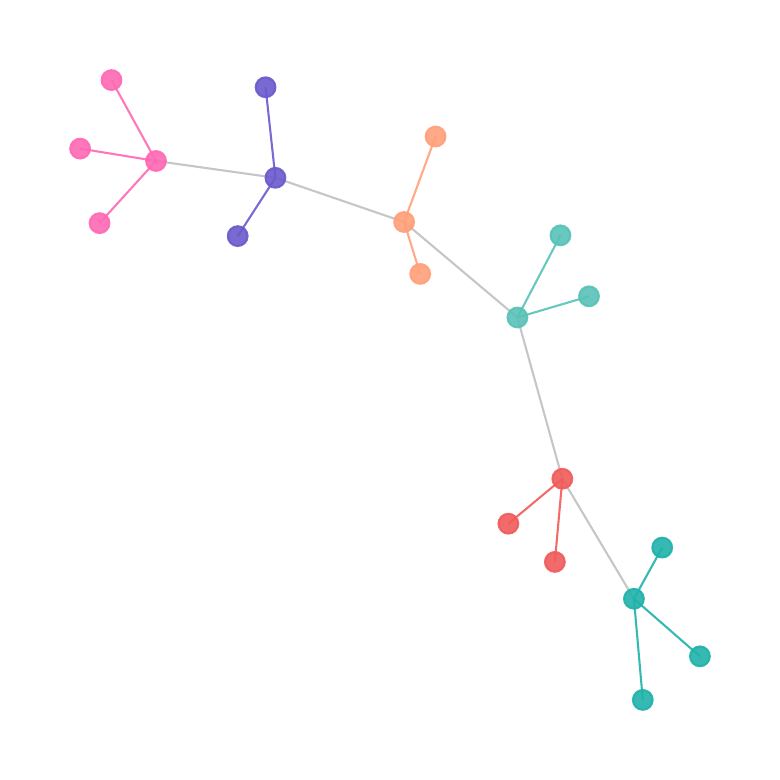}}
\subfigure{\includegraphics[width=0.18\linewidth]{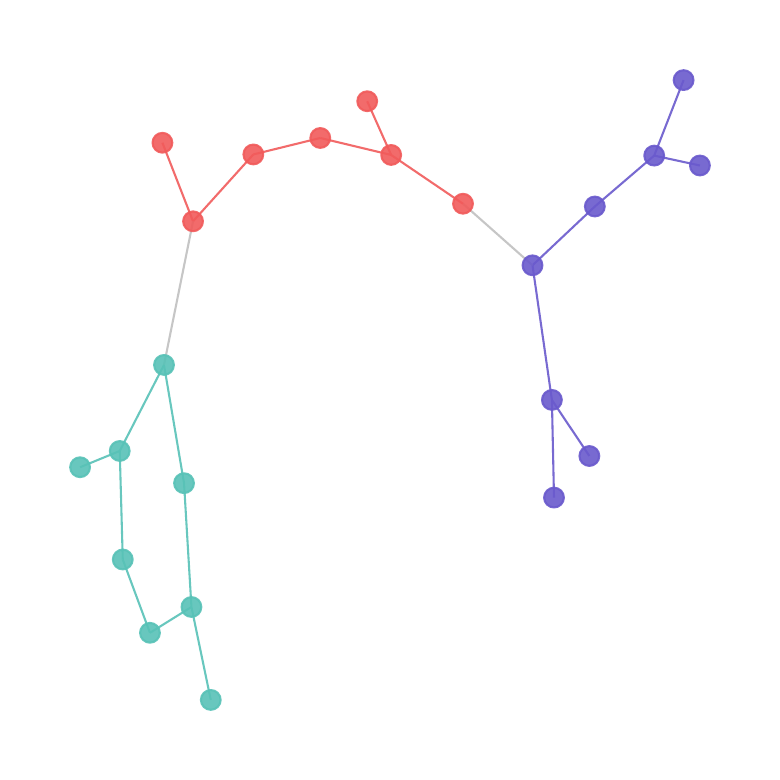}}
\subfigure{\includegraphics[width=0.18\linewidth]{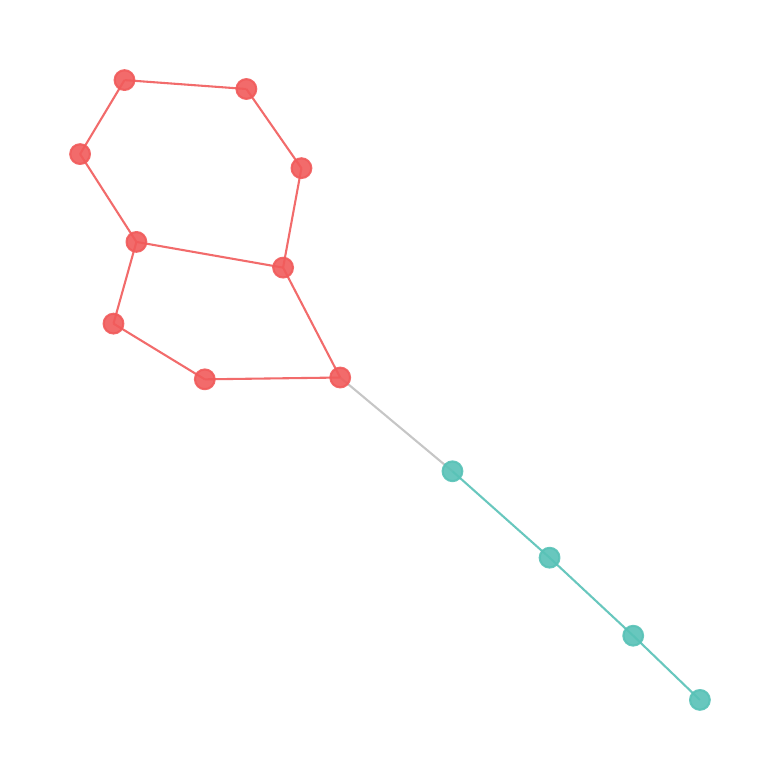}}
\subfigure{\includegraphics[width=0.18\linewidth]{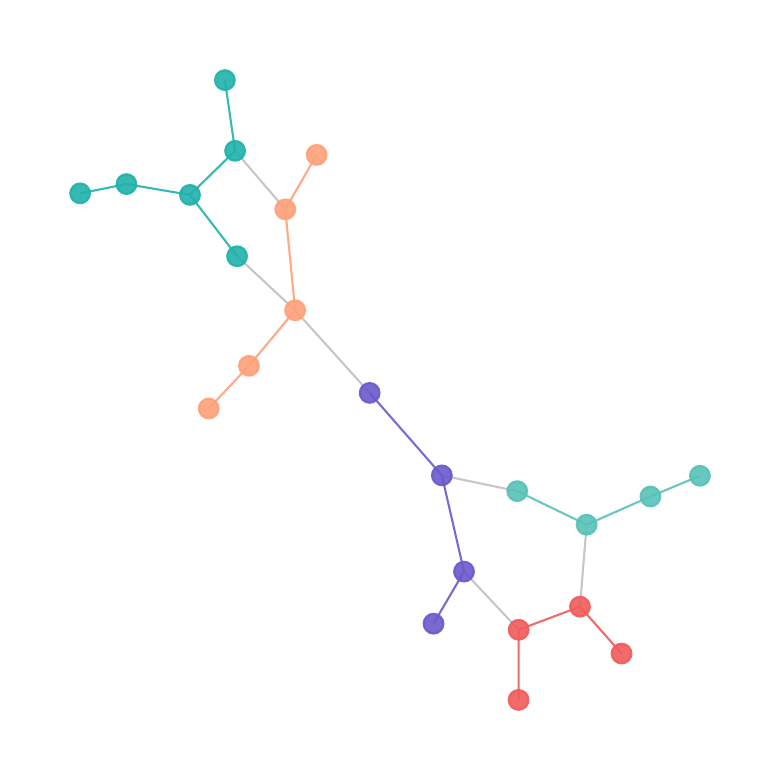}}\\
\vspace{-0.3cm}
\subfigure{\includegraphics[width=0.18\linewidth]{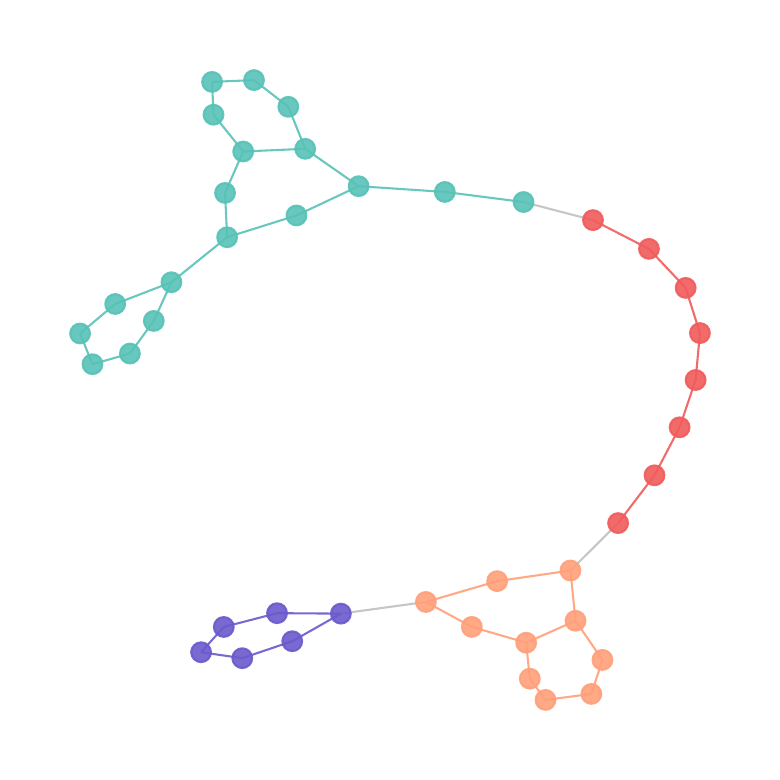}}
\subfigure{\includegraphics[width=0.18\linewidth]{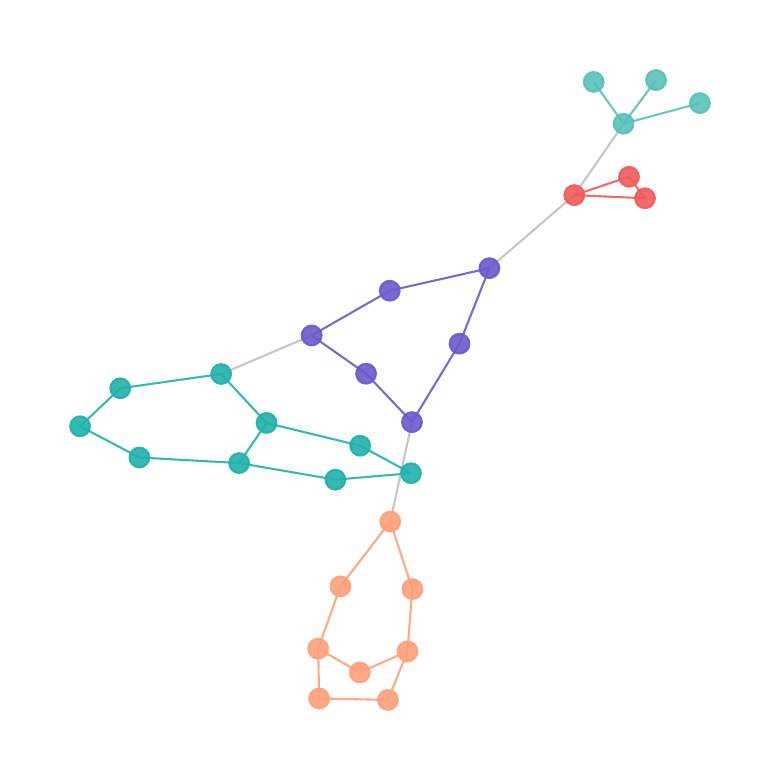}}
\subfigure{\includegraphics[width=0.18\linewidth]{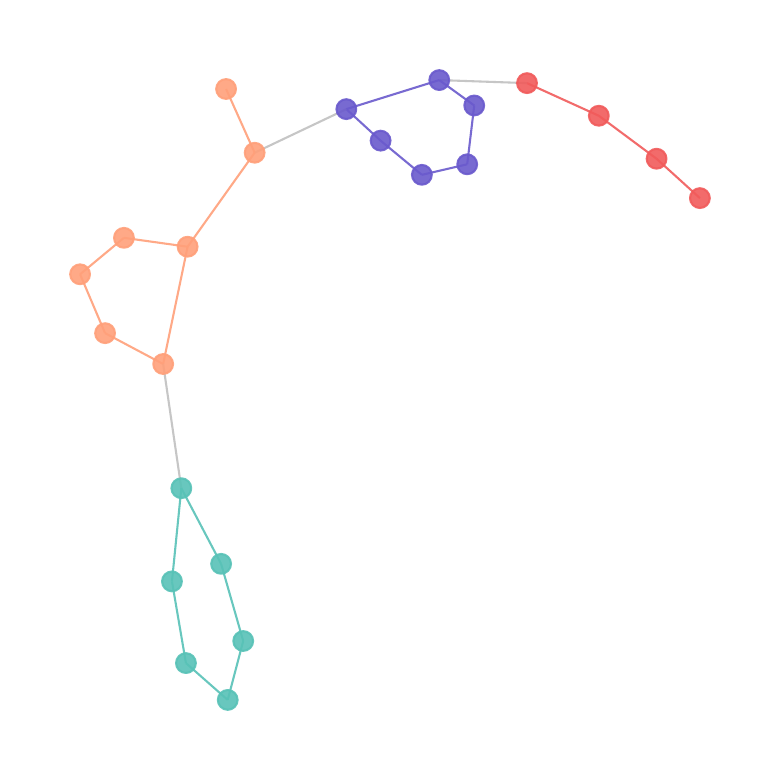}}
\subfigure{\includegraphics[width=0.18\linewidth]{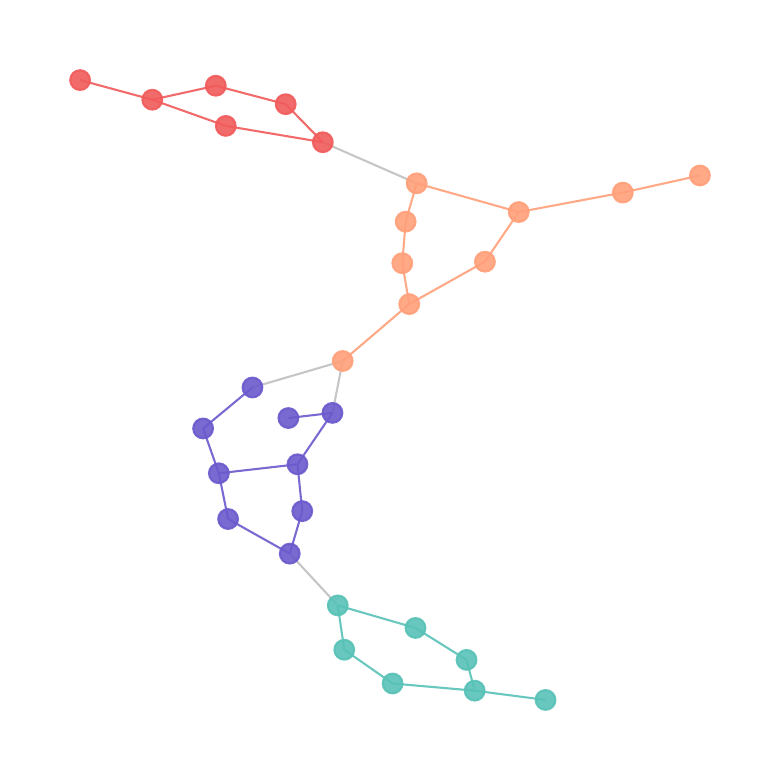}}
\subfigure{\includegraphics[width=0.18\linewidth]{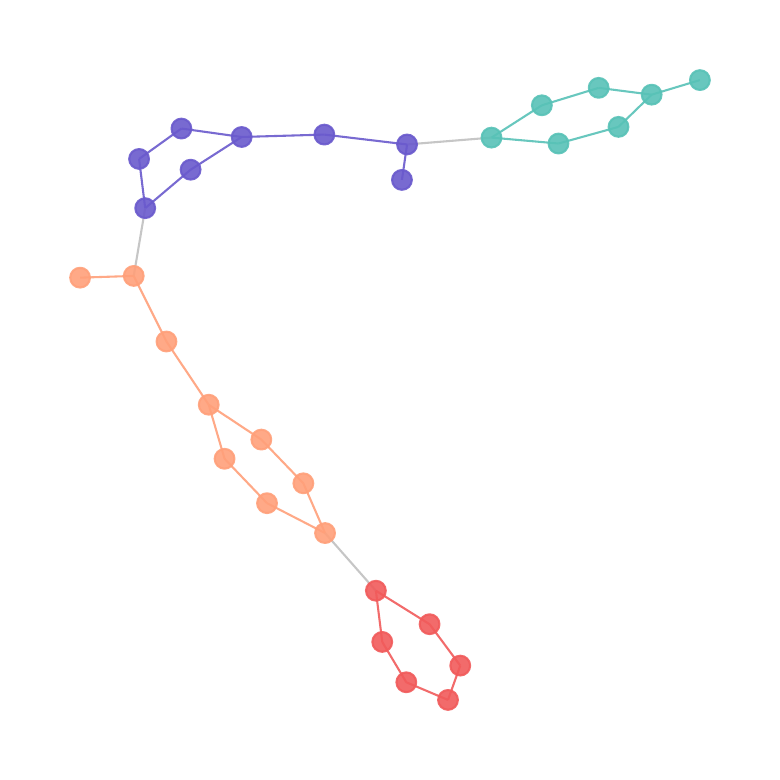}}
\vspace{-0.3cm}
\caption{Visualizing of the essential structure based on the coding tree preserved by \ourmethod on ClinTox/LIPO, where the first row is ClinTox and the second row is LIPO.}
\end{figure}

\begin{figure}[t!]
 \centering
\subfigure{\includegraphics[width=0.18\linewidth]{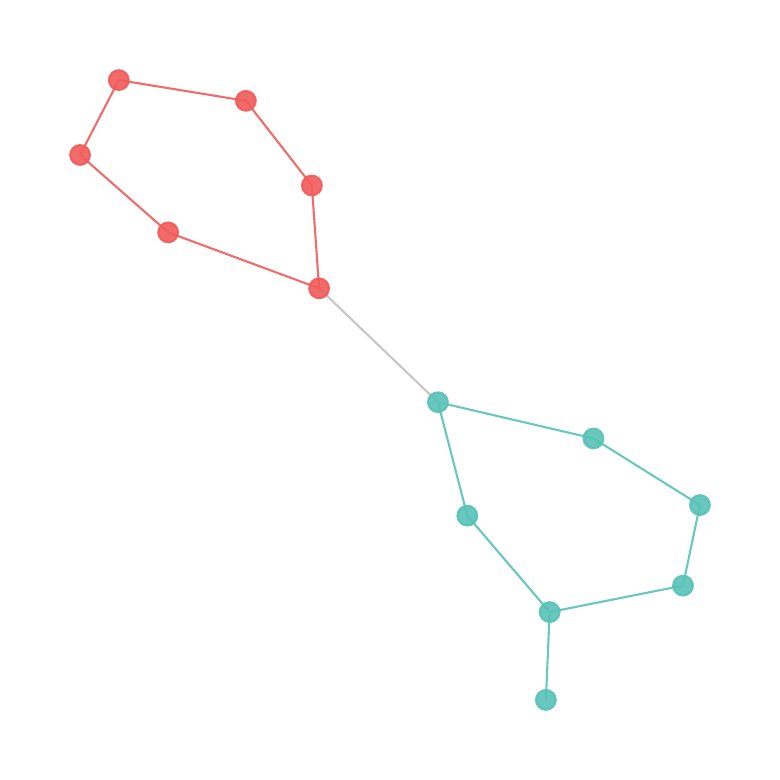}}
\subfigure{\includegraphics[width=0.18\linewidth]{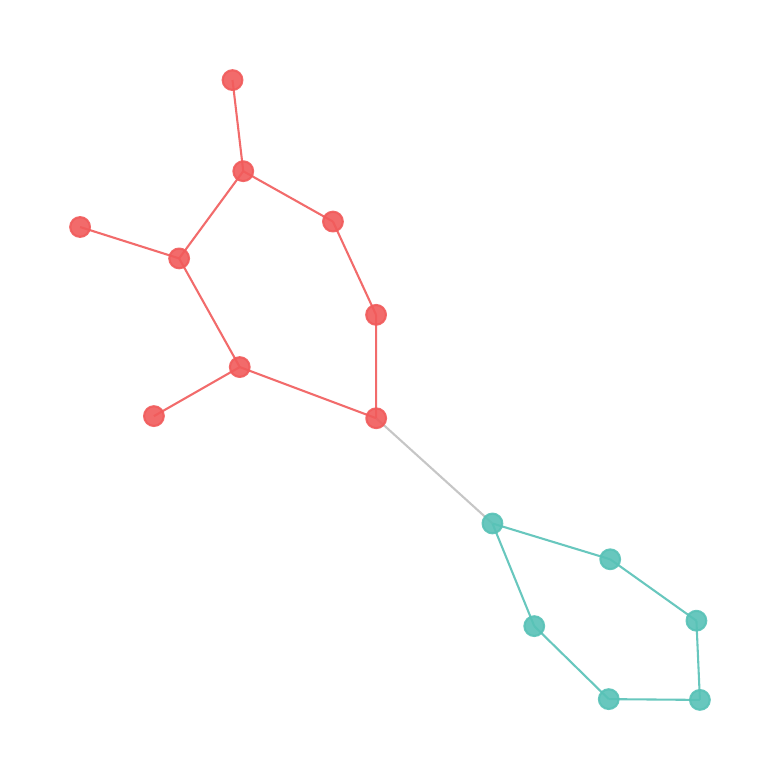}}
\subfigure{\includegraphics[width=0.18\linewidth]{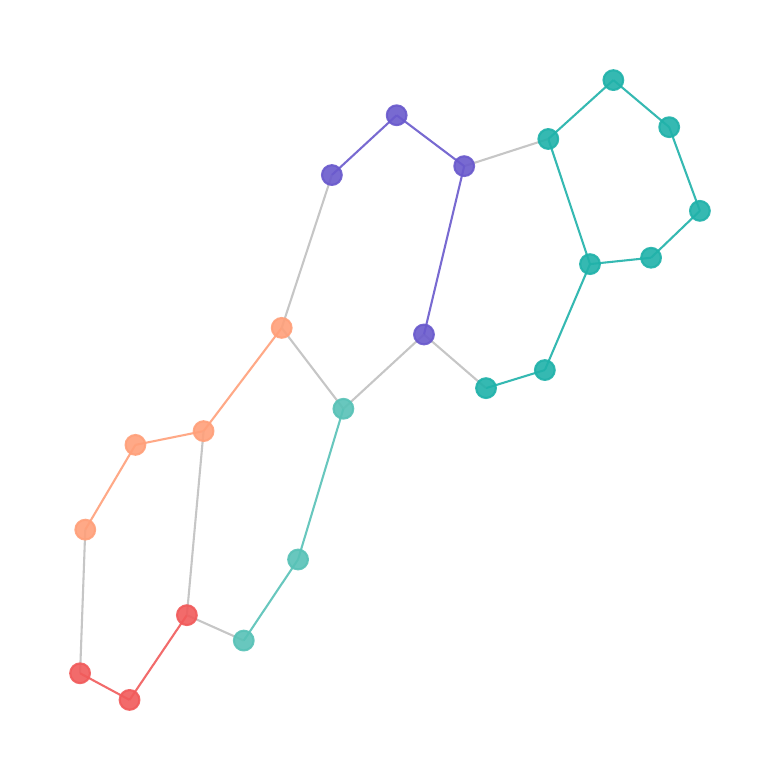}}
\subfigure{\includegraphics[width=0.18\linewidth]{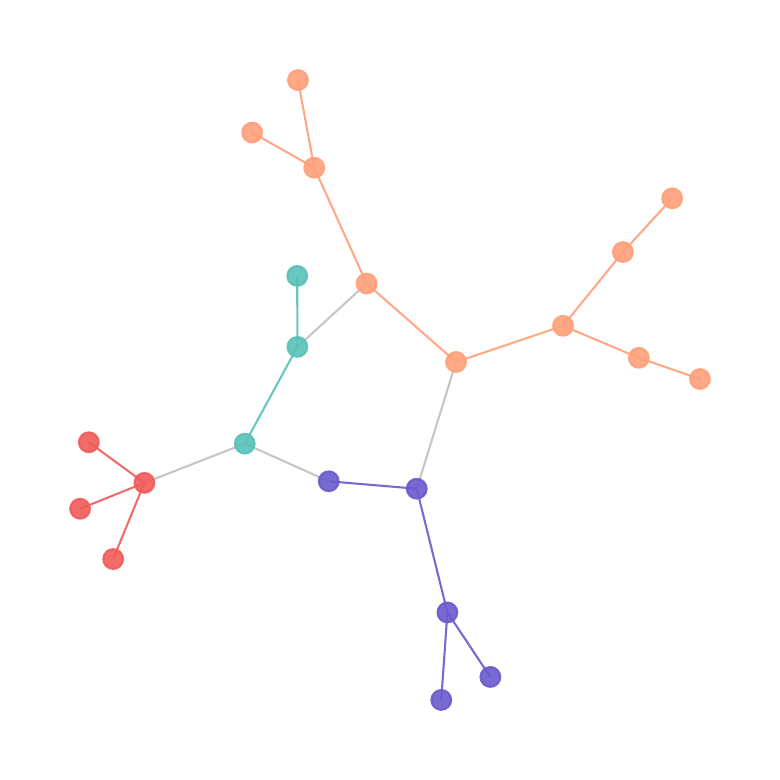}}
\subfigure{\includegraphics[width=0.18\linewidth]{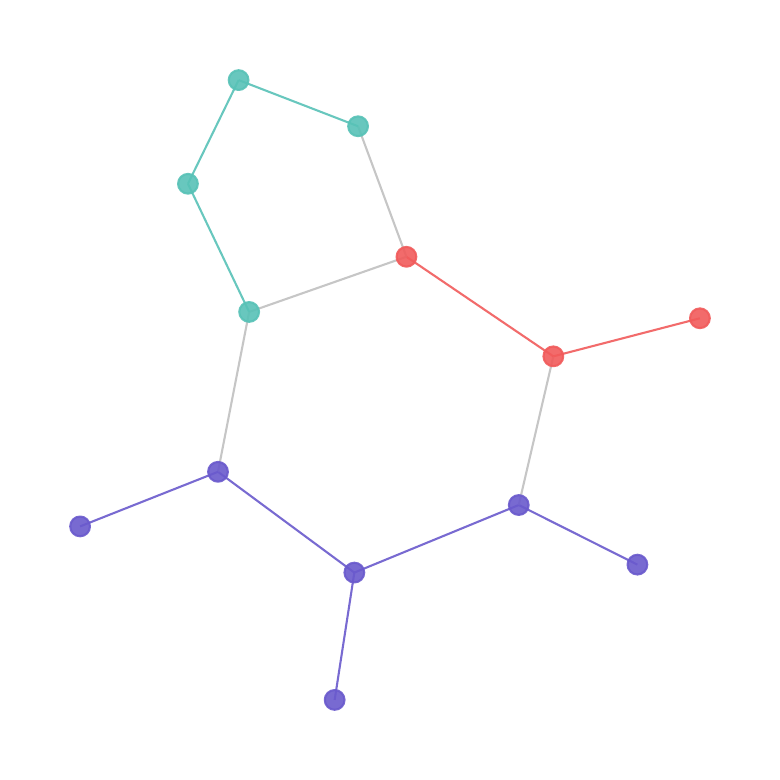}}\\
\vspace{-0.3cm}
\subfigure{\includegraphics[width=0.18\linewidth]{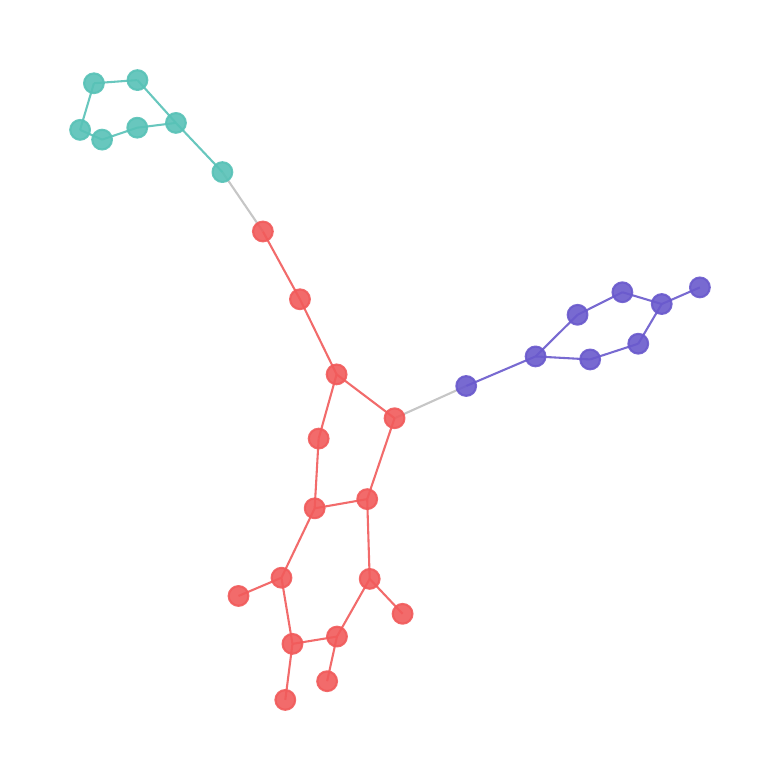}}
\subfigure{\includegraphics[width=0.18\linewidth]{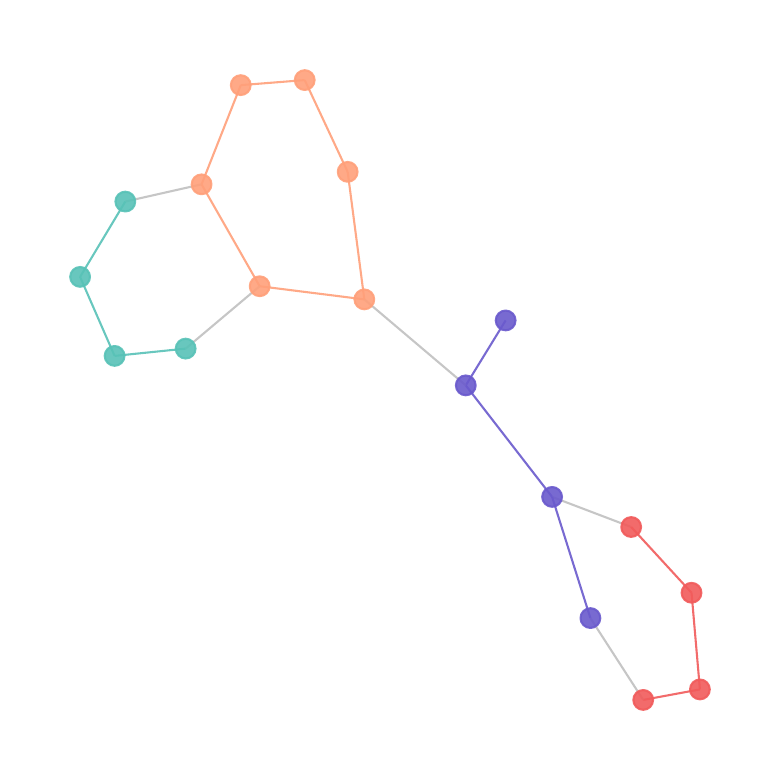}}
\subfigure{\includegraphics[width=0.18\linewidth]{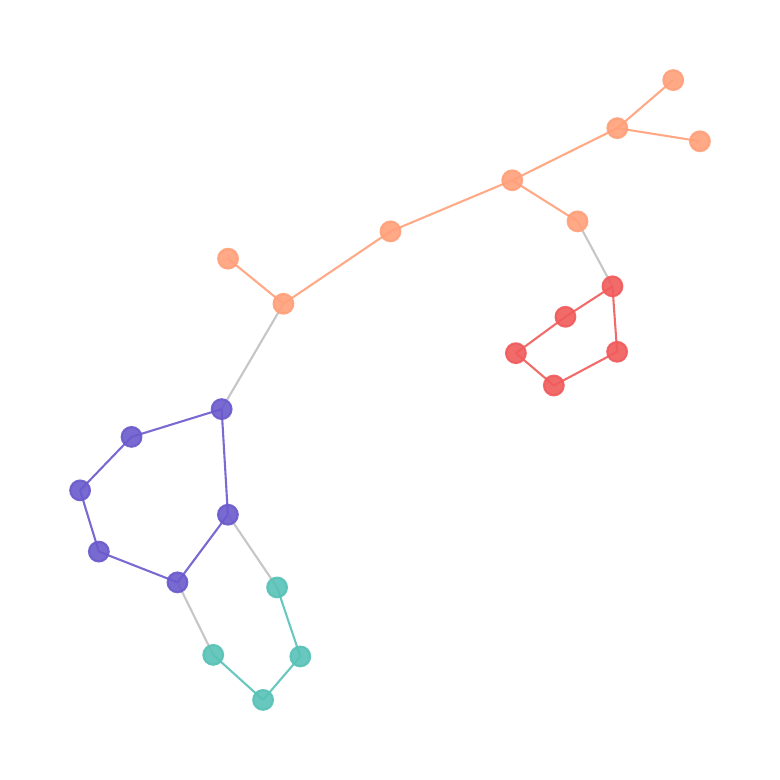}}
\subfigure{\includegraphics[width=0.18\linewidth]{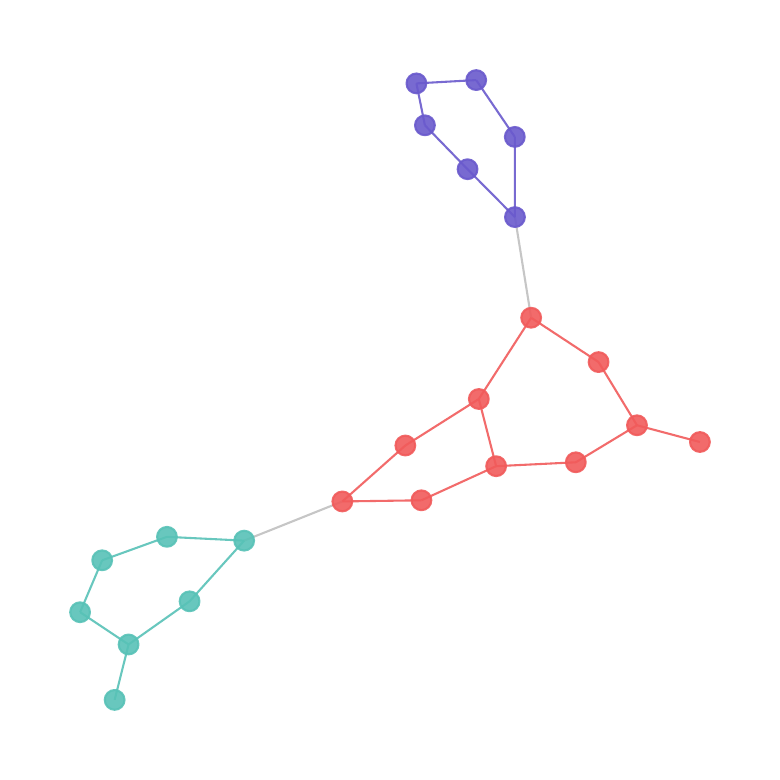}}
\subfigure{\includegraphics[width=0.18\linewidth]{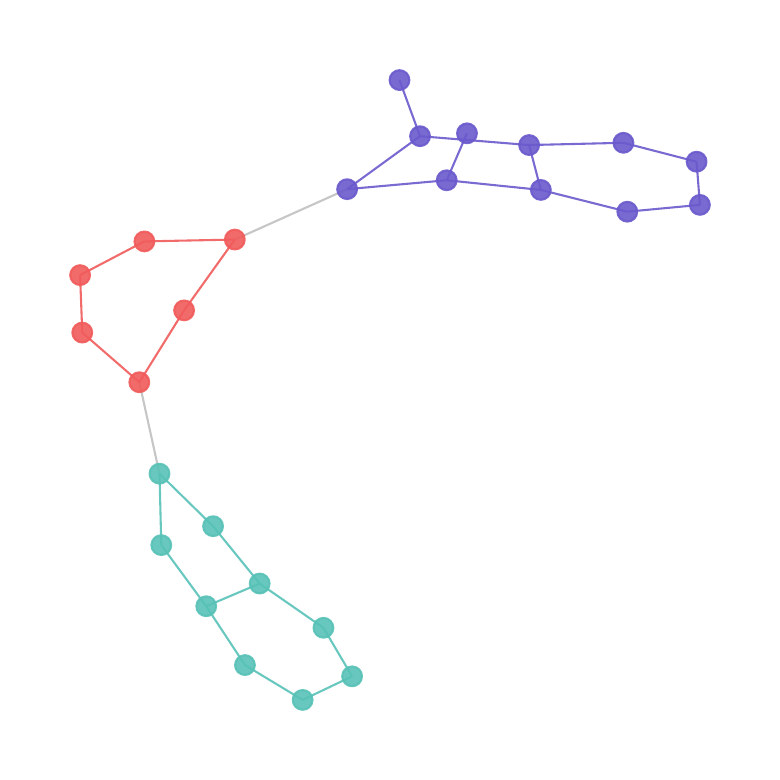}}
\vspace{-0.3cm}
\caption{Visualizing of the essential structure based on the coding tree preserved by \ourmethod on Esol/MUV, where the first row is Esol and the second row is MUV.}
\label{ap:fig-case-ESOL}
\end{figure}

\end{CJK}
\end{document}